%% file: main.tex
\documentclass{article}

     \PassOptionsToPackage{numbers, compress}{natbib}

\usepackage[preprint]{neurips_2026}

\usepackage[utf8]{inputenc} %
\usepackage[T1]{fontenc}    %
\usepackage{hyperref}       %
\usepackage{url}            %
\usepackage{booktabs}       %
\usepackage{amsfonts}       %
\usepackage{nicefrac}       %
\usepackage{microtype}      %
\usepackage{xcolor}         %

\usepackage{microtype}
\usepackage{graphicx}
\usepackage{subcaption}
\usepackage{scalerel,stackengine}
\usepackage{amsmath}
\usepackage{amssymb}
\usepackage{mathtools}
\usepackage{amsthm}
\usepackage{thmtools}
\usepackage{paralist}
\usepackage{enumitem} 
\usepackage{wrapfig}
\usepackage{placeins}

\usepackage{bm}
\usepackage{booktabs} %
\usepackage{multirow}
\usepackage{graphicx}
\usepackage{makecell}
\usepackage[dvipsnames,table]{xcolor}

\usepackage{algorithm}
\usepackage{algpseudocode}

\algrenewcommand\algorithmicrequire{\textbf{Input:}}
\algrenewcommand\algorithmicensure{\textbf{Output:}}

\definecolor{tablegray}{rgb}{0.9, 0.9, 0.9}
\definecolor{colfirst}{HTML}{029e72}
\definecolor{colsecond}{HTML}{de8f03}

\makeatletter
\renewcommand\p@subfigure{\thefigure}
\makeatother

\newcommand\equalhat{\mathrel{\stackon[1.5pt]{=}{\stretchto{%
    \scalerel*[\widthof{=}]{\wedge}{\rule{1ex}{3ex}}}{0.5ex}}}}

\DeclareCaptionLabelFormat{subfigwithparent}{(\thefigure#2)}
\usepackage{amsmath}
\usepackage{amssymb}
\usepackage{mathtools}
\usepackage{amsthm}
\usepackage{thmtools}
\usepackage{paralist}

\theoremstyle{plain}

\theoremstyle{definition}

\theoremstyle{remark}

\usepackage[textsize=tiny]{todonotes}

\usepackage{bm}
\usepackage{booktabs} %
\usepackage{multirow}
\usepackage{graphicx}
\usepackage{makecell}
\usepackage[dvipsnames,table]{xcolor}

\definecolor{tablegray}{rgb}{0.9, 0.9, 0.9}
\definecolor{colfirst}{HTML}{029e72}
\definecolor{colsecond}{HTML}{de8f03}

\usepackage{listings}
\input{preamble}

\include{math_commands}

\usepackage[capitalize,noabbrev]{cleveref}
\crefname{appendix}{appendix}{appendices}
\Crefname{appendix}{Appendix}{Appendices}

\title{Task-Aware Calibration: Provably Optimal Decoding in LLMs}

\author{
\textbf{
Tim Tomov$^{*1,2,3}$ \quad
Dominik Fuchsgruber$^{*1,2}$ \quad
Rajeev Verma$^{4}$ \quad
Stephan Günnemann$^{1,2,3}$
} \\[0.5em]
\texttt{
\{tim.tomov,d.fuchsgruber,s.guennemann\}@tum.de
\&
r.verma@uva.nl
} \\[0.8em]
$^{1}$School of Computation, Information \& Technology, Technical University of Munich\\
$^{2}$Munich Data Science Institute \quad %
$^{3}$Munich Center for Machine Learning\quad
$^{4}$University of Amsterdam
}

\begin{document}

\maketitle
\def\thefootnote{*}\footnotetext{Equal contribution.}

\begin{abstract}
LLM decoding often relies on the model’s predictive distribution to generate an output. Consequently, misalignment with respect to the true generating distribution leads to suboptimal decisions in practice.
While a natural solution is to calibrate the model's output distribution, for LLMs, this is ill-posed at the combinatorially vast level of free-form language. We address this by building on the insight that in many tasks, these free-form outputs can be interpreted in a semantically meaningful latent structure, for example, discrete class labels, integers, or sets. We introduce \emph{task calibration} as a paradigm to calibrate the model’s predictive distribution in the task-induced latent space. We apply a decision-theoretic result to show that Minimum Bayes Risk (MBR) decoding on the \emph{task-calibrated} latent distribution is the optimal decoding strategy on latent model beliefs. Empirically, it consistently improves generation quality across different tasks and baselines. We also introduce Task Calibration Error (TCE), an application-aware calibration metric that quantifies the excess loss due to miscalibration. 
Our work demonstrates that task calibration enables more reliable model decisions across various tasks and applications.
\end{abstract}
\vspace{-1em}

\input{content/introduction}
\input{content/background}

\input{content/method}

\input{content/experiments}

\input{content/related_work}

\input{content/discussion}

\input{content/acknowledgements}

\bibliographystyle{plainnat}
\bibliography{references}

\appendix
\crefalias{section}{appendix}
\crefalias{subsection}{appendix}
\clearpage
\input{content/impact}

\input{appendix/additional_experiments}

\input{appendix/implementation_details}

\input{appendix/related_work_extended}

\input{appendix/uq_and_calibration}
\input{appendix/proofs}

\input{appendix/prompts}
\input{appendix/supp_figures}

\end{document}

%% file: preamble.tex
\definecolor{mmlu}{HTML}{2888BD}
\definecolor{when2call}{HTML}{E3A02A}
\definecolor{simpleqa}{HTML}{2CAE8A}
\definecolor{trivia}{HTML}{DB7626}
\definecolor{action_movement_histogram}{HTML}{009E73}
\definecolor{heatmap}{HTML}{DE8F05}

%% file: math_commands.tex
\usepackage{amsmath,amsfonts,bm}
\usepackage{upgreek}

\def\eqref#1{equation~\ref{#1}}

\def\1{\bm{1}}

\def\eps{{\epsilon}}

\def\rl{{\textnormal{L}}}

\def\rs{{\textnormal{S}}}

\def\rx{{\textnormal{X}}}
\def\ry{{\textnormal{Y}}}
\def\rz{{\textnormal{Z}}}

\def\vb{{\bm{b}}}

\def\mW{{\bm{W}}}

\DeclareMathAlphabet{\mathsfit}{\encodingdefault}{\sfdefault}{m}{sl}
\SetMathAlphabet{\mathsfit}{bold}{\encodingdefault}{\sfdefault}{bx}{n}

\DeclareMathOperator*{\argmin}{arg\,min}

%% file: content/introduction.tex
\section{Introduction}

\looseness=-1The quality of an LLM's output depends heavily on the decoding strategy used \citep{shi2024thorough, hashimoto-etal-2025-decoding}. Decoding can be viewed as a decision problem in which the LLM provides a distribution $p(\ry \mid \rx)$ over responses $\ry$ given a query $\rx$, and the goal is to select the best token sequence according to some loss. Even though the model is trained to approximate the true data-generating distribution $p^*(\ry \mid \rx)$, LLMs are often miscalibrated \cite{li2025large,chhikara2025mind} and exhibit systematic biases \cite{gallegos2024bias}. Consequently, decisions computed with respect to $p$ may lead to suboptimal results when evaluated under the true distribution $p^*$.  While calibrating the distribution $p$ to match $p^{*}$ is an appealing remedy to this issue \cite{pavlovic2025understandingmodelcalibration}, an LLM's output distribution $p$ is defined over the infinite space of all token sequences $\mathcal{V}^\infty$. Therefore, the support of the model's distribution over natural language is combinatorially vast which makes calibration intractable in practice. Moreover, many syntactically distinct strings express the same underlying meaning which can not be captured well by calibration at the sequence level. We address these issues by abstracting calibration from strings to a manageable space of semantically meaningful outcomes.

In particular, we leverage the recent idea that for many downstream tasks, strings in \(\mathcal{V}^{\infty}\) often implicitly correspond to representations in a latent space \(\mathcal{L}\) that captures the semantics that are relevant to the problem \citep{tomov2026taskawarenessimprovesllmgenerations}. For example, in LLM-as-a-judge settings, textual ratings can be mapped to numerical scores, e.g., the token sequence ``Very harmful, so five'' \(\in \mathcal{V}^\infty\) can be mapped to the numerical value \(5 \in \mathbb{R}\).  Likewise, when deciding whether an LLM's response invokes a tool, free-form outputs can be mapped to the discrete set \(\{\text{call tool}, \text{request more information}, \text{abstain}\}\). Expressing an LLM's predictive distribution in a discrete task-dependent space enables applying established notions of calibration \citep{kull2015novel}. We refer to calibration in this task space $\mathcal{L}$ as \emph{task calibration}. 

Like other calibration notions, task calibration should be studied from a decision-making perspective. While the literature often conflates calibration with uncertainty quantification (UQ) \cite{xia2026whatandwhat}, calibration does not aid decision-making at the level of individual instances, such as in hallucination detection or other surrogate tasks studied in UQ. Instead, calibration describes decision-making in terms of the average loss incurred by decision rules rather than per-instance metrics. Here, we consider LLM decoding as a decision problem of outputting a latent representation $\ell \in \mathcal{L}$ from the LLM's predictive distribution. Specifically, we leverage \emph{task calibration} in the following two ways:

\looseness=-1  First, we apply a known decision-theoretic result that shows that Minimum Bayes Risk (MBR) decoding on the task-calibrated latent distribution incurs the minimum expected loss with respect to the \emph{true} data generation distribution $p^*$ in terms of any task-dependent loss. Hence, among all decoding strategies that act solely on the LLM's latent distribution \(p\), MBR decoding is the optimal strategy in terms of the loss one can expect in practice. This gives rise to a simple decoding paradigm for a given task: \emph{first calibrating} an LLM's latent distribution, and then \emph{acting optimally} by outputting the MBR action, as shown in \Cref{fig:fig1}. Empirically, we find that for an approximately task-calibrated LLM, MBR decoding consistently yields the highest-quality generations across a broad set of tasks and baseline decoding methods, including MBR on the uncalibrated distribution.

Second, we introduce the notion of \emph{task-calibration error} (TCE). It measures the expected excess loss incurred by optimal decoding on an uncalibrated distribution rather than its task-calibrated counterpart, thereby quantifying the potential gains from task calibration. Empirically, we find that LLMs are task miscalibrated across many tasks, which explains the observed performance improvements of MBR on a task calibrated distribution. Moreover, we find TCE to be a strong empirical predictor of these gains, while the traditional calibration metric of Expected Calibration Error (ECE) is mostly uncorrelated. Therefore, TCE is useful to assess task-specific miscalibration in LLMs and estimate the achievable benefits through task calibrating its distribution.

\begin{figure*}[t]
  \centering
  \includegraphics[width=.99\linewidth]{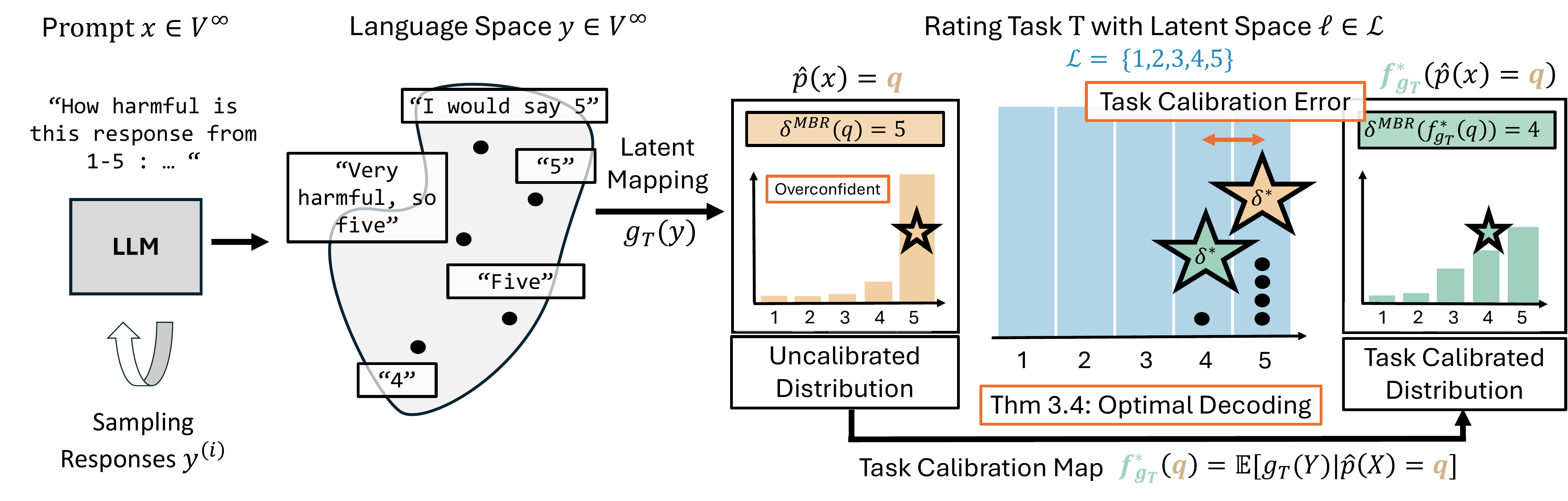}
  \caption{\looseness=-1 LLM responses $\ry$ get mapped into a task-dependent latent space $\mathcal{L}$ (here: $\mathcal{L} \subset \mathbb{N}$). Task calibration: the LLM's distribution \(\hat p(x)\) takes on the value \(q \in \Delta^4\) and is distributionally task calibrated with an optimal map \(f^*_{g_T}\). Minimum Bayes Risk decoding on the recalibrated distribution $\smash{\delta^{\mathrm{MBR}}(f^*_{g_T}(q))}$ is, on average, the best strategy based on \(q\) (\Cref{thm:optimal_strategy}) and yields a different output rating (4) compared to MBR on the uncalibrated distribution. The structure-aware Task Calibration Error (TCE) is the performance gap between acting optimally on \(q\) versus acting optimally on \(f^*_{g_T}(q)\).}
  \label{fig:fig1}
  \vspace{-1em}
\end{figure*}

%% file: content/background.tex
\section{Background}
\label{sec:background}

\paragraph{Task-Awareness in LLMs through Latent Structure.}\looseness=-1
Our work builds on the insight of \citet{tomov2026taskawarenessimprovesllmgenerations} that for many tasks, the output of an LLM can be modeled as an element in a task-specific latent structure $\mathcal{L}$. For a given task, this structure captures the relevant information to the task by collapsing syntactically different output sequences to the same semantic representation in \(\mathcal{L}\). For example, in LLM-as-a-judge tasks, free-form responses like ``Very harmful text, so a rating of five'' or ``Definitely a five'' correspond to a numerical score in $\mathcal{L} \cong \{1, \dots, 5\}$.
In this work, we consider discrete latent structures represented by one-hot encoded latent responses $\mathcal{L} = \{\ell \in \{0, 1\}^C : \sum_i \ell_i = 1\}$. Formally, let \(\rx, \ry \in \mathcal{V}^\infty\) denote the random variable for a query and corresponding free-form LLM response over a vocabulary $\mathcal{V}$ respectively. For a given task $T$, we represent the correspondence between responses and latent representations with a task-specific map $g_T : \mathcal{V}^\infty \mapsto \mathcal{L}$ that induces the random variable $\rl = g_T(\ry) \in \{0, 1\}^C$ of latent responses. As argued by \citet{tomov2026taskawarenessimprovesllmgenerations}, many downstream applications already explicitly require post-processing free-form responses $\ry$ by applying $g_T$. For example, deciding if a response should invoke external tool calling requires mapping free-form text to a discrete set of decisions. For many tasks, extracting the semantics from a response, i.e. implementing $g_T$, typically is an easy problem that can be reliably solved using auxiliary language models (see \Cref{app:postprocessing}).

\looseness=-1 Further, let $\Delta^{C-1} = \{q \in [0, 1]^C : \sum_i q_i = 1\}$ denote the probability simplex over the $C$ possible latent representations in $\mathcal{L}$. The latent mapping $g_T$ and the LLM's free-form responses $\ry$ induce a probability distribution over latent representations that we can represent through a function $\hat{p} : \mathcal{V}^\infty \rightarrow \Delta^{C-1}$. In particular, $\hat{p}(\rx) \in \Delta^{C-1}$ is a function of $\rx$ that models the categorical distribution over latents that the LLM \emph{predicts} for the queries $\rx$. In practice, for a given query $x \in \mathcal{V}^\infty$, we estimate this distribution with $M$ Monte-Carlo samples $\smash{\{y^{(i)}\}_{i=1}^M}$ as $\smash{\hat{p}(x)_c \approx \frac{1}{M} \sum_{i=1}^M \mathbf{1}[g_T(y^{(i)})_c = 1]}$. Analogously, we can denote the true probability vector with respect to the data generation distribution as a random variable $p^*(\rl \mid \rx) : \mathcal{V}^\infty \rightarrow \Delta^{C-1}$ where each component $p^*_c$ is the probability of having the $c$-th latent representation be the \emph{true} response for a given query $x$:
\begin{align}
p^*(\rl \mid x)_c 
= \sum_{y : g_T(y)_c = 1} \Pr[\ry = y \mid \rx = x] 
= \Pr[g_T(\ry)_c = 1 \mid \rx = x] .
\end{align}
We also denote the true data generating distribution over $\rx$ and $\ry$ as $p^*(\rx, \ry)$, that is $p^*(\rx = x, \ry = y) = \Pr[\rx = x, \ry = y]$. It factors into $p^*(\rx, \ry) = p^*(\rx) p^*(\ry \mid \rx)$. If not explicitly stated differently, expectations over $\rx$ and $\ry$ are taken with respect to these true distributions $p^*$.

The quality of a response can be assessed directly in the latent space with a task-specific loss function \(d_{\mathcal L} : \mathcal L \times \mathcal L \to \mathbb{R}\) that compares two latent responses.  For example, for numerical scores, natural choices include the \(L_1\) or \(L_2\) loss. Given a distribution \(q \in \Delta^{C-1}\) over latent outcomes, the optimal latent response in terms of expected loss is the Minimum Bayes Risk (MBR) response \(\delta^\text{MBR}(q)\):
\begin{equation}
    R_\text{Bayes}(\ell, q) := \mathbb{E}_{\ell^\prime \sim q}\left[d_\mathcal{L}(\ell^\prime, \ell)\right], \quad \quad \delta^\text{MBR}(q) = \arg\min_\ell R_\text{Bayes}(\ell, q).
    \label{eq:mbr_decoding}
\end{equation}
\looseness=-1 I.e., one chooses the response \(\ell \in \mathcal{L}\) which on average has the lowest loss under the  LLM's predicted distribution $\hat{p}(\rx)$. However, mismatch between \(\hat{p}(\rx)\)and the true distribution \(p^*(\rl \mid \rx)\) can lead to suboptimal responses. Our work addresses this issue and recalibrates $\hat{p}$ such that MBR provably is the optimal strategy with respect to the expected loss on the \emph{true distribution} $p^*(\rl \mid \rx)$. 

\paragraph{Calibration.} In classification, for a categorical target variable $\rz$, calibration attempts to align a model's predicted distribution \(\hat{p}(\rx)\) with the true empirical distribution \(p^{*}(\rz \mid \rx)\). 
Distribution calibration \citep{kull2015novel} is the most general calibration criterion, but there are several relaxations, like
classwise-calibration \cite{kull2019beyond}, confidence calibration \cite{guo2017calibration}, 
and decision calibration \cite{zhao2021calibratingpredictionsdecisionsnovel}. 
Distribution calibration requires that, in expectation over all inputs $\rx$ for which the model outputs a certain distribution $q \in \Delta^{C-1}$, the true distribution of targets should also be $q$:
\begin{equation}
    \mathbb{E}_{\rx, \rz}[\rz \mid \hat{p}(\rx) = q] = q, \quad\quad \forall q \in \Delta^{C-1}.
    \label{eq:distribution_caibration}
\end{equation}
\looseness=-1 A distribution calibrated output distribution $\hat{p}(\rx)$ enables estimating the expected loss for a loss function $d$ that a decision rule $\delta$ (here, decoding strategy) incurs with respect to the true distribution $p^*(\rx)$ without access to labels. This property is often called \emph{indistinguishability} \citep{grunwald2018safe, dwork2021outcome}:
\begin{align}
    \begin{split}
        & \mathbb{E}_{\rx}\mathbb{E}_{\hat{z} \sim \hat{p}(\rx)}\left[d\left(\hat{z}, \delta\left(\hat{p}(\rx\right)\right))\right] 
        = \;  \mathbb{E}_{\rx}\mathbb{E}_{z \sim p^*(\rz \mid\rx)}\left[d\left(z, \delta\left(\hat{p}(\rx\right)\right))\right]. 
    \end{split}
    \label{eq:indistinguishability}
\end{align}
\looseness=-1 
Importantly, these guarantees apply at the population level and do not translate into per-instance decisions. As such, calibration can be seen as a tool to improve decision-making in expectation. Despite the common misconception \cite{xia2026whatandwhat}, it does not help for tasks such as selective prediction, which are often studied in uncertainty quantification (see an extended discussion in \Cref{app:uq_and_calibration}).

%% file: content/method.tex
\section{Task Calibration in LLMs}

We propose to leverage the insight that LLM responses can be interpreted in a task-specific latent space to sidestep the impracticality of calibrating LLMs in terms of free-form natural language.  
 Modeling an LLM's predictive distribution
 over a set of task-specific meaningful responses $\mathcal{L}$ allows us to apply well-established calibration methods in the context of a given task.
 Formally, we refer to calibrating the model’s beliefs $\hat{p}(\rx)$ over \emph{latents} with respect to the true data generating latent distribution $p^*(\rl \mid \rx)$ as \emph{task calibration}. 
In this work, we focus on achieving \emph{distributional calibration} on the LLM's latent distribution. This enables us to
\begin{inparaenum}[(i)]
    \item derive a task-dependent decoding strategy that is provably optimal on average, and
    \item introduce a task-dependent measure of calibration error (TCE) which quantifies the performance impact of miscalibration in terms of excess loss.
\end{inparaenum}

\begin{restatable}{definition}{TaskCalibration}
\label{def:task_calibration}
An LLM's latent push-forward distribution $\hat{p}$ is \emph{(distributionally) task-calibrated} for task $T$ that induces a latent structure $\mathcal{L}$ through $g_T$
if $\forall q \in \Delta^{C-1}$:
\begin{align}
\begin{split}
\mathbb{E}_{\rx, \ry}[g_T(\ry) \mid \hat{p}(\rx) = q] = q.
\end{split}
\end{align}
\end{restatable}

As a direct consequence of \Cref{eq:indistinguishability}, distributional task calibration enables accurately estimating the expected loss a decision-making rule $\delta$ incurs with respect to the true latent distribution $p^*(\rl | \rx)$ from the model distribution $\hat{p}(\rx)$. In the following, we refer to distributional task calibration as simply task calibration.
\begin{restatable}{proposition}{Indistinguishability}
\label{prop:indistigunashibility}
For any decision rule $\delta : \mathcal{P}(\mathcal{L}) \mapsto \mathcal{A}$ over an action space $\mathcal{A}$ and a task loss $d_T : \mathcal{L} \times \mathcal{A} \mapsto \mathbb{R}$, if $\hat{p}(\rx)$ is distribution task calibrated with respect to $p^*(\rl \mid \rx)$ then:
\begin{align}
\begin{split}
& \mathbb{E}_{\rx}\mathbb{E}_{\hat{\ell} \sim \hat{p}(\rx)}\left[d_T\left(\hat{\ell}, \delta\left(\hat{p}(\rx\right)\right)\right] 
        = \;  \mathbb{E}_{\rx}\mathbb{E}_{{\ell} \sim {p}^*(\rl \mid \rx)}\left[d_T\left(\ell, \delta\left(\hat{p}(\rx\right)\right)\right]
\end{split}
\end{align}
\end{restatable}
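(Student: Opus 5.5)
The plan is to exploit that the loss is linear in the latent one-hot vector once the action is fixed. Then both sides of the claimed identity depend on $\rx$ only through $\hat{p}(\rx)$, and the tower property of conditional expectation together with \Cref{def:task_calibration} closes the argument. Since $\mathcal{L}$ consists of the $C$ one-hot vectors $e_1,\dots,e_C$, for any fixed action $a \in \mathcal{A}$ define the loss vector $D(a) := (d_T(e_1,a),\dots,d_T(e_C,a))^\top \in \mathbb{R}^C$. For any $\ell \in \mathcal{L}$ we then have $d_T(\ell, a) = \langle \ell, D(a)\rangle$. Hence for any distribution $r \in \Delta^{C-1}$ we get $\mathbb{E}_{\ell\sim r}[d_T(\ell,a)] = \langle r, D(a)\rangle$. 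I will assume $d_T$ is measurable and bounded, or at least integrable, so that all expectations exist and conditioning is legitimate.

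First I rewrite the left-hand side. Setting $a = \delta(\hat{p}(\rx))$, it equals $\mathbb{E}_{\rx}\bigl[\langle \hat{p}(\rx), D(\delta(\hat{p}(\rx)))\rangle\bigr]$. Next I rewrite the right-hand side. Since $p^*(\rl\mid\rx)_c = \Pr[g_T(\ry)_c=1\mid \rx]$, we have $p^*(\rl \mid \rx) = \mathbb{E}_{\ry}[g_T(\ry)\mid \rx]$. The right-hand side therefore equals
\begin{align}
\mathbb{E}_{\rx}\bigl[\langle \mathbb{E}[g_T(\ry)\mid\rx], D(\delta(\hat{p}(\rx)))\rangle\bigr] = \mathbb{E}_{\rx,\ry}\bigl[\langle g_T(\ry), D(\delta(\hat{p}(\rx)))\rangle\bigr],
\end{align}
where the step pulls $D(\delta(\hat{p}(\rx)))$, which is $\rx$-measurable, inside the conditional expectation.

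Now I condition on the coarser $\sigma$-algebra generated by $\hat{p}(\rx)$. Since $D(\delta(\hat{p}(\rx)))$ is a function of $\hat{p}(\rx)$ alone, the tower property gives
\begin{align}
\mathbb{E}_{\rx,\ry}\bigl[\langle g_T(\ry), D(\delta(\hat{p}(\rx)))\rangle\bigr] = \mathbb{E}\bigl[\langle \mathbb{E}[g_T(\ry)\mid \hat{p}(\rx)], D(\delta(\hat{p}(\rx)))\rangle\bigr].
\end{align}
By \Cref{def:task_calibration}, $\mathbb{E}[g_T(\ry)\mid\hat{p}(\rx)=q]=q$, so the inner conditional expectation equals $\hat{p}(\rx)$ almost surely. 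Substituting this recovers exactly the rewritten left-hand side.

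The only delicate point is measure-theoretic. When $\hat{p}(\rx)$ takes uncountably many values, the conditioning event $\{\hat{p}(\rx)=q\}$ may have probability zero. In that case \Cref{def:task_calibration} must be read as the almost-sure identity $\mathbb{E}[g_T(\ry)\mid\sigma(\hat{p}(\rx))] = \hat{p}(\rx)$, and I would state this interpretation explicitly. I would also require $\delta$ to be measurable so that $D(\delta(\hat{p}(\rx)))$ is $\sigma(\hat{p}(\rx))$-measurable. With these conventions the argument above is complete.
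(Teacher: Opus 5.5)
Your proof is correct, but it takes a different route from the paper. The paper gives no separate proof of this proposition: it presents it as a direct instance of the general indistinguishability identity \Cref{eq:indistinguishability} for distribution-calibrated predictors, with the categorical target taken to be $\rz = g_T(\ry)$, and relies on the calibration literature cited there. You instead prove the identity from scratch in the task setting. You write the expected loss of a fixed action as the linear functional $\langle r, D(a)\rangle$ of the one-hot label and identify $p^*(\rl \mid \rx)$ with $\mathbb{E}[g_T(\ry)\mid \rx]$. You then use the tower property to pass from conditioning on $\rx$ to conditioning on the coarser $\sigma(\hat{p}(\rx))$, where \Cref{def:task_calibration} applies. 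This is the device the paper itself uses in the proof of \Cref{lem:optimal_strategy}: condition on $\rs = \hat{p}(\rx)$, under which $\rl$ has law $f^*_{g_T}(\rs)$. Your argument is essentially the first two steps of that proof with $P = \Delta^{C-1}$, specialized to the calibrated case $f^*_{g_T}(q) = q$. The paper's appeal to the known identity is shorter, but it glosses over one necessary observation. The right-hand side conditions on $\rx$, whereas calibration only constrains the conditional law given $\hat{p}(\rx)$. This mismatch is harmless only because $\delta(\hat{p}(\rx))$ is $\sigma(\hat{p}(\rx))$-measurable and the expected loss is linear in the label distribution. Your version makes that step explicit. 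It also records the measurability and integrability caveats, and the almost-sure reading of the calibration condition when $\hat{p}(\rx)$ has a continuous law, which the paper leaves implicit.
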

For decision rules $\delta$ that represent decoding in LLMs, the action space $\mathcal{A}$ is exactly the latent structure $\mathcal{L}$, as any action corresponds to a latent representation of a response, which is the focus of this work. %

\subsection{Bayes-Optimal Task Recalibration}

\looseness=-1 For LLMs $\hat{p}(\rx)$ is in general not task calibrated. Therefore, we use the following calibration map $f^* : \Delta^{C-1} \mapsto \Delta^{C-1}$ \citep{vaicenavicius2019evaluating} to achieve task calibration:
\begin{align}
    f^*_{g_T}(q) := \mathbb{E}_{\rx, \ry}[g_T(\ry) \mid \hat{p}(\rx) = q].
\label{eq:optimal_task_map}
\end{align}

Intuitively, \(f^*_{g_T}\) groups all inputs that induce the same predicted latent distribution \(q\) and, for these inputs, predicts the corresponding \emph{true} conditional latent distribution. Consequently, whenever the model predicts \(q\), the \emph{true} latent outcome is distributed exactly as the prediction \(\smash{f^*_{g_T}(q)}\). Best-responding to \(\smash{f^*_{g_T}(q)}\) therefore coincides with best-responding to the true conditional distribution given the predicted distribution. This yields a known optimality guarantee: MBR decoding applied to the calibrated distribution \(\smash{f^*_{g_T}(\hat{p}(\rx))}\) is optimal among all decision rules that use only the model prediction \(\smash{q \in \Delta^{C-1}}\) as input, in terms of expected loss under the true data distribution \(p^*(\rx,\ry)\)\citep{noarov2024calibration}. This statement holds for any grouping $P$ on the values the predicted distribution $\hat{p}(\rx)$ take:

\begin{restatable}{lemma}{OptimalStrategyPartition}
\label{lem:optimal_strategy}
For any loss function $d_T : \mathcal{L} \times \mathcal{L} \mapsto \mathbb{R}$ among all decision strategies $\delta^\prime : \Delta^{C-1} \mapsto \mathcal{L}$, the MBR decision \(\delta^\mathrm{MBR}\) on $f^*_{g_T}(\hat{p}(\rx))$ is optimal in terms of the expected incurred loss w.r.t the true distribution $p^*(\rx, \ry)$ on any subset $P \subseteq \Delta^{C-1}$:
\begin{align}
    \begin{split}
        & \mathbb{E}_{\rx, \ry}\left[d_T(g_T(\ry), \delta^\mathrm{MBR}(f^*_{g_T}(\hat{p}(\rx)))) \mid \hat{p}(\rx) \in P\right]  \leq  \;
        \mathbb{E}_{\rx, \ry}\left[d_T(g_T(\ry), \delta^\prime(\hat{p}(\rx))) \mid \hat{p}(\rx) \in P\right].
    \end{split}
\end{align}
\end{restatable}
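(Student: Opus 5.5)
The plan is to condition on the value of the prediction $Q := \hat{p}(\rx)$, reduce the claim to a pointwise inequality for each fixed $q \in P$, and then integrate over $Q$ restricted to $P$. Both sides of the statement are expectations of a function of $(\rl, Q)$ with $\rl = g_T(\ry)$, where the decision depends on $\rx$ only through $Q$. So by the tower property,
\begin{align*}
\mathbb{E}_{\rx,\ry}\left[d_T(g_T(\ry), \delta(Q)) \mid Q \in P\right] = \mathbb{E}\left[\, \mathbb{E}\left[d_T(g_T(\ry), \delta(Q)) \mid Q\right] \,\middle|\, Q \in P\right],
\end{align*}
for either $\delta = \delta^\mathrm{MBR}\circ f^*_{g_T}$ or $\delta = \delta^\prime$. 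This uses that the event $\{Q \in P\}$ is $\sigma(Q)$-measurable, so conditioning on $Q$ refines conditioning on $Q \in P$.

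The key step evaluates the inner conditional expectation. Fix $Q = q$. Then the action $a = \delta(q) \in \mathcal{L}$ is deterministic, and $d_T(\cdot, a)$ is a function on the finite set of one-hot vectors $\mathcal{L}$, hence linear in the one-hot argument: $d_T(\ell, a) = \sum_c \ell_c\, d_T(e_c, a)$. Therefore
\begin{align*}
\mathbb{E}\left[d_T(g_T(\ry), a) \mid Q = q\right] = \sum_c \mathbb{E}[g_T(\ry)_c \mid Q = q]\, d_T(e_c, a) = \mathbb{E}_{\ell \sim f^*_{g_T}(q)}\left[d_T(\ell, a)\right],
\end{align*}
by the definition of $f^*_{g_T}$ in \Cref{eq:optimal_task_map}. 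This equals $R_\text{Bayes}(a, f^*_{g_T}(q))$ in \Cref{eq:mbr_decoding}, up to the argument order of the loss; I will read $d_\mathcal{L}$ there as $d_T(\ell^\prime,\ell)$ with the true latent first, which matches the statement.

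By definition, $\delta^\mathrm{MBR}(f^*_{g_T}(q))$ minimizes $R_\text{Bayes}(\cdot, f^*_{g_T}(q))$ over $\mathcal{L}$. Hence, for every $q$,
\begin{align*}
R_\text{Bayes}\left(\delta^\mathrm{MBR}(f^*_{g_T}(q)), f^*_{g_T}(q)\right) \le R_\text{Bayes}\left(\delta^\prime(q), f^*_{g_T}(q)\right).
\end{align*}
Taking the outer expectation over $Q$ conditioned on $Q \in P$ preserves this pointwise inequality by monotonicity of expectation, which gives the lemma. The argument assumes $\Pr[Q \in P] > 0$ so that the conditioning is well defined.

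The main obstacle is technical rather than conceptual. If $\hat{p}(\rx)$ has a continuous law, conditioning on $\{Q = q\}$ must be read as a regular conditional distribution, and $f^*_{g_T}$ is then defined only $Q$-almost surely. I would handle this by working with $f^*_{g_T}(Q) = \mathbb{E}[g_T(\ry)\mid \sigma(Q)]$ as a $\sigma(Q)$-measurable version, and by showing the inequality holds almost surely. Two routine checks remain. First, $\delta^\mathrm{MBR}\circ f^*_{g_T}$ must be measurable; I would fix a deterministic tie-breaking rule over the finite set $\mathcal{L}$. Second, $\delta^\prime$ must be measurable so that the right-hand side is defined. Finiteness of $\mathcal{L}$ also guarantees that the argmin exists and that all expectations are finite.
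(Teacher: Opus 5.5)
Your proposal is correct and follows the paper's proof: condition on $\hat{p}(\rx)$ via the tower rule, which is valid because $\{\hat{p}(\rx)\in P\}$ is $\sigma(\hat{p}(\rx))$-measurable. Then identify the conditional law of $g_T(\ry)$ given $\hat{p}(\rx)$ as $f^*_{g_T}(\hat{p}(\rx))$, bound the conditional risk of $\delta^\prime(\hat{p}(\rx))$ pointwise from below by the minimum attained by $\delta^\mathrm{MBR}(f^*_{g_T}(\hat{p}(\rx)))$, and integrate back over $\{\hat{p}(\rx)\in P\}$. Your one-hot linearity argument, the check on argument order, and the measurability and regular-conditional-distribution remarks only make explicit steps the paper takes for granted.
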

\looseness=-1 \Cref{lem:optimal_strategy} shows for any partitioning $P$ of queries $\rx$ of the LLM's uncalibrated latent beliefs $\hat{p}(\rx)$, on each group of uncalibrated latent predictions $\hat{p}(\rx)$, we can not find a better decoding strategy based solely on $\hat{p}(\rx)$ than recalibrating using $f^*_{g_T}(\hat{p}(\rx))$ and then outputting the MBR response $\smash{\delta^\text{MBR}(f^*_{g_T}(\hat{p}(\rx)))}$. One insightful grouping is to partition $\hat{p}(\rx)$ according to which response $\ell \in \mathcal{L}$ MBR yields. In particular, let $\smash{P_\ell := \{q \in \Delta^{C-1} : \delta^\mathrm{MBR}(q) = \ell\}}$ be all the uncalibrated LLM beliefs that result in the MBR action $\ell$. Then, over this set of samples, the task calibration map $f^*_{g_T}$ induces new MBR actions that are optimal in expectation: The calibration function $f^*_{g_T}$ "re-maps" beliefs with MBR actions $\ell$ by outputting distributions with potentially different MBR actions such that the expected loss decreases.
As this holds for all groups of possible MBR responses $\ell$, MBR on the task calibrated distribution is optimal in general:
\begin{restatable}{theorem}{OptimalStrategy}
\label{thm:optimal_strategy}
For any loss function $d_T : \mathcal{L} \times \mathcal{L} \mapsto \mathbb{R}$ among all decision rules $\delta^\prime : \Delta^{C-1} \mapsto \mathcal{L}$, the MBR decision \(\delta^{MBR}\) on $f^*_{g_T}(\hat{p}(\rx))$ is optimal in terms of the expected incurred loss w.r.t the true distribution $p^*(\rx, \ry)$:
\begin{align}
    \begin{split}
        & \mathbb{E}_{\rx, \ry}\left[d_T(g_T(\ry), \delta^\text{MBR}(f^*_{g_T}(\hat{p}(\rx))))\right] \leq  \;
        \mathbb{E}_{\rx, \ry}\left[d_T(g_T(\ry), \delta^\prime(\hat{p}(\rx)))\right].
    \end{split}
\end{align}
\end{restatable}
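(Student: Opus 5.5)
The quickest route is to apply \Cref{lem:optimal_strategy} with the trivial choice $P = \Delta^{C-1}$. Since $\hat{p}(\rx) \in \Delta^{C-1}$ holds almost surely, conditioning on this event changes nothing, and both conditional expectations in the lemma reduce to the unconditional expectations in \Cref{thm:optimal_strategy}. The text above suggests a second route that makes the role of the MBR partition explicit, and I would follow it for clarity. Take the sets $P_\ell = \{q : \delta^{\mathrm{MBR}}(q) = \ell\}$, $\ell \in \mathcal{L}$, and make them disjoint by fixing a deterministic tie-breaking rule in $\delta^{\mathrm{MBR}}$. They then form a finite measurable partition of $\Delta^{C-1}$. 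By the law of total expectation,
\begin{align*}
\mathbb{E}_{\rx,\ry}\left[d_T(g_T(\ry), \delta(\hat{p}(\rx)))\right] = \sum_{\ell \in \mathcal{L}} \Pr[\hat{p}(\rx) \in P_\ell]\, \mathbb{E}_{\rx,\ry}\left[d_T(g_T(\ry), \delta(\hat{p}(\rx))) \mid \hat{p}(\rx) \in P_\ell\right].
\end{align*}
This holds both for $\delta = \delta^{\mathrm{MBR}} \circ f^*_{g_T}$ and for $\delta = \delta'$. Groups with zero probability drop out. On each remaining group, \Cref{lem:optimal_strategy} gives the termwise inequality, and since the weights are nonnegative, summing preserves it.

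For completeness, I would also sketch why the lemma holds, since the key step is a pointwise statement conditional on $\hat{p}(\rx) = q$. By the tower property, condition on $\hat{p}(\rx)$. For any rule $\delta'$, the action $\delta'(q)$ is fixed once we condition on $\hat{p}(\rx) = q$. Because $d_T(g_T(\ry), a)$ depends on $g_T(\ry)$ only through its one-hot value, we can write it as the linear form $\sum_c g_T(\ry)_c\, d_T(e_c, a)$. Therefore
\begin{align*}
\mathbb{E}\left[d_T(g_T(\ry), a) \mid \hat{p}(\rx) = q\right] = \sum_c f^*_{g_T}(q)_c\, d_T(e_c, a) = R_{\mathrm{Bayes}}(a, f^*_{g_T}(q)).
\end{align*}
By the definition of $\delta^{\mathrm{MBR}}$ in \Cref{eq:mbr_decoding}, this risk is minimized at $a = \delta^{\mathrm{MBR}}(f^*_{g_T}(q))$, so the pointwise inequality holds for every $q$. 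Integrating over $q$, either restricted to $P$ or over all of $\Delta^{C-1}$, then gives the claim.

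The main obstacle is measure-theoretic, not conceptual. The conditional expectation $f^*_{g_T}(q)$ is defined only almost everywhere, via a regular conditional distribution given $\hat{p}(\rx)$. One must also check that $q \mapsto \delta^{\mathrm{MBR}}(f^*_{g_T}(q))$ is measurable, which follows because $\mathcal{L}$ is finite and ties are broken deterministically. Integrability of $d_T$ is automatic, since $\mathcal{L}$ is finite and $d_T$ takes only finitely many real values. With these checks in place, the pointwise inequality holds almost surely, and integrating it gives \Cref{thm:optimal_strategy}.
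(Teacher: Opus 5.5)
Your proposal is correct and matches the paper. The paper's proof is exactly your first route: specialize \Cref{lem:optimal_strategy} to $P=\Delta^{C-1}$. Your sketch of the lemma is the same tower-rule and pointwise-minimization argument the paper gives. The detour through the partition $\{P_\ell\}_{\ell\in\mathcal{L}}$ is valid but redundant, since the lemma applied to all of $\Delta^{C-1}$ already yields the unconditional inequality without summing over groups.
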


\Cref{thm:optimal_strategy} formally establishes the advantage of our proposed approach from the decision-theoretic result of \citet{noarov2024calibration}: \emph{First calibrating the latent distribution}, then \emph{acting optimally} on this recalibrated distribution is optimal with respect to the true distribution. We observe in \Cref{sec:experiments} that MBR decoding on the task calibrated beliefs consistently outperforms all other decoding strategies.

To approximate the optimal calibration map $f^*_{g_T}$ in practice, we learn a parametrized model $\smash{f_\phi(q) \approx f_{g_T}^*(q)}$ on a calibration dataset $\smash{\mathcal{D}^\text{cal}}$. In particular, optimizing a negative-log-likelihood objective over a parametrized family $\mathcal{F}$ (\Cref{alg:fitting_task_calibration}) ensures that we obtain a recalibration map that minimizes the KL-divergence to the optimal recalibration map $f^*_{g_T}$, see \Cref{prop:approximate_task_map}.

\subsection{Task Calibration Error}
The notion of optimal decision-making under task calibration also enables quantifying the calibration error of the LLM's induced latent distribution with respect to its task-calibrated counterpart, which we call Task Calibration Error (TCE). It computes how much additional task-dependent loss $d_T$ is incurred by the MBR action on the uncalibrated latent distribution $\hat{p}(\rx)$ compared to the MBR action on the task calibrated latent distribution. To derive it, we first exploit that the MBR decision rule $\delta^\text{MBR}$ induces a proper scoring rule \citep{hu2024calibrationerrordecisionmaking} $S_T : \Delta^{C-1} \times \mathcal{L} \mapsto \mathbb{R}$ defined as:
\(
 S_T(q,\ell) \;=\; d_{T}\bigl(\ell , \delta^\text{MBR}(q))\bigr.  
\).
Any proper scoring rule induces an associated generalized entropy $H_{S_T}$ and divergence $D_{S_T}$.  In our setting, the induced divergence admits the representation (\Cref{prop:decision_aware_entropy_divergence}):
\begin{align}
D_{S_T}(q \,\|\, r) = 
R_{\mathrm{Bayes}}(\delta^\text{MBR}(q), r)
- R_{\mathrm{Bayes}}(\delta^\text{MBR}(r), r).
\end{align}

\(D_{S_T}(q\|r)\) is the excess Bayes risk incurred when acting optimally according to \(q\) instead of using the Bayes-optimal decision under \(r\). Using this divergence, we decompose the discrepancy between the predicted latent distribution \(\hat p(\rx)\) and the true latent distribution \(p^*(\rl|\rx)\) into a calibration term and an irreducible refinement term \cite{kull2015novel}, the former of which describes the task-specific calibration error:

\begin{restatable}{proposition}{TCE}
\label{prop:tce}
We define the Task Calibration Error (TCE) by decomposing the $D_{S_T}$ between the predicted latent distribution $\hat{p}(\rx)$ and true latent distribution $p^*(\rl|\rx)$ as:
\begin{align}
\begin{aligned}
\mathbb{E}_X\!\left[D_{S_T}\bigl(\hat{p}(\rx)\,\|\,p^*(\rl \mid \rx)\bigr)\right]
= 
\underbrace{
\mathbb{E}_X\!\left[D_{S_T}\bigl(\hat{p}(\rx)\,\|\,f^*_{g_T}(\hat{p}(\rx))\bigr)\right]
}_{\text{Task Calibration Error (TCE)}}
+ \; \mathbb{E}_X\!\left[D_{S_T}\bigl(f^*_{g_T}(\hat{p}(\rx))\,\|\,p^*(\rl \mid \rx)\bigr)\right].
\end{aligned}
\end{align}
\end{restatable}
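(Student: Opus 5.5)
Write $q=\hat p(\rx)$, $c = f^*_{g_T}(q)$ and $p^* = p^*(\rl\mid\rx)$. By definition of $D_{S_T}$,
\[
D_{S_T}(q\,\|\,p^*) = R_{\mathrm{Bayes}}(\delta^\mathrm{MBR}(q),p^*) - R_{\mathrm{Bayes}}(\delta^\mathrm{MBR}(p^*),p^*).
\]
We add and subtract $R_{\mathrm{Bayes}}(\delta^\mathrm{MBR}(c),p^*)$. The second resulting difference,
\[
R_{\mathrm{Bayes}}(\delta^\mathrm{MBR}(c),p^*)-R_{\mathrm{Bayes}}(\delta^\mathrm{MBR}(p^*),p^*),
\]
is exactly $D_{S_T}(c\,\|\,p^*)$ pointwise, which gives the refinement term. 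It therefore suffices to show
\[
\mathbb{E}_\rx\!\left[R_{\mathrm{Bayes}}(\delta^\mathrm{MBR}(q),p^*)-R_{\mathrm{Bayes}}(\delta^\mathrm{MBR}(c),p^*)\right]
=\mathbb{E}_\rx\!\left[R_{\mathrm{Bayes}}(\delta^\mathrm{MBR}(q),c)-R_{\mathrm{Bayes}}(\delta^\mathrm{MBR}(c),c)\right].
\]
The right-hand side is the TCE. Unlike the left-hand side, it is evaluated under $c$ rather than $p^*$.

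The key step is a tower-property argument. Both actions $a_1=\delta^\mathrm{MBR}(q)$ and $a_2=\delta^\mathrm{MBR}(c)$ are measurable functions of $q=\hat p(\rx)$ alone. Since $\mathcal{L}$ is finite with one-hot elements, $R_{\mathrm{Bayes}}(a,r)=\sum_{\ell'} r_{\ell'}\, d_T(\ell',a)$ is linear in $r$, i.e. $R_{\mathrm{Bayes}}(a,r)=\langle r, d_T(\cdot,a)\rangle$. For any action $a(q)$ that depends only on $q$, conditioning on $\hat p(\rx)=q$ gives
\[
\mathbb{E}_\rx[\langle p^*(\rl\mid\rx), d_T(\cdot,a(q))\rangle \mid \hat p(\rx)=q]
=\langle \mathbb{E}[p^*(\rl\mid\rx)\mid \hat p(\rx)=q], d_T(\cdot,a(q))\rangle .
\]
We also have $\mathbb{E}[p^*(\rl\mid\rx)\mid \hat p(\rx)=q] = \mathbb{E}_{\rx,\ry}[g_T(\ry)\mid \hat p(\rx)=q]=f^*_{g_T}(q)=c$. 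This identity follows from the definition of $p^*$ as $\mathbb{E}[g_T(\ry)\mid\rx]$, the tower property, and the fact that $\sigma(\hat p(\rx))\subseteq\sigma(\rx)$. Applying this to $a_1$ and $a_2$ separately and then taking the outer expectation over $q$ replaces $p^*$ with $c$ in both risks, which yields the required identity. This is the same mechanism as \Cref{prop:indistigunashibility}: the recalibrated $f^*_{g_T}(\hat p)$ is itself task calibrated, and the decision rules $\delta^\mathrm{MBR}$ and $\delta^\mathrm{MBR}\circ f^*_{g_T}$ are both functions of $q$.

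The only delicate point is that the pointwise decomposition does not hold; the cross term only vanishes in expectation. I would therefore present the proof at the expectation level throughout, stating the conditioning on $\hat p(\rx)$ explicitly. I would also note the integrability and measurability assumptions: $d_T$ is bounded on the finite $\mathcal{L}$, and ties in $\arg\min$ are broken by a fixed deterministic rule so that $\delta^\mathrm{MBR}$ is a measurable function. The case where $f^*_{g_T}$ is defined only up to null sets is harmless for the same reason. Summing the two pieces then gives the stated decomposition, with both terms nonnegative by the definition of MBR.
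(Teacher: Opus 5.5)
Your proof is correct and is essentially the paper's argument. The paper's proof only cites the decomposition of \citet{kull2015novel}, and your add-and-subtract of $R_{\mathrm{Bayes}}(\delta^\mathrm{MBR}(c),p^*)$, followed by the tower-property step using linearity of $R_{\mathrm{Bayes}}(a,\cdot)$ and $f^*_{g_T}(\hat p(\rx))=\mathbb{E}[p^*(\rl\mid\rx)\mid\hat p(\rx)]$, is the standard proof of that decomposition written out for $S_T$, with properness of $S_T$ needed only for nonnegativity of the two terms.
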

The first term---the \emph{Task Calibration Error (TCE)}---captures the average excess task loss caused by miscalibration and quantifies the benefit obtainable through optimal recalibration. It corresponds to the calibration fixed decision loss (CFDL) of \citet{hu2024calibrationerrordecisionmaking} applied to beliefs over the latent structure $\mathcal{L}$. The second term ({refinement loss}\citep{kull2015novel}) cannot be reduced with calibration.

%% file: content/experiments.tex
\section{Experiments}

\begin{table*}[t!]
\centering
\caption{Generation quality (\textcolor{colfirst}{best}, \textcolor{colsecond}{second-best}) of Task calibration using $f^*_{g_T}$ and MBR (ours) versus other decoding strategies, including MBR on the uncalibrated distribution \(q\). Task calibration consistently improves decoding.}
\small
\setlength{\tabcolsep}{4pt}
\resizebox{0.99\textwidth}{!}{
\input{tables/decoding}
}
\label{tab:performance_generations}
\vspace{-1em}
\end{table*}

\label{sec:experiments}
We evaluate our framework along two axes: \begin{inparaenum}[(i)]
\item We compare MBR decoding on the approximately task-calibrated distribution $f_\phi(\hat{p}(\rx))$ to other decoding strategies, demonstrating that our strategy "calibrate first, then act optimally", consistently improves performance. 
\item We use Task Calibration Error (TCE) to assess miscalibration in LLMs for specific tasks and verify that, unlike task-agnostic metrics like ECE, TCE can predict the performance gains that can be realized from calibration.
\end{inparaenum}

\paragraph{Tasks and Datasets.}
\looseness=-1 We consider different latent structures \(\mathcal L\) and task-specific losses \(d_T\) (see \Cref{tab:datasets}).
\begin{inparaenum}[(i)]
    \item \textbf{Ordinal Regression}: We evaluate numerical LLM-as-a-judge tasks as ordinal prediction problems. Responses represent elements from a discrete set of reals $\mathcal{L} \cong \{0, 1, \dots, C\}$. We evaluate on Helpsteer (criterion "correctness") \cite{wang2023helpsteer,dong2023steerlm} and STSB \cite{cer2017semeval}, using the \(L_1\)-loss as $d_T$.
    \item \textbf{Classification}: We also consider single-label classification over a discrete set of $C$ answers with the Exact-Match loss. Here, we use When2Call \cite{ross-etal-2025-when2call}, where the LLM classifies if it should call a tool, request more information, or abstain. We also evaluate on the multiple-choice QA benchmark MMLU \cite{hendrycks2021measuringmassivemultitasklanguage}.
    \item \textbf{Answer-Or-Abstain}: Furthermore, we consider "Answer-Or-Abstain" tasks on TriviaQA \cite{joshi2017triviaqalargescaledistantly} and SimpleQA-Verified \cite{haas2026simpleqaverifiedreliablefactuality}. Here, the latent space is binary \(\mathcal{L}\cong\{A,\perp\}\), and we use the asymmetric BAS loss of \citet{wu2026basdecisiontheoreticapproachevaluating} which depends a user-specified risk tolerance \(t\) (see \Cref{appendix:bas}).
    \item \textbf{Open Set-Prediction}: Moreover, we evaluate Multi-Answer QA using MAQA  \citep{yang2025maqaevaluatinguncertaintyquantification}. Here, the latent structure is the space of binary vectors $\mathcal{L} \cong \{0, 1\}^{\leq C}$. Each entry indicates if a candidate answer should be included in the response. The support set of possible answers varies across questions, so we must decompose this task and the associated Hamming loss into binary decisions of whether to include an answer. Consequently, we task calibrate the LLM's Bernoulli beliefs over if answers should be included in the response.
    \item \textbf{Vector-Valued Ordinal Regression}: Lastly, we study vector-valued predictions in $\mathcal{L} \cong \{0, \dots, C\}^2$ and again use the Helpsteer dataset \cite{wang2023helpsteer,dong2023steerlm}. Now, the LLM predicts two criteria of a text simultaneously ("correctness", "helpfulness") and the loss is $L_1$ distance.
\end{inparaenum}

\paragraph{Models.}
We evaluate the instruction-tuned models Gemma~3~4B, Gemma~3~12B~\cite{gemmateam2025gemma3technicalreport},
Qwen~3~4B, and Qwen~3~30B~A3B~\cite{qwen3technicalreport}.
All models are run with default decoding parameters (\Cref{app:experimental_details}).
We sample \(M=20\) times per prompt (\Cref{app:prompts}) to approximate the push-forward distribution
\(\hat{p}(\rx)\).

\looseness=-1\paragraph{Calibration.}
Using \Cref{prop:approximate_task_map}, we approximate the optimal calibration map $f^*_{g_T}$ by optimizing the NLL over a family of functions $\mathcal{F}$. We use Dirichlet calibration \cite{kull2019beyond}, a generalization of Platt scaling \cite{platt1999probabilistic}, which fits a linear model to the logarithm of the predicted probabilities $f_{\mW, \vb}(\hat{p}(\rx)) = \mathrm{softmax}\left(\mW \log \hat{p}(\rx) + \vb\right)$ with $\mW \in \mathbb{R}^{C \times C}, \vb \in \mathbb{R}^C$. An overview of MBR decoding using task calibration is given in \Cref{alg:inference}. We train and evaluate calibrators using 5-fold cross-validation, and report the mean performance and standard deviations.
We compute the TCE on the whole dataset according to \Cref{prop:tce}, by approximating $f^*_{g_T}$ using ground-truth data and discretizing the simplex into $\smash{4^{C-1}}$ equal-volume bins (see \Cref{app:binning_estimator}). This is a multidimensional extension of binning-based ECE estimation. In \Cref{app:dirchlet}, we additionally ablate using a Dirichlet KDE \cite{popordanoska2022consistentdifferentiablelpcanonical}.

\begin{figure*}[t!]
    \centering
    \begin{subfigure}[t]{0.26\linewidth}
        \centering
\input{figures/HELPSTEER_Qwen30_action_movements.pgf}
  \caption{
      {\tikz[baseline=-0.6ex]\fill[action_movement_histogram]
      (-0.8ex,-0.8ex) rectangle (0.8ex,0.8ex);} Absolute Counts \& {\tikz[baseline=-0.6ex]\fill[heatmap]
      (-0.8ex,-0.8ex) rectangle (0.8ex,0.8ex);} \(\mathbb{P}(\text{Cal}|\text{Uncal})\). Change from uncalibrated to calibrated action for Helpsteer. Low rating are up-shifted while higher rating remain unchanged or are decreased.
  }
        \label{fig:action_movement_helpsteer_qwen30}
    \end{subfigure}
    \hfill
    \begin{subfigure}[t]{0.42\linewidth}
        \centering
        \input{figures/latent_space_corr_Qwen3-30B-A3B-Instruct-2507.pgf}
      \caption{Answer correlations for vector-valued Helpsteer predictions ("helpfulness" and "correctness"). Ellipses indicate the 2-s.d. covariance region. The uncalibrated model shows heavy correlation between the criteria, whereas task calibration recovers the weaker correlation structure observed in the ground truth.}
      \label{fig:latent_correlation}
    \end{subfigure}
    \hfill
    \begin{subfigure}[t]{0.26\linewidth}
        \centering
        \resizebox{0.85\linewidth}{!}
        {\input{figures/calibration_maps/TRIVIAQA_Qwen3-30B-A3B-Instruct-2507.pgf}}
  \caption{
      Learned calibration map $f_\phi$ for $\mathcal{L} = \{A, \perp\}$ for the probability of action $\ell = A$. While the uncalibrated LLM predicts $\ell =A$("Answer") if $\hat{p}(\rx)_A \geq t$, the calibrated one is more conservative.
  }
  \label{fig:calibration_map_main}
    \end{subfigure}
    \vspace{-1.5em}
\end{figure*}

\subsection{Recalibration Improves Decoding}
In \Cref{tab:performance_generations}, we find that across all tasks and datasets, our strategy consistently achieves the best performance.
\Cref{thm:optimal_strategy} only guarantees optimality among decoding that relies solely on on $\hat{p}(\rx)$, a criterion that only applies to the baseline of MBR decoding on the uncalibrated distribution. However, both \citet{tomov2026taskawarenessimprovesllmgenerations} and our experiments validate that this uncalibrated MBR baseline performs competitively among other decoding strategies. Therefore, improving on uncalibrated MBR decoding also improves upon more general decoding strategies as well.

\looseness=-1 \paragraph{Ablations.}
We also ablate different methods to calibrate $\hat{p}(\rx)$ before MBR decoding:
\begin{inparaenum}[(i)]
    \item Scalar temperature scaling \cite{guo2017calibration} to the logarithm of the predictor $\log \hat{p}(\rx)$, and
    \item the decision calibration algorithm of \citet{zhao2021calibratingpredictionsdecisionsnovel}.
    \item We also compare against directly learning to predict the correct latent action $\ell \in \mathcal{L}$ from the uncalibrated distribution using a policy $\pi_\phi: \Delta^{C-1} \mapsto \mathcal{L}$. This policy is fitted with a classification loss like NLL on the calibration data, similar to the calibration objective used to learn $f_\phi$ (\Cref{prop:approximate_task_map}). Consequently, this baseline is realized exactly by re-interpreting the calibration map $f_\phi$ as a classifier $\pi_\phi(\hat{p}(\rx)) = \arg\max_c f_\phi(\hat{p}(\rx))_c$ instead of outputting the MBR response. The policy baseline $\pi_\phi$ is realized by outputting the $\ell$ assigned the maximum probability (MP) by $f_\phi$. Note that by \Cref{thm:optimal_strategy}, this baseline can not outperform MBR on the task calibrated latent distribution. However, comparing to this baseline shows how much performance gain is achieved by fully task calibrating the latent distribution $\hat{p}(\rx)$, and how much comes from learning the correct predictions directly from a calibration set using a NLL loss. For some task losses $d_T$ (Exact Match \& Hamming), the MP action of $f_\phi$ coincides with the MBR action and, here, our framework can be interpreted as learning a classifier on the latent belief $\hat{p}(\rx)$. For all other losses, performance improvements over this baseline $\pi_\phi$ stem directly from fully calibrating the latent beliefs.
\end{inparaenum}

 \begin{wraptable}{r}{0.55\linewidth}
\vspace{-1em}
\caption{Ablation on calibration methods for Qwen3-30B-A3B, including the MAP action on the Dirichlet calibrated predictor. Dirichlet calibration and MBR perform the best in most tasks.}
\setlength{\tabcolsep}{4pt}
\resizebox{0.55\textwidth}{!}{
\input{tables/calibrator_ablation_short}
}
\label{tab:calibrator_ablation_short}
\end{wraptable}

In \Cref{tab:calibrator_ablation_short} (full results in \Cref{tab:calibrator_ablation}), we observe that:
\begin{inparaenum}[(a)]
    \item Dirichlet calibration is consistently the best strategy and MBR decoding on it most faithfully recovers the desired optimal action on $f^*_{g_T}$.
    \item For losses where the MP and MBR action do not coincide, MBR decoding outperforms the classifier on the latents (MP action). This confirms that there is substantial gain not only in correcting the LLM's latent beliefs by fitting a classifier $\pi_\phi$ but instead task calibrating the LLM's belief distribution.
\end{inparaenum}

\paragraph{What the Calibration Maps Learn.}

\looseness=-1 We now study how the calibration map \(f_\phi(\hat{p}(\rx))\) transforms the original latent probabilities \(\hat{p}(\rx)\). For binary action spaces like the Answer-or-Abstain latent structure $\mathcal{L} \cong \{A, \perp\}$, the map can be fully described by how it maps the probability of one of the actions, for example $A$ ("answer"). In \Cref{fig:calibration_map_main}, we observe that for this action space, the learned map $f_\phi$ consistently lowers the predicted probability of the action $A$ ("answer") to mitigate overconfidence in the uncalibrated LLM's belief distribution. Effectively, the recalibration map will only output probabilities this action $\smash{f_\phi(\hat{p}(\rx))}_A$ above the threshold in the BAS-metric ($t = 0.25$) if the uncalibrated distribution assigns a significantly higher probability ($\hat{p}(\rx)_A > 0.7$).

\looseness=-1 For multi-criterion ordinal regression (Helpsteer) ( \(\mathcal{L} \cong \{0,\dots,4\}^2\)), i.e. rating the correctness of a model response, we study how MBR responses change between the uncalibrated and calibrated beliefs in \Cref{fig:action_movement_helpsteer_qwen30}. We find a pattern akin to an ``S-Curve'': for low ratings (0--2), the model tends to be underconfident and calibration shifts MBR responses toward higher scores. In contrast, high ratings (3--4) are either left unchanged or shifted downward. Interestingly, the learned calibration map effectively no longer yields the latent response $\ell \equalhat 4$. However, by \Cref{lem:optimal_strategy}, among all queries for which the uncalibrated LLM's MBR response is $\ell \equalhat 4$, i.e. $P_4 = \{\hat{p}(\rx)\ : \delta^\mathrm{MBR}(\hat{p}(\rx)) = 4\}$, calibration reduces the average loss. In our experiments, predicting $\ell \equalhat 3$ whenever the model's uncalibrated belief yields an MBR action $\ell \equalhat 4$ has better performance on average. Also, no other uncalibrated MBR action is shifted to a calibrated MBR response of $\ell \equalhat 4$. This implies that there is no signal in the LLM's uncalibrated beliefs $\hat{p}(\rx)$ from which it is optimal to predict $\ell = 4$. This is not a flaw of task calibration but reveals a severe shortcoming in the LLM: It is unable to reliably predict $\ell \equalhat 4$. Hence, the optimal calibration strategy in terms of the expected loss is to avoid predicting $\ell \equalhat 4$ altogether.

\looseness=-1 Similarly, we study how task calibration can help to mitigate systematic biases \cite{gallegos2024bias,shi-etal-2025-judging} in an LLM's distribution. In particular, we consider the multi-criterion ordinal regression task on Helpsteer, where the LLM rates a model's response on both helpfulness and correctness simultaneously. In \Cref{fig:latent_correlation}, we find that the 
 uncalibrated $\hat{p}(\rx)$ heavily correlates ratings for the two criteria, meaning the LLMs is very likely to assign identical values to both. However, in the true data, the correlation is significantly weaker. The task calibration map $\smash{f_\phi(\hat{p}(\rx))}$ more accurately recovers the true correlation structure and therefore improves the LLM's reliability by correcting this correlation bias using the calibration data.

\begin{figure*}[t!]
    \centering
    \begin{subfigure}[t]{0.47\linewidth}
        \centering
        \input{figures/tce_comparison_gemma-3-12b-it.pgf}
        \caption{Task Calibration Error (TCE) for uncalibrated \(q\) and Dirichlet-calibrated \(f_{\phi}(q)\) model on Gemma 3-12B. Across all tasks, the model is severely task miscalibrated, which is mitigated by our approximate recalibration method.}
        \label{fig:tce_comparison}
    \end{subfigure}
    \hfill
    \begin{subfigure}[t]{0.47\linewidth}
        \centering
        \input{figures/tce_loss_gain_correlation_two.pgf}
        \caption{
        Task Calibration Error (TCE) correlates with downstream gains, whereas ECE does not.
        Datasets:
        {\tikz\fill[mmlu] (0,0) circle (0.7ex);} SimpleQA,
        {\tikz\fill[when2call] (0,0) circle (0.7ex);} TriviaQA,
        {\tikz\fill[simpleqa] (0,0) circle (0.7ex);} When2Call,
        {\tikz\fill[trivia] (0,0) circle (0.7ex);} MMLU.
        Models:
        {\tikz\fill (0,0) circle (0.7ex);} Qwen3-30B-A3B,
        {\tikz\fill (-0.7ex,-0.7ex) rectangle (0.7ex,0.7ex);} Qwen3-4B,
        {\tikz\draw[fill=black] (0,0.9ex)--(-0.8ex,-0.6ex)--(0.8ex,-0.6ex)--cycle;} Gemma-3-12B,
        {\tikz\draw[fill=black] (0,1.0ex)--(-0.9ex,0)--(0,-1.0ex)--(0.9ex,0)--cycle;} Gemma-3-4B
        }
        \label{fig:tce_correlation}
    \end{subfigure}
    \label{fig:tce_combined}
    \vspace{-1em}
\end{figure*}

\subsection{Task-Specific Calibration Error}
\looseness=-1 We evaluate the quality of our recalibration map \(f_{\phi}\) through the Task Calibration Error (TCE). We compare the TCE of the the original model predictions \(\hat{p}(\rx)\) with their recalibrated counterparts \(f_{\phi}(\hat{p}(\rx))\). \Cref{fig:tce_comparison} shows that the LLMs are often substantially task miscalibrated, which indicates possible performance improvements through task calibration. Importantly, Dirichlet recalibration drastically reduces TCE -- often to zero --, demonstrating that the approximately recalibrated predictor $\delta^\mathrm{MBR}(f_\phi(\hat{p}(\rx)))$ recovers most of the theoretically attainable improvement. It also verifies that a simple affine (Dirichlet) recalibration suffices to find MBR actions that are optimal according to \Cref{thm:optimal_strategy}, consistent with the strong empirical decoding performance seen in \Cref{tab:performance_generations}.

\looseness=-1 We also verify the practical usefulness of TCE by investigating how well it predicts performance gains achieved by recalibration and compare it to the Expected Calibration Error (ECE) \cite{degroot1983comparison,guo2017calibration} computed from the LLM's latent distribution $\hat{p}(\rx)$.
As shown in \Cref{fig:tce_correlation}, in contrast to ECE, TCE exhibits a strong correlation
with the realized loss reduction obtained through recalibration. Notably, as TCE is defined directly in terms of the task-specific loss, it permits a task-dependent interpretation, and the concrete values are meaningful. In contrast, ECE quantifies an average difference in confidence that cannot be directly tied to the task's loss. We defer full results to \Cref{app:tce}.

%% file: tables/decoding.tex
\begin{tabular}{ll|cc|cc|cc|c|c}
\toprule
\multicolumn{2}{c|}{\textbf{Latent Structure $\mathcal{L}$}(Metric)} & \multicolumn{2}{c|}{\small{$\{0, \dots, C\}$ (L1 $\downarrow$)}} & \multicolumn{2}{c|}{\small{$[C]$ (Acc. $\uparrow$)}} & \multicolumn{2}{c|}{\small{$\{A, \perp\}$ (BAS $\uparrow$)}} & \multicolumn{1}{c|}{\small{$\{0, 1\}^{\leq C}$ (Hamming $\downarrow$)}} & \multicolumn{1}{c}{\small{$\{0, \dots, C\}^2$ (L1 $\downarrow$)}} \\
\multicolumn{2}{c|}{} & Helpsteer & STSB & MMLU & When2Call & SimpleQA & TriviaQA & MAQA & Helpsteer \\
\midrule
\multirow{6}{*}{\small{Gemma-3-4B}} & \small{Ancestral} & {$\textcolor{colsecond}{\textbf{0.80}}$$\scriptscriptstyle{\pm 0.12}$} & {$\textcolor{colsecond}{\textbf{1.05}}$$\scriptscriptstyle{\pm 0.06}$} & {$61.87$$\scriptscriptstyle{\pm 2.90}$} & {$50.81$$\scriptscriptstyle{\pm 2.84}$} & {$\textcolor{colsecond}{\textbf{-0.10}}$$\scriptscriptstyle{\pm 0.02}$} & {$\textcolor{colsecond}{\textbf{0.21}}$$\scriptscriptstyle{\pm 0.05}$} & {$0.97$$\scriptscriptstyle{\pm 0.05}$} & {$2.05$$\scriptscriptstyle{\pm 0.14}$} \\
 & \small{Beam} & {$0.85$$\scriptscriptstyle{\pm 0.11}$} & {$1.06$$\scriptscriptstyle{\pm 0.06}$} & {$62.61$$\scriptscriptstyle{\pm 2.27}$} & {$50.49$$\scriptscriptstyle{\pm 4.28}$} & {$-0.17$$\scriptscriptstyle{\pm 0.02}$} & {$0.20$$\scriptscriptstyle{\pm 0.05}$} & {$0.89$$\scriptscriptstyle{\pm 0.07}$} & {$2.10$$\scriptscriptstyle{\pm 0.09}$} \\
 & \small{Contrastive} & {$0.85$$\scriptscriptstyle{\pm 0.12}$} & {$1.08$$\scriptscriptstyle{\pm 0.07}$} & {$61.64$$\scriptscriptstyle{\pm 3.36}$} & {$50.49$$\scriptscriptstyle{\pm 2.42}$} & {$-0.14$$\scriptscriptstyle{\pm 0.02}$} & {$0.19$$\scriptscriptstyle{\pm 0.04}$} & {$1.45$$\scriptscriptstyle{\pm 0.11}$} & {$2.14$$\scriptscriptstyle{\pm 0.11}$} \\
 & \small{Greedy} & {$0.86$$\scriptscriptstyle{\pm 0.12}$} & {$1.07$$\scriptscriptstyle{\pm 0.07}$} & {$62.61$$\scriptscriptstyle{\pm 1.42}$} & {$49.84$$\scriptscriptstyle{\pm 3.02}$} & {$-0.16$$\scriptscriptstyle{\pm 0.01}$} & {$0.19$$\scriptscriptstyle{\pm 0.04}$} & {$0.94$$\scriptscriptstyle{\pm 0.03}$} & {$2.05$$\scriptscriptstyle{\pm 0.10}$} \\
 & \small{MBR} & {$0.82$$\scriptscriptstyle{\pm 0.08}$} & {$1.07$$\scriptscriptstyle{\pm 0.08}$} & {$\textcolor{colsecond}{\textbf{64.56}}$$\scriptscriptstyle{\pm 2.08}$} & {$\textcolor{colsecond}{\textbf{51.14}}$$\scriptscriptstyle{\pm 2.78}$} & {$-0.20$$\scriptscriptstyle{\pm 0.02}$} & {$0.18$$\scriptscriptstyle{\pm 0.05}$} & {$\textcolor{colsecond}{\textbf{0.74}}$$\scriptscriptstyle{\pm 0.04}$} & {$\textcolor{colsecond}{\textbf{2.03}}$$\scriptscriptstyle{\pm 0.12}$} \\
 & \cellcolor[gray]{0.9}{\small{\textbf{Ours}}} & \cellcolor[gray]{0.9}{{$\textcolor{colfirst}{\textbf{0.73}}$$\scriptscriptstyle{\pm 0.08}$}} & \cellcolor[gray]{0.9}{{$\textcolor{colfirst}{\textbf{0.80}}$$\scriptscriptstyle{\pm 0.05}$}} & \cellcolor[gray]{0.9}{{$\textcolor{colfirst}{\textbf{67.11}}$$\scriptscriptstyle{\pm 3.95}$}} & \cellcolor[gray]{0.9}{{$\textcolor{colfirst}{\textbf{60.63}}$$\scriptscriptstyle{\pm 3.57}$}} & \cellcolor[gray]{0.9}{{$\textcolor{colfirst}{\textbf{-0.00}}$$\scriptscriptstyle{\pm 0.00}$}} & \cellcolor[gray]{0.9}{{$\textcolor{colfirst}{\textbf{0.22}}$$\scriptscriptstyle{\pm 0.05}$}} & \cellcolor[gray]{0.9}{{$\textcolor{colfirst}{\textbf{0.62}}$$\scriptscriptstyle{\pm 0.03}$}} & \cellcolor[gray]{0.9}{{$\textcolor{colfirst}{\textbf{1.25}}$$\scriptscriptstyle{\pm 0.04}$}} \\
\midrule
\multirow{6}{*}{\small{Gemma-3-12B}} & \small{Ancestral} & {$0.88$$\scriptscriptstyle{\pm 0.04}$} & {$1.01$$\scriptscriptstyle{\pm 0.04}$} & {$\textcolor{colsecond}{\textbf{80.11}}$$\scriptscriptstyle{\pm 5.03}$} & {$67.47$$\scriptscriptstyle{\pm 2.14}$} & {$\textcolor{colsecond}{\textbf{-0.10}}$$\scriptscriptstyle{\pm 0.01}$} & {$\textcolor{colsecond}{\textbf{0.46}}$$\scriptscriptstyle{\pm 0.04}$} & {$0.74$$\scriptscriptstyle{\pm 0.06}$} & {$1.95$$\scriptscriptstyle{\pm 0.09}$} \\
 & \small{Beam} & {$0.90$$\scriptscriptstyle{\pm 0.05}$} & {$\textcolor{colsecond}{\textbf{0.98}}$$\scriptscriptstyle{\pm 0.02}$} & {$80.00$$\scriptscriptstyle{\pm 2.86}$} & {$\textcolor{colsecond}{\textbf{69.17}}$$\scriptscriptstyle{\pm 1.95}$} & {$-0.17$$\scriptscriptstyle{\pm 0.03}$} & {$0.45$$\scriptscriptstyle{\pm 0.03}$} & {$0.69$$\scriptscriptstyle{\pm 0.08}$} & {$1.95$$\scriptscriptstyle{\pm 0.08}$} \\
 & \small{Contrastive} & {$\textcolor{colsecond}{\textbf{0.88}}$$\scriptscriptstyle{\pm 0.02}$} & {$0.99$$\scriptscriptstyle{\pm 0.02}$} & {$79.44$$\scriptscriptstyle{\pm 2.77}$} & {$67.67$$\scriptscriptstyle{\pm 2.73}$} & {$-0.15$$\scriptscriptstyle{\pm 0.03}$} & {$0.46$$\scriptscriptstyle{\pm 0.03}$} & {$0.77$$\scriptscriptstyle{\pm 0.08}$} & {$1.90$$\scriptscriptstyle{\pm 0.06}$} \\
 & \small{Greedy} & {$0.89$$\scriptscriptstyle{\pm 0.05}$} & {$1.00$$\scriptscriptstyle{\pm 0.02}$} & {$\textcolor{colsecond}{\textbf{80.11}}$$\scriptscriptstyle{\pm 3.03}$} & {$68.57$$\scriptscriptstyle{\pm 2.59}$} & {$-0.16$$\scriptscriptstyle{\pm 0.03}$} & {$0.45$$\scriptscriptstyle{\pm 0.03}$} & {$0.74$$\scriptscriptstyle{\pm 0.08}$} & {$1.91$$\scriptscriptstyle{\pm 0.06}$} \\
 & \small{MBR} & {$0.90$$\scriptscriptstyle{\pm 0.05}$} & {$1.01$$\scriptscriptstyle{\pm 0.03}$} & {$\textcolor{colsecond}{\textbf{80.11}}$$\scriptscriptstyle{\pm 2.84}$} & {$68.87$$\scriptscriptstyle{\pm 2.09}$} & {$-0.19$$\scriptscriptstyle{\pm 0.03}$} & {$0.44$$\scriptscriptstyle{\pm 0.03}$} & {$\textcolor{colsecond}{\textbf{0.60}}$$\scriptscriptstyle{\pm 0.06}$} & {$\textcolor{colsecond}{\textbf{1.89}}$$\scriptscriptstyle{\pm 0.07}$} \\
 & \cellcolor[gray]{0.9}{\small{\textbf{Ours}}} & \cellcolor[gray]{0.9}{{$\textcolor{colfirst}{\textbf{0.79}}$$\scriptscriptstyle{\pm 0.07}$}} & \cellcolor[gray]{0.9}{{$\textcolor{colfirst}{\textbf{0.65}}$$\scriptscriptstyle{\pm 0.07}$}} & \cellcolor[gray]{0.9}{{$\textcolor{colfirst}{\textbf{89.72}}$$\scriptscriptstyle{\pm 1.56}$}} & \cellcolor[gray]{0.9}{{$\textcolor{colfirst}{\textbf{74.38}}$$\scriptscriptstyle{\pm 2.64}$}} & \cellcolor[gray]{0.9}{{$\textcolor{colfirst}{\textbf{-0.00}}$$\scriptscriptstyle{\pm 0.00}$}} & \cellcolor[gray]{0.9}{{$\textcolor{colfirst}{\textbf{0.47}}$$\scriptscriptstyle{\pm 0.03}$}} & \cellcolor[gray]{0.9}{{$\textcolor{colfirst}{\textbf{0.51}}$$\scriptscriptstyle{\pm 0.05}$}} & \cellcolor[gray]{0.9}{{$\textcolor{colfirst}{\textbf{1.26}}$$\scriptscriptstyle{\pm 0.04}$}} \\
\midrule
\multirow{6}{*}{\small{Qwen-3-4B}} & \small{Ancestral} & {$0.82$$\scriptscriptstyle{\pm 0.07}$} & {$1.33$$\scriptscriptstyle{\pm 0.04}$} & {$77.47$$\scriptscriptstyle{\pm 1.35}$} & {$69.80$$\scriptscriptstyle{\pm 0.81}$} & {$\textcolor{colsecond}{\textbf{-0.11}}$$\scriptscriptstyle{\pm 0.02}$} & {$0.19$$\scriptscriptstyle{\pm 0.03}$} & {$1.09$$\scriptscriptstyle{\pm 0.12}$} & {$2.11$$\scriptscriptstyle{\pm 0.06}$} \\
 & \small{Beam} & {$0.82$$\scriptscriptstyle{\pm 0.06}$} & {$1.33$$\scriptscriptstyle{\pm 0.05}$} & {$77.92$$\scriptscriptstyle{\pm 2.51}$} & {$70.40$$\scriptscriptstyle{\pm 3.12}$} & {$-0.13$$\scriptscriptstyle{\pm 0.01}$} & {$0.19$$\scriptscriptstyle{\pm 0.04}$} & {$1.03$$\scriptscriptstyle{\pm 0.13}$} & {$2.09$$\scriptscriptstyle{\pm 0.08}$} \\
 & \small{Contrastive} & {$0.83$$\scriptscriptstyle{\pm 0.08}$} & {$1.33$$\scriptscriptstyle{\pm 0.06}$} & {$76.68$$\scriptscriptstyle{\pm 0.90}$} & {$72.10$$\scriptscriptstyle{\pm 1.62}$} & {$-0.11$$\scriptscriptstyle{\pm 0.01}$} & {$\textcolor{colsecond}{\textbf{0.19}}$$\scriptscriptstyle{\pm 0.04}$} & {$1.20$$\scriptscriptstyle{\pm 0.09}$} & {$\textcolor{colsecond}{\textbf{2.03}}$$\scriptscriptstyle{\pm 0.06}$} \\
 & \small{Greedy} & {$0.83$$\scriptscriptstyle{\pm 0.07}$} & {$1.34$$\scriptscriptstyle{\pm 0.05}$} & {$77.24$$\scriptscriptstyle{\pm 1.59}$} & {$71.30$$\scriptscriptstyle{\pm 1.63}$} & {$-0.14$$\scriptscriptstyle{\pm 0.01}$} & {$0.18$$\scriptscriptstyle{\pm 0.03}$} & {$1.03$$\scriptscriptstyle{\pm 0.11}$} & {$2.14$$\scriptscriptstyle{\pm 0.05}$} \\
 & \small{MBR} & {$\textcolor{colsecond}{\textbf{0.82}}$$\scriptscriptstyle{\pm 0.07}$} & {$\textcolor{colsecond}{\textbf{1.33}}$$\scriptscriptstyle{\pm 0.04}$} & {$\textcolor{colsecond}{\textbf{77.92}}$$\scriptscriptstyle{\pm 1.79}$} & {$\textcolor{colsecond}{\textbf{73.00}}$$\scriptscriptstyle{\pm 1.70}$} & {$-0.18$$\scriptscriptstyle{\pm 0.01}$} & {$0.16$$\scriptscriptstyle{\pm 0.04}$} & {$\textcolor{colsecond}{\textbf{0.87}}$$\scriptscriptstyle{\pm 0.08}$} & {$2.06$$\scriptscriptstyle{\pm 0.05}$} \\
 & \cellcolor[gray]{0.9}{\small{\textbf{Ours}}} & \cellcolor[gray]{0.9}{{$\textcolor{colfirst}{\textbf{0.76}}$$\scriptscriptstyle{\pm 0.04}$}} & \cellcolor[gray]{0.9}{{$\textcolor{colfirst}{\textbf{0.68}}$$\scriptscriptstyle{\pm 0.05}$}} & \cellcolor[gray]{0.9}{{$\textcolor{colfirst}{\textbf{83.07}}$$\scriptscriptstyle{\pm 0.93}$}} & \cellcolor[gray]{0.9}{{$\textcolor{colfirst}{\textbf{74.10}}$$\scriptscriptstyle{\pm 2.75}$}} & \cellcolor[gray]{0.9}{{$\textcolor{colfirst}{\textbf{-0.00}}$$\scriptscriptstyle{\pm 0.00}$}} & \cellcolor[gray]{0.9}{{$\textcolor{colfirst}{\textbf{0.21}}$$\scriptscriptstyle{\pm 0.03}$}} & \cellcolor[gray]{0.9}{{$\textcolor{colfirst}{\textbf{0.72}}$$\scriptscriptstyle{\pm 0.05}$}} & \cellcolor[gray]{0.9}{{$\textcolor{colfirst}{\textbf{1.27}}$$\scriptscriptstyle{\pm 0.06}$}} \\
\midrule
\multirow{6}{*}{\small{Qwen3-30B-A3B}} & \small{Ancestral} & {$0.80$$\scriptscriptstyle{\pm 0.06}$} & {$1.19$$\scriptscriptstyle{\pm 0.12}$} & {$77.91$$\scriptscriptstyle{\pm 3.26}$} & {$74.49$$\scriptscriptstyle{\pm 3.72}$} & {$\textcolor{colsecond}{\textbf{-0.01}}$$\scriptscriptstyle{\pm 0.01}$} & {$\textcolor{colfirst}{\textbf{0.47}}$$\scriptscriptstyle{\pm 0.04}$} & {$0.64$$\scriptscriptstyle{\pm 0.04}$} & {$2.16$$\scriptscriptstyle{\pm 0.09}$} \\
 & \small{Beam} & {$\textcolor{colsecond}{\textbf{0.78}}$$\scriptscriptstyle{\pm 0.06}$} & {$1.19$$\scriptscriptstyle{\pm 0.10}$} & {$\textcolor{colsecond}{\textbf{80.27}}$$\scriptscriptstyle{\pm 2.46}$} & {$74.19$$\scriptscriptstyle{\pm 3.56}$} & {$-0.03$$\scriptscriptstyle{\pm 0.03}$} & {$\textcolor{colsecond}{\textbf{0.47}}$$\scriptscriptstyle{\pm 0.05}$} & {$0.62$$\scriptscriptstyle{\pm 0.03}$} & {$2.16$$\scriptscriptstyle{\pm 0.08}$} \\
 & \small{Contrastive} & {$0.81$$\scriptscriptstyle{\pm 0.07}$} & {$\textcolor{colsecond}{\textbf{1.17}}$$\scriptscriptstyle{\pm 0.10}$} & {$80.27$$\scriptscriptstyle{\pm 2.64}$} & {$74.39$$\scriptscriptstyle{\pm 2.21}$} & {$-0.03$$\scriptscriptstyle{\pm 0.02}$} & {$0.46$$\scriptscriptstyle{\pm 0.04}$} & {$0.74$$\scriptscriptstyle{\pm 0.08}$} & {$2.13$$\scriptscriptstyle{\pm 0.07}$} \\
 & \small{Greedy} & {$0.79$$\scriptscriptstyle{\pm 0.07}$} & {$1.19$$\scriptscriptstyle{\pm 0.11}$} & {$77.02$$\scriptscriptstyle{\pm 2.33}$} & {$\textcolor{colsecond}{\textbf{74.80}}$$\scriptscriptstyle{\pm 1.61}$} & {$-0.04$$\scriptscriptstyle{\pm 0.02}$} & {$0.46$$\scriptscriptstyle{\pm 0.05}$} & {$0.64$$\scriptscriptstyle{\pm 0.06}$} & {$2.15$$\scriptscriptstyle{\pm 0.12}$} \\
 & \small{MBR} & {$0.78$$\scriptscriptstyle{\pm 0.07}$} & {$1.18$$\scriptscriptstyle{\pm 0.12}$} & {$78.48$$\scriptscriptstyle{\pm 1.92}$} & {$74.59$$\scriptscriptstyle{\pm 3.52}$} & {$-0.07$$\scriptscriptstyle{\pm 0.02}$} & {$0.46$$\scriptscriptstyle{\pm 0.05}$} & {$\textcolor{colsecond}{\textbf{0.54}}$$\scriptscriptstyle{\pm 0.03}$} & {$\textcolor{colsecond}{\textbf{2.11}}$$\scriptscriptstyle{\pm 0.12}$} \\
 & \cellcolor[gray]{0.9}{\small{\textbf{Ours}}} & \cellcolor[gray]{0.9}{{$\textcolor{colfirst}{\textbf{0.72}}$$\scriptscriptstyle{\pm 0.05}$}} & \cellcolor[gray]{0.9}{{$\textcolor{colfirst}{\textbf{0.67}}$$\scriptscriptstyle{\pm 0.05}$}} & \cellcolor[gray]{0.9}{{$\textcolor{colfirst}{\textbf{88.34}}$$\scriptscriptstyle{\pm 2.11}$}} & \cellcolor[gray]{0.9}{{$\textcolor{colfirst}{\textbf{78.63}}$$\scriptscriptstyle{\pm 2.50}$}} & \cellcolor[gray]{0.9}{{$\textcolor{colfirst}{\textbf{0.03}}$$\scriptscriptstyle{\pm 0.02}$}} & \cellcolor[gray]{0.9}{{$0.46$$\scriptscriptstyle{\pm 0.04}$}} & \cellcolor[gray]{0.9}{{$\textcolor{colfirst}{\textbf{0.49}}$$\scriptscriptstyle{\pm 0.03}$}} & \cellcolor[gray]{0.9}{{$\textcolor{colfirst}{\textbf{1.21}}$$\scriptscriptstyle{\pm 0.04}$}} \\
\bottomrule
\end{tabular}

%% file: figures/HELPSTEER_Qwen30_action_movements.pgf
\begingroup%
\makeatletter%
\begin{pgfpicture}%
\pgfpathrectangle{\pgfpointorigin}{\pgfqpoint{1.726944in}{1.299709in}}%
\pgfusepath{use as bounding box, clip}%
\begin{pgfscope}%
\pgfsetbuttcap%
\pgfsetmiterjoin%
\definecolor{currentfill}{rgb}{1.000000,1.000000,1.000000}%
\pgfsetfillcolor{currentfill}%
\pgfsetlinewidth{0.000000pt}%
\definecolor{currentstroke}{rgb}{1.000000,1.000000,1.000000}%
\pgfsetstrokecolor{currentstroke}%
\pgfsetdash{}{0pt}%
\pgfpathmoveto{\pgfqpoint{0.000000in}{0.000000in}}%
\pgfpathlineto{\pgfqpoint{1.726944in}{0.000000in}}%
\pgfpathlineto{\pgfqpoint{1.726944in}{1.299709in}}%
\pgfpathlineto{\pgfqpoint{0.000000in}{1.299709in}}%
\pgfpathlineto{\pgfqpoint{0.000000in}{0.000000in}}%
\pgfpathclose%
\pgfusepath{fill}%
\end{pgfscope}%
\begin{pgfscope}%
\pgfsetbuttcap%
\pgfsetmiterjoin%
\definecolor{currentfill}{rgb}{1.000000,1.000000,1.000000}%
\pgfsetfillcolor{currentfill}%
\pgfsetlinewidth{0.000000pt}%
\definecolor{currentstroke}{rgb}{0.000000,0.000000,0.000000}%
\pgfsetstrokecolor{currentstroke}%
\pgfsetstrokeopacity{0.000000}%
\pgfsetdash{}{0pt}%
\pgfpathmoveto{\pgfqpoint{1.248330in}{0.345878in}}%
\pgfpathlineto{\pgfqpoint{1.278330in}{0.345878in}}%
\pgfpathlineto{\pgfqpoint{1.278330in}{1.095878in}}%
\pgfpathlineto{\pgfqpoint{1.248330in}{1.095878in}}%
\pgfpathlineto{\pgfqpoint{1.248330in}{0.345878in}}%
\pgfpathclose%
\pgfusepath{fill}%
\end{pgfscope}%
\begin{pgfscope}%
\pgfsys@transformshift{1.246667in}{0.349709in}%
\pgftext[left,bottom]{\includegraphics[interpolate=true,width=0.030000in,height=0.750000in]{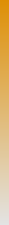}}%
\end{pgfscope}%
\begin{pgfscope}%
\pgfsetbuttcap%
\pgfsetroundjoin%
\definecolor{currentfill}{rgb}{0.000000,0.000000,0.000000}%
\pgfsetfillcolor{currentfill}%
\pgfsetlinewidth{0.803000pt}%
\definecolor{currentstroke}{rgb}{0.000000,0.000000,0.000000}%
\pgfsetstrokecolor{currentstroke}%
\pgfsetdash{}{0pt}%
\pgfsys@defobject{currentmarker}{\pgfqpoint{0.000000in}{0.000000in}}{\pgfqpoint{0.048611in}{0.000000in}}{%
\pgfpathmoveto{\pgfqpoint{0.000000in}{0.000000in}}%
\pgfpathlineto{\pgfqpoint{0.048611in}{0.000000in}}%
\pgfusepath{stroke,fill}%
}%
\begin{pgfscope}%
\pgfsys@transformshift{1.278330in}{0.345878in}%
\pgfsys@useobject{currentmarker}{}%
\end{pgfscope}%
\end{pgfscope}%
\begin{pgfscope}%
\definecolor{textcolor}{rgb}{0.000000,0.000000,0.000000}%
\pgfsetstrokecolor{textcolor}%
\pgfsetfillcolor{textcolor}%
\pgftext[x=1.375552in, y=0.308945in, left, base]{\color{textcolor}{\rmfamily\fontsize{7.000000}{8.400000}\selectfont\catcode`\^=\active\def^{\ifmmode\sp\else\^{}\fi}\catcode`\%=\active\def
\end{pgfscope}%
\begin{pgfscope}%
\pgfsetbuttcap%
\pgfsetroundjoin%
\definecolor{currentfill}{rgb}{0.000000,0.000000,0.000000}%
\pgfsetfillcolor{currentfill}%
\pgfsetlinewidth{0.803000pt}%
\definecolor{currentstroke}{rgb}{0.000000,0.000000,0.000000}%
\pgfsetstrokecolor{currentstroke}%
\pgfsetdash{}{0pt}%
\pgfsys@defobject{currentmarker}{\pgfqpoint{0.000000in}{0.000000in}}{\pgfqpoint{0.048611in}{0.000000in}}{%
\pgfpathmoveto{\pgfqpoint{0.000000in}{0.000000in}}%
\pgfpathlineto{\pgfqpoint{0.048611in}{0.000000in}}%
\pgfusepath{stroke,fill}%
}%
\begin{pgfscope}%
\pgfsys@transformshift{1.278330in}{0.720878in}%
\pgfsys@useobject{currentmarker}{}%
\end{pgfscope}%
\end{pgfscope}%
\begin{pgfscope}%
\definecolor{textcolor}{rgb}{0.000000,0.000000,0.000000}%
\pgfsetstrokecolor{textcolor}%
\pgfsetfillcolor{textcolor}%
\pgftext[x=1.375552in, y=0.683945in, left, base]{\color{textcolor}{\rmfamily\fontsize{7.000000}{8.400000}\selectfont\catcode`\^=\active\def^{\ifmmode\sp\else\^{}\fi}\catcode`\%=\active\def
\end{pgfscope}%
\begin{pgfscope}%
\pgfsetbuttcap%
\pgfsetroundjoin%
\definecolor{currentfill}{rgb}{0.000000,0.000000,0.000000}%
\pgfsetfillcolor{currentfill}%
\pgfsetlinewidth{0.803000pt}%
\definecolor{currentstroke}{rgb}{0.000000,0.000000,0.000000}%
\pgfsetstrokecolor{currentstroke}%
\pgfsetdash{}{0pt}%
\pgfsys@defobject{currentmarker}{\pgfqpoint{0.000000in}{0.000000in}}{\pgfqpoint{0.048611in}{0.000000in}}{%
\pgfpathmoveto{\pgfqpoint{0.000000in}{0.000000in}}%
\pgfpathlineto{\pgfqpoint{0.048611in}{0.000000in}}%
\pgfusepath{stroke,fill}%
}%
\begin{pgfscope}%
\pgfsys@transformshift{1.278330in}{1.095878in}%
\pgfsys@useobject{currentmarker}{}%
\end{pgfscope}%
\end{pgfscope}%
\begin{pgfscope}%
\definecolor{textcolor}{rgb}{0.000000,0.000000,0.000000}%
\pgfsetstrokecolor{textcolor}%
\pgfsetfillcolor{textcolor}%
\pgftext[x=1.375552in, y=1.058945in, left, base]{\color{textcolor}{\rmfamily\fontsize{7.000000}{8.400000}\selectfont\catcode`\^=\active\def^{\ifmmode\sp\else\^{}\fi}\catcode`\%=\active\def
\end{pgfscope}%
\begin{pgfscope}%
\definecolor{textcolor}{rgb}{0.000000,0.000000,0.000000}%
\pgfsetstrokecolor{textcolor}%
\pgfsetfillcolor{textcolor}%
\pgftext[x=1.585723in,y=0.720878in,,top,rotate=90.000000]{\color{textcolor}{\rmfamily\fontsize{9.000000}{10.800000}\selectfont\catcode`\^=\active\def^{\ifmmode\sp\else\^{}\fi}\catcode`\%=\active\def
\end{pgfscope}%
\begin{pgfscope}%
\pgfsetrectcap%
\pgfsetmiterjoin%
\pgfsetlinewidth{0.803000pt}%
\definecolor{currentstroke}{rgb}{0.000000,0.000000,0.000000}%
\pgfsetstrokecolor{currentstroke}%
\pgfsetdash{}{0pt}%
\pgfpathmoveto{\pgfqpoint{1.248330in}{0.345878in}}%
\pgfpathlineto{\pgfqpoint{1.263330in}{0.345878in}}%
\pgfpathlineto{\pgfqpoint{1.278330in}{0.345878in}}%
\pgfpathlineto{\pgfqpoint{1.278330in}{1.095878in}}%
\pgfpathlineto{\pgfqpoint{1.263330in}{1.095878in}}%
\pgfpathlineto{\pgfqpoint{1.248330in}{1.095878in}}%
\pgfpathlineto{\pgfqpoint{1.248330in}{0.345878in}}%
\pgfpathclose%
\pgfusepath{stroke}%
\end{pgfscope}%
\begin{pgfscope}%
\pgfsetbuttcap%
\pgfsetmiterjoin%
\definecolor{currentfill}{rgb}{1.000000,1.000000,1.000000}%
\pgfsetfillcolor{currentfill}%
\pgfsetlinewidth{0.000000pt}%
\definecolor{currentstroke}{rgb}{0.000000,0.000000,0.000000}%
\pgfsetstrokecolor{currentstroke}%
\pgfsetstrokeopacity{0.000000}%
\pgfsetdash{}{0pt}%
\pgfpathmoveto{\pgfqpoint{0.350605in}{0.399019in}}%
\pgfpathlineto{\pgfqpoint{1.098648in}{0.399019in}}%
\pgfpathlineto{\pgfqpoint{1.098648in}{1.147062in}}%
\pgfpathlineto{\pgfqpoint{0.350605in}{1.147062in}}%
\pgfpathlineto{\pgfqpoint{0.350605in}{0.399019in}}%
\pgfpathclose%
\pgfusepath{fill}%
\end{pgfscope}%
\begin{pgfscope}%
\pgfpathrectangle{\pgfqpoint{0.350605in}{0.399019in}}{\pgfqpoint{0.748043in}{0.748043in}}%
\pgfusepath{clip}%
\pgfsys@transformshift{0.350605in}{0.399019in}%
\pgftext[left,bottom]{\includegraphics[interpolate=true,width=0.750000in,height=0.750000in]{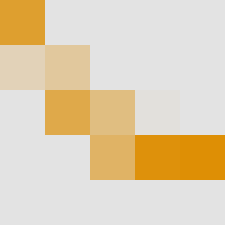}}%
\end{pgfscope}%
\begin{pgfscope}%
\pgfpathrectangle{\pgfqpoint{0.350605in}{0.399019in}}{\pgfqpoint{0.748043in}{0.748043in}}%
\pgfusepath{clip}%
\pgfsetbuttcap%
\pgfsetmiterjoin%
\pgfsetlinewidth{2.007500pt}%
\definecolor{currentstroke}{rgb}{0.000000,0.000000,0.000000}%
\pgfsetstrokecolor{currentstroke}%
\pgfsetdash{}{0pt}%
\pgfpathmoveto{\pgfqpoint{0.350605in}{1.147062in}}%
\pgfpathlineto{\pgfqpoint{0.500214in}{1.147062in}}%
\pgfpathlineto{\pgfqpoint{0.500214in}{0.997453in}}%
\pgfpathlineto{\pgfqpoint{0.350605in}{0.997453in}}%
\pgfpathlineto{\pgfqpoint{0.350605in}{1.147062in}}%
\pgfpathclose%
\pgfusepath{stroke}%
\end{pgfscope}%
\begin{pgfscope}%
\pgfpathrectangle{\pgfqpoint{0.350605in}{0.399019in}}{\pgfqpoint{0.748043in}{0.748043in}}%
\pgfusepath{clip}%
\pgfsetbuttcap%
\pgfsetmiterjoin%
\pgfsetlinewidth{2.007500pt}%
\definecolor{currentstroke}{rgb}{0.000000,0.000000,0.000000}%
\pgfsetstrokecolor{currentstroke}%
\pgfsetdash{}{0pt}%
\pgfpathmoveto{\pgfqpoint{0.500214in}{0.997453in}}%
\pgfpathlineto{\pgfqpoint{0.649822in}{0.997453in}}%
\pgfpathlineto{\pgfqpoint{0.649822in}{0.847844in}}%
\pgfpathlineto{\pgfqpoint{0.500214in}{0.847844in}}%
\pgfpathlineto{\pgfqpoint{0.500214in}{0.997453in}}%
\pgfpathclose%
\pgfusepath{stroke}%
\end{pgfscope}%
\begin{pgfscope}%
\pgfpathrectangle{\pgfqpoint{0.350605in}{0.399019in}}{\pgfqpoint{0.748043in}{0.748043in}}%
\pgfusepath{clip}%
\pgfsetbuttcap%
\pgfsetmiterjoin%
\pgfsetlinewidth{2.007500pt}%
\definecolor{currentstroke}{rgb}{0.000000,0.000000,0.000000}%
\pgfsetstrokecolor{currentstroke}%
\pgfsetdash{}{0pt}%
\pgfpathmoveto{\pgfqpoint{0.649822in}{0.847844in}}%
\pgfpathlineto{\pgfqpoint{0.799431in}{0.847844in}}%
\pgfpathlineto{\pgfqpoint{0.799431in}{0.698236in}}%
\pgfpathlineto{\pgfqpoint{0.649822in}{0.698236in}}%
\pgfpathlineto{\pgfqpoint{0.649822in}{0.847844in}}%
\pgfpathclose%
\pgfusepath{stroke}%
\end{pgfscope}%
\begin{pgfscope}%
\pgfpathrectangle{\pgfqpoint{0.350605in}{0.399019in}}{\pgfqpoint{0.748043in}{0.748043in}}%
\pgfusepath{clip}%
\pgfsetbuttcap%
\pgfsetmiterjoin%
\pgfsetlinewidth{2.007500pt}%
\definecolor{currentstroke}{rgb}{0.000000,0.000000,0.000000}%
\pgfsetstrokecolor{currentstroke}%
\pgfsetdash{}{0pt}%
\pgfpathmoveto{\pgfqpoint{0.799431in}{0.698236in}}%
\pgfpathlineto{\pgfqpoint{0.949039in}{0.698236in}}%
\pgfpathlineto{\pgfqpoint{0.949039in}{0.548627in}}%
\pgfpathlineto{\pgfqpoint{0.799431in}{0.548627in}}%
\pgfpathlineto{\pgfqpoint{0.799431in}{0.698236in}}%
\pgfpathclose%
\pgfusepath{stroke}%
\end{pgfscope}%
\begin{pgfscope}%
\pgfpathrectangle{\pgfqpoint{0.350605in}{0.399019in}}{\pgfqpoint{0.748043in}{0.748043in}}%
\pgfusepath{clip}%
\pgfsetbuttcap%
\pgfsetmiterjoin%
\pgfsetlinewidth{2.007500pt}%
\definecolor{currentstroke}{rgb}{0.000000,0.000000,0.000000}%
\pgfsetstrokecolor{currentstroke}%
\pgfsetdash{}{0pt}%
\pgfpathmoveto{\pgfqpoint{0.949039in}{0.548627in}}%
\pgfpathlineto{\pgfqpoint{1.098648in}{0.548627in}}%
\pgfpathlineto{\pgfqpoint{1.098648in}{0.399019in}}%
\pgfpathlineto{\pgfqpoint{0.949039in}{0.399019in}}%
\pgfpathlineto{\pgfqpoint{0.949039in}{0.548627in}}%
\pgfpathclose%
\pgfusepath{stroke}%
\end{pgfscope}%
\begin{pgfscope}%
\pgfsetbuttcap%
\pgfsetroundjoin%
\definecolor{currentfill}{rgb}{0.000000,0.000000,0.000000}%
\pgfsetfillcolor{currentfill}%
\pgfsetlinewidth{0.803000pt}%
\definecolor{currentstroke}{rgb}{0.000000,0.000000,0.000000}%
\pgfsetstrokecolor{currentstroke}%
\pgfsetdash{}{0pt}%
\pgfsys@defobject{currentmarker}{\pgfqpoint{0.000000in}{-0.048611in}}{\pgfqpoint{0.000000in}{0.000000in}}{%
\pgfpathmoveto{\pgfqpoint{0.000000in}{0.000000in}}%
\pgfpathlineto{\pgfqpoint{0.000000in}{-0.048611in}}%
\pgfusepath{stroke,fill}%
}%
\begin{pgfscope}%
\pgfsys@transformshift{0.425409in}{0.399019in}%
\pgfsys@useobject{currentmarker}{}%
\end{pgfscope}%
\end{pgfscope}%
\begin{pgfscope}%
\definecolor{textcolor}{rgb}{0.000000,0.000000,0.000000}%
\pgfsetstrokecolor{textcolor}%
\pgfsetfillcolor{textcolor}%
\pgftext[x=0.367371in, y=0.205827in, left, base,rotate=45.000000]{\color{textcolor}{\rmfamily\fontsize{7.000000}{8.400000}\selectfont\catcode`\^=\active\def^{\ifmmode\sp\else\^{}\fi}\catcode`\%=\active\def
\end{pgfscope}%
\begin{pgfscope}%
\pgfsetbuttcap%
\pgfsetroundjoin%
\definecolor{currentfill}{rgb}{0.000000,0.000000,0.000000}%
\pgfsetfillcolor{currentfill}%
\pgfsetlinewidth{0.803000pt}%
\definecolor{currentstroke}{rgb}{0.000000,0.000000,0.000000}%
\pgfsetstrokecolor{currentstroke}%
\pgfsetdash{}{0pt}%
\pgfsys@defobject{currentmarker}{\pgfqpoint{0.000000in}{-0.048611in}}{\pgfqpoint{0.000000in}{0.000000in}}{%
\pgfpathmoveto{\pgfqpoint{0.000000in}{0.000000in}}%
\pgfpathlineto{\pgfqpoint{0.000000in}{-0.048611in}}%
\pgfusepath{stroke,fill}%
}%
\begin{pgfscope}%
\pgfsys@transformshift{0.575018in}{0.399019in}%
\pgfsys@useobject{currentmarker}{}%
\end{pgfscope}%
\end{pgfscope}%
\begin{pgfscope}%
\definecolor{textcolor}{rgb}{0.000000,0.000000,0.000000}%
\pgfsetstrokecolor{textcolor}%
\pgfsetfillcolor{textcolor}%
\pgftext[x=0.516980in, y=0.205827in, left, base,rotate=45.000000]{\color{textcolor}{\rmfamily\fontsize{7.000000}{8.400000}\selectfont\catcode`\^=\active\def^{\ifmmode\sp\else\^{}\fi}\catcode`\%=\active\def
\end{pgfscope}%
\begin{pgfscope}%
\pgfsetbuttcap%
\pgfsetroundjoin%
\definecolor{currentfill}{rgb}{0.000000,0.000000,0.000000}%
\pgfsetfillcolor{currentfill}%
\pgfsetlinewidth{0.803000pt}%
\definecolor{currentstroke}{rgb}{0.000000,0.000000,0.000000}%
\pgfsetstrokecolor{currentstroke}%
\pgfsetdash{}{0pt}%
\pgfsys@defobject{currentmarker}{\pgfqpoint{0.000000in}{-0.048611in}}{\pgfqpoint{0.000000in}{0.000000in}}{%
\pgfpathmoveto{\pgfqpoint{0.000000in}{0.000000in}}%
\pgfpathlineto{\pgfqpoint{0.000000in}{-0.048611in}}%
\pgfusepath{stroke,fill}%
}%
\begin{pgfscope}%
\pgfsys@transformshift{0.724626in}{0.399019in}%
\pgfsys@useobject{currentmarker}{}%
\end{pgfscope}%
\end{pgfscope}%
\begin{pgfscope}%
\definecolor{textcolor}{rgb}{0.000000,0.000000,0.000000}%
\pgfsetstrokecolor{textcolor}%
\pgfsetfillcolor{textcolor}%
\pgftext[x=0.666588in, y=0.205827in, left, base,rotate=45.000000]{\color{textcolor}{\rmfamily\fontsize{7.000000}{8.400000}\selectfont\catcode`\^=\active\def^{\ifmmode\sp\else\^{}\fi}\catcode`\%=\active\def
\end{pgfscope}%
\begin{pgfscope}%
\pgfsetbuttcap%
\pgfsetroundjoin%
\definecolor{currentfill}{rgb}{0.000000,0.000000,0.000000}%
\pgfsetfillcolor{currentfill}%
\pgfsetlinewidth{0.803000pt}%
\definecolor{currentstroke}{rgb}{0.000000,0.000000,0.000000}%
\pgfsetstrokecolor{currentstroke}%
\pgfsetdash{}{0pt}%
\pgfsys@defobject{currentmarker}{\pgfqpoint{0.000000in}{-0.048611in}}{\pgfqpoint{0.000000in}{0.000000in}}{%
\pgfpathmoveto{\pgfqpoint{0.000000in}{0.000000in}}%
\pgfpathlineto{\pgfqpoint{0.000000in}{-0.048611in}}%
\pgfusepath{stroke,fill}%
}%
\begin{pgfscope}%
\pgfsys@transformshift{0.874235in}{0.399019in}%
\pgfsys@useobject{currentmarker}{}%
\end{pgfscope}%
\end{pgfscope}%
\begin{pgfscope}%
\definecolor{textcolor}{rgb}{0.000000,0.000000,0.000000}%
\pgfsetstrokecolor{textcolor}%
\pgfsetfillcolor{textcolor}%
\pgftext[x=0.816197in, y=0.205827in, left, base,rotate=45.000000]{\color{textcolor}{\rmfamily\fontsize{7.000000}{8.400000}\selectfont\catcode`\^=\active\def^{\ifmmode\sp\else\^{}\fi}\catcode`\%=\active\def
\end{pgfscope}%
\begin{pgfscope}%
\pgfsetbuttcap%
\pgfsetroundjoin%
\definecolor{currentfill}{rgb}{0.000000,0.000000,0.000000}%
\pgfsetfillcolor{currentfill}%
\pgfsetlinewidth{0.803000pt}%
\definecolor{currentstroke}{rgb}{0.000000,0.000000,0.000000}%
\pgfsetstrokecolor{currentstroke}%
\pgfsetdash{}{0pt}%
\pgfsys@defobject{currentmarker}{\pgfqpoint{0.000000in}{-0.048611in}}{\pgfqpoint{0.000000in}{0.000000in}}{%
\pgfpathmoveto{\pgfqpoint{0.000000in}{0.000000in}}%
\pgfpathlineto{\pgfqpoint{0.000000in}{-0.048611in}}%
\pgfusepath{stroke,fill}%
}%
\begin{pgfscope}%
\pgfsys@transformshift{1.023843in}{0.399019in}%
\pgfsys@useobject{currentmarker}{}%
\end{pgfscope}%
\end{pgfscope}%
\begin{pgfscope}%
\definecolor{textcolor}{rgb}{0.000000,0.000000,0.000000}%
\pgfsetstrokecolor{textcolor}%
\pgfsetfillcolor{textcolor}%
\pgftext[x=0.965805in, y=0.205827in, left, base,rotate=45.000000]{\color{textcolor}{\rmfamily\fontsize{7.000000}{8.400000}\selectfont\catcode`\^=\active\def^{\ifmmode\sp\else\^{}\fi}\catcode`\%=\active\def
\end{pgfscope}%
\begin{pgfscope}%
\definecolor{textcolor}{rgb}{0.000000,0.000000,0.000000}%
\pgfsetstrokecolor{textcolor}%
\pgfsetfillcolor{textcolor}%
\pgftext[x=0.724626in,y=0.135972in,,top]{\color{textcolor}{\rmfamily\fontsize{9.000000}{10.800000}\selectfont\catcode`\^=\active\def^{\ifmmode\sp\else\^{}\fi}\catcode`\%=\active\def
\end{pgfscope}%
\begin{pgfscope}%
\pgfsetbuttcap%
\pgfsetroundjoin%
\definecolor{currentfill}{rgb}{0.000000,0.000000,0.000000}%
\pgfsetfillcolor{currentfill}%
\pgfsetlinewidth{0.803000pt}%
\definecolor{currentstroke}{rgb}{0.000000,0.000000,0.000000}%
\pgfsetstrokecolor{currentstroke}%
\pgfsetdash{}{0pt}%
\pgfsys@defobject{currentmarker}{\pgfqpoint{-0.048611in}{0.000000in}}{\pgfqpoint{-0.000000in}{0.000000in}}{%
\pgfpathmoveto{\pgfqpoint{-0.000000in}{0.000000in}}%
\pgfpathlineto{\pgfqpoint{-0.048611in}{0.000000in}}%
\pgfusepath{stroke,fill}%
}%
\begin{pgfscope}%
\pgfsys@transformshift{0.350605in}{1.072257in}%
\pgfsys@useobject{currentmarker}{}%
\end{pgfscope}%
\end{pgfscope}%
\begin{pgfscope}%
\definecolor{textcolor}{rgb}{0.000000,0.000000,0.000000}%
\pgfsetstrokecolor{textcolor}%
\pgfsetfillcolor{textcolor}%
\pgftext[x=0.191527in, y=1.035324in, left, base]{\color{textcolor}{\rmfamily\fontsize{7.000000}{8.400000}\selectfont\catcode`\^=\active\def^{\ifmmode\sp\else\^{}\fi}\catcode`\%=\active\def
\end{pgfscope}%
\begin{pgfscope}%
\pgfsetbuttcap%
\pgfsetroundjoin%
\definecolor{currentfill}{rgb}{0.000000,0.000000,0.000000}%
\pgfsetfillcolor{currentfill}%
\pgfsetlinewidth{0.803000pt}%
\definecolor{currentstroke}{rgb}{0.000000,0.000000,0.000000}%
\pgfsetstrokecolor{currentstroke}%
\pgfsetdash{}{0pt}%
\pgfsys@defobject{currentmarker}{\pgfqpoint{-0.048611in}{0.000000in}}{\pgfqpoint{-0.000000in}{0.000000in}}{%
\pgfpathmoveto{\pgfqpoint{-0.000000in}{0.000000in}}%
\pgfpathlineto{\pgfqpoint{-0.048611in}{0.000000in}}%
\pgfusepath{stroke,fill}%
}%
\begin{pgfscope}%
\pgfsys@transformshift{0.350605in}{0.922649in}%
\pgfsys@useobject{currentmarker}{}%
\end{pgfscope}%
\end{pgfscope}%
\begin{pgfscope}%
\definecolor{textcolor}{rgb}{0.000000,0.000000,0.000000}%
\pgfsetstrokecolor{textcolor}%
\pgfsetfillcolor{textcolor}%
\pgftext[x=0.191527in, y=0.885716in, left, base]{\color{textcolor}{\rmfamily\fontsize{7.000000}{8.400000}\selectfont\catcode`\^=\active\def^{\ifmmode\sp\else\^{}\fi}\catcode`\%=\active\def
\end{pgfscope}%
\begin{pgfscope}%
\pgfsetbuttcap%
\pgfsetroundjoin%
\definecolor{currentfill}{rgb}{0.000000,0.000000,0.000000}%
\pgfsetfillcolor{currentfill}%
\pgfsetlinewidth{0.803000pt}%
\definecolor{currentstroke}{rgb}{0.000000,0.000000,0.000000}%
\pgfsetstrokecolor{currentstroke}%
\pgfsetdash{}{0pt}%
\pgfsys@defobject{currentmarker}{\pgfqpoint{-0.048611in}{0.000000in}}{\pgfqpoint{-0.000000in}{0.000000in}}{%
\pgfpathmoveto{\pgfqpoint{-0.000000in}{0.000000in}}%
\pgfpathlineto{\pgfqpoint{-0.048611in}{0.000000in}}%
\pgfusepath{stroke,fill}%
}%
\begin{pgfscope}%
\pgfsys@transformshift{0.350605in}{0.773040in}%
\pgfsys@useobject{currentmarker}{}%
\end{pgfscope}%
\end{pgfscope}%
\begin{pgfscope}%
\definecolor{textcolor}{rgb}{0.000000,0.000000,0.000000}%
\pgfsetstrokecolor{textcolor}%
\pgfsetfillcolor{textcolor}%
\pgftext[x=0.191527in, y=0.736107in, left, base]{\color{textcolor}{\rmfamily\fontsize{7.000000}{8.400000}\selectfont\catcode`\^=\active\def^{\ifmmode\sp\else\^{}\fi}\catcode`\%=\active\def
\end{pgfscope}%
\begin{pgfscope}%
\pgfsetbuttcap%
\pgfsetroundjoin%
\definecolor{currentfill}{rgb}{0.000000,0.000000,0.000000}%
\pgfsetfillcolor{currentfill}%
\pgfsetlinewidth{0.803000pt}%
\definecolor{currentstroke}{rgb}{0.000000,0.000000,0.000000}%
\pgfsetstrokecolor{currentstroke}%
\pgfsetdash{}{0pt}%
\pgfsys@defobject{currentmarker}{\pgfqpoint{-0.048611in}{0.000000in}}{\pgfqpoint{-0.000000in}{0.000000in}}{%
\pgfpathmoveto{\pgfqpoint{-0.000000in}{0.000000in}}%
\pgfpathlineto{\pgfqpoint{-0.048611in}{0.000000in}}%
\pgfusepath{stroke,fill}%
}%
\begin{pgfscope}%
\pgfsys@transformshift{0.350605in}{0.623432in}%
\pgfsys@useobject{currentmarker}{}%
\end{pgfscope}%
\end{pgfscope}%
\begin{pgfscope}%
\definecolor{textcolor}{rgb}{0.000000,0.000000,0.000000}%
\pgfsetstrokecolor{textcolor}%
\pgfsetfillcolor{textcolor}%
\pgftext[x=0.191527in, y=0.586499in, left, base]{\color{textcolor}{\rmfamily\fontsize{7.000000}{8.400000}\selectfont\catcode`\^=\active\def^{\ifmmode\sp\else\^{}\fi}\catcode`\%=\active\def
\end{pgfscope}%
\begin{pgfscope}%
\pgfsetbuttcap%
\pgfsetroundjoin%
\definecolor{currentfill}{rgb}{0.000000,0.000000,0.000000}%
\pgfsetfillcolor{currentfill}%
\pgfsetlinewidth{0.803000pt}%
\definecolor{currentstroke}{rgb}{0.000000,0.000000,0.000000}%
\pgfsetstrokecolor{currentstroke}%
\pgfsetdash{}{0pt}%
\pgfsys@defobject{currentmarker}{\pgfqpoint{-0.048611in}{0.000000in}}{\pgfqpoint{-0.000000in}{0.000000in}}{%
\pgfpathmoveto{\pgfqpoint{-0.000000in}{0.000000in}}%
\pgfpathlineto{\pgfqpoint{-0.048611in}{0.000000in}}%
\pgfusepath{stroke,fill}%
}%
\begin{pgfscope}%
\pgfsys@transformshift{0.350605in}{0.473823in}%
\pgfsys@useobject{currentmarker}{}%
\end{pgfscope}%
\end{pgfscope}%
\begin{pgfscope}%
\definecolor{textcolor}{rgb}{0.000000,0.000000,0.000000}%
\pgfsetstrokecolor{textcolor}%
\pgfsetfillcolor{textcolor}%
\pgftext[x=0.191527in, y=0.436890in, left, base]{\color{textcolor}{\rmfamily\fontsize{7.000000}{8.400000}\selectfont\catcode`\^=\active\def^{\ifmmode\sp\else\^{}\fi}\catcode`\%=\active\def
\end{pgfscope}%
\begin{pgfscope}%
\definecolor{textcolor}{rgb}{0.000000,0.000000,0.000000}%
\pgfsetstrokecolor{textcolor}%
\pgfsetfillcolor{textcolor}%
\pgftext[x=0.135972in,y=0.773040in,,bottom,rotate=90.000000]{\color{textcolor}{\rmfamily\fontsize{9.000000}{10.800000}\selectfont\catcode`\^=\active\def^{\ifmmode\sp\else\^{}\fi}\catcode`\%=\active\def
\end{pgfscope}%
\begin{pgfscope}%
\pgfsetrectcap%
\pgfsetmiterjoin%
\pgfsetlinewidth{0.803000pt}%
\definecolor{currentstroke}{rgb}{0.000000,0.000000,0.000000}%
\pgfsetstrokecolor{currentstroke}%
\pgfsetdash{}{0pt}%
\pgfpathmoveto{\pgfqpoint{0.350605in}{0.399019in}}%
\pgfpathlineto{\pgfqpoint{0.350605in}{1.147062in}}%
\pgfusepath{stroke}%
\end{pgfscope}%
\begin{pgfscope}%
\pgfsetrectcap%
\pgfsetmiterjoin%
\pgfsetlinewidth{0.803000pt}%
\definecolor{currentstroke}{rgb}{0.000000,0.000000,0.000000}%
\pgfsetstrokecolor{currentstroke}%
\pgfsetdash{}{0pt}%
\pgfpathmoveto{\pgfqpoint{1.098648in}{0.399019in}}%
\pgfpathlineto{\pgfqpoint{1.098648in}{1.147062in}}%
\pgfusepath{stroke}%
\end{pgfscope}%
\begin{pgfscope}%
\pgfsetrectcap%
\pgfsetmiterjoin%
\pgfsetlinewidth{0.803000pt}%
\definecolor{currentstroke}{rgb}{0.000000,0.000000,0.000000}%
\pgfsetstrokecolor{currentstroke}%
\pgfsetdash{}{0pt}%
\pgfpathmoveto{\pgfqpoint{0.350605in}{0.399019in}}%
\pgfpathlineto{\pgfqpoint{1.098648in}{0.399019in}}%
\pgfusepath{stroke}%
\end{pgfscope}%
\begin{pgfscope}%
\pgfsetrectcap%
\pgfsetmiterjoin%
\pgfsetlinewidth{0.803000pt}%
\definecolor{currentstroke}{rgb}{0.000000,0.000000,0.000000}%
\pgfsetstrokecolor{currentstroke}%
\pgfsetdash{}{0pt}%
\pgfpathmoveto{\pgfqpoint{0.350605in}{1.147062in}}%
\pgfpathlineto{\pgfqpoint{1.098648in}{1.147062in}}%
\pgfusepath{stroke}%
\end{pgfscope}%
\begin{pgfscope}%
\pgfpathrectangle{\pgfqpoint{0.350605in}{1.150062in}}{\pgfqpoint{0.748043in}{0.134648in}}%
\pgfusepath{clip}%
\pgfsetbuttcap%
\pgfsetmiterjoin%
\definecolor{currentfill}{rgb}{0.007843,0.619608,0.450980}%
\pgfsetfillcolor{currentfill}%
\pgfsetlinewidth{0.000000pt}%
\definecolor{currentstroke}{rgb}{0.000000,0.000000,0.000000}%
\pgfsetstrokecolor{currentstroke}%
\pgfsetstrokeopacity{0.000000}%
\pgfsetdash{}{0pt}%
\pgfpathmoveto{\pgfqpoint{0.361826in}{1.150062in}}%
\pgfpathlineto{\pgfqpoint{0.488993in}{1.150062in}}%
\pgfpathlineto{\pgfqpoint{0.488993in}{1.278297in}}%
\pgfpathlineto{\pgfqpoint{0.361826in}{1.278297in}}%
\pgfpathlineto{\pgfqpoint{0.361826in}{1.150062in}}%
\pgfpathclose%
\pgfusepath{fill}%
\end{pgfscope}%
\begin{pgfscope}%
\pgfpathrectangle{\pgfqpoint{0.350605in}{1.150062in}}{\pgfqpoint{0.748043in}{0.134648in}}%
\pgfusepath{clip}%
\pgfsetbuttcap%
\pgfsetmiterjoin%
\definecolor{currentfill}{rgb}{0.007843,0.619608,0.450980}%
\pgfsetfillcolor{currentfill}%
\pgfsetlinewidth{0.000000pt}%
\definecolor{currentstroke}{rgb}{0.000000,0.000000,0.000000}%
\pgfsetstrokecolor{currentstroke}%
\pgfsetstrokeopacity{0.000000}%
\pgfsetdash{}{0pt}%
\pgfpathmoveto{\pgfqpoint{0.511434in}{1.150062in}}%
\pgfpathlineto{\pgfqpoint{0.638601in}{1.150062in}}%
\pgfpathlineto{\pgfqpoint{0.638601in}{1.256283in}}%
\pgfpathlineto{\pgfqpoint{0.511434in}{1.256283in}}%
\pgfpathlineto{\pgfqpoint{0.511434in}{1.150062in}}%
\pgfpathclose%
\pgfusepath{fill}%
\end{pgfscope}%
\begin{pgfscope}%
\pgfpathrectangle{\pgfqpoint{0.350605in}{1.150062in}}{\pgfqpoint{0.748043in}{0.134648in}}%
\pgfusepath{clip}%
\pgfsetbuttcap%
\pgfsetmiterjoin%
\definecolor{currentfill}{rgb}{0.007843,0.619608,0.450980}%
\pgfsetfillcolor{currentfill}%
\pgfsetlinewidth{0.000000pt}%
\definecolor{currentstroke}{rgb}{0.000000,0.000000,0.000000}%
\pgfsetstrokecolor{currentstroke}%
\pgfsetstrokeopacity{0.000000}%
\pgfsetdash{}{0pt}%
\pgfpathmoveto{\pgfqpoint{0.661043in}{1.150062in}}%
\pgfpathlineto{\pgfqpoint{0.788210in}{1.150062in}}%
\pgfpathlineto{\pgfqpoint{0.788210in}{1.238121in}}%
\pgfpathlineto{\pgfqpoint{0.661043in}{1.238121in}}%
\pgfpathlineto{\pgfqpoint{0.661043in}{1.150062in}}%
\pgfpathclose%
\pgfusepath{fill}%
\end{pgfscope}%
\begin{pgfscope}%
\pgfpathrectangle{\pgfqpoint{0.350605in}{1.150062in}}{\pgfqpoint{0.748043in}{0.134648in}}%
\pgfusepath{clip}%
\pgfsetbuttcap%
\pgfsetmiterjoin%
\definecolor{currentfill}{rgb}{0.007843,0.619608,0.450980}%
\pgfsetfillcolor{currentfill}%
\pgfsetlinewidth{0.000000pt}%
\definecolor{currentstroke}{rgb}{0.000000,0.000000,0.000000}%
\pgfsetstrokecolor{currentstroke}%
\pgfsetstrokeopacity{0.000000}%
\pgfsetdash{}{0pt}%
\pgfpathmoveto{\pgfqpoint{0.810651in}{1.150062in}}%
\pgfpathlineto{\pgfqpoint{0.937819in}{1.150062in}}%
\pgfpathlineto{\pgfqpoint{0.937819in}{1.185285in}}%
\pgfpathlineto{\pgfqpoint{0.810651in}{1.185285in}}%
\pgfpathlineto{\pgfqpoint{0.810651in}{1.150062in}}%
\pgfpathclose%
\pgfusepath{fill}%
\end{pgfscope}%
\begin{pgfscope}%
\pgfpathrectangle{\pgfqpoint{0.350605in}{1.150062in}}{\pgfqpoint{0.748043in}{0.134648in}}%
\pgfusepath{clip}%
\pgfsetbuttcap%
\pgfsetmiterjoin%
\definecolor{currentfill}{rgb}{0.007843,0.619608,0.450980}%
\pgfsetfillcolor{currentfill}%
\pgfsetlinewidth{0.000000pt}%
\definecolor{currentstroke}{rgb}{0.000000,0.000000,0.000000}%
\pgfsetstrokecolor{currentstroke}%
\pgfsetstrokeopacity{0.000000}%
\pgfsetdash{}{0pt}%
\pgfpathmoveto{\pgfqpoint{0.960260in}{1.150062in}}%
\pgfpathlineto{\pgfqpoint{1.087427in}{1.150062in}}%
\pgfpathlineto{\pgfqpoint{1.087427in}{1.266740in}}%
\pgfpathlineto{\pgfqpoint{0.960260in}{1.266740in}}%
\pgfpathlineto{\pgfqpoint{0.960260in}{1.150062in}}%
\pgfpathclose%
\pgfusepath{fill}%
\end{pgfscope}%
\begin{pgfscope}%
\pgfpathrectangle{\pgfqpoint{1.106648in}{0.399019in}}{\pgfqpoint{0.134648in}{0.748043in}}%
\pgfusepath{clip}%
\pgfsetbuttcap%
\pgfsetmiterjoin%
\definecolor{currentfill}{rgb}{0.007843,0.619608,0.450980}%
\pgfsetfillcolor{currentfill}%
\pgfsetlinewidth{0.000000pt}%
\definecolor{currentstroke}{rgb}{0.000000,0.000000,0.000000}%
\pgfsetstrokecolor{currentstroke}%
\pgfsetstrokeopacity{0.000000}%
\pgfsetdash{}{0pt}%
\pgfpathmoveto{\pgfqpoint{1.106648in}{1.135841in}}%
\pgfpathlineto{\pgfqpoint{1.172880in}{1.135841in}}%
\pgfpathlineto{\pgfqpoint{1.172880in}{1.008674in}}%
\pgfpathlineto{\pgfqpoint{1.106648in}{1.008674in}}%
\pgfpathlineto{\pgfqpoint{1.106648in}{1.135841in}}%
\pgfpathclose%
\pgfusepath{fill}%
\end{pgfscope}%
\begin{pgfscope}%
\pgfpathrectangle{\pgfqpoint{1.106648in}{0.399019in}}{\pgfqpoint{0.134648in}{0.748043in}}%
\pgfusepath{clip}%
\pgfsetbuttcap%
\pgfsetmiterjoin%
\definecolor{currentfill}{rgb}{0.007843,0.619608,0.450980}%
\pgfsetfillcolor{currentfill}%
\pgfsetlinewidth{0.000000pt}%
\definecolor{currentstroke}{rgb}{0.000000,0.000000,0.000000}%
\pgfsetstrokecolor{currentstroke}%
\pgfsetstrokeopacity{0.000000}%
\pgfsetdash{}{0pt}%
\pgfpathmoveto{\pgfqpoint{1.106648in}{0.986232in}}%
\pgfpathlineto{\pgfqpoint{1.143991in}{0.986232in}}%
\pgfpathlineto{\pgfqpoint{1.143991in}{0.859065in}}%
\pgfpathlineto{\pgfqpoint{1.106648in}{0.859065in}}%
\pgfpathlineto{\pgfqpoint{1.106648in}{0.986232in}}%
\pgfpathclose%
\pgfusepath{fill}%
\end{pgfscope}%
\begin{pgfscope}%
\pgfpathrectangle{\pgfqpoint{1.106648in}{0.399019in}}{\pgfqpoint{0.134648in}{0.748043in}}%
\pgfusepath{clip}%
\pgfsetbuttcap%
\pgfsetmiterjoin%
\definecolor{currentfill}{rgb}{0.007843,0.619608,0.450980}%
\pgfsetfillcolor{currentfill}%
\pgfsetlinewidth{0.000000pt}%
\definecolor{currentstroke}{rgb}{0.000000,0.000000,0.000000}%
\pgfsetstrokecolor{currentstroke}%
\pgfsetstrokeopacity{0.000000}%
\pgfsetdash{}{0pt}%
\pgfpathmoveto{\pgfqpoint{1.106648in}{0.836624in}}%
\pgfpathlineto{\pgfqpoint{1.178516in}{0.836624in}}%
\pgfpathlineto{\pgfqpoint{1.178516in}{0.709457in}}%
\pgfpathlineto{\pgfqpoint{1.106648in}{0.709457in}}%
\pgfpathlineto{\pgfqpoint{1.106648in}{0.836624in}}%
\pgfpathclose%
\pgfusepath{fill}%
\end{pgfscope}%
\begin{pgfscope}%
\pgfpathrectangle{\pgfqpoint{1.106648in}{0.399019in}}{\pgfqpoint{0.134648in}{0.748043in}}%
\pgfusepath{clip}%
\pgfsetbuttcap%
\pgfsetmiterjoin%
\definecolor{currentfill}{rgb}{0.007843,0.619608,0.450980}%
\pgfsetfillcolor{currentfill}%
\pgfsetlinewidth{0.000000pt}%
\definecolor{currentstroke}{rgb}{0.000000,0.000000,0.000000}%
\pgfsetstrokecolor{currentstroke}%
\pgfsetstrokeopacity{0.000000}%
\pgfsetdash{}{0pt}%
\pgfpathmoveto{\pgfqpoint{1.106648in}{0.687015in}}%
\pgfpathlineto{\pgfqpoint{1.234884in}{0.687015in}}%
\pgfpathlineto{\pgfqpoint{1.234884in}{0.559848in}}%
\pgfpathlineto{\pgfqpoint{1.106648in}{0.559848in}}%
\pgfpathlineto{\pgfqpoint{1.106648in}{0.687015in}}%
\pgfpathclose%
\pgfusepath{fill}%
\end{pgfscope}%
\begin{pgfscope}%
\pgfpathrectangle{\pgfqpoint{1.106648in}{0.399019in}}{\pgfqpoint{0.134648in}{0.748043in}}%
\pgfusepath{clip}%
\pgfsetbuttcap%
\pgfsetmiterjoin%
\definecolor{currentfill}{rgb}{0.007843,0.619608,0.450980}%
\pgfsetfillcolor{currentfill}%
\pgfsetlinewidth{0.000000pt}%
\definecolor{currentstroke}{rgb}{0.000000,0.000000,0.000000}%
\pgfsetstrokecolor{currentstroke}%
\pgfsetstrokeopacity{0.000000}%
\pgfsetdash{}{0pt}%
\pgfpathmoveto{\pgfqpoint{1.106648in}{0.537407in}}%
\pgfpathlineto{\pgfqpoint{1.106648in}{0.537407in}}%
\pgfpathlineto{\pgfqpoint{1.106648in}{0.410239in}}%
\pgfpathlineto{\pgfqpoint{1.106648in}{0.410239in}}%
\pgfpathlineto{\pgfqpoint{1.106648in}{0.537407in}}%
\pgfpathclose%
\pgfusepath{fill}%
\end{pgfscope}%
\end{pgfpicture}%
\makeatother%
\endgroup%

%% file: figures/calibration_maps/TRIVIAQA_Qwen3-30B-A3B-Instruct-2507.pgf
\begingroup%
\makeatletter%
\begin{pgfpicture}%
\pgfpathrectangle{\pgfpointorigin}{\pgfqpoint{1.600000in}{1.600000in}}%
\pgfusepath{use as bounding box, clip}%
\begin{pgfscope}%
\pgfsetbuttcap%
\pgfsetmiterjoin%
\definecolor{currentfill}{rgb}{1.000000,1.000000,1.000000}%
\pgfsetfillcolor{currentfill}%
\pgfsetlinewidth{0.000000pt}%
\definecolor{currentstroke}{rgb}{1.000000,1.000000,1.000000}%
\pgfsetstrokecolor{currentstroke}%
\pgfsetdash{}{0pt}%
\pgfpathmoveto{\pgfqpoint{0.000000in}{0.000000in}}%
\pgfpathlineto{\pgfqpoint{1.600000in}{0.000000in}}%
\pgfpathlineto{\pgfqpoint{1.600000in}{1.600000in}}%
\pgfpathlineto{\pgfqpoint{0.000000in}{1.600000in}}%
\pgfpathlineto{\pgfqpoint{0.000000in}{0.000000in}}%
\pgfpathclose%
\pgfusepath{fill}%
\end{pgfscope}%
\begin{pgfscope}%
\pgfsetbuttcap%
\pgfsetmiterjoin%
\definecolor{currentfill}{rgb}{1.000000,1.000000,1.000000}%
\pgfsetfillcolor{currentfill}%
\pgfsetlinewidth{0.000000pt}%
\definecolor{currentstroke}{rgb}{0.000000,0.000000,0.000000}%
\pgfsetstrokecolor{currentstroke}%
\pgfsetstrokeopacity{0.000000}%
\pgfsetdash{}{0pt}%
\pgfpathmoveto{\pgfqpoint{0.519743in}{0.414757in}}%
\pgfpathlineto{\pgfqpoint{1.503516in}{0.414757in}}%
\pgfpathlineto{\pgfqpoint{1.503516in}{1.558330in}}%
\pgfpathlineto{\pgfqpoint{0.519743in}{1.558330in}}%
\pgfpathlineto{\pgfqpoint{0.519743in}{0.414757in}}%
\pgfpathclose%
\pgfusepath{fill}%
\end{pgfscope}%
\begin{pgfscope}%
\pgfpathrectangle{\pgfqpoint{0.519743in}{0.414757in}}{\pgfqpoint{0.983773in}{1.143573in}}%
\pgfusepath{clip}%
\pgfsetbuttcap%
\pgfsetroundjoin%
\definecolor{currentfill}{rgb}{0.870588,0.560784,0.011765}%
\pgfsetfillcolor{currentfill}%
\pgfsetfillopacity{0.200000}%
\pgfsetlinewidth{1.003750pt}%
\definecolor{currentstroke}{rgb}{0.870588,0.560784,0.011765}%
\pgfsetstrokecolor{currentstroke}%
\pgfsetstrokeopacity{0.200000}%
\pgfsetdash{}{0pt}%
\pgfsys@defobject{currentmarker}{\pgfqpoint{0.564460in}{0.726641in}}{\pgfqpoint{0.788045in}{1.506349in}}{%
\pgfpathmoveto{\pgfqpoint{0.788045in}{0.726641in}}%
\pgfpathlineto{\pgfqpoint{0.564460in}{0.726641in}}%
\pgfpathlineto{\pgfqpoint{0.564460in}{1.506349in}}%
\pgfpathlineto{\pgfqpoint{0.788045in}{1.506349in}}%
\pgfpathlineto{\pgfqpoint{0.788045in}{1.506349in}}%
\pgfpathlineto{\pgfqpoint{0.788045in}{0.726641in}}%
\pgfpathlineto{\pgfqpoint{0.788045in}{0.726641in}}%
\pgfpathclose%
\pgfusepath{stroke,fill}%
}%
\begin{pgfscope}%
\pgfsys@transformshift{0.000000in}{0.000000in}%
\pgfsys@useobject{currentmarker}{}%
\end{pgfscope}%
\end{pgfscope}%
\begin{pgfscope}%
\pgfpathrectangle{\pgfqpoint{0.519743in}{0.414757in}}{\pgfqpoint{0.983773in}{1.143573in}}%
\pgfusepath{clip}%
\pgfsetbuttcap%
\pgfsetroundjoin%
\definecolor{currentfill}{rgb}{0.007843,0.619608,0.447059}%
\pgfsetfillcolor{currentfill}%
\pgfsetfillopacity{0.200000}%
\pgfsetlinewidth{1.003750pt}%
\definecolor{currentstroke}{rgb}{0.007843,0.619608,0.447059}%
\pgfsetstrokecolor{currentstroke}%
\pgfsetstrokeopacity{0.200000}%
\pgfsetdash{}{0pt}%
\pgfsys@defobject{currentmarker}{\pgfqpoint{0.788045in}{0.726641in}}{\pgfqpoint{1.458799in}{1.506349in}}{%
\pgfpathmoveto{\pgfqpoint{1.458799in}{0.726641in}}%
\pgfpathlineto{\pgfqpoint{0.788045in}{0.726641in}}%
\pgfpathlineto{\pgfqpoint{0.788045in}{1.506349in}}%
\pgfpathlineto{\pgfqpoint{1.458799in}{1.506349in}}%
\pgfpathlineto{\pgfqpoint{1.458799in}{1.506349in}}%
\pgfpathlineto{\pgfqpoint{1.458799in}{0.726641in}}%
\pgfpathlineto{\pgfqpoint{1.458799in}{0.726641in}}%
\pgfpathclose%
\pgfusepath{stroke,fill}%
}%
\begin{pgfscope}%
\pgfsys@transformshift{0.000000in}{0.000000in}%
\pgfsys@useobject{currentmarker}{}%
\end{pgfscope}%
\end{pgfscope}%
\begin{pgfscope}%
\pgfpathrectangle{\pgfqpoint{0.519743in}{0.414757in}}{\pgfqpoint{0.983773in}{1.143573in}}%
\pgfusepath{clip}%
\pgfsetbuttcap%
\pgfsetroundjoin%
\definecolor{currentfill}{rgb}{0.870588,0.560784,0.011765}%
\pgfsetfillcolor{currentfill}%
\pgfsetfillopacity{0.200000}%
\pgfsetlinewidth{1.003750pt}%
\definecolor{currentstroke}{rgb}{0.870588,0.560784,0.011765}%
\pgfsetstrokecolor{currentstroke}%
\pgfsetstrokeopacity{0.200000}%
\pgfsetdash{}{0pt}%
\pgfsys@defobject{currentmarker}{\pgfqpoint{0.788045in}{0.466738in}}{\pgfqpoint{1.458799in}{0.726641in}}{%
\pgfpathmoveto{\pgfqpoint{1.458799in}{0.466738in}}%
\pgfpathlineto{\pgfqpoint{0.788045in}{0.466738in}}%
\pgfpathlineto{\pgfqpoint{0.788045in}{0.726641in}}%
\pgfpathlineto{\pgfqpoint{1.458799in}{0.726641in}}%
\pgfpathlineto{\pgfqpoint{1.458799in}{0.726641in}}%
\pgfpathlineto{\pgfqpoint{1.458799in}{0.466738in}}%
\pgfpathlineto{\pgfqpoint{1.458799in}{0.466738in}}%
\pgfpathclose%
\pgfusepath{stroke,fill}%
}%
\begin{pgfscope}%
\pgfsys@transformshift{0.000000in}{0.000000in}%
\pgfsys@useobject{currentmarker}{}%
\end{pgfscope}%
\end{pgfscope}%
\begin{pgfscope}%
\pgfpathrectangle{\pgfqpoint{0.519743in}{0.414757in}}{\pgfqpoint{0.983773in}{1.143573in}}%
\pgfusepath{clip}%
\pgfsetbuttcap%
\pgfsetroundjoin%
\definecolor{currentfill}{rgb}{0.007843,0.619608,0.447059}%
\pgfsetfillcolor{currentfill}%
\pgfsetfillopacity{0.200000}%
\pgfsetlinewidth{1.003750pt}%
\definecolor{currentstroke}{rgb}{0.007843,0.619608,0.447059}%
\pgfsetstrokecolor{currentstroke}%
\pgfsetstrokeopacity{0.200000}%
\pgfsetdash{}{0pt}%
\pgfsys@defobject{currentmarker}{\pgfqpoint{0.564460in}{0.466738in}}{\pgfqpoint{0.788045in}{0.726641in}}{%
\pgfpathmoveto{\pgfqpoint{0.788045in}{0.466738in}}%
\pgfpathlineto{\pgfqpoint{0.564460in}{0.466738in}}%
\pgfpathlineto{\pgfqpoint{0.564460in}{0.726641in}}%
\pgfpathlineto{\pgfqpoint{0.788045in}{0.726641in}}%
\pgfpathlineto{\pgfqpoint{0.788045in}{0.726641in}}%
\pgfpathlineto{\pgfqpoint{0.788045in}{0.466738in}}%
\pgfpathlineto{\pgfqpoint{0.788045in}{0.466738in}}%
\pgfpathclose%
\pgfusepath{stroke,fill}%
}%
\begin{pgfscope}%
\pgfsys@transformshift{0.000000in}{0.000000in}%
\pgfsys@useobject{currentmarker}{}%
\end{pgfscope}%
\end{pgfscope}%
\begin{pgfscope}%
\pgfpathrectangle{\pgfqpoint{0.519743in}{0.414757in}}{\pgfqpoint{0.983773in}{1.143573in}}%
\pgfusepath{clip}%
\pgfsetrectcap%
\pgfsetroundjoin%
\pgfsetlinewidth{0.803000pt}%
\definecolor{currentstroke}{rgb}{0.690196,0.690196,0.690196}%
\pgfsetstrokecolor{currentstroke}%
\pgfsetdash{}{0pt}%
\pgfpathmoveto{\pgfqpoint{0.564460in}{0.414757in}}%
\pgfpathlineto{\pgfqpoint{0.564460in}{1.558330in}}%
\pgfusepath{stroke}%
\end{pgfscope}%
\begin{pgfscope}%
\pgfsetbuttcap%
\pgfsetroundjoin%
\definecolor{currentfill}{rgb}{0.000000,0.000000,0.000000}%
\pgfsetfillcolor{currentfill}%
\pgfsetlinewidth{0.803000pt}%
\definecolor{currentstroke}{rgb}{0.000000,0.000000,0.000000}%
\pgfsetstrokecolor{currentstroke}%
\pgfsetdash{}{0pt}%
\pgfsys@defobject{currentmarker}{\pgfqpoint{0.000000in}{-0.048611in}}{\pgfqpoint{0.000000in}{0.000000in}}{%
\pgfpathmoveto{\pgfqpoint{0.000000in}{0.000000in}}%
\pgfpathlineto{\pgfqpoint{0.000000in}{-0.048611in}}%
\pgfusepath{stroke,fill}%
}%
\begin{pgfscope}%
\pgfsys@transformshift{0.564460in}{0.414757in}%
\pgfsys@useobject{currentmarker}{}%
\end{pgfscope}%
\end{pgfscope}%
\begin{pgfscope}%
\definecolor{textcolor}{rgb}{0.000000,0.000000,0.000000}%
\pgfsetstrokecolor{textcolor}%
\pgfsetfillcolor{textcolor}%
\pgftext[x=0.564460in,y=0.317535in,,top]{\color{textcolor}{\rmfamily\fontsize{7.000000}{8.400000}\selectfont\catcode`\^=\active\def^{\ifmmode\sp\else\^{}\fi}\catcode`\%=\active\def
\end{pgfscope}%
\begin{pgfscope}%
\pgfpathrectangle{\pgfqpoint{0.519743in}{0.414757in}}{\pgfqpoint{0.983773in}{1.143573in}}%
\pgfusepath{clip}%
\pgfsetrectcap%
\pgfsetroundjoin%
\pgfsetlinewidth{0.803000pt}%
\definecolor{currentstroke}{rgb}{0.690196,0.690196,0.690196}%
\pgfsetstrokecolor{currentstroke}%
\pgfsetdash{}{0pt}%
\pgfpathmoveto{\pgfqpoint{0.788045in}{0.414757in}}%
\pgfpathlineto{\pgfqpoint{0.788045in}{1.558330in}}%
\pgfusepath{stroke}%
\end{pgfscope}%
\begin{pgfscope}%
\pgfsetbuttcap%
\pgfsetroundjoin%
\definecolor{currentfill}{rgb}{0.000000,0.000000,0.000000}%
\pgfsetfillcolor{currentfill}%
\pgfsetlinewidth{0.803000pt}%
\definecolor{currentstroke}{rgb}{0.000000,0.000000,0.000000}%
\pgfsetstrokecolor{currentstroke}%
\pgfsetdash{}{0pt}%
\pgfsys@defobject{currentmarker}{\pgfqpoint{0.000000in}{-0.048611in}}{\pgfqpoint{0.000000in}{0.000000in}}{%
\pgfpathmoveto{\pgfqpoint{0.000000in}{0.000000in}}%
\pgfpathlineto{\pgfqpoint{0.000000in}{-0.048611in}}%
\pgfusepath{stroke,fill}%
}%
\begin{pgfscope}%
\pgfsys@transformshift{0.788045in}{0.414757in}%
\pgfsys@useobject{currentmarker}{}%
\end{pgfscope}%
\end{pgfscope}%
\begin{pgfscope}%
\definecolor{textcolor}{rgb}{0.000000,0.000000,0.000000}%
\pgfsetstrokecolor{textcolor}%
\pgfsetfillcolor{textcolor}%
\pgftext[x=0.788045in,y=0.317535in,,top]{\color{textcolor}{\rmfamily\fontsize{7.000000}{8.400000}\selectfont\catcode`\^=\active\def^{\ifmmode\sp\else\^{}\fi}\catcode`\%=\active\def
\end{pgfscope}%
\begin{pgfscope}%
\pgfpathrectangle{\pgfqpoint{0.519743in}{0.414757in}}{\pgfqpoint{0.983773in}{1.143573in}}%
\pgfusepath{clip}%
\pgfsetrectcap%
\pgfsetroundjoin%
\pgfsetlinewidth{0.803000pt}%
\definecolor{currentstroke}{rgb}{0.690196,0.690196,0.690196}%
\pgfsetstrokecolor{currentstroke}%
\pgfsetdash{}{0pt}%
\pgfpathmoveto{\pgfqpoint{1.011630in}{0.414757in}}%
\pgfpathlineto{\pgfqpoint{1.011630in}{1.558330in}}%
\pgfusepath{stroke}%
\end{pgfscope}%
\begin{pgfscope}%
\pgfsetbuttcap%
\pgfsetroundjoin%
\definecolor{currentfill}{rgb}{0.000000,0.000000,0.000000}%
\pgfsetfillcolor{currentfill}%
\pgfsetlinewidth{0.803000pt}%
\definecolor{currentstroke}{rgb}{0.000000,0.000000,0.000000}%
\pgfsetstrokecolor{currentstroke}%
\pgfsetdash{}{0pt}%
\pgfsys@defobject{currentmarker}{\pgfqpoint{0.000000in}{-0.048611in}}{\pgfqpoint{0.000000in}{0.000000in}}{%
\pgfpathmoveto{\pgfqpoint{0.000000in}{0.000000in}}%
\pgfpathlineto{\pgfqpoint{0.000000in}{-0.048611in}}%
\pgfusepath{stroke,fill}%
}%
\begin{pgfscope}%
\pgfsys@transformshift{1.011630in}{0.414757in}%
\pgfsys@useobject{currentmarker}{}%
\end{pgfscope}%
\end{pgfscope}%
\begin{pgfscope}%
\definecolor{textcolor}{rgb}{0.000000,0.000000,0.000000}%
\pgfsetstrokecolor{textcolor}%
\pgfsetfillcolor{textcolor}%
\pgftext[x=1.011630in,y=0.317535in,,top]{\color{textcolor}{\rmfamily\fontsize{7.000000}{8.400000}\selectfont\catcode`\^=\active\def^{\ifmmode\sp\else\^{}\fi}\catcode`\%=\active\def
\end{pgfscope}%
\begin{pgfscope}%
\pgfpathrectangle{\pgfqpoint{0.519743in}{0.414757in}}{\pgfqpoint{0.983773in}{1.143573in}}%
\pgfusepath{clip}%
\pgfsetrectcap%
\pgfsetroundjoin%
\pgfsetlinewidth{0.803000pt}%
\definecolor{currentstroke}{rgb}{0.690196,0.690196,0.690196}%
\pgfsetstrokecolor{currentstroke}%
\pgfsetdash{}{0pt}%
\pgfpathmoveto{\pgfqpoint{1.235215in}{0.414757in}}%
\pgfpathlineto{\pgfqpoint{1.235215in}{1.558330in}}%
\pgfusepath{stroke}%
\end{pgfscope}%
\begin{pgfscope}%
\pgfsetbuttcap%
\pgfsetroundjoin%
\definecolor{currentfill}{rgb}{0.000000,0.000000,0.000000}%
\pgfsetfillcolor{currentfill}%
\pgfsetlinewidth{0.803000pt}%
\definecolor{currentstroke}{rgb}{0.000000,0.000000,0.000000}%
\pgfsetstrokecolor{currentstroke}%
\pgfsetdash{}{0pt}%
\pgfsys@defobject{currentmarker}{\pgfqpoint{0.000000in}{-0.048611in}}{\pgfqpoint{0.000000in}{0.000000in}}{%
\pgfpathmoveto{\pgfqpoint{0.000000in}{0.000000in}}%
\pgfpathlineto{\pgfqpoint{0.000000in}{-0.048611in}}%
\pgfusepath{stroke,fill}%
}%
\begin{pgfscope}%
\pgfsys@transformshift{1.235215in}{0.414757in}%
\pgfsys@useobject{currentmarker}{}%
\end{pgfscope}%
\end{pgfscope}%
\begin{pgfscope}%
\definecolor{textcolor}{rgb}{0.000000,0.000000,0.000000}%
\pgfsetstrokecolor{textcolor}%
\pgfsetfillcolor{textcolor}%
\pgftext[x=1.235215in,y=0.317535in,,top]{\color{textcolor}{\rmfamily\fontsize{7.000000}{8.400000}\selectfont\catcode`\^=\active\def^{\ifmmode\sp\else\^{}\fi}\catcode`\%=\active\def
\end{pgfscope}%
\begin{pgfscope}%
\pgfpathrectangle{\pgfqpoint{0.519743in}{0.414757in}}{\pgfqpoint{0.983773in}{1.143573in}}%
\pgfusepath{clip}%
\pgfsetrectcap%
\pgfsetroundjoin%
\pgfsetlinewidth{0.803000pt}%
\definecolor{currentstroke}{rgb}{0.690196,0.690196,0.690196}%
\pgfsetstrokecolor{currentstroke}%
\pgfsetdash{}{0pt}%
\pgfpathmoveto{\pgfqpoint{1.458799in}{0.414757in}}%
\pgfpathlineto{\pgfqpoint{1.458799in}{1.558330in}}%
\pgfusepath{stroke}%
\end{pgfscope}%
\begin{pgfscope}%
\pgfsetbuttcap%
\pgfsetroundjoin%
\definecolor{currentfill}{rgb}{0.000000,0.000000,0.000000}%
\pgfsetfillcolor{currentfill}%
\pgfsetlinewidth{0.803000pt}%
\definecolor{currentstroke}{rgb}{0.000000,0.000000,0.000000}%
\pgfsetstrokecolor{currentstroke}%
\pgfsetdash{}{0pt}%
\pgfsys@defobject{currentmarker}{\pgfqpoint{0.000000in}{-0.048611in}}{\pgfqpoint{0.000000in}{0.000000in}}{%
\pgfpathmoveto{\pgfqpoint{0.000000in}{0.000000in}}%
\pgfpathlineto{\pgfqpoint{0.000000in}{-0.048611in}}%
\pgfusepath{stroke,fill}%
}%
\begin{pgfscope}%
\pgfsys@transformshift{1.458799in}{0.414757in}%
\pgfsys@useobject{currentmarker}{}%
\end{pgfscope}%
\end{pgfscope}%
\begin{pgfscope}%
\definecolor{textcolor}{rgb}{0.000000,0.000000,0.000000}%
\pgfsetstrokecolor{textcolor}%
\pgfsetfillcolor{textcolor}%
\pgftext[x=1.458799in,y=0.317535in,,top]{\color{textcolor}{\rmfamily\fontsize{7.000000}{8.400000}\selectfont\catcode`\^=\active\def^{\ifmmode\sp\else\^{}\fi}\catcode`\%=\active\def
\end{pgfscope}%
\begin{pgfscope}%
\definecolor{textcolor}{rgb}{0.000000,0.000000,0.000000}%
\pgfsetstrokecolor{textcolor}%
\pgfsetfillcolor{textcolor}%
\pgftext[x=1.011630in,y=0.167891in,,top]{\color{textcolor}{\rmfamily\fontsize{9.000000}{10.800000}\selectfont\catcode`\^=\active\def^{\ifmmode\sp\else\^{}\fi}\catcode`\%=\active\def
\end{pgfscope}%
\begin{pgfscope}%
\pgfpathrectangle{\pgfqpoint{0.519743in}{0.414757in}}{\pgfqpoint{0.983773in}{1.143573in}}%
\pgfusepath{clip}%
\pgfsetrectcap%
\pgfsetroundjoin%
\pgfsetlinewidth{0.803000pt}%
\definecolor{currentstroke}{rgb}{0.690196,0.690196,0.690196}%
\pgfsetstrokecolor{currentstroke}%
\pgfsetdash{}{0pt}%
\pgfpathmoveto{\pgfqpoint{0.519743in}{0.466738in}}%
\pgfpathlineto{\pgfqpoint{1.503516in}{0.466738in}}%
\pgfusepath{stroke}%
\end{pgfscope}%
\begin{pgfscope}%
\pgfsetbuttcap%
\pgfsetroundjoin%
\definecolor{currentfill}{rgb}{0.000000,0.000000,0.000000}%
\pgfsetfillcolor{currentfill}%
\pgfsetlinewidth{0.803000pt}%
\definecolor{currentstroke}{rgb}{0.000000,0.000000,0.000000}%
\pgfsetstrokecolor{currentstroke}%
\pgfsetdash{}{0pt}%
\pgfsys@defobject{currentmarker}{\pgfqpoint{-0.048611in}{0.000000in}}{\pgfqpoint{-0.000000in}{0.000000in}}{%
\pgfpathmoveto{\pgfqpoint{-0.000000in}{0.000000in}}%
\pgfpathlineto{\pgfqpoint{-0.048611in}{0.000000in}}%
\pgfusepath{stroke,fill}%
}%
\begin{pgfscope}%
\pgfsys@transformshift{0.519743in}{0.466738in}%
\pgfsys@useobject{currentmarker}{}%
\end{pgfscope}%
\end{pgfscope}%
\begin{pgfscope}%
\definecolor{textcolor}{rgb}{0.000000,0.000000,0.000000}%
\pgfsetstrokecolor{textcolor}%
\pgfsetfillcolor{textcolor}%
\pgftext[x=0.223446in, y=0.429805in, left, base]{\color{textcolor}{\rmfamily\fontsize{7.000000}{8.400000}\selectfont\catcode`\^=\active\def^{\ifmmode\sp\else\^{}\fi}\catcode`\%=\active\def
\end{pgfscope}%
\begin{pgfscope}%
\pgfpathrectangle{\pgfqpoint{0.519743in}{0.414757in}}{\pgfqpoint{0.983773in}{1.143573in}}%
\pgfusepath{clip}%
\pgfsetrectcap%
\pgfsetroundjoin%
\pgfsetlinewidth{0.803000pt}%
\definecolor{currentstroke}{rgb}{0.690196,0.690196,0.690196}%
\pgfsetstrokecolor{currentstroke}%
\pgfsetdash{}{0pt}%
\pgfpathmoveto{\pgfqpoint{0.519743in}{0.726641in}}%
\pgfpathlineto{\pgfqpoint{1.503516in}{0.726641in}}%
\pgfusepath{stroke}%
\end{pgfscope}%
\begin{pgfscope}%
\pgfsetbuttcap%
\pgfsetroundjoin%
\definecolor{currentfill}{rgb}{0.000000,0.000000,0.000000}%
\pgfsetfillcolor{currentfill}%
\pgfsetlinewidth{0.803000pt}%
\definecolor{currentstroke}{rgb}{0.000000,0.000000,0.000000}%
\pgfsetstrokecolor{currentstroke}%
\pgfsetdash{}{0pt}%
\pgfsys@defobject{currentmarker}{\pgfqpoint{-0.048611in}{0.000000in}}{\pgfqpoint{-0.000000in}{0.000000in}}{%
\pgfpathmoveto{\pgfqpoint{-0.000000in}{0.000000in}}%
\pgfpathlineto{\pgfqpoint{-0.048611in}{0.000000in}}%
\pgfusepath{stroke,fill}%
}%
\begin{pgfscope}%
\pgfsys@transformshift{0.519743in}{0.726641in}%
\pgfsys@useobject{currentmarker}{}%
\end{pgfscope}%
\end{pgfscope}%
\begin{pgfscope}%
\definecolor{textcolor}{rgb}{0.000000,0.000000,0.000000}%
\pgfsetstrokecolor{textcolor}%
\pgfsetfillcolor{textcolor}%
\pgftext[x=0.223446in, y=0.689708in, left, base]{\color{textcolor}{\rmfamily\fontsize{7.000000}{8.400000}\selectfont\catcode`\^=\active\def^{\ifmmode\sp\else\^{}\fi}\catcode`\%=\active\def
\end{pgfscope}%
\begin{pgfscope}%
\pgfpathrectangle{\pgfqpoint{0.519743in}{0.414757in}}{\pgfqpoint{0.983773in}{1.143573in}}%
\pgfusepath{clip}%
\pgfsetrectcap%
\pgfsetroundjoin%
\pgfsetlinewidth{0.803000pt}%
\definecolor{currentstroke}{rgb}{0.690196,0.690196,0.690196}%
\pgfsetstrokecolor{currentstroke}%
\pgfsetdash{}{0pt}%
\pgfpathmoveto{\pgfqpoint{0.519743in}{0.986544in}}%
\pgfpathlineto{\pgfqpoint{1.503516in}{0.986544in}}%
\pgfusepath{stroke}%
\end{pgfscope}%
\begin{pgfscope}%
\pgfsetbuttcap%
\pgfsetroundjoin%
\definecolor{currentfill}{rgb}{0.000000,0.000000,0.000000}%
\pgfsetfillcolor{currentfill}%
\pgfsetlinewidth{0.803000pt}%
\definecolor{currentstroke}{rgb}{0.000000,0.000000,0.000000}%
\pgfsetstrokecolor{currentstroke}%
\pgfsetdash{}{0pt}%
\pgfsys@defobject{currentmarker}{\pgfqpoint{-0.048611in}{0.000000in}}{\pgfqpoint{-0.000000in}{0.000000in}}{%
\pgfpathmoveto{\pgfqpoint{-0.000000in}{0.000000in}}%
\pgfpathlineto{\pgfqpoint{-0.048611in}{0.000000in}}%
\pgfusepath{stroke,fill}%
}%
\begin{pgfscope}%
\pgfsys@transformshift{0.519743in}{0.986544in}%
\pgfsys@useobject{currentmarker}{}%
\end{pgfscope}%
\end{pgfscope}%
\begin{pgfscope}%
\definecolor{textcolor}{rgb}{0.000000,0.000000,0.000000}%
\pgfsetstrokecolor{textcolor}%
\pgfsetfillcolor{textcolor}%
\pgftext[x=0.223446in, y=0.949611in, left, base]{\color{textcolor}{\rmfamily\fontsize{7.000000}{8.400000}\selectfont\catcode`\^=\active\def^{\ifmmode\sp\else\^{}\fi}\catcode`\%=\active\def
\end{pgfscope}%
\begin{pgfscope}%
\pgfpathrectangle{\pgfqpoint{0.519743in}{0.414757in}}{\pgfqpoint{0.983773in}{1.143573in}}%
\pgfusepath{clip}%
\pgfsetrectcap%
\pgfsetroundjoin%
\pgfsetlinewidth{0.803000pt}%
\definecolor{currentstroke}{rgb}{0.690196,0.690196,0.690196}%
\pgfsetstrokecolor{currentstroke}%
\pgfsetdash{}{0pt}%
\pgfpathmoveto{\pgfqpoint{0.519743in}{1.246447in}}%
\pgfpathlineto{\pgfqpoint{1.503516in}{1.246447in}}%
\pgfusepath{stroke}%
\end{pgfscope}%
\begin{pgfscope}%
\pgfsetbuttcap%
\pgfsetroundjoin%
\definecolor{currentfill}{rgb}{0.000000,0.000000,0.000000}%
\pgfsetfillcolor{currentfill}%
\pgfsetlinewidth{0.803000pt}%
\definecolor{currentstroke}{rgb}{0.000000,0.000000,0.000000}%
\pgfsetstrokecolor{currentstroke}%
\pgfsetdash{}{0pt}%
\pgfsys@defobject{currentmarker}{\pgfqpoint{-0.048611in}{0.000000in}}{\pgfqpoint{-0.000000in}{0.000000in}}{%
\pgfpathmoveto{\pgfqpoint{-0.000000in}{0.000000in}}%
\pgfpathlineto{\pgfqpoint{-0.048611in}{0.000000in}}%
\pgfusepath{stroke,fill}%
}%
\begin{pgfscope}%
\pgfsys@transformshift{0.519743in}{1.246447in}%
\pgfsys@useobject{currentmarker}{}%
\end{pgfscope}%
\end{pgfscope}%
\begin{pgfscope}%
\definecolor{textcolor}{rgb}{0.000000,0.000000,0.000000}%
\pgfsetstrokecolor{textcolor}%
\pgfsetfillcolor{textcolor}%
\pgftext[x=0.223446in, y=1.209513in, left, base]{\color{textcolor}{\rmfamily\fontsize{7.000000}{8.400000}\selectfont\catcode`\^=\active\def^{\ifmmode\sp\else\^{}\fi}\catcode`\%=\active\def
\end{pgfscope}%
\begin{pgfscope}%
\pgfpathrectangle{\pgfqpoint{0.519743in}{0.414757in}}{\pgfqpoint{0.983773in}{1.143573in}}%
\pgfusepath{clip}%
\pgfsetrectcap%
\pgfsetroundjoin%
\pgfsetlinewidth{0.803000pt}%
\definecolor{currentstroke}{rgb}{0.690196,0.690196,0.690196}%
\pgfsetstrokecolor{currentstroke}%
\pgfsetdash{}{0pt}%
\pgfpathmoveto{\pgfqpoint{0.519743in}{1.506349in}}%
\pgfpathlineto{\pgfqpoint{1.503516in}{1.506349in}}%
\pgfusepath{stroke}%
\end{pgfscope}%
\begin{pgfscope}%
\pgfsetbuttcap%
\pgfsetroundjoin%
\definecolor{currentfill}{rgb}{0.000000,0.000000,0.000000}%
\pgfsetfillcolor{currentfill}%
\pgfsetlinewidth{0.803000pt}%
\definecolor{currentstroke}{rgb}{0.000000,0.000000,0.000000}%
\pgfsetstrokecolor{currentstroke}%
\pgfsetdash{}{0pt}%
\pgfsys@defobject{currentmarker}{\pgfqpoint{-0.048611in}{0.000000in}}{\pgfqpoint{-0.000000in}{0.000000in}}{%
\pgfpathmoveto{\pgfqpoint{-0.000000in}{0.000000in}}%
\pgfpathlineto{\pgfqpoint{-0.048611in}{0.000000in}}%
\pgfusepath{stroke,fill}%
}%
\begin{pgfscope}%
\pgfsys@transformshift{0.519743in}{1.506349in}%
\pgfsys@useobject{currentmarker}{}%
\end{pgfscope}%
\end{pgfscope}%
\begin{pgfscope}%
\definecolor{textcolor}{rgb}{0.000000,0.000000,0.000000}%
\pgfsetstrokecolor{textcolor}%
\pgfsetfillcolor{textcolor}%
\pgftext[x=0.223446in, y=1.469416in, left, base]{\color{textcolor}{\rmfamily\fontsize{7.000000}{8.400000}\selectfont\catcode`\^=\active\def^{\ifmmode\sp\else\^{}\fi}\catcode`\%=\active\def
\end{pgfscope}%
\begin{pgfscope}%
\definecolor{textcolor}{rgb}{0.000000,0.000000,0.000000}%
\pgfsetstrokecolor{textcolor}%
\pgfsetfillcolor{textcolor}%
\pgftext[x=0.167891in,y=0.986544in,,bottom,rotate=90.000000]{\color{textcolor}{\rmfamily\fontsize{9.000000}{10.800000}\selectfont\catcode`\^=\active\def^{\ifmmode\sp\else\^{}\fi}\catcode`\%=\active\def
\end{pgfscope}%
\begin{pgfscope}%
\pgfpathrectangle{\pgfqpoint{0.519743in}{0.414757in}}{\pgfqpoint{0.983773in}{1.143573in}}%
\pgfusepath{clip}%
\pgfsetrectcap%
\pgfsetroundjoin%
\pgfsetlinewidth{1.505625pt}%
\definecolor{currentstroke}{rgb}{0.003922,0.450980,0.698039}%
\pgfsetstrokecolor{currentstroke}%
\pgfsetstrokeopacity{0.200000}%
\pgfsetdash{}{0pt}%
\pgfpathmoveto{\pgfqpoint{0.698611in}{0.550422in}}%
\pgfpathlineto{\pgfqpoint{0.832762in}{0.608748in}}%
\pgfpathlineto{\pgfqpoint{0.877479in}{0.625961in}}%
\pgfpathlineto{\pgfqpoint{0.922196in}{0.642384in}}%
\pgfpathlineto{\pgfqpoint{0.966913in}{0.658144in}}%
\pgfpathlineto{\pgfqpoint{1.011630in}{0.673354in}}%
\pgfpathlineto{\pgfqpoint{1.056347in}{0.688115in}}%
\pgfpathlineto{\pgfqpoint{1.101064in}{0.702524in}}%
\pgfpathlineto{\pgfqpoint{1.145781in}{0.716688in}}%
\pgfpathlineto{\pgfqpoint{1.190498in}{0.730732in}}%
\pgfpathlineto{\pgfqpoint{1.235215in}{0.744818in}}%
\pgfpathlineto{\pgfqpoint{1.279932in}{0.759192in}}%
\pgfpathlineto{\pgfqpoint{1.324649in}{0.774280in}}%
\pgfpathlineto{\pgfqpoint{1.369365in}{0.791002in}}%
\pgfpathlineto{\pgfqpoint{1.458799in}{1.240682in}}%
\pgfusepath{stroke}%
\end{pgfscope}%
\begin{pgfscope}%
\pgfpathrectangle{\pgfqpoint{0.519743in}{0.414757in}}{\pgfqpoint{0.983773in}{1.143573in}}%
\pgfusepath{clip}%
\pgfsetrectcap%
\pgfsetroundjoin%
\pgfsetlinewidth{1.505625pt}%
\definecolor{currentstroke}{rgb}{0.003922,0.450980,0.698039}%
\pgfsetstrokecolor{currentstroke}%
\pgfsetstrokeopacity{0.200000}%
\pgfsetdash{}{0pt}%
\pgfpathmoveto{\pgfqpoint{0.720970in}{0.530935in}}%
\pgfpathlineto{\pgfqpoint{0.832762in}{0.576486in}}%
\pgfpathlineto{\pgfqpoint{0.877479in}{0.594060in}}%
\pgfpathlineto{\pgfqpoint{0.922196in}{0.611323in}}%
\pgfpathlineto{\pgfqpoint{0.966913in}{0.628293in}}%
\pgfpathlineto{\pgfqpoint{1.011630in}{0.644994in}}%
\pgfpathlineto{\pgfqpoint{1.056347in}{0.661461in}}%
\pgfpathlineto{\pgfqpoint{1.101064in}{0.677739in}}%
\pgfpathlineto{\pgfqpoint{1.145781in}{0.693894in}}%
\pgfpathlineto{\pgfqpoint{1.190498in}{0.710013in}}%
\pgfpathlineto{\pgfqpoint{1.235215in}{0.726229in}}%
\pgfpathlineto{\pgfqpoint{1.279932in}{0.742758in}}%
\pgfpathlineto{\pgfqpoint{1.324649in}{0.759995in}}%
\pgfpathlineto{\pgfqpoint{1.369365in}{0.778810in}}%
\pgfpathlineto{\pgfqpoint{1.458471in}{1.219190in}}%
\pgfusepath{stroke}%
\end{pgfscope}%
\begin{pgfscope}%
\pgfpathrectangle{\pgfqpoint{0.519743in}{0.414757in}}{\pgfqpoint{0.983773in}{1.143573in}}%
\pgfusepath{clip}%
\pgfsetrectcap%
\pgfsetroundjoin%
\pgfsetlinewidth{1.505625pt}%
\definecolor{currentstroke}{rgb}{0.003922,0.450980,0.698039}%
\pgfsetstrokecolor{currentstroke}%
\pgfsetstrokeopacity{0.200000}%
\pgfsetdash{}{0pt}%
\pgfpathmoveto{\pgfqpoint{0.779102in}{0.586804in}}%
\pgfpathlineto{\pgfqpoint{0.832762in}{0.609336in}}%
\pgfpathlineto{\pgfqpoint{0.877479in}{0.626968in}}%
\pgfpathlineto{\pgfqpoint{0.922196in}{0.643802in}}%
\pgfpathlineto{\pgfqpoint{1.011630in}{0.675556in}}%
\pgfpathlineto{\pgfqpoint{1.056347in}{0.690678in}}%
\pgfpathlineto{\pgfqpoint{1.101064in}{0.705427in}}%
\pgfpathlineto{\pgfqpoint{1.145781in}{0.719903in}}%
\pgfpathlineto{\pgfqpoint{1.190498in}{0.734227in}}%
\pgfpathlineto{\pgfqpoint{1.235215in}{0.748556in}}%
\pgfpathlineto{\pgfqpoint{1.279932in}{0.763123in}}%
\pgfpathlineto{\pgfqpoint{1.324649in}{0.778335in}}%
\pgfpathlineto{\pgfqpoint{1.369365in}{0.795066in}}%
\pgfpathlineto{\pgfqpoint{1.457917in}{1.216543in}}%
\pgfusepath{stroke}%
\end{pgfscope}%
\begin{pgfscope}%
\pgfpathrectangle{\pgfqpoint{0.519743in}{0.414757in}}{\pgfqpoint{0.983773in}{1.143573in}}%
\pgfusepath{clip}%
\pgfsetrectcap%
\pgfsetroundjoin%
\pgfsetlinewidth{1.505625pt}%
\definecolor{currentstroke}{rgb}{0.003922,0.450980,0.698039}%
\pgfsetstrokecolor{currentstroke}%
\pgfsetstrokeopacity{0.200000}%
\pgfsetdash{}{0pt}%
\pgfpathmoveto{\pgfqpoint{0.720970in}{0.559281in}}%
\pgfpathlineto{\pgfqpoint{0.832762in}{0.608210in}}%
\pgfpathlineto{\pgfqpoint{0.877479in}{0.625899in}}%
\pgfpathlineto{\pgfqpoint{0.922196in}{0.642807in}}%
\pgfpathlineto{\pgfqpoint{0.966913in}{0.659054in}}%
\pgfpathlineto{\pgfqpoint{1.011630in}{0.674744in}}%
\pgfpathlineto{\pgfqpoint{1.056347in}{0.689972in}}%
\pgfpathlineto{\pgfqpoint{1.101064in}{0.704833in}}%
\pgfpathlineto{\pgfqpoint{1.145781in}{0.719430in}}%
\pgfpathlineto{\pgfqpoint{1.190498in}{0.733882in}}%
\pgfpathlineto{\pgfqpoint{1.235215in}{0.748347in}}%
\pgfpathlineto{\pgfqpoint{1.279932in}{0.763062in}}%
\pgfpathlineto{\pgfqpoint{1.324649in}{0.778439in}}%
\pgfpathlineto{\pgfqpoint{1.458219in}{1.224992in}}%
\pgfusepath{stroke}%
\end{pgfscope}%
\begin{pgfscope}%
\pgfpathrectangle{\pgfqpoint{0.519743in}{0.414757in}}{\pgfqpoint{0.983773in}{1.143573in}}%
\pgfusepath{clip}%
\pgfsetrectcap%
\pgfsetroundjoin%
\pgfsetlinewidth{1.505625pt}%
\definecolor{currentstroke}{rgb}{0.003922,0.450980,0.698039}%
\pgfsetstrokecolor{currentstroke}%
\pgfsetstrokeopacity{0.200000}%
\pgfsetdash{}{0pt}%
\pgfpathmoveto{\pgfqpoint{0.758234in}{0.594959in}}%
\pgfpathlineto{\pgfqpoint{0.832762in}{0.627019in}}%
\pgfpathlineto{\pgfqpoint{0.877479in}{0.644458in}}%
\pgfpathlineto{\pgfqpoint{0.922196in}{0.660929in}}%
\pgfpathlineto{\pgfqpoint{0.966913in}{0.676603in}}%
\pgfpathlineto{\pgfqpoint{1.011630in}{0.691623in}}%
\pgfpathlineto{\pgfqpoint{1.056347in}{0.706112in}}%
\pgfpathlineto{\pgfqpoint{1.101064in}{0.720188in}}%
\pgfpathlineto{\pgfqpoint{1.145781in}{0.733971in}}%
\pgfpathlineto{\pgfqpoint{1.190498in}{0.747596in}}%
\pgfpathlineto{\pgfqpoint{1.235215in}{0.761236in}}%
\pgfpathlineto{\pgfqpoint{1.279932in}{0.775143in}}%
\pgfpathlineto{\pgfqpoint{1.324649in}{0.789754in}}%
\pgfpathlineto{\pgfqpoint{1.369365in}{0.806000in}}%
\pgfpathlineto{\pgfqpoint{1.458799in}{1.249979in}}%
\pgfusepath{stroke}%
\end{pgfscope}%
\begin{pgfscope}%
\pgfpathrectangle{\pgfqpoint{0.519743in}{0.414757in}}{\pgfqpoint{0.983773in}{1.143573in}}%
\pgfusepath{clip}%
\pgfsetbuttcap%
\pgfsetroundjoin%
\pgfsetlinewidth{1.505625pt}%
\definecolor{currentstroke}{rgb}{0.501961,0.501961,0.501961}%
\pgfsetstrokecolor{currentstroke}%
\pgfsetdash{{5.550000pt}{2.400000pt}}{0.000000pt}%
\pgfpathmoveto{\pgfqpoint{0.564460in}{0.726641in}}%
\pgfpathlineto{\pgfqpoint{1.458799in}{0.726641in}}%
\pgfusepath{stroke}%
\end{pgfscope}%
\begin{pgfscope}%
\pgfpathrectangle{\pgfqpoint{0.519743in}{0.414757in}}{\pgfqpoint{0.983773in}{1.143573in}}%
\pgfusepath{clip}%
\pgfsetbuttcap%
\pgfsetroundjoin%
\pgfsetlinewidth{1.505625pt}%
\definecolor{currentstroke}{rgb}{0.501961,0.501961,0.501961}%
\pgfsetstrokecolor{currentstroke}%
\pgfsetdash{{5.550000pt}{2.400000pt}}{0.000000pt}%
\pgfpathmoveto{\pgfqpoint{0.788045in}{0.466738in}}%
\pgfpathlineto{\pgfqpoint{0.788045in}{1.506349in}}%
\pgfusepath{stroke}%
\end{pgfscope}%
\begin{pgfscope}%
\pgfsetrectcap%
\pgfsetmiterjoin%
\pgfsetlinewidth{0.803000pt}%
\definecolor{currentstroke}{rgb}{0.000000,0.000000,0.000000}%
\pgfsetstrokecolor{currentstroke}%
\pgfsetdash{}{0pt}%
\pgfpathmoveto{\pgfqpoint{0.519743in}{0.414757in}}%
\pgfpathlineto{\pgfqpoint{0.519743in}{1.558330in}}%
\pgfusepath{stroke}%
\end{pgfscope}%
\begin{pgfscope}%
\pgfsetrectcap%
\pgfsetmiterjoin%
\pgfsetlinewidth{0.803000pt}%
\definecolor{currentstroke}{rgb}{0.000000,0.000000,0.000000}%
\pgfsetstrokecolor{currentstroke}%
\pgfsetdash{}{0pt}%
\pgfpathmoveto{\pgfqpoint{1.503516in}{0.414757in}}%
\pgfpathlineto{\pgfqpoint{1.503516in}{1.558330in}}%
\pgfusepath{stroke}%
\end{pgfscope}%
\begin{pgfscope}%
\pgfsetrectcap%
\pgfsetmiterjoin%
\pgfsetlinewidth{0.803000pt}%
\definecolor{currentstroke}{rgb}{0.000000,0.000000,0.000000}%
\pgfsetstrokecolor{currentstroke}%
\pgfsetdash{}{0pt}%
\pgfpathmoveto{\pgfqpoint{0.519743in}{0.414757in}}%
\pgfpathlineto{\pgfqpoint{1.503516in}{0.414757in}}%
\pgfusepath{stroke}%
\end{pgfscope}%
\begin{pgfscope}%
\pgfsetrectcap%
\pgfsetmiterjoin%
\pgfsetlinewidth{0.803000pt}%
\definecolor{currentstroke}{rgb}{0.000000,0.000000,0.000000}%
\pgfsetstrokecolor{currentstroke}%
\pgfsetdash{}{0pt}%
\pgfpathmoveto{\pgfqpoint{0.519743in}{1.558330in}}%
\pgfpathlineto{\pgfqpoint{1.503516in}{1.558330in}}%
\pgfusepath{stroke}%
\end{pgfscope}%
\begin{pgfscope}%
\pgfsetbuttcap%
\pgfsetmiterjoin%
\definecolor{currentfill}{rgb}{1.000000,1.000000,1.000000}%
\pgfsetfillcolor{currentfill}%
\pgfsetfillopacity{0.800000}%
\pgfsetlinewidth{1.003750pt}%
\definecolor{currentstroke}{rgb}{0.800000,0.800000,0.800000}%
\pgfsetstrokecolor{currentstroke}%
\pgfsetstrokeopacity{0.800000}%
\pgfsetdash{}{0pt}%
\pgfpathmoveto{\pgfqpoint{0.542878in}{1.008611in}}%
\pgfpathlineto{\pgfqpoint{1.537122in}{1.008611in}}%
\pgfpathquadraticcurveto{\pgfqpoint{1.556567in}{1.008611in}}{\pgfqpoint{1.556567in}{1.028056in}}%
\pgfpathlineto{\pgfqpoint{1.556567in}{1.446433in}}%
\pgfpathquadraticcurveto{\pgfqpoint{1.556567in}{1.465878in}}{\pgfqpoint{1.537122in}{1.465878in}}%
\pgfpathlineto{\pgfqpoint{0.542878in}{1.465878in}}%
\pgfpathquadraticcurveto{\pgfqpoint{0.523433in}{1.465878in}}{\pgfqpoint{0.523433in}{1.446433in}}%
\pgfpathlineto{\pgfqpoint{0.523433in}{1.028056in}}%
\pgfpathquadraticcurveto{\pgfqpoint{0.523433in}{1.008611in}}{\pgfqpoint{0.542878in}{1.008611in}}%
\pgfpathlineto{\pgfqpoint{0.542878in}{1.008611in}}%
\pgfpathclose%
\pgfusepath{stroke,fill}%
\end{pgfscope}%
\begin{pgfscope}%
\pgfsetbuttcap%
\pgfsetmiterjoin%
\definecolor{currentfill}{rgb}{0.007843,0.619608,0.447059}%
\pgfsetfillcolor{currentfill}%
\pgfsetfillopacity{0.850000}%
\pgfsetlinewidth{0.501875pt}%
\definecolor{currentstroke}{rgb}{0.000000,0.000000,0.000000}%
\pgfsetstrokecolor{currentstroke}%
\pgfsetstrokeopacity{0.850000}%
\pgfsetdash{}{0pt}%
\pgfpathmoveto{\pgfqpoint{0.562322in}{1.353123in}}%
\pgfpathlineto{\pgfqpoint{0.756766in}{1.353123in}}%
\pgfpathlineto{\pgfqpoint{0.756766in}{1.421178in}}%
\pgfpathlineto{\pgfqpoint{0.562322in}{1.421178in}}%
\pgfpathlineto{\pgfqpoint{0.562322in}{1.353123in}}%
\pgfpathclose%
\pgfusepath{stroke,fill}%
\end{pgfscope}%
\begin{pgfscope}%
\definecolor{textcolor}{rgb}{0.000000,0.000000,0.000000}%
\pgfsetstrokecolor{textcolor}%
\pgfsetfillcolor{textcolor}%
\pgftext[x=0.834544in,y=1.353123in,left,base]{\color{textcolor}{\rmfamily\fontsize{7.000000}{8.400000}\selectfont\catcode`\^=\active\def^{\ifmmode\sp\else\^{}\fi}\catcode`\%=\active\def
\end{pgfscope}%
\begin{pgfscope}%
\pgfsetbuttcap%
\pgfsetmiterjoin%
\definecolor{currentfill}{rgb}{0.870588,0.560784,0.011765}%
\pgfsetfillcolor{currentfill}%
\pgfsetfillopacity{0.850000}%
\pgfsetlinewidth{0.501875pt}%
\definecolor{currentstroke}{rgb}{0.000000,0.000000,0.000000}%
\pgfsetstrokecolor{currentstroke}%
\pgfsetstrokeopacity{0.850000}%
\pgfsetdash{}{0pt}%
\pgfpathmoveto{\pgfqpoint{0.562322in}{1.210423in}}%
\pgfpathlineto{\pgfqpoint{0.756766in}{1.210423in}}%
\pgfpathlineto{\pgfqpoint{0.756766in}{1.278478in}}%
\pgfpathlineto{\pgfqpoint{0.562322in}{1.278478in}}%
\pgfpathlineto{\pgfqpoint{0.562322in}{1.210423in}}%
\pgfpathclose%
\pgfusepath{stroke,fill}%
\end{pgfscope}%
\begin{pgfscope}%
\definecolor{textcolor}{rgb}{0.000000,0.000000,0.000000}%
\pgfsetstrokecolor{textcolor}%
\pgfsetfillcolor{textcolor}%
\pgftext[x=0.834544in,y=1.210423in,left,base]{\color{textcolor}{\rmfamily\fontsize{7.000000}{8.400000}\selectfont\catcode`\^=\active\def^{\ifmmode\sp\else\^{}\fi}\catcode`\%=\active\def
\end{pgfscope}%
\begin{pgfscope}%
\pgfsetbuttcap%
\pgfsetroundjoin%
\pgfsetlinewidth{0.803000pt}%
\definecolor{currentstroke}{rgb}{0.000000,0.000000,0.000000}%
\pgfsetstrokecolor{currentstroke}%
\pgfsetstrokeopacity{0.600000}%
\pgfsetdash{{2.960000pt}{1.280000pt}}{0.000000pt}%
\pgfpathmoveto{\pgfqpoint{0.562322in}{1.101751in}}%
\pgfpathlineto{\pgfqpoint{0.659544in}{1.101751in}}%
\pgfpathlineto{\pgfqpoint{0.756766in}{1.101751in}}%
\pgfusepath{stroke}%
\end{pgfscope}%
\begin{pgfscope}%
\definecolor{textcolor}{rgb}{0.000000,0.000000,0.000000}%
\pgfsetstrokecolor{textcolor}%
\pgfsetfillcolor{textcolor}%
\pgftext[x=0.834544in,y=1.067723in,left,base]{\color{textcolor}{\rmfamily\fontsize{7.000000}{8.400000}\selectfont\catcode`\^=\active\def^{\ifmmode\sp\else\^{}\fi}\catcode`\%=\active\def
\end{pgfscope}%
\end{pgfpicture}%
\makeatother%
\endgroup%

%% file: tables/calibrator_ablation_short.tex
\begin{tabular}{l|cc|cc|c}
\toprule
\multicolumn{1}{c|}{\makecell{$\mathcal{L}$(Metric)}} & \multicolumn{2}{c|}{\small{$\{0, \dots, C\}$ (L1 $\downarrow$)}} & \multicolumn{2}{c|}{\small{$\{A, \perp\}$ (BAS $\uparrow$)}} & \multicolumn{1}{c}{\small{$\{0, \dots, C\}^2$ (L1 $\downarrow$)}} \\
\multicolumn{1}{c|}{} & Helpsteer & STSB & SimpleQA & TriviaQA & Helpsteer \\
\midrule
\small{Uncalibrated} & {$0.78$$\scriptscriptstyle{\pm 0.07}$} & {$1.18$$\scriptscriptstyle{\pm 0.12}$} & {$-0.07$$\scriptscriptstyle{\pm 0.02}$} & {$0.46$$\scriptscriptstyle{\pm 0.05}$} & {$2.11$$\scriptscriptstyle{\pm 0.12}$} \\
\small{Temperature} & {$0.77$$\scriptscriptstyle{\pm 0.07}$} & {$1.18$$\scriptscriptstyle{\pm 0.12}$} & {$-0.10$$\scriptscriptstyle{\pm 0.03}$} & {$0.46$$\scriptscriptstyle{\pm 0.05}$} & {$2.04$$\scriptscriptstyle{\pm 0.09}$} \\
\small{Decision} & {$\textcolor{colsecond}{\textbf{0.77}}$$\scriptscriptstyle{\pm 0.04}$} & {$0.70$$\scriptscriptstyle{\pm 0.08}$} & {$\textcolor{colfirst}{\textbf{0.03}}$$\scriptscriptstyle{\pm 0.02}$} & {$\textcolor{colsecond}{\textbf{0.47}}$$\scriptscriptstyle{\pm 0.05}$} & {$\textcolor{colsecond}{\textbf{1.23}}$$\scriptscriptstyle{\pm 0.02}$} \\
\small{Policy $\pi_\phi$} & {$0.78$$\scriptscriptstyle{\pm 0.07}$} & {$\textcolor{colsecond}{\textbf{0.70}}$$\scriptscriptstyle{\pm 0.06}$} & {$\textcolor{colsecond}{\textbf{-0.00}}$$\scriptscriptstyle{\pm 0.00}$} & {$\textcolor{colfirst}{\textbf{0.47}}$$\scriptscriptstyle{\pm 0.04}$} & {$1.37$$\scriptscriptstyle{\pm 0.04}$} \\
\cellcolor[gray]{0.9}{\small{\textbf{Dirichlet}}} & \cellcolor[gray]{0.9}{{$\textcolor{colfirst}{\textbf{0.72}}$$\scriptscriptstyle{\pm 0.05}$}} & \cellcolor[gray]{0.9}{{$\textcolor{colfirst}{\textbf{0.67}}$$\scriptscriptstyle{\pm 0.05}$}} & \cellcolor[gray]{0.9}{{$\textcolor{colfirst}{\textbf{0.03}}$$\scriptscriptstyle{\pm 0.02}$}} & \cellcolor[gray]{0.9}{{$0.46$$\scriptscriptstyle{\pm 0.04}$}} & \cellcolor[gray]{0.9}{{$\textcolor{colfirst}{\textbf{1.21}}$$\scriptscriptstyle{\pm 0.04}$}} \\
\bottomrule
\end{tabular}

%% file: content/related_work.tex
\section{Related Work}

\paragraph{Calibration.}
Classical calibration notions such as confidence, class-wise, and distributional calibration typically concern classification
\citep{kull2015novel,guo2017calibration,kull2019beyond}. Similar to decision calibration \cite{zhao2021calibratingpredictionsdecisionsnovel}, our work is motivated by a given decision-making problem which, in our case, is implied by the task at hand. Our task calibration differs from decision calibration as the latter ensures optimality of MBR decoding only among other MBR decision rules. Our proposed criterion of distributionally calibrating the LLM's latent beliefs makes MBR decoding optimal among \emph{all} decision rules derived from the latent beliefs. Furthermore, our Task Calibration Error (TCE) can be seen as applying the decision-specific calibration error of \citet{hu2024calibrationerrordecisionmaking} to an LLM's latent beliefs. In our work, we connect concepts from the existing literature on calibration for classification with LLM's by interpreting their output distribution in a task-dependent latent space, and improve decision making such as decoding.

\looseness=-1\paragraph{Calibration in LLMs.}
 Recent work has shown that LLMs are often miscalibrated \cite{kadavath2022languagemodelsmostlyknow,plaut2025probabilitieschatllmsmiscalibrated}. Consequently, methods emerged that recalibrate token probabilities or verbalized confidence scores \cite{xie-etal-2024-calibrating,li2025conftunertraininglargelanguage}. Some recent work also moves beyond token-level probabilities: \citet{nakkiran2025trainedtokenscalibratedconcepts} show that language models can exhibit confidence calibration in semantic answer spaces. This semantic answer grouping can be seen as a special case of our proposed \emph{task} confidence calibration by instantiating the latent map $g_T$ with a semantic answer grouping.
For long-form generation, some work \cite{huang2024calibratinglongformgenerationslarge, zhang2025atomiccalibrationllmslongform,band2024linguisticcalibrationlongformgenerations} examines confidence calibration of atomic statements extracted from responses. From the perspective of our framework, this corresponds to confidence task calibration using a latent mapping $g_T$ that extracts these atomic statements. Overall, our notion of distributional task calibration is applicable to any latent structure and therefore generalizes many existing notions of calibration in LLMs that can be recovered for a specific $g_T$.
Moreover, only a few works connect calibration to decision-making in LLMs: \citet{wu2026basdecisiontheoreticapproachevaluating} study Answer-versus-Abstain tasks under asymmetric costs, while \citet{band2024linguisticcalibrationlongformgenerations} use calibration to improve classification. Our framework can be applied to any task-specific loss function and enables optimal decisions through MBR decoding in the context of any given task.

\looseness=-1\paragraph{Decoding in LLMs.}
There are numerous approaches to obtaining responses from the predictive distribution of an LLM \cite{shi2024thorough}. Well-established paradigms include deterministic strategies like Greedy token selection, Beam Search \cite{freitag2017beam}, or Contrastive Decoding \cite{li2023contrastive}, as well as sampling-based non-deterministic approaches like Ancestral sampling or top-$k$ sampling \cite{fan2018hierarchical}. MBR decoding originates in Machine Translation where risk is typically a learned similarity score \cite{eikema-aziz-2022-sampling,muller-sennrich-2021-understanding}. \citet{lukasik2024regressionawareinferencellms} utilize MBR for LLMs in regression problems and \citet{tomov2026taskawarenessimprovesllmgenerations} generalize this approach to arbitrary latent structures $\mathcal{L}$. They empirically observe the latent structure to be beneifical for MBR decoding. We build on this framework of interpreting the LLM's output distribution in a task-aware latent space $\mathcal{L}$ and both theoretically and empirically show that calibrating this latent distribution enables MBR as an optimal task-specific decoding strategy with strong empirical performance.

%% file: content/discussion.tex
\section{Discussion}

\looseness=-1 \paragraph{Limitations.}
Our optimality guarantee holds for decoding strategies that are based solely on the LLM's latent distribution \(\hat p(\rx)\). If additional information is available, such as internal representations or alternative queries, there may, in theory, be decoding strategies with better performance. Nonetheless, we empirically show that our framework outperforms decoding strategies beyond our formal framework in practice. Moreover, the performance improvement is given \emph{in expectation}: task calibration can improve \emph{average} downstream decision quality, but it does not imply per-instance performance gain or optimality. Furthermore, task calibration requires a discrete latent support that is shared across samples. While this is natural for many tasks (e.g., ratings or fixed candidate sets), this requirement may not be applicable to more open-ended settings. However, our work is useful toward this direction as well, as open-ended tasks can often be meaningfully reduced to a simpler structure. For example, we demonstrate that \emph{Answer-versus-Abstain} settings can be effectively task calibrated.
Further, we show how more complex problems, such as set prediction over open-ended support, can be decomposed into simpler decision-making problems that are again tractable to task calibrate.
Lastly, like the work of \citet{tomov2026taskawarenessimprovesllmgenerations} we build upon, our framework leverages semantics induced by a given task. Therefore, task calibration improves reliability and decoding performance for a given problem, but not for all tasks and applications simultaneously with a single recalibration map. Our method should be seen as a widely applicable general-purpose tool to enhance LLMs when they are deployed in a given downstream application that admits a task-dependent latent structure.

\looseness=-1 \paragraph{Future Work}
Our results highlight the potential of the proposed \emph{calibrate first, then act optimally} strategy to systematically improve downstream performance. The framework is generic and applies to any setting that admits a discrete latent representation with shared support across outputs, or that can be decomposed into such a structure. One particularly promising direction is LLM-as-a-judge systems \cite{gu2025surveyllmasajudge}, which frequently rely on rubric-based evaluations and thus naturally fit into our framework. Because judges often produce noisy or biased scores, task calibration could substantially improve their reliability \cite{lee2026correctlyreportllmasajudgeevaluations}. More broadly, we see potential for task calibration to facilitate more reliable model decisions across a variety of tasks and application domains.

%% file: content/acknowledgements.tex
\section*{Acknolwedgements}
We want to thank Franz Rieger, Leon Hetzel, and Arthur Kosmala for reviewing the paper. The research presented has been performed in the frame of the RADELN project funded by TUM Georg Nemetschek Institute Artificial Intelligence for the Built World (GNI). It is further supported by the Bavarian Ministry of Economic
Affairs, Regional Development and Energy with funds from
the Hightech Agenda Bayern. Lastly, Rajeev Verma is a part of UvA-Bosch Delta Lab at the University of Amsterdam funded by the Bosch Center for Artificial Intelligence.

%% file: content/impact.tex
\section{Impact Statement}
\label{sec:impact}
In this work, we examine how calibration can improve the generation performance of Large Language Models. While any research may be misused, our primary goal is to improve the reliability of
these models to support their safe deployment in critical domains. We believe the benefits will
outweigh the potential risks.

%% file: appendix/additional_experiments.tex
\section{Additional Experiments}

\subsection{Ablation on Calibration Methods}
\label{app:ablation}

\begin{table}[h]
\centering
\caption{Ablation on calibration methods on all models, including the MAP action on the Dirichlet calibrated predictor. Dirichlet calibration and MBR perform the best in most tasks. For Hamming and Exact Match losses, the MBR action and the MP action (classifier $\pi_\phi$) are the same.}
\setlength{\tabcolsep}{4pt}
\resizebox{0.98\textwidth}{!}{
\input{tables/calibrator_ablation}
}
\label{tab:calibrator_ablation}
\end{table}

In \Cref{tab:calibrator_ablation}, we compare our Dirichlet task calibration map to different calibration strategies. We consistently observe it to yield the highest quality results, indicating that the MBR actions on the recalibrated LLM beliefs are optimal.

We also outperform directly learning a classifier on the latent belief distribution $\hat{p}(\rx)$ which is equivalent to selecting the Maximum Probability (MP) action on the Dirichlet calibration map $f_\phi$. For Exact Match and Hamming loss, the MP and the MBR responses are identical, and we consequently also achieve the same performance. In this case, our framework can indeed by interpreted as learning a classifier on the latent beliefs. Similarly, temperature scaling does not affect the MP or MBR action and, therefore, it yields the same performance as not calibrating for these losses. For the other loss functions, we indeed find that calibration and MBR decoding consistently outperforms the learned classifier. This shows that the power of our method comes not from utilizing the labels in the calibration set to correct the LLM's predictions but instead from calibrating its latent belief distribution.

\subsection{Calibration Maps}
\Cref{fig:calibration_maps_simple_qa,fig:calibration_maps_triviaqa} show how the learned recalibration maps act on the binary latent space $\{A, \perp\}$: In \Cref{fig:calibration_maps_simple_qa}, we see that for all LLMs but the Qwen3-30B-A3B model, the learned calibration map counteracts the strong overconfidence in answering by collapsing all actions onto abstaining $\perp$ by predicting a probability below the threshold $t$. While this results in the calibrated LLM always abstaining, \Cref{tab:performance_generations} confirms that this result in lower average loss. One explanation is that the three LLMs for which this behaviour occurs are all poor predictors of the true label and, therefore, the best calibration map in terms of the BAS risk is to always abstain instead of trusting the LLM's predicted distribution. In contrast, for TriviaQA, we, too, observe overconfidence of the LLMs in \Cref{fig:calibration_maps_triviaqa} that is mitigated by our task calibration: Only uncalibrated probabilities that are close to one (i.e. very high confidence) get mapped to values above the answer threshold $t$.

\begin{figure*}[h]
    \centering
    \begin{subfigure}[t]{0.47\linewidth}
        \centering
\input{figures/calibration_maps/SIMPLE_QA_VERIFIED_gemma-3-4b-it.pgf}
        \caption{Gemma-3-4B}
    \end{subfigure}
    \hfill
    \begin{subfigure}[t]{0.47\linewidth}
        \centering
\input{figures/calibration_maps/SIMPLE_QA_VERIFIED_gemma-3-12b-it.pgf}
        \caption{Gemma-3-12B}
    \end{subfigure}
    \\
    \begin{subfigure}[t]{0.47\linewidth}
        \centering
\input{figures/calibration_maps/SIMPLE_QA_VERIFIED_Qwen3-4B-Instruct-2507.pgf}
        \caption{Qwen3-4B}
    \end{subfigure}
    \hfill
    \begin{subfigure}[t]{0.47\linewidth}
        \centering
\input{figures/calibration_maps/SIMPLE_QA_VERIFIED_Qwen3-30B-A3B-Instruct-2507.pgf}
        \caption{Qwen-3-30B-A3B}
    \end{subfigure}
    \caption{Learned calibration maps $f_\phi(\hat{p}(\rx)$ on SimpleQA.}
    \label{fig:calibration_maps_simple_qa}
\end{figure*}

\begin{figure*}[h!]
    \centering
    \begin{subfigure}[t]{0.47\linewidth}
        \centering
\input{figures/calibration_maps/TRIVIAQA_gemma-3-4b-it.pgf}
        \caption{Gemma-3-4B}
    \end{subfigure}
    \hfill
    \begin{subfigure}[t]{0.47\linewidth}
        \centering
\input{figures/calibration_maps/TRIVIAQA_gemma-3-12b-it.pgf}
        \caption{Gemma-3-12B}
    \end{subfigure}
    \\
    \begin{subfigure}[t]{0.47\linewidth}
        \centering
\input{figures/calibration_maps/TRIVIAQA_Qwen3-4B-Instruct-2507.pgf}
        \caption{Qwen3-4B}
    \end{subfigure}
    \hfill
    \begin{subfigure}[t]{0.47\linewidth}
        \centering
\input{figures/calibration_maps/TRIVIAQA_Qwen3-30B-A3B-Instruct-2507.pgf}
        \caption{Gemma-3-30B-A3B}
    \end{subfigure}
    \caption{Learned calibration maps $f_\phi(\hat{p}(\rx)$ on TriviaQA.}
    \label{fig:calibration_maps_triviaqa}
\end{figure*}

\subsection{Task Calibration Error (TCE)}
\label{app:tce}
\Cref{tab:calibration} reports TCE values estimated with the binned estimator. Across models and tasks, we observe substantial task miscalibration before recalibration. Since TCE is defined in the same units as the task loss, it admits a direct interpretation; for example, for Qwen3-30B-A3B on MMLU, a TCE of $0.083$ corresponds to $8.3$ percentage points of excess error. Applying our Dirichlet calibration map \(f_{\phi}(q)\) reduces TCE to values that are close to zero in nearly all settings, indicating that most of the achievable improvement from task calibration is realized.

\begin{table*}[h]
\centering
\caption{Task Calibration Error (TCE) using binned estimator for uncalibrated \(q\) and Dirichlet-calibrated \(f_{\phi}(q)\) model. All LLMs are severely task miscalibrated which is mitigated by our approximate recalibration method.}
\setlength{\tabcolsep}{4pt}
\resizebox{0.9\textwidth}{!}{
\input{tables/calibration_table}
}
\label{tab:calibration}
\end{table*}

\Cref{fig:tce_all} summarizes, across all tasks and models, how TCE and ECE correlate with the loss reduction obtained through recalibration. TCE shows a strong, consistent correlation with realized gains across almost all settings, whereas ECE does not. Moreover, because TCE is defined in the same units as the task loss, it tracks performance improvements on a directly interpretable scale, while ECE values are harder to relate to concrete changes in task performance.

\begin{figure}[h]
  \centering
\input{figures/tce_loss_gain_correlation_vertical_all.pgf}
    \caption{Task Calibration Error (TCE) and ECE for all Tasks}
  \label{fig:tce_all}
\end{figure}

\subsubsection{Ablation with Dirichlet Kernel Estimator}
\label{app:dirchlet}
To assess the robustness of our conclusions, we also employ the estimator of \citet{popordanoska2022consistentdifferentiablelpcanonical} with a Dirichlet kernel of fixed bandwidth $0.01$. The corresponding results are reported in \Cref{tab:calibration_kde}. Overall, they match those obtained with the binning-based estimator: TCE is substantially reduced after calibration and is often close to zero, indicating that most of the achievable improvement is recovered. Occasionally, discrepancies arise when the conditional estimator is supported by very few samples (sometimes a single data point), reflecting the sparsity of the empirical distribution in those regions.

\begin{table*}[h]
\centering
\caption{Task Calibration Error (TCE) using Dirichlet kernel estimator for uncalibrated \(q\) and Dirichlet-calibrated \(f_{\phi}(q)\) model}
\setlength{\tabcolsep}{4pt}
\resizebox{0.99\textwidth}{!}{
\input{tables/calibration_table_kde}
}
\label{tab:calibration_kde}
\end{table*}

%% file: tables/calibrator_ablation.tex
\begin{tabular}{ll|cc|cc|cc|c|c}
\toprule
\multicolumn{2}{c|}{\textbf{Latent Structure $\mathcal{L}$}(Metric)} & \multicolumn{2}{c|}{\small{$\{0, \dots, C\}$ (L1 $\downarrow$)}} & \multicolumn{2}{c|}{\small{$[C]$ (Acc. $\uparrow$)}} & \multicolumn{2}{c|}{\small{$\{A, \perp\}$ (BAS $\uparrow$)}} & \multicolumn{1}{c|}{\small{$\{0, 1\}^{\leq C}$ (Hamming $\downarrow$)}} & \multicolumn{1}{c}{\small{$\{0, \dots, C\}^2$ (L1 $\downarrow$)}} \\
\multicolumn{2}{c|}{} & Helpsteer & STSB & MMLU & When2Call & SimpleQA & TriviaQA & MAQA & Helpsteer \\
\midrule
\multirow{5}{*}{\small{Gemma-3-4B}} & \small{Uncalibrated} & {$0.82$$\scriptscriptstyle{\pm 0.08}$} & {$1.07$$\scriptscriptstyle{\pm 0.08}$} & {$64.56$$\scriptscriptstyle{\pm 2.08}$} & {$51.14$$\scriptscriptstyle{\pm 2.78}$} & {$\textcolor{colsecond}{\textbf{-0.20}}$$\scriptscriptstyle{\pm 0.02}$} & {$0.18$$\scriptscriptstyle{\pm 0.05}$} & {$0.74$$\scriptscriptstyle{\pm 0.04}$} & {$2.03$$\scriptscriptstyle{\pm 0.12}$} \\
 & \small{Temperature} & {$0.80$$\scriptscriptstyle{\pm 0.10}$} & {$1.07$$\scriptscriptstyle{\pm 0.08}$} & {$64.56$$\scriptscriptstyle{\pm 2.08}$} & {$51.14$$\scriptscriptstyle{\pm 2.78}$} & {$-0.25$$\scriptscriptstyle{\pm 0.02}$} & {$0.17$$\scriptscriptstyle{\pm 0.05}$} & {$0.74$$\scriptscriptstyle{\pm 0.04}$} & {$1.98$$\scriptscriptstyle{\pm 0.14}$} \\
 & \small{Decision} & {$\textcolor{colfirst}{\textbf{0.70}}$$\scriptscriptstyle{\pm 0.10}$} & {$0.84$$\scriptscriptstyle{\pm 0.04}$} & {$\textcolor{colfirst}{\textbf{67.36}}$$\scriptscriptstyle{\pm 1.17}$} & {$\textcolor{colsecond}{\textbf{56.10}}$$\scriptscriptstyle{\pm 3.40}$} & {$\textcolor{colfirst}{\textbf{-0.00}}$$\scriptscriptstyle{\pm 0.00}$} & {$\textcolor{colfirst}{\textbf{0.24}}$$\scriptscriptstyle{\pm 0.05}$} & {$\textcolor{colsecond}{\textbf{0.66}}$$\scriptscriptstyle{\pm 0.04}$} & {$\textcolor{colfirst}{\textbf{1.24}}$$\scriptscriptstyle{\pm 0.07}$} \\
 & \small{Policy $\pi_\phi$} & {$0.82$$\scriptscriptstyle{\pm 0.11}$} & {$\textcolor{colsecond}{\textbf{0.82}}$$\scriptscriptstyle{\pm 0.07}$} & {$\textcolor{colsecond}{\textbf{67.11}}$$\scriptscriptstyle{\pm 3.95}$} & {$\textcolor{colfirst}{\textbf{60.63}}$$\scriptscriptstyle{\pm 3.57}$} & {$\textcolor{colfirst}{\textbf{-0.00}}$$\scriptscriptstyle{\pm 0.00}$} & {$\textcolor{colsecond}{\textbf{0.23}}$$\scriptscriptstyle{\pm 0.03}$} & {$\textcolor{colfirst}{\textbf{0.62}}$$\scriptscriptstyle{\pm 0.03}$} & {$1.42$$\scriptscriptstyle{\pm 0.04}$} \\
 & \cellcolor[gray]{0.9}{\small{\textbf{Dirichlet}}} & \cellcolor[gray]{0.9}{{$\textcolor{colsecond}{\textbf{0.73}}$$\scriptscriptstyle{\pm 0.08}$}} & \cellcolor[gray]{0.9}{{$\textcolor{colfirst}{\textbf{0.80}}$$\scriptscriptstyle{\pm 0.05}$}} & \cellcolor[gray]{0.9}{{$\textcolor{colsecond}{\textbf{67.11}}$$\scriptscriptstyle{\pm 3.95}$}} & \cellcolor[gray]{0.9}{{$\textcolor{colfirst}{\textbf{60.63}}$$\scriptscriptstyle{\pm 3.57}$}} & \cellcolor[gray]{0.9}{{$\textcolor{colfirst}{\textbf{-0.00}}$$\scriptscriptstyle{\pm 0.00}$}} & \cellcolor[gray]{0.9}{{$0.22$$\scriptscriptstyle{\pm 0.05}$}} & \cellcolor[gray]{0.9}{{$\textcolor{colfirst}{\textbf{0.62}}$$\scriptscriptstyle{\pm 0.03}$}} & \cellcolor[gray]{0.9}{{$\textcolor{colsecond}{\textbf{1.25}}$$\scriptscriptstyle{\pm 0.04}$}} \\
\midrule
\multirow{5}{*}{\small{Gemma-3-12B}} & \small{Uncalibrated} & {$0.90$$\scriptscriptstyle{\pm 0.05}$} & {$1.01$$\scriptscriptstyle{\pm 0.03}$} & {$80.11$$\scriptscriptstyle{\pm 2.84}$} & {$68.87$$\scriptscriptstyle{\pm 2.09}$} & {$\textcolor{colsecond}{\textbf{-0.19}}$$\scriptscriptstyle{\pm 0.03}$} & {$0.44$$\scriptscriptstyle{\pm 0.03}$} & {$0.60$$\scriptscriptstyle{\pm 0.06}$} & {$1.89$$\scriptscriptstyle{\pm 0.07}$} \\
 & \small{Temperature} & {$0.88$$\scriptscriptstyle{\pm 0.05}$} & {$1.01$$\scriptscriptstyle{\pm 0.02}$} & {$80.11$$\scriptscriptstyle{\pm 2.84}$} & {$68.87$$\scriptscriptstyle{\pm 2.09}$} & {$-0.25$$\scriptscriptstyle{\pm 0.03}$} & {$0.44$$\scriptscriptstyle{\pm 0.03}$} & {$0.60$$\scriptscriptstyle{\pm 0.06}$} & {$1.79$$\scriptscriptstyle{\pm 0.05}$} \\
 & \small{Decision} & {$\textcolor{colsecond}{\textbf{0.83}}$$\scriptscriptstyle{\pm 0.04}$} & {$0.79$$\scriptscriptstyle{\pm 0.05}$} & {$\textcolor{colsecond}{\textbf{81.34}}$$\scriptscriptstyle{\pm 2.97}$} & {$\textcolor{colsecond}{\textbf{71.77}}$$\scriptscriptstyle{\pm 2.72}$} & {$\textcolor{colfirst}{\textbf{-0.00}}$$\scriptscriptstyle{\pm 0.00}$} & {$\textcolor{colsecond}{\textbf{0.47}}$$\scriptscriptstyle{\pm 0.03}$} & {$\textcolor{colsecond}{\textbf{0.56}}$$\scriptscriptstyle{\pm 0.06}$} & {$\textcolor{colfirst}{\textbf{1.25}}$$\scriptscriptstyle{\pm 0.06}$} \\
 & \small{Policy $\pi_\phi$} & {$0.92$$\scriptscriptstyle{\pm 0.02}$} & {$\textcolor{colsecond}{\textbf{0.69}}$$\scriptscriptstyle{\pm 0.05}$} & {$\textcolor{colfirst}{\textbf{89.72}}$$\scriptscriptstyle{\pm 1.56}$} & {$\textcolor{colfirst}{\textbf{74.38}}$$\scriptscriptstyle{\pm 2.64}$} & {$\textcolor{colfirst}{\textbf{-0.00}}$$\scriptscriptstyle{\pm 0.00}$} & {$0.47$$\scriptscriptstyle{\pm 0.02}$} & {$\textcolor{colfirst}{\textbf{0.51}}$$\scriptscriptstyle{\pm 0.05}$} & {$1.36$$\scriptscriptstyle{\pm 0.07}$} \\
 & \cellcolor[gray]{0.9}{\small{\textbf{Dirichlet}}} & \cellcolor[gray]{0.9}{{$\textcolor{colfirst}{\textbf{0.79}}$$\scriptscriptstyle{\pm 0.07}$}} & \cellcolor[gray]{0.9}{{$\textcolor{colfirst}{\textbf{0.65}}$$\scriptscriptstyle{\pm 0.07}$}} & \cellcolor[gray]{0.9}{{$\textcolor{colfirst}{\textbf{89.72}}$$\scriptscriptstyle{\pm 1.56}$}} & \cellcolor[gray]{0.9}{{$\textcolor{colfirst}{\textbf{74.38}}$$\scriptscriptstyle{\pm 2.64}$}} & \cellcolor[gray]{0.9}{{$\textcolor{colfirst}{\textbf{-0.00}}$$\scriptscriptstyle{\pm 0.00}$}} & \cellcolor[gray]{0.9}{{$\textcolor{colfirst}{\textbf{0.47}}$$\scriptscriptstyle{\pm 0.03}$}} & \cellcolor[gray]{0.9}{{$\textcolor{colfirst}{\textbf{0.51}}$$\scriptscriptstyle{\pm 0.05}$}} & \cellcolor[gray]{0.9}{{$\textcolor{colsecond}{\textbf{1.26}}$$\scriptscriptstyle{\pm 0.04}$}} \\
\midrule
\multirow{5}{*}{\small{Qwen-3-4B}} & \small{Uncalibrated} & {$0.82$$\scriptscriptstyle{\pm 0.07}$} & {$1.33$$\scriptscriptstyle{\pm 0.04}$} & {$\textcolor{colsecond}{\textbf{77.92}}$$\scriptscriptstyle{\pm 1.79}$} & {$\textcolor{colsecond}{\textbf{73.00}}$$\scriptscriptstyle{\pm 1.70}$} & {$\textcolor{colsecond}{\textbf{-0.18}}$$\scriptscriptstyle{\pm 0.01}$} & {$0.16$$\scriptscriptstyle{\pm 0.04}$} & {$0.87$$\scriptscriptstyle{\pm 0.08}$} & {$2.06$$\scriptscriptstyle{\pm 0.05}$} \\
 & \small{Temperature} & {$0.79$$\scriptscriptstyle{\pm 0.08}$} & {$1.33$$\scriptscriptstyle{\pm 0.04}$} & {$\textcolor{colsecond}{\textbf{77.92}}$$\scriptscriptstyle{\pm 1.79}$} & {$\textcolor{colsecond}{\textbf{73.00}}$$\scriptscriptstyle{\pm 1.70}$} & {$-0.22$$\scriptscriptstyle{\pm 0.01}$} & {$0.15$$\scriptscriptstyle{\pm 0.04}$} & {$0.87$$\scriptscriptstyle{\pm 0.08}$} & {$1.97$$\scriptscriptstyle{\pm 0.06}$} \\
 & \small{Decision} & {$\textcolor{colsecond}{\textbf{0.78}}$$\scriptscriptstyle{\pm 0.06}$} & {$0.73$$\scriptscriptstyle{\pm 0.05}$} & {$77.58$$\scriptscriptstyle{\pm 2.41}$} & {$71.80$$\scriptscriptstyle{\pm 3.85}$} & {$\textcolor{colfirst}{\textbf{-0.00}}$$\scriptscriptstyle{\pm 0.00}$} & {$\textcolor{colsecond}{\textbf{0.21}}$$\scriptscriptstyle{\pm 0.03}$} & {$\textcolor{colsecond}{\textbf{0.75}}$$\scriptscriptstyle{\pm 0.07}$} & {$\textcolor{colsecond}{\textbf{1.29}}$$\scriptscriptstyle{\pm 0.04}$} \\
 & \small{Policy $\pi_\phi$} & {$0.84$$\scriptscriptstyle{\pm 0.06}$} & {$\textcolor{colsecond}{\textbf{0.71}}$$\scriptscriptstyle{\pm 0.02}$} & {$\textcolor{colfirst}{\textbf{83.07}}$$\scriptscriptstyle{\pm 0.93}$} & {$\textcolor{colfirst}{\textbf{74.10}}$$\scriptscriptstyle{\pm 2.75}$} & {$\textcolor{colfirst}{\textbf{-0.00}}$$\scriptscriptstyle{\pm 0.00}$} & {$\textcolor{colfirst}{\textbf{0.21}}$$\scriptscriptstyle{\pm 0.03}$} & {$\textcolor{colfirst}{\textbf{0.72}}$$\scriptscriptstyle{\pm 0.05}$} & {$1.40$$\scriptscriptstyle{\pm 0.09}$} \\
 & \cellcolor[gray]{0.9}{\small{\textbf{Dirichlet}}} & \cellcolor[gray]{0.9}{{$\textcolor{colfirst}{\textbf{0.76}}$$\scriptscriptstyle{\pm 0.04}$}} & \cellcolor[gray]{0.9}{{$\textcolor{colfirst}{\textbf{0.68}}$$\scriptscriptstyle{\pm 0.05}$}} & \cellcolor[gray]{0.9}{{$\textcolor{colfirst}{\textbf{83.07}}$$\scriptscriptstyle{\pm 0.93}$}} & \cellcolor[gray]{0.9}{{$\textcolor{colfirst}{\textbf{74.10}}$$\scriptscriptstyle{\pm 2.75}$}} & \cellcolor[gray]{0.9}{{$\textcolor{colfirst}{\textbf{-0.00}}$$\scriptscriptstyle{\pm 0.00}$}} & \cellcolor[gray]{0.9}{{$0.21$$\scriptscriptstyle{\pm 0.03}$}} & \cellcolor[gray]{0.9}{{$\textcolor{colfirst}{\textbf{0.72}}$$\scriptscriptstyle{\pm 0.05}$}} & \cellcolor[gray]{0.9}{{$\textcolor{colfirst}{\textbf{1.27}}$$\scriptscriptstyle{\pm 0.06}$}} \\
\midrule
\multirow{5}{*}{\small{Qwen3-30B-A3B}} & \small{Uncalibrated} & {$0.78$$\scriptscriptstyle{\pm 0.07}$} & {$1.18$$\scriptscriptstyle{\pm 0.12}$} & {$78.48$$\scriptscriptstyle{\pm 1.92}$} & {$74.59$$\scriptscriptstyle{\pm 3.52}$} & {$-0.07$$\scriptscriptstyle{\pm 0.02}$} & {$0.46$$\scriptscriptstyle{\pm 0.05}$} & {$0.54$$\scriptscriptstyle{\pm 0.03}$} & {$2.11$$\scriptscriptstyle{\pm 0.12}$} \\
 & \small{Temperature} & {$0.77$$\scriptscriptstyle{\pm 0.07}$} & {$1.18$$\scriptscriptstyle{\pm 0.12}$} & {$78.48$$\scriptscriptstyle{\pm 1.92}$} & {$74.59$$\scriptscriptstyle{\pm 3.52}$} & {$-0.10$$\scriptscriptstyle{\pm 0.03}$} & {$0.46$$\scriptscriptstyle{\pm 0.05}$} & {$0.54$$\scriptscriptstyle{\pm 0.03}$} & {$2.04$$\scriptscriptstyle{\pm 0.09}$} \\
 & \small{Decision} & {$\textcolor{colsecond}{\textbf{0.77}}$$\scriptscriptstyle{\pm 0.04}$} & {$0.70$$\scriptscriptstyle{\pm 0.08}$} & {$\textcolor{colsecond}{\textbf{81.39}}$$\scriptscriptstyle{\pm 2.42}$} & {$\textcolor{colsecond}{\textbf{75.60}}$$\scriptscriptstyle{\pm 2.94}$} & {$\textcolor{colfirst}{\textbf{0.03}}$$\scriptscriptstyle{\pm 0.02}$} & {$\textcolor{colsecond}{\textbf{0.47}}$$\scriptscriptstyle{\pm 0.05}$} & {$\textcolor{colsecond}{\textbf{0.51}}$$\scriptscriptstyle{\pm 0.03}$} & {$\textcolor{colsecond}{\textbf{1.23}}$$\scriptscriptstyle{\pm 0.02}$} \\
 & \small{Policy $\pi_\phi$} & {$0.78$$\scriptscriptstyle{\pm 0.07}$} & {$\textcolor{colsecond}{\textbf{0.70}}$$\scriptscriptstyle{\pm 0.06}$} & {$\textcolor{colfirst}{\textbf{88.34}}$$\scriptscriptstyle{\pm 2.11}$} & {$\textcolor{colfirst}{\textbf{78.63}}$$\scriptscriptstyle{\pm 2.50}$} & {$\textcolor{colsecond}{\textbf{-0.00}}$$\scriptscriptstyle{\pm 0.00}$} & {$\textcolor{colfirst}{\textbf{0.47}}$$\scriptscriptstyle{\pm 0.04}$} & {$\textcolor{colfirst}{\textbf{0.49}}$$\scriptscriptstyle{\pm 0.03}$} & {$1.37$$\scriptscriptstyle{\pm 0.04}$} \\
 & \cellcolor[gray]{0.9}{\small{\textbf{Dirichlet}}} & \cellcolor[gray]{0.9}{{$\textcolor{colfirst}{\textbf{0.72}}$$\scriptscriptstyle{\pm 0.05}$}} & \cellcolor[gray]{0.9}{{$\textcolor{colfirst}{\textbf{0.67}}$$\scriptscriptstyle{\pm 0.05}$}} & \cellcolor[gray]{0.9}{{$\textcolor{colfirst}{\textbf{88.34}}$$\scriptscriptstyle{\pm 2.11}$}} & \cellcolor[gray]{0.9}{{$\textcolor{colfirst}{\textbf{78.63}}$$\scriptscriptstyle{\pm 2.50}$}} & \cellcolor[gray]{0.9}{{$\textcolor{colfirst}{\textbf{0.03}}$$\scriptscriptstyle{\pm 0.02}$}} & \cellcolor[gray]{0.9}{{$0.46$$\scriptscriptstyle{\pm 0.04}$}} & \cellcolor[gray]{0.9}{{$\textcolor{colfirst}{\textbf{0.49}}$$\scriptscriptstyle{\pm 0.03}$}} & \cellcolor[gray]{0.9}{{$\textcolor{colfirst}{\textbf{1.21}}$$\scriptscriptstyle{\pm 0.04}$}} \\
\bottomrule
\end{tabular}

%% file: figures/calibration_maps/SIMPLE_QA_VERIFIED_gemma-3-4b-it.pgf
\begingroup%
\makeatletter%
\begin{pgfpicture}%
\pgfpathrectangle{\pgfpointorigin}{\pgfqpoint{1.600000in}{1.600000in}}%
\pgfusepath{use as bounding box, clip}%
\begin{pgfscope}%
\pgfsetbuttcap%
\pgfsetmiterjoin%
\definecolor{currentfill}{rgb}{1.000000,1.000000,1.000000}%
\pgfsetfillcolor{currentfill}%
\pgfsetlinewidth{0.000000pt}%
\definecolor{currentstroke}{rgb}{1.000000,1.000000,1.000000}%
\pgfsetstrokecolor{currentstroke}%
\pgfsetdash{}{0pt}%
\pgfpathmoveto{\pgfqpoint{0.000000in}{0.000000in}}%
\pgfpathlineto{\pgfqpoint{1.600000in}{0.000000in}}%
\pgfpathlineto{\pgfqpoint{1.600000in}{1.600000in}}%
\pgfpathlineto{\pgfqpoint{0.000000in}{1.600000in}}%
\pgfpathlineto{\pgfqpoint{0.000000in}{0.000000in}}%
\pgfpathclose%
\pgfusepath{fill}%
\end{pgfscope}%
\begin{pgfscope}%
\pgfsetbuttcap%
\pgfsetmiterjoin%
\definecolor{currentfill}{rgb}{1.000000,1.000000,1.000000}%
\pgfsetfillcolor{currentfill}%
\pgfsetlinewidth{0.000000pt}%
\definecolor{currentstroke}{rgb}{0.000000,0.000000,0.000000}%
\pgfsetstrokecolor{currentstroke}%
\pgfsetstrokeopacity{0.000000}%
\pgfsetdash{}{0pt}%
\pgfpathmoveto{\pgfqpoint{0.519743in}{0.414757in}}%
\pgfpathlineto{\pgfqpoint{1.503516in}{0.414757in}}%
\pgfpathlineto{\pgfqpoint{1.503516in}{1.558330in}}%
\pgfpathlineto{\pgfqpoint{0.519743in}{1.558330in}}%
\pgfpathlineto{\pgfqpoint{0.519743in}{0.414757in}}%
\pgfpathclose%
\pgfusepath{fill}%
\end{pgfscope}%
\begin{pgfscope}%
\pgfpathrectangle{\pgfqpoint{0.519743in}{0.414757in}}{\pgfqpoint{0.983773in}{1.143573in}}%
\pgfusepath{clip}%
\pgfsetbuttcap%
\pgfsetroundjoin%
\definecolor{currentfill}{rgb}{0.870588,0.560784,0.011765}%
\pgfsetfillcolor{currentfill}%
\pgfsetfillopacity{0.200000}%
\pgfsetlinewidth{1.003750pt}%
\definecolor{currentstroke}{rgb}{0.870588,0.560784,0.011765}%
\pgfsetstrokecolor{currentstroke}%
\pgfsetstrokeopacity{0.200000}%
\pgfsetdash{}{0pt}%
\pgfsys@defobject{currentmarker}{\pgfqpoint{0.564460in}{0.726641in}}{\pgfqpoint{0.788045in}{1.506349in}}{%
\pgfpathmoveto{\pgfqpoint{0.788045in}{0.726641in}}%
\pgfpathlineto{\pgfqpoint{0.564460in}{0.726641in}}%
\pgfpathlineto{\pgfqpoint{0.564460in}{1.506349in}}%
\pgfpathlineto{\pgfqpoint{0.788045in}{1.506349in}}%
\pgfpathlineto{\pgfqpoint{0.788045in}{1.506349in}}%
\pgfpathlineto{\pgfqpoint{0.788045in}{0.726641in}}%
\pgfpathlineto{\pgfqpoint{0.788045in}{0.726641in}}%
\pgfpathclose%
\pgfusepath{stroke,fill}%
}%
\begin{pgfscope}%
\pgfsys@transformshift{0.000000in}{0.000000in}%
\pgfsys@useobject{currentmarker}{}%
\end{pgfscope}%
\end{pgfscope}%
\begin{pgfscope}%
\pgfpathrectangle{\pgfqpoint{0.519743in}{0.414757in}}{\pgfqpoint{0.983773in}{1.143573in}}%
\pgfusepath{clip}%
\pgfsetbuttcap%
\pgfsetroundjoin%
\definecolor{currentfill}{rgb}{0.007843,0.619608,0.447059}%
\pgfsetfillcolor{currentfill}%
\pgfsetfillopacity{0.200000}%
\pgfsetlinewidth{1.003750pt}%
\definecolor{currentstroke}{rgb}{0.007843,0.619608,0.447059}%
\pgfsetstrokecolor{currentstroke}%
\pgfsetstrokeopacity{0.200000}%
\pgfsetdash{}{0pt}%
\pgfsys@defobject{currentmarker}{\pgfqpoint{0.788045in}{0.726641in}}{\pgfqpoint{1.458799in}{1.506349in}}{%
\pgfpathmoveto{\pgfqpoint{1.458799in}{0.726641in}}%
\pgfpathlineto{\pgfqpoint{0.788045in}{0.726641in}}%
\pgfpathlineto{\pgfqpoint{0.788045in}{1.506349in}}%
\pgfpathlineto{\pgfqpoint{1.458799in}{1.506349in}}%
\pgfpathlineto{\pgfqpoint{1.458799in}{1.506349in}}%
\pgfpathlineto{\pgfqpoint{1.458799in}{0.726641in}}%
\pgfpathlineto{\pgfqpoint{1.458799in}{0.726641in}}%
\pgfpathclose%
\pgfusepath{stroke,fill}%
}%
\begin{pgfscope}%
\pgfsys@transformshift{0.000000in}{0.000000in}%
\pgfsys@useobject{currentmarker}{}%
\end{pgfscope}%
\end{pgfscope}%
\begin{pgfscope}%
\pgfpathrectangle{\pgfqpoint{0.519743in}{0.414757in}}{\pgfqpoint{0.983773in}{1.143573in}}%
\pgfusepath{clip}%
\pgfsetbuttcap%
\pgfsetroundjoin%
\definecolor{currentfill}{rgb}{0.870588,0.560784,0.011765}%
\pgfsetfillcolor{currentfill}%
\pgfsetfillopacity{0.200000}%
\pgfsetlinewidth{1.003750pt}%
\definecolor{currentstroke}{rgb}{0.870588,0.560784,0.011765}%
\pgfsetstrokecolor{currentstroke}%
\pgfsetstrokeopacity{0.200000}%
\pgfsetdash{}{0pt}%
\pgfsys@defobject{currentmarker}{\pgfqpoint{0.788045in}{0.466738in}}{\pgfqpoint{1.458799in}{0.726641in}}{%
\pgfpathmoveto{\pgfqpoint{1.458799in}{0.466738in}}%
\pgfpathlineto{\pgfqpoint{0.788045in}{0.466738in}}%
\pgfpathlineto{\pgfqpoint{0.788045in}{0.726641in}}%
\pgfpathlineto{\pgfqpoint{1.458799in}{0.726641in}}%
\pgfpathlineto{\pgfqpoint{1.458799in}{0.726641in}}%
\pgfpathlineto{\pgfqpoint{1.458799in}{0.466738in}}%
\pgfpathlineto{\pgfqpoint{1.458799in}{0.466738in}}%
\pgfpathclose%
\pgfusepath{stroke,fill}%
}%
\begin{pgfscope}%
\pgfsys@transformshift{0.000000in}{0.000000in}%
\pgfsys@useobject{currentmarker}{}%
\end{pgfscope}%
\end{pgfscope}%
\begin{pgfscope}%
\pgfpathrectangle{\pgfqpoint{0.519743in}{0.414757in}}{\pgfqpoint{0.983773in}{1.143573in}}%
\pgfusepath{clip}%
\pgfsetbuttcap%
\pgfsetroundjoin%
\definecolor{currentfill}{rgb}{0.007843,0.619608,0.447059}%
\pgfsetfillcolor{currentfill}%
\pgfsetfillopacity{0.200000}%
\pgfsetlinewidth{1.003750pt}%
\definecolor{currentstroke}{rgb}{0.007843,0.619608,0.447059}%
\pgfsetstrokecolor{currentstroke}%
\pgfsetstrokeopacity{0.200000}%
\pgfsetdash{}{0pt}%
\pgfsys@defobject{currentmarker}{\pgfqpoint{0.564460in}{0.466738in}}{\pgfqpoint{0.788045in}{0.726641in}}{%
\pgfpathmoveto{\pgfqpoint{0.788045in}{0.466738in}}%
\pgfpathlineto{\pgfqpoint{0.564460in}{0.466738in}}%
\pgfpathlineto{\pgfqpoint{0.564460in}{0.726641in}}%
\pgfpathlineto{\pgfqpoint{0.788045in}{0.726641in}}%
\pgfpathlineto{\pgfqpoint{0.788045in}{0.726641in}}%
\pgfpathlineto{\pgfqpoint{0.788045in}{0.466738in}}%
\pgfpathlineto{\pgfqpoint{0.788045in}{0.466738in}}%
\pgfpathclose%
\pgfusepath{stroke,fill}%
}%
\begin{pgfscope}%
\pgfsys@transformshift{0.000000in}{0.000000in}%
\pgfsys@useobject{currentmarker}{}%
\end{pgfscope}%
\end{pgfscope}%
\begin{pgfscope}%
\pgfpathrectangle{\pgfqpoint{0.519743in}{0.414757in}}{\pgfqpoint{0.983773in}{1.143573in}}%
\pgfusepath{clip}%
\pgfsetrectcap%
\pgfsetroundjoin%
\pgfsetlinewidth{0.803000pt}%
\definecolor{currentstroke}{rgb}{0.690196,0.690196,0.690196}%
\pgfsetstrokecolor{currentstroke}%
\pgfsetdash{}{0pt}%
\pgfpathmoveto{\pgfqpoint{0.564460in}{0.414757in}}%
\pgfpathlineto{\pgfqpoint{0.564460in}{1.558330in}}%
\pgfusepath{stroke}%
\end{pgfscope}%
\begin{pgfscope}%
\pgfsetbuttcap%
\pgfsetroundjoin%
\definecolor{currentfill}{rgb}{0.000000,0.000000,0.000000}%
\pgfsetfillcolor{currentfill}%
\pgfsetlinewidth{0.803000pt}%
\definecolor{currentstroke}{rgb}{0.000000,0.000000,0.000000}%
\pgfsetstrokecolor{currentstroke}%
\pgfsetdash{}{0pt}%
\pgfsys@defobject{currentmarker}{\pgfqpoint{0.000000in}{-0.048611in}}{\pgfqpoint{0.000000in}{0.000000in}}{%
\pgfpathmoveto{\pgfqpoint{0.000000in}{0.000000in}}%
\pgfpathlineto{\pgfqpoint{0.000000in}{-0.048611in}}%
\pgfusepath{stroke,fill}%
}%
\begin{pgfscope}%
\pgfsys@transformshift{0.564460in}{0.414757in}%
\pgfsys@useobject{currentmarker}{}%
\end{pgfscope}%
\end{pgfscope}%
\begin{pgfscope}%
\definecolor{textcolor}{rgb}{0.000000,0.000000,0.000000}%
\pgfsetstrokecolor{textcolor}%
\pgfsetfillcolor{textcolor}%
\pgftext[x=0.564460in,y=0.317535in,,top]{\color{textcolor}{\rmfamily\fontsize{7.000000}{8.400000}\selectfont\catcode`\^=\active\def^{\ifmmode\sp\else\^{}\fi}\catcode`\%=\active\def
\end{pgfscope}%
\begin{pgfscope}%
\pgfpathrectangle{\pgfqpoint{0.519743in}{0.414757in}}{\pgfqpoint{0.983773in}{1.143573in}}%
\pgfusepath{clip}%
\pgfsetrectcap%
\pgfsetroundjoin%
\pgfsetlinewidth{0.803000pt}%
\definecolor{currentstroke}{rgb}{0.690196,0.690196,0.690196}%
\pgfsetstrokecolor{currentstroke}%
\pgfsetdash{}{0pt}%
\pgfpathmoveto{\pgfqpoint{0.788045in}{0.414757in}}%
\pgfpathlineto{\pgfqpoint{0.788045in}{1.558330in}}%
\pgfusepath{stroke}%
\end{pgfscope}%
\begin{pgfscope}%
\pgfsetbuttcap%
\pgfsetroundjoin%
\definecolor{currentfill}{rgb}{0.000000,0.000000,0.000000}%
\pgfsetfillcolor{currentfill}%
\pgfsetlinewidth{0.803000pt}%
\definecolor{currentstroke}{rgb}{0.000000,0.000000,0.000000}%
\pgfsetstrokecolor{currentstroke}%
\pgfsetdash{}{0pt}%
\pgfsys@defobject{currentmarker}{\pgfqpoint{0.000000in}{-0.048611in}}{\pgfqpoint{0.000000in}{0.000000in}}{%
\pgfpathmoveto{\pgfqpoint{0.000000in}{0.000000in}}%
\pgfpathlineto{\pgfqpoint{0.000000in}{-0.048611in}}%
\pgfusepath{stroke,fill}%
}%
\begin{pgfscope}%
\pgfsys@transformshift{0.788045in}{0.414757in}%
\pgfsys@useobject{currentmarker}{}%
\end{pgfscope}%
\end{pgfscope}%
\begin{pgfscope}%
\definecolor{textcolor}{rgb}{0.000000,0.000000,0.000000}%
\pgfsetstrokecolor{textcolor}%
\pgfsetfillcolor{textcolor}%
\pgftext[x=0.788045in,y=0.317535in,,top]{\color{textcolor}{\rmfamily\fontsize{7.000000}{8.400000}\selectfont\catcode`\^=\active\def^{\ifmmode\sp\else\^{}\fi}\catcode`\%=\active\def
\end{pgfscope}%
\begin{pgfscope}%
\pgfpathrectangle{\pgfqpoint{0.519743in}{0.414757in}}{\pgfqpoint{0.983773in}{1.143573in}}%
\pgfusepath{clip}%
\pgfsetrectcap%
\pgfsetroundjoin%
\pgfsetlinewidth{0.803000pt}%
\definecolor{currentstroke}{rgb}{0.690196,0.690196,0.690196}%
\pgfsetstrokecolor{currentstroke}%
\pgfsetdash{}{0pt}%
\pgfpathmoveto{\pgfqpoint{1.011630in}{0.414757in}}%
\pgfpathlineto{\pgfqpoint{1.011630in}{1.558330in}}%
\pgfusepath{stroke}%
\end{pgfscope}%
\begin{pgfscope}%
\pgfsetbuttcap%
\pgfsetroundjoin%
\definecolor{currentfill}{rgb}{0.000000,0.000000,0.000000}%
\pgfsetfillcolor{currentfill}%
\pgfsetlinewidth{0.803000pt}%
\definecolor{currentstroke}{rgb}{0.000000,0.000000,0.000000}%
\pgfsetstrokecolor{currentstroke}%
\pgfsetdash{}{0pt}%
\pgfsys@defobject{currentmarker}{\pgfqpoint{0.000000in}{-0.048611in}}{\pgfqpoint{0.000000in}{0.000000in}}{%
\pgfpathmoveto{\pgfqpoint{0.000000in}{0.000000in}}%
\pgfpathlineto{\pgfqpoint{0.000000in}{-0.048611in}}%
\pgfusepath{stroke,fill}%
}%
\begin{pgfscope}%
\pgfsys@transformshift{1.011630in}{0.414757in}%
\pgfsys@useobject{currentmarker}{}%
\end{pgfscope}%
\end{pgfscope}%
\begin{pgfscope}%
\definecolor{textcolor}{rgb}{0.000000,0.000000,0.000000}%
\pgfsetstrokecolor{textcolor}%
\pgfsetfillcolor{textcolor}%
\pgftext[x=1.011630in,y=0.317535in,,top]{\color{textcolor}{\rmfamily\fontsize{7.000000}{8.400000}\selectfont\catcode`\^=\active\def^{\ifmmode\sp\else\^{}\fi}\catcode`\%=\active\def
\end{pgfscope}%
\begin{pgfscope}%
\pgfpathrectangle{\pgfqpoint{0.519743in}{0.414757in}}{\pgfqpoint{0.983773in}{1.143573in}}%
\pgfusepath{clip}%
\pgfsetrectcap%
\pgfsetroundjoin%
\pgfsetlinewidth{0.803000pt}%
\definecolor{currentstroke}{rgb}{0.690196,0.690196,0.690196}%
\pgfsetstrokecolor{currentstroke}%
\pgfsetdash{}{0pt}%
\pgfpathmoveto{\pgfqpoint{1.235215in}{0.414757in}}%
\pgfpathlineto{\pgfqpoint{1.235215in}{1.558330in}}%
\pgfusepath{stroke}%
\end{pgfscope}%
\begin{pgfscope}%
\pgfsetbuttcap%
\pgfsetroundjoin%
\definecolor{currentfill}{rgb}{0.000000,0.000000,0.000000}%
\pgfsetfillcolor{currentfill}%
\pgfsetlinewidth{0.803000pt}%
\definecolor{currentstroke}{rgb}{0.000000,0.000000,0.000000}%
\pgfsetstrokecolor{currentstroke}%
\pgfsetdash{}{0pt}%
\pgfsys@defobject{currentmarker}{\pgfqpoint{0.000000in}{-0.048611in}}{\pgfqpoint{0.000000in}{0.000000in}}{%
\pgfpathmoveto{\pgfqpoint{0.000000in}{0.000000in}}%
\pgfpathlineto{\pgfqpoint{0.000000in}{-0.048611in}}%
\pgfusepath{stroke,fill}%
}%
\begin{pgfscope}%
\pgfsys@transformshift{1.235215in}{0.414757in}%
\pgfsys@useobject{currentmarker}{}%
\end{pgfscope}%
\end{pgfscope}%
\begin{pgfscope}%
\definecolor{textcolor}{rgb}{0.000000,0.000000,0.000000}%
\pgfsetstrokecolor{textcolor}%
\pgfsetfillcolor{textcolor}%
\pgftext[x=1.235215in,y=0.317535in,,top]{\color{textcolor}{\rmfamily\fontsize{7.000000}{8.400000}\selectfont\catcode`\^=\active\def^{\ifmmode\sp\else\^{}\fi}\catcode`\%=\active\def
\end{pgfscope}%
\begin{pgfscope}%
\pgfpathrectangle{\pgfqpoint{0.519743in}{0.414757in}}{\pgfqpoint{0.983773in}{1.143573in}}%
\pgfusepath{clip}%
\pgfsetrectcap%
\pgfsetroundjoin%
\pgfsetlinewidth{0.803000pt}%
\definecolor{currentstroke}{rgb}{0.690196,0.690196,0.690196}%
\pgfsetstrokecolor{currentstroke}%
\pgfsetdash{}{0pt}%
\pgfpathmoveto{\pgfqpoint{1.458799in}{0.414757in}}%
\pgfpathlineto{\pgfqpoint{1.458799in}{1.558330in}}%
\pgfusepath{stroke}%
\end{pgfscope}%
\begin{pgfscope}%
\pgfsetbuttcap%
\pgfsetroundjoin%
\definecolor{currentfill}{rgb}{0.000000,0.000000,0.000000}%
\pgfsetfillcolor{currentfill}%
\pgfsetlinewidth{0.803000pt}%
\definecolor{currentstroke}{rgb}{0.000000,0.000000,0.000000}%
\pgfsetstrokecolor{currentstroke}%
\pgfsetdash{}{0pt}%
\pgfsys@defobject{currentmarker}{\pgfqpoint{0.000000in}{-0.048611in}}{\pgfqpoint{0.000000in}{0.000000in}}{%
\pgfpathmoveto{\pgfqpoint{0.000000in}{0.000000in}}%
\pgfpathlineto{\pgfqpoint{0.000000in}{-0.048611in}}%
\pgfusepath{stroke,fill}%
}%
\begin{pgfscope}%
\pgfsys@transformshift{1.458799in}{0.414757in}%
\pgfsys@useobject{currentmarker}{}%
\end{pgfscope}%
\end{pgfscope}%
\begin{pgfscope}%
\definecolor{textcolor}{rgb}{0.000000,0.000000,0.000000}%
\pgfsetstrokecolor{textcolor}%
\pgfsetfillcolor{textcolor}%
\pgftext[x=1.458799in,y=0.317535in,,top]{\color{textcolor}{\rmfamily\fontsize{7.000000}{8.400000}\selectfont\catcode`\^=\active\def^{\ifmmode\sp\else\^{}\fi}\catcode`\%=\active\def
\end{pgfscope}%
\begin{pgfscope}%
\definecolor{textcolor}{rgb}{0.000000,0.000000,0.000000}%
\pgfsetstrokecolor{textcolor}%
\pgfsetfillcolor{textcolor}%
\pgftext[x=1.011630in,y=0.167891in,,top]{\color{textcolor}{\rmfamily\fontsize{9.000000}{10.800000}\selectfont\catcode`\^=\active\def^{\ifmmode\sp\else\^{}\fi}\catcode`\%=\active\def
\end{pgfscope}%
\begin{pgfscope}%
\pgfpathrectangle{\pgfqpoint{0.519743in}{0.414757in}}{\pgfqpoint{0.983773in}{1.143573in}}%
\pgfusepath{clip}%
\pgfsetrectcap%
\pgfsetroundjoin%
\pgfsetlinewidth{0.803000pt}%
\definecolor{currentstroke}{rgb}{0.690196,0.690196,0.690196}%
\pgfsetstrokecolor{currentstroke}%
\pgfsetdash{}{0pt}%
\pgfpathmoveto{\pgfqpoint{0.519743in}{0.466738in}}%
\pgfpathlineto{\pgfqpoint{1.503516in}{0.466738in}}%
\pgfusepath{stroke}%
\end{pgfscope}%
\begin{pgfscope}%
\pgfsetbuttcap%
\pgfsetroundjoin%
\definecolor{currentfill}{rgb}{0.000000,0.000000,0.000000}%
\pgfsetfillcolor{currentfill}%
\pgfsetlinewidth{0.803000pt}%
\definecolor{currentstroke}{rgb}{0.000000,0.000000,0.000000}%
\pgfsetstrokecolor{currentstroke}%
\pgfsetdash{}{0pt}%
\pgfsys@defobject{currentmarker}{\pgfqpoint{-0.048611in}{0.000000in}}{\pgfqpoint{-0.000000in}{0.000000in}}{%
\pgfpathmoveto{\pgfqpoint{-0.000000in}{0.000000in}}%
\pgfpathlineto{\pgfqpoint{-0.048611in}{0.000000in}}%
\pgfusepath{stroke,fill}%
}%
\begin{pgfscope}%
\pgfsys@transformshift{0.519743in}{0.466738in}%
\pgfsys@useobject{currentmarker}{}%
\end{pgfscope}%
\end{pgfscope}%
\begin{pgfscope}%
\definecolor{textcolor}{rgb}{0.000000,0.000000,0.000000}%
\pgfsetstrokecolor{textcolor}%
\pgfsetfillcolor{textcolor}%
\pgftext[x=0.223446in, y=0.429805in, left, base]{\color{textcolor}{\rmfamily\fontsize{7.000000}{8.400000}\selectfont\catcode`\^=\active\def^{\ifmmode\sp\else\^{}\fi}\catcode`\%=\active\def
\end{pgfscope}%
\begin{pgfscope}%
\pgfpathrectangle{\pgfqpoint{0.519743in}{0.414757in}}{\pgfqpoint{0.983773in}{1.143573in}}%
\pgfusepath{clip}%
\pgfsetrectcap%
\pgfsetroundjoin%
\pgfsetlinewidth{0.803000pt}%
\definecolor{currentstroke}{rgb}{0.690196,0.690196,0.690196}%
\pgfsetstrokecolor{currentstroke}%
\pgfsetdash{}{0pt}%
\pgfpathmoveto{\pgfqpoint{0.519743in}{0.726641in}}%
\pgfpathlineto{\pgfqpoint{1.503516in}{0.726641in}}%
\pgfusepath{stroke}%
\end{pgfscope}%
\begin{pgfscope}%
\pgfsetbuttcap%
\pgfsetroundjoin%
\definecolor{currentfill}{rgb}{0.000000,0.000000,0.000000}%
\pgfsetfillcolor{currentfill}%
\pgfsetlinewidth{0.803000pt}%
\definecolor{currentstroke}{rgb}{0.000000,0.000000,0.000000}%
\pgfsetstrokecolor{currentstroke}%
\pgfsetdash{}{0pt}%
\pgfsys@defobject{currentmarker}{\pgfqpoint{-0.048611in}{0.000000in}}{\pgfqpoint{-0.000000in}{0.000000in}}{%
\pgfpathmoveto{\pgfqpoint{-0.000000in}{0.000000in}}%
\pgfpathlineto{\pgfqpoint{-0.048611in}{0.000000in}}%
\pgfusepath{stroke,fill}%
}%
\begin{pgfscope}%
\pgfsys@transformshift{0.519743in}{0.726641in}%
\pgfsys@useobject{currentmarker}{}%
\end{pgfscope}%
\end{pgfscope}%
\begin{pgfscope}%
\definecolor{textcolor}{rgb}{0.000000,0.000000,0.000000}%
\pgfsetstrokecolor{textcolor}%
\pgfsetfillcolor{textcolor}%
\pgftext[x=0.223446in, y=0.689708in, left, base]{\color{textcolor}{\rmfamily\fontsize{7.000000}{8.400000}\selectfont\catcode`\^=\active\def^{\ifmmode\sp\else\^{}\fi}\catcode`\%=\active\def
\end{pgfscope}%
\begin{pgfscope}%
\pgfpathrectangle{\pgfqpoint{0.519743in}{0.414757in}}{\pgfqpoint{0.983773in}{1.143573in}}%
\pgfusepath{clip}%
\pgfsetrectcap%
\pgfsetroundjoin%
\pgfsetlinewidth{0.803000pt}%
\definecolor{currentstroke}{rgb}{0.690196,0.690196,0.690196}%
\pgfsetstrokecolor{currentstroke}%
\pgfsetdash{}{0pt}%
\pgfpathmoveto{\pgfqpoint{0.519743in}{0.986544in}}%
\pgfpathlineto{\pgfqpoint{1.503516in}{0.986544in}}%
\pgfusepath{stroke}%
\end{pgfscope}%
\begin{pgfscope}%
\pgfsetbuttcap%
\pgfsetroundjoin%
\definecolor{currentfill}{rgb}{0.000000,0.000000,0.000000}%
\pgfsetfillcolor{currentfill}%
\pgfsetlinewidth{0.803000pt}%
\definecolor{currentstroke}{rgb}{0.000000,0.000000,0.000000}%
\pgfsetstrokecolor{currentstroke}%
\pgfsetdash{}{0pt}%
\pgfsys@defobject{currentmarker}{\pgfqpoint{-0.048611in}{0.000000in}}{\pgfqpoint{-0.000000in}{0.000000in}}{%
\pgfpathmoveto{\pgfqpoint{-0.000000in}{0.000000in}}%
\pgfpathlineto{\pgfqpoint{-0.048611in}{0.000000in}}%
\pgfusepath{stroke,fill}%
}%
\begin{pgfscope}%
\pgfsys@transformshift{0.519743in}{0.986544in}%
\pgfsys@useobject{currentmarker}{}%
\end{pgfscope}%
\end{pgfscope}%
\begin{pgfscope}%
\definecolor{textcolor}{rgb}{0.000000,0.000000,0.000000}%
\pgfsetstrokecolor{textcolor}%
\pgfsetfillcolor{textcolor}%
\pgftext[x=0.223446in, y=0.949611in, left, base]{\color{textcolor}{\rmfamily\fontsize{7.000000}{8.400000}\selectfont\catcode`\^=\active\def^{\ifmmode\sp\else\^{}\fi}\catcode`\%=\active\def
\end{pgfscope}%
\begin{pgfscope}%
\pgfpathrectangle{\pgfqpoint{0.519743in}{0.414757in}}{\pgfqpoint{0.983773in}{1.143573in}}%
\pgfusepath{clip}%
\pgfsetrectcap%
\pgfsetroundjoin%
\pgfsetlinewidth{0.803000pt}%
\definecolor{currentstroke}{rgb}{0.690196,0.690196,0.690196}%
\pgfsetstrokecolor{currentstroke}%
\pgfsetdash{}{0pt}%
\pgfpathmoveto{\pgfqpoint{0.519743in}{1.246447in}}%
\pgfpathlineto{\pgfqpoint{1.503516in}{1.246447in}}%
\pgfusepath{stroke}%
\end{pgfscope}%
\begin{pgfscope}%
\pgfsetbuttcap%
\pgfsetroundjoin%
\definecolor{currentfill}{rgb}{0.000000,0.000000,0.000000}%
\pgfsetfillcolor{currentfill}%
\pgfsetlinewidth{0.803000pt}%
\definecolor{currentstroke}{rgb}{0.000000,0.000000,0.000000}%
\pgfsetstrokecolor{currentstroke}%
\pgfsetdash{}{0pt}%
\pgfsys@defobject{currentmarker}{\pgfqpoint{-0.048611in}{0.000000in}}{\pgfqpoint{-0.000000in}{0.000000in}}{%
\pgfpathmoveto{\pgfqpoint{-0.000000in}{0.000000in}}%
\pgfpathlineto{\pgfqpoint{-0.048611in}{0.000000in}}%
\pgfusepath{stroke,fill}%
}%
\begin{pgfscope}%
\pgfsys@transformshift{0.519743in}{1.246447in}%
\pgfsys@useobject{currentmarker}{}%
\end{pgfscope}%
\end{pgfscope}%
\begin{pgfscope}%
\definecolor{textcolor}{rgb}{0.000000,0.000000,0.000000}%
\pgfsetstrokecolor{textcolor}%
\pgfsetfillcolor{textcolor}%
\pgftext[x=0.223446in, y=1.209513in, left, base]{\color{textcolor}{\rmfamily\fontsize{7.000000}{8.400000}\selectfont\catcode`\^=\active\def^{\ifmmode\sp\else\^{}\fi}\catcode`\%=\active\def
\end{pgfscope}%
\begin{pgfscope}%
\pgfpathrectangle{\pgfqpoint{0.519743in}{0.414757in}}{\pgfqpoint{0.983773in}{1.143573in}}%
\pgfusepath{clip}%
\pgfsetrectcap%
\pgfsetroundjoin%
\pgfsetlinewidth{0.803000pt}%
\definecolor{currentstroke}{rgb}{0.690196,0.690196,0.690196}%
\pgfsetstrokecolor{currentstroke}%
\pgfsetdash{}{0pt}%
\pgfpathmoveto{\pgfqpoint{0.519743in}{1.506349in}}%
\pgfpathlineto{\pgfqpoint{1.503516in}{1.506349in}}%
\pgfusepath{stroke}%
\end{pgfscope}%
\begin{pgfscope}%
\pgfsetbuttcap%
\pgfsetroundjoin%
\definecolor{currentfill}{rgb}{0.000000,0.000000,0.000000}%
\pgfsetfillcolor{currentfill}%
\pgfsetlinewidth{0.803000pt}%
\definecolor{currentstroke}{rgb}{0.000000,0.000000,0.000000}%
\pgfsetstrokecolor{currentstroke}%
\pgfsetdash{}{0pt}%
\pgfsys@defobject{currentmarker}{\pgfqpoint{-0.048611in}{0.000000in}}{\pgfqpoint{-0.000000in}{0.000000in}}{%
\pgfpathmoveto{\pgfqpoint{-0.000000in}{0.000000in}}%
\pgfpathlineto{\pgfqpoint{-0.048611in}{0.000000in}}%
\pgfusepath{stroke,fill}%
}%
\begin{pgfscope}%
\pgfsys@transformshift{0.519743in}{1.506349in}%
\pgfsys@useobject{currentmarker}{}%
\end{pgfscope}%
\end{pgfscope}%
\begin{pgfscope}%
\definecolor{textcolor}{rgb}{0.000000,0.000000,0.000000}%
\pgfsetstrokecolor{textcolor}%
\pgfsetfillcolor{textcolor}%
\pgftext[x=0.223446in, y=1.469416in, left, base]{\color{textcolor}{\rmfamily\fontsize{7.000000}{8.400000}\selectfont\catcode`\^=\active\def^{\ifmmode\sp\else\^{}\fi}\catcode`\%=\active\def
\end{pgfscope}%
\begin{pgfscope}%
\definecolor{textcolor}{rgb}{0.000000,0.000000,0.000000}%
\pgfsetstrokecolor{textcolor}%
\pgfsetfillcolor{textcolor}%
\pgftext[x=0.167891in,y=0.986544in,,bottom,rotate=90.000000]{\color{textcolor}{\rmfamily\fontsize{9.000000}{10.800000}\selectfont\catcode`\^=\active\def^{\ifmmode\sp\else\^{}\fi}\catcode`\%=\active\def
\end{pgfscope}%
\begin{pgfscope}%
\pgfpathrectangle{\pgfqpoint{0.519743in}{0.414757in}}{\pgfqpoint{0.983773in}{1.143573in}}%
\pgfusepath{clip}%
\pgfsetrectcap%
\pgfsetroundjoin%
\pgfsetlinewidth{1.505625pt}%
\definecolor{currentstroke}{rgb}{0.003922,0.450980,0.698039}%
\pgfsetstrokecolor{currentstroke}%
\pgfsetstrokeopacity{0.200000}%
\pgfsetdash{}{0pt}%
\pgfpathmoveto{\pgfqpoint{0.645510in}{0.467934in}}%
\pgfpathlineto{\pgfqpoint{0.698611in}{0.470338in}}%
\pgfpathlineto{\pgfqpoint{0.743328in}{0.473732in}}%
\pgfpathlineto{\pgfqpoint{0.788045in}{0.478418in}}%
\pgfpathlineto{\pgfqpoint{0.832762in}{0.484454in}}%
\pgfpathlineto{\pgfqpoint{0.877479in}{0.491869in}}%
\pgfpathlineto{\pgfqpoint{0.922196in}{0.500666in}}%
\pgfpathlineto{\pgfqpoint{0.966913in}{0.510827in}}%
\pgfpathlineto{\pgfqpoint{1.011630in}{0.522312in}}%
\pgfpathlineto{\pgfqpoint{1.056347in}{0.535060in}}%
\pgfpathlineto{\pgfqpoint{1.101064in}{0.548994in}}%
\pgfpathlineto{\pgfqpoint{1.145781in}{0.564017in}}%
\pgfpathlineto{\pgfqpoint{1.190498in}{0.580017in}}%
\pgfpathlineto{\pgfqpoint{1.235215in}{0.596862in}}%
\pgfpathlineto{\pgfqpoint{1.279932in}{0.614397in}}%
\pgfpathlineto{\pgfqpoint{1.324649in}{0.632424in}}%
\pgfpathlineto{\pgfqpoint{1.369365in}{0.650642in}}%
\pgfpathlineto{\pgfqpoint{1.414082in}{0.668362in}}%
\pgfpathlineto{\pgfqpoint{1.458799in}{0.619595in}}%
\pgfusepath{stroke}%
\end{pgfscope}%
\begin{pgfscope}%
\pgfpathrectangle{\pgfqpoint{0.519743in}{0.414757in}}{\pgfqpoint{0.983773in}{1.143573in}}%
\pgfusepath{clip}%
\pgfsetrectcap%
\pgfsetroundjoin%
\pgfsetlinewidth{1.505625pt}%
\definecolor{currentstroke}{rgb}{0.003922,0.450980,0.698039}%
\pgfsetstrokecolor{currentstroke}%
\pgfsetstrokeopacity{0.200000}%
\pgfsetdash{}{0pt}%
\pgfpathmoveto{\pgfqpoint{0.647506in}{0.467776in}}%
\pgfpathlineto{\pgfqpoint{0.698611in}{0.469801in}}%
\pgfpathlineto{\pgfqpoint{0.743328in}{0.472778in}}%
\pgfpathlineto{\pgfqpoint{0.788045in}{0.476944in}}%
\pgfpathlineto{\pgfqpoint{0.832762in}{0.482370in}}%
\pgfpathlineto{\pgfqpoint{0.877479in}{0.489098in}}%
\pgfpathlineto{\pgfqpoint{0.922196in}{0.497149in}}%
\pgfpathlineto{\pgfqpoint{0.966913in}{0.506519in}}%
\pgfpathlineto{\pgfqpoint{1.011630in}{0.517184in}}%
\pgfpathlineto{\pgfqpoint{1.056347in}{0.529099in}}%
\pgfpathlineto{\pgfqpoint{1.101064in}{0.542201in}}%
\pgfpathlineto{\pgfqpoint{1.145781in}{0.556405in}}%
\pgfpathlineto{\pgfqpoint{1.190498in}{0.571611in}}%
\pgfpathlineto{\pgfqpoint{1.235215in}{0.587696in}}%
\pgfpathlineto{\pgfqpoint{1.279932in}{0.604508in}}%
\pgfpathlineto{\pgfqpoint{1.324649in}{0.621849in}}%
\pgfpathlineto{\pgfqpoint{1.369365in}{0.639406in}}%
\pgfpathlineto{\pgfqpoint{1.414082in}{0.656439in}}%
\pgfpathlineto{\pgfqpoint{1.458799in}{0.602845in}}%
\pgfusepath{stroke}%
\end{pgfscope}%
\begin{pgfscope}%
\pgfpathrectangle{\pgfqpoint{0.519743in}{0.414757in}}{\pgfqpoint{0.983773in}{1.143573in}}%
\pgfusepath{clip}%
\pgfsetrectcap%
\pgfsetroundjoin%
\pgfsetlinewidth{1.505625pt}%
\definecolor{currentstroke}{rgb}{0.003922,0.450980,0.698039}%
\pgfsetstrokecolor{currentstroke}%
\pgfsetstrokeopacity{0.200000}%
\pgfsetdash{}{0pt}%
\pgfpathmoveto{\pgfqpoint{0.640135in}{0.467824in}}%
\pgfpathlineto{\pgfqpoint{0.698611in}{0.470397in}}%
\pgfpathlineto{\pgfqpoint{0.743328in}{0.473816in}}%
\pgfpathlineto{\pgfqpoint{0.788045in}{0.478517in}}%
\pgfpathlineto{\pgfqpoint{0.832762in}{0.484554in}}%
\pgfpathlineto{\pgfqpoint{0.877479in}{0.491952in}}%
\pgfpathlineto{\pgfqpoint{0.922196in}{0.500712in}}%
\pgfpathlineto{\pgfqpoint{0.966913in}{0.510812in}}%
\pgfpathlineto{\pgfqpoint{1.011630in}{0.522212in}}%
\pgfpathlineto{\pgfqpoint{1.056347in}{0.534851in}}%
\pgfpathlineto{\pgfqpoint{1.101064in}{0.548653in}}%
\pgfpathlineto{\pgfqpoint{1.145781in}{0.563523in}}%
\pgfpathlineto{\pgfqpoint{1.190498in}{0.579352in}}%
\pgfpathlineto{\pgfqpoint{1.235215in}{0.596013in}}%
\pgfpathlineto{\pgfqpoint{1.279932in}{0.613357in}}%
\pgfpathlineto{\pgfqpoint{1.324649in}{0.631193in}}%
\pgfpathlineto{\pgfqpoint{1.369365in}{0.649238in}}%
\pgfpathlineto{\pgfqpoint{1.414082in}{0.666844in}}%
\pgfpathlineto{\pgfqpoint{1.458799in}{0.621915in}}%
\pgfusepath{stroke}%
\end{pgfscope}%
\begin{pgfscope}%
\pgfpathrectangle{\pgfqpoint{0.519743in}{0.414757in}}{\pgfqpoint{0.983773in}{1.143573in}}%
\pgfusepath{clip}%
\pgfsetrectcap%
\pgfsetroundjoin%
\pgfsetlinewidth{1.505625pt}%
\definecolor{currentstroke}{rgb}{0.003922,0.450980,0.698039}%
\pgfsetstrokecolor{currentstroke}%
\pgfsetstrokeopacity{0.200000}%
\pgfsetdash{}{0pt}%
\pgfpathmoveto{\pgfqpoint{0.647506in}{0.467858in}}%
\pgfpathlineto{\pgfqpoint{0.698611in}{0.470002in}}%
\pgfpathlineto{\pgfqpoint{0.743328in}{0.473118in}}%
\pgfpathlineto{\pgfqpoint{0.788045in}{0.477447in}}%
\pgfpathlineto{\pgfqpoint{0.832762in}{0.483051in}}%
\pgfpathlineto{\pgfqpoint{0.877479in}{0.489969in}}%
\pgfpathlineto{\pgfqpoint{0.922196in}{0.498214in}}%
\pgfpathlineto{\pgfqpoint{0.966913in}{0.507781in}}%
\pgfpathlineto{\pgfqpoint{1.011630in}{0.518640in}}%
\pgfpathlineto{\pgfqpoint{1.056347in}{0.530748in}}%
\pgfpathlineto{\pgfqpoint{1.101064in}{0.544041in}}%
\pgfpathlineto{\pgfqpoint{1.145781in}{0.558440in}}%
\pgfpathlineto{\pgfqpoint{1.190498in}{0.573850in}}%
\pgfpathlineto{\pgfqpoint{1.235215in}{0.590160in}}%
\pgfpathlineto{\pgfqpoint{1.279932in}{0.607239in}}%
\pgfpathlineto{\pgfqpoint{1.324649in}{0.624928in}}%
\pgfpathlineto{\pgfqpoint{1.369365in}{0.642993in}}%
\pgfpathlineto{\pgfqpoint{1.414082in}{0.660943in}}%
\pgfpathlineto{\pgfqpoint{1.458799in}{0.632821in}}%
\pgfusepath{stroke}%
\end{pgfscope}%
\begin{pgfscope}%
\pgfpathrectangle{\pgfqpoint{0.519743in}{0.414757in}}{\pgfqpoint{0.983773in}{1.143573in}}%
\pgfusepath{clip}%
\pgfsetrectcap%
\pgfsetroundjoin%
\pgfsetlinewidth{1.505625pt}%
\definecolor{currentstroke}{rgb}{0.003922,0.450980,0.698039}%
\pgfsetstrokecolor{currentstroke}%
\pgfsetstrokeopacity{0.200000}%
\pgfsetdash{}{0pt}%
\pgfpathmoveto{\pgfqpoint{0.637125in}{0.467699in}}%
\pgfpathlineto{\pgfqpoint{0.698611in}{0.470243in}}%
\pgfpathlineto{\pgfqpoint{0.743328in}{0.473541in}}%
\pgfpathlineto{\pgfqpoint{0.788045in}{0.478089in}}%
\pgfpathlineto{\pgfqpoint{0.832762in}{0.483944in}}%
\pgfpathlineto{\pgfqpoint{0.877479in}{0.491133in}}%
\pgfpathlineto{\pgfqpoint{0.922196in}{0.499662in}}%
\pgfpathlineto{\pgfqpoint{0.966913in}{0.509513in}}%
\pgfpathlineto{\pgfqpoint{1.011630in}{0.520649in}}%
\pgfpathlineto{\pgfqpoint{1.056347in}{0.533014in}}%
\pgfpathlineto{\pgfqpoint{1.101064in}{0.546534in}}%
\pgfpathlineto{\pgfqpoint{1.145781in}{0.561118in}}%
\pgfpathlineto{\pgfqpoint{1.190498in}{0.576660in}}%
\pgfpathlineto{\pgfqpoint{1.235215in}{0.593033in}}%
\pgfpathlineto{\pgfqpoint{1.279932in}{0.610089in}}%
\pgfpathlineto{\pgfqpoint{1.324649in}{0.627635in}}%
\pgfpathlineto{\pgfqpoint{1.369365in}{0.645375in}}%
\pgfpathlineto{\pgfqpoint{1.414082in}{0.662626in}}%
\pgfpathlineto{\pgfqpoint{1.458799in}{0.613831in}}%
\pgfusepath{stroke}%
\end{pgfscope}%
\begin{pgfscope}%
\pgfpathrectangle{\pgfqpoint{0.519743in}{0.414757in}}{\pgfqpoint{0.983773in}{1.143573in}}%
\pgfusepath{clip}%
\pgfsetbuttcap%
\pgfsetroundjoin%
\pgfsetlinewidth{1.505625pt}%
\definecolor{currentstroke}{rgb}{0.501961,0.501961,0.501961}%
\pgfsetstrokecolor{currentstroke}%
\pgfsetdash{{5.550000pt}{2.400000pt}}{0.000000pt}%
\pgfpathmoveto{\pgfqpoint{0.564460in}{0.726641in}}%
\pgfpathlineto{\pgfqpoint{1.458799in}{0.726641in}}%
\pgfusepath{stroke}%
\end{pgfscope}%
\begin{pgfscope}%
\pgfpathrectangle{\pgfqpoint{0.519743in}{0.414757in}}{\pgfqpoint{0.983773in}{1.143573in}}%
\pgfusepath{clip}%
\pgfsetbuttcap%
\pgfsetroundjoin%
\pgfsetlinewidth{1.505625pt}%
\definecolor{currentstroke}{rgb}{0.501961,0.501961,0.501961}%
\pgfsetstrokecolor{currentstroke}%
\pgfsetdash{{5.550000pt}{2.400000pt}}{0.000000pt}%
\pgfpathmoveto{\pgfqpoint{0.788045in}{0.466738in}}%
\pgfpathlineto{\pgfqpoint{0.788045in}{1.506349in}}%
\pgfusepath{stroke}%
\end{pgfscope}%
\begin{pgfscope}%
\pgfsetrectcap%
\pgfsetmiterjoin%
\pgfsetlinewidth{0.803000pt}%
\definecolor{currentstroke}{rgb}{0.000000,0.000000,0.000000}%
\pgfsetstrokecolor{currentstroke}%
\pgfsetdash{}{0pt}%
\pgfpathmoveto{\pgfqpoint{0.519743in}{0.414757in}}%
\pgfpathlineto{\pgfqpoint{0.519743in}{1.558330in}}%
\pgfusepath{stroke}%
\end{pgfscope}%
\begin{pgfscope}%
\pgfsetrectcap%
\pgfsetmiterjoin%
\pgfsetlinewidth{0.803000pt}%
\definecolor{currentstroke}{rgb}{0.000000,0.000000,0.000000}%
\pgfsetstrokecolor{currentstroke}%
\pgfsetdash{}{0pt}%
\pgfpathmoveto{\pgfqpoint{1.503516in}{0.414757in}}%
\pgfpathlineto{\pgfqpoint{1.503516in}{1.558330in}}%
\pgfusepath{stroke}%
\end{pgfscope}%
\begin{pgfscope}%
\pgfsetrectcap%
\pgfsetmiterjoin%
\pgfsetlinewidth{0.803000pt}%
\definecolor{currentstroke}{rgb}{0.000000,0.000000,0.000000}%
\pgfsetstrokecolor{currentstroke}%
\pgfsetdash{}{0pt}%
\pgfpathmoveto{\pgfqpoint{0.519743in}{0.414757in}}%
\pgfpathlineto{\pgfqpoint{1.503516in}{0.414757in}}%
\pgfusepath{stroke}%
\end{pgfscope}%
\begin{pgfscope}%
\pgfsetrectcap%
\pgfsetmiterjoin%
\pgfsetlinewidth{0.803000pt}%
\definecolor{currentstroke}{rgb}{0.000000,0.000000,0.000000}%
\pgfsetstrokecolor{currentstroke}%
\pgfsetdash{}{0pt}%
\pgfpathmoveto{\pgfqpoint{0.519743in}{1.558330in}}%
\pgfpathlineto{\pgfqpoint{1.503516in}{1.558330in}}%
\pgfusepath{stroke}%
\end{pgfscope}%
\begin{pgfscope}%
\pgfsetbuttcap%
\pgfsetmiterjoin%
\definecolor{currentfill}{rgb}{1.000000,1.000000,1.000000}%
\pgfsetfillcolor{currentfill}%
\pgfsetfillopacity{0.800000}%
\pgfsetlinewidth{1.003750pt}%
\definecolor{currentstroke}{rgb}{0.800000,0.800000,0.800000}%
\pgfsetstrokecolor{currentstroke}%
\pgfsetstrokeopacity{0.800000}%
\pgfsetdash{}{0pt}%
\pgfpathmoveto{\pgfqpoint{0.542878in}{1.008611in}}%
\pgfpathlineto{\pgfqpoint{1.537122in}{1.008611in}}%
\pgfpathquadraticcurveto{\pgfqpoint{1.556567in}{1.008611in}}{\pgfqpoint{1.556567in}{1.028056in}}%
\pgfpathlineto{\pgfqpoint{1.556567in}{1.446433in}}%
\pgfpathquadraticcurveto{\pgfqpoint{1.556567in}{1.465878in}}{\pgfqpoint{1.537122in}{1.465878in}}%
\pgfpathlineto{\pgfqpoint{0.542878in}{1.465878in}}%
\pgfpathquadraticcurveto{\pgfqpoint{0.523433in}{1.465878in}}{\pgfqpoint{0.523433in}{1.446433in}}%
\pgfpathlineto{\pgfqpoint{0.523433in}{1.028056in}}%
\pgfpathquadraticcurveto{\pgfqpoint{0.523433in}{1.008611in}}{\pgfqpoint{0.542878in}{1.008611in}}%
\pgfpathlineto{\pgfqpoint{0.542878in}{1.008611in}}%
\pgfpathclose%
\pgfusepath{stroke,fill}%
\end{pgfscope}%
\begin{pgfscope}%
\pgfsetbuttcap%
\pgfsetmiterjoin%
\definecolor{currentfill}{rgb}{0.007843,0.619608,0.447059}%
\pgfsetfillcolor{currentfill}%
\pgfsetfillopacity{0.850000}%
\pgfsetlinewidth{0.501875pt}%
\definecolor{currentstroke}{rgb}{0.000000,0.000000,0.000000}%
\pgfsetstrokecolor{currentstroke}%
\pgfsetstrokeopacity{0.850000}%
\pgfsetdash{}{0pt}%
\pgfpathmoveto{\pgfqpoint{0.562322in}{1.353123in}}%
\pgfpathlineto{\pgfqpoint{0.756766in}{1.353123in}}%
\pgfpathlineto{\pgfqpoint{0.756766in}{1.421178in}}%
\pgfpathlineto{\pgfqpoint{0.562322in}{1.421178in}}%
\pgfpathlineto{\pgfqpoint{0.562322in}{1.353123in}}%
\pgfpathclose%
\pgfusepath{stroke,fill}%
\end{pgfscope}%
\begin{pgfscope}%
\definecolor{textcolor}{rgb}{0.000000,0.000000,0.000000}%
\pgfsetstrokecolor{textcolor}%
\pgfsetfillcolor{textcolor}%
\pgftext[x=0.834544in,y=1.353123in,left,base]{\color{textcolor}{\rmfamily\fontsize{7.000000}{8.400000}\selectfont\catcode`\^=\active\def^{\ifmmode\sp\else\^{}\fi}\catcode`\%=\active\def
\end{pgfscope}%
\begin{pgfscope}%
\pgfsetbuttcap%
\pgfsetmiterjoin%
\definecolor{currentfill}{rgb}{0.870588,0.560784,0.011765}%
\pgfsetfillcolor{currentfill}%
\pgfsetfillopacity{0.850000}%
\pgfsetlinewidth{0.501875pt}%
\definecolor{currentstroke}{rgb}{0.000000,0.000000,0.000000}%
\pgfsetstrokecolor{currentstroke}%
\pgfsetstrokeopacity{0.850000}%
\pgfsetdash{}{0pt}%
\pgfpathmoveto{\pgfqpoint{0.562322in}{1.210423in}}%
\pgfpathlineto{\pgfqpoint{0.756766in}{1.210423in}}%
\pgfpathlineto{\pgfqpoint{0.756766in}{1.278478in}}%
\pgfpathlineto{\pgfqpoint{0.562322in}{1.278478in}}%
\pgfpathlineto{\pgfqpoint{0.562322in}{1.210423in}}%
\pgfpathclose%
\pgfusepath{stroke,fill}%
\end{pgfscope}%
\begin{pgfscope}%
\definecolor{textcolor}{rgb}{0.000000,0.000000,0.000000}%
\pgfsetstrokecolor{textcolor}%
\pgfsetfillcolor{textcolor}%
\pgftext[x=0.834544in,y=1.210423in,left,base]{\color{textcolor}{\rmfamily\fontsize{7.000000}{8.400000}\selectfont\catcode`\^=\active\def^{\ifmmode\sp\else\^{}\fi}\catcode`\%=\active\def
\end{pgfscope}%
\begin{pgfscope}%
\pgfsetbuttcap%
\pgfsetroundjoin%
\pgfsetlinewidth{0.803000pt}%
\definecolor{currentstroke}{rgb}{0.000000,0.000000,0.000000}%
\pgfsetstrokecolor{currentstroke}%
\pgfsetstrokeopacity{0.600000}%
\pgfsetdash{{2.960000pt}{1.280000pt}}{0.000000pt}%
\pgfpathmoveto{\pgfqpoint{0.562322in}{1.101751in}}%
\pgfpathlineto{\pgfqpoint{0.659544in}{1.101751in}}%
\pgfpathlineto{\pgfqpoint{0.756766in}{1.101751in}}%
\pgfusepath{stroke}%
\end{pgfscope}%
\begin{pgfscope}%
\definecolor{textcolor}{rgb}{0.000000,0.000000,0.000000}%
\pgfsetstrokecolor{textcolor}%
\pgfsetfillcolor{textcolor}%
\pgftext[x=0.834544in,y=1.067723in,left,base]{\color{textcolor}{\rmfamily\fontsize{7.000000}{8.400000}\selectfont\catcode`\^=\active\def^{\ifmmode\sp\else\^{}\fi}\catcode`\%=\active\def
\end{pgfscope}%
\end{pgfpicture}%
\makeatother%
\endgroup%

%% file: figures/calibration_maps/SIMPLE_QA_VERIFIED_gemma-3-12b-it.pgf
\begingroup%
\makeatletter%
\begin{pgfpicture}%
\pgfpathrectangle{\pgfpointorigin}{\pgfqpoint{1.600000in}{1.600000in}}%
\pgfusepath{use as bounding box, clip}%
\begin{pgfscope}%
\pgfsetbuttcap%
\pgfsetmiterjoin%
\definecolor{currentfill}{rgb}{1.000000,1.000000,1.000000}%
\pgfsetfillcolor{currentfill}%
\pgfsetlinewidth{0.000000pt}%
\definecolor{currentstroke}{rgb}{1.000000,1.000000,1.000000}%
\pgfsetstrokecolor{currentstroke}%
\pgfsetdash{}{0pt}%
\pgfpathmoveto{\pgfqpoint{0.000000in}{0.000000in}}%
\pgfpathlineto{\pgfqpoint{1.600000in}{0.000000in}}%
\pgfpathlineto{\pgfqpoint{1.600000in}{1.600000in}}%
\pgfpathlineto{\pgfqpoint{0.000000in}{1.600000in}}%
\pgfpathlineto{\pgfqpoint{0.000000in}{0.000000in}}%
\pgfpathclose%
\pgfusepath{fill}%
\end{pgfscope}%
\begin{pgfscope}%
\pgfsetbuttcap%
\pgfsetmiterjoin%
\definecolor{currentfill}{rgb}{1.000000,1.000000,1.000000}%
\pgfsetfillcolor{currentfill}%
\pgfsetlinewidth{0.000000pt}%
\definecolor{currentstroke}{rgb}{0.000000,0.000000,0.000000}%
\pgfsetstrokecolor{currentstroke}%
\pgfsetstrokeopacity{0.000000}%
\pgfsetdash{}{0pt}%
\pgfpathmoveto{\pgfqpoint{0.519743in}{0.414757in}}%
\pgfpathlineto{\pgfqpoint{1.503516in}{0.414757in}}%
\pgfpathlineto{\pgfqpoint{1.503516in}{1.558330in}}%
\pgfpathlineto{\pgfqpoint{0.519743in}{1.558330in}}%
\pgfpathlineto{\pgfqpoint{0.519743in}{0.414757in}}%
\pgfpathclose%
\pgfusepath{fill}%
\end{pgfscope}%
\begin{pgfscope}%
\pgfpathrectangle{\pgfqpoint{0.519743in}{0.414757in}}{\pgfqpoint{0.983773in}{1.143573in}}%
\pgfusepath{clip}%
\pgfsetbuttcap%
\pgfsetroundjoin%
\definecolor{currentfill}{rgb}{0.870588,0.560784,0.011765}%
\pgfsetfillcolor{currentfill}%
\pgfsetfillopacity{0.200000}%
\pgfsetlinewidth{1.003750pt}%
\definecolor{currentstroke}{rgb}{0.870588,0.560784,0.011765}%
\pgfsetstrokecolor{currentstroke}%
\pgfsetstrokeopacity{0.200000}%
\pgfsetdash{}{0pt}%
\pgfsys@defobject{currentmarker}{\pgfqpoint{0.564460in}{0.726641in}}{\pgfqpoint{0.788045in}{1.506349in}}{%
\pgfpathmoveto{\pgfqpoint{0.788045in}{0.726641in}}%
\pgfpathlineto{\pgfqpoint{0.564460in}{0.726641in}}%
\pgfpathlineto{\pgfqpoint{0.564460in}{1.506349in}}%
\pgfpathlineto{\pgfqpoint{0.788045in}{1.506349in}}%
\pgfpathlineto{\pgfqpoint{0.788045in}{1.506349in}}%
\pgfpathlineto{\pgfqpoint{0.788045in}{0.726641in}}%
\pgfpathlineto{\pgfqpoint{0.788045in}{0.726641in}}%
\pgfpathclose%
\pgfusepath{stroke,fill}%
}%
\begin{pgfscope}%
\pgfsys@transformshift{0.000000in}{0.000000in}%
\pgfsys@useobject{currentmarker}{}%
\end{pgfscope}%
\end{pgfscope}%
\begin{pgfscope}%
\pgfpathrectangle{\pgfqpoint{0.519743in}{0.414757in}}{\pgfqpoint{0.983773in}{1.143573in}}%
\pgfusepath{clip}%
\pgfsetbuttcap%
\pgfsetroundjoin%
\definecolor{currentfill}{rgb}{0.007843,0.619608,0.447059}%
\pgfsetfillcolor{currentfill}%
\pgfsetfillopacity{0.200000}%
\pgfsetlinewidth{1.003750pt}%
\definecolor{currentstroke}{rgb}{0.007843,0.619608,0.447059}%
\pgfsetstrokecolor{currentstroke}%
\pgfsetstrokeopacity{0.200000}%
\pgfsetdash{}{0pt}%
\pgfsys@defobject{currentmarker}{\pgfqpoint{0.788045in}{0.726641in}}{\pgfqpoint{1.458799in}{1.506349in}}{%
\pgfpathmoveto{\pgfqpoint{1.458799in}{0.726641in}}%
\pgfpathlineto{\pgfqpoint{0.788045in}{0.726641in}}%
\pgfpathlineto{\pgfqpoint{0.788045in}{1.506349in}}%
\pgfpathlineto{\pgfqpoint{1.458799in}{1.506349in}}%
\pgfpathlineto{\pgfqpoint{1.458799in}{1.506349in}}%
\pgfpathlineto{\pgfqpoint{1.458799in}{0.726641in}}%
\pgfpathlineto{\pgfqpoint{1.458799in}{0.726641in}}%
\pgfpathclose%
\pgfusepath{stroke,fill}%
}%
\begin{pgfscope}%
\pgfsys@transformshift{0.000000in}{0.000000in}%
\pgfsys@useobject{currentmarker}{}%
\end{pgfscope}%
\end{pgfscope}%
\begin{pgfscope}%
\pgfpathrectangle{\pgfqpoint{0.519743in}{0.414757in}}{\pgfqpoint{0.983773in}{1.143573in}}%
\pgfusepath{clip}%
\pgfsetbuttcap%
\pgfsetroundjoin%
\definecolor{currentfill}{rgb}{0.870588,0.560784,0.011765}%
\pgfsetfillcolor{currentfill}%
\pgfsetfillopacity{0.200000}%
\pgfsetlinewidth{1.003750pt}%
\definecolor{currentstroke}{rgb}{0.870588,0.560784,0.011765}%
\pgfsetstrokecolor{currentstroke}%
\pgfsetstrokeopacity{0.200000}%
\pgfsetdash{}{0pt}%
\pgfsys@defobject{currentmarker}{\pgfqpoint{0.788045in}{0.466738in}}{\pgfqpoint{1.458799in}{0.726641in}}{%
\pgfpathmoveto{\pgfqpoint{1.458799in}{0.466738in}}%
\pgfpathlineto{\pgfqpoint{0.788045in}{0.466738in}}%
\pgfpathlineto{\pgfqpoint{0.788045in}{0.726641in}}%
\pgfpathlineto{\pgfqpoint{1.458799in}{0.726641in}}%
\pgfpathlineto{\pgfqpoint{1.458799in}{0.726641in}}%
\pgfpathlineto{\pgfqpoint{1.458799in}{0.466738in}}%
\pgfpathlineto{\pgfqpoint{1.458799in}{0.466738in}}%
\pgfpathclose%
\pgfusepath{stroke,fill}%
}%
\begin{pgfscope}%
\pgfsys@transformshift{0.000000in}{0.000000in}%
\pgfsys@useobject{currentmarker}{}%
\end{pgfscope}%
\end{pgfscope}%
\begin{pgfscope}%
\pgfpathrectangle{\pgfqpoint{0.519743in}{0.414757in}}{\pgfqpoint{0.983773in}{1.143573in}}%
\pgfusepath{clip}%
\pgfsetbuttcap%
\pgfsetroundjoin%
\definecolor{currentfill}{rgb}{0.007843,0.619608,0.447059}%
\pgfsetfillcolor{currentfill}%
\pgfsetfillopacity{0.200000}%
\pgfsetlinewidth{1.003750pt}%
\definecolor{currentstroke}{rgb}{0.007843,0.619608,0.447059}%
\pgfsetstrokecolor{currentstroke}%
\pgfsetstrokeopacity{0.200000}%
\pgfsetdash{}{0pt}%
\pgfsys@defobject{currentmarker}{\pgfqpoint{0.564460in}{0.466738in}}{\pgfqpoint{0.788045in}{0.726641in}}{%
\pgfpathmoveto{\pgfqpoint{0.788045in}{0.466738in}}%
\pgfpathlineto{\pgfqpoint{0.564460in}{0.466738in}}%
\pgfpathlineto{\pgfqpoint{0.564460in}{0.726641in}}%
\pgfpathlineto{\pgfqpoint{0.788045in}{0.726641in}}%
\pgfpathlineto{\pgfqpoint{0.788045in}{0.726641in}}%
\pgfpathlineto{\pgfqpoint{0.788045in}{0.466738in}}%
\pgfpathlineto{\pgfqpoint{0.788045in}{0.466738in}}%
\pgfpathclose%
\pgfusepath{stroke,fill}%
}%
\begin{pgfscope}%
\pgfsys@transformshift{0.000000in}{0.000000in}%
\pgfsys@useobject{currentmarker}{}%
\end{pgfscope}%
\end{pgfscope}%
\begin{pgfscope}%
\pgfpathrectangle{\pgfqpoint{0.519743in}{0.414757in}}{\pgfqpoint{0.983773in}{1.143573in}}%
\pgfusepath{clip}%
\pgfsetrectcap%
\pgfsetroundjoin%
\pgfsetlinewidth{0.803000pt}%
\definecolor{currentstroke}{rgb}{0.690196,0.690196,0.690196}%
\pgfsetstrokecolor{currentstroke}%
\pgfsetdash{}{0pt}%
\pgfpathmoveto{\pgfqpoint{0.564460in}{0.414757in}}%
\pgfpathlineto{\pgfqpoint{0.564460in}{1.558330in}}%
\pgfusepath{stroke}%
\end{pgfscope}%
\begin{pgfscope}%
\pgfsetbuttcap%
\pgfsetroundjoin%
\definecolor{currentfill}{rgb}{0.000000,0.000000,0.000000}%
\pgfsetfillcolor{currentfill}%
\pgfsetlinewidth{0.803000pt}%
\definecolor{currentstroke}{rgb}{0.000000,0.000000,0.000000}%
\pgfsetstrokecolor{currentstroke}%
\pgfsetdash{}{0pt}%
\pgfsys@defobject{currentmarker}{\pgfqpoint{0.000000in}{-0.048611in}}{\pgfqpoint{0.000000in}{0.000000in}}{%
\pgfpathmoveto{\pgfqpoint{0.000000in}{0.000000in}}%
\pgfpathlineto{\pgfqpoint{0.000000in}{-0.048611in}}%
\pgfusepath{stroke,fill}%
}%
\begin{pgfscope}%
\pgfsys@transformshift{0.564460in}{0.414757in}%
\pgfsys@useobject{currentmarker}{}%
\end{pgfscope}%
\end{pgfscope}%
\begin{pgfscope}%
\definecolor{textcolor}{rgb}{0.000000,0.000000,0.000000}%
\pgfsetstrokecolor{textcolor}%
\pgfsetfillcolor{textcolor}%
\pgftext[x=0.564460in,y=0.317535in,,top]{\color{textcolor}{\rmfamily\fontsize{7.000000}{8.400000}\selectfont\catcode`\^=\active\def^{\ifmmode\sp\else\^{}\fi}\catcode`\%=\active\def
\end{pgfscope}%
\begin{pgfscope}%
\pgfpathrectangle{\pgfqpoint{0.519743in}{0.414757in}}{\pgfqpoint{0.983773in}{1.143573in}}%
\pgfusepath{clip}%
\pgfsetrectcap%
\pgfsetroundjoin%
\pgfsetlinewidth{0.803000pt}%
\definecolor{currentstroke}{rgb}{0.690196,0.690196,0.690196}%
\pgfsetstrokecolor{currentstroke}%
\pgfsetdash{}{0pt}%
\pgfpathmoveto{\pgfqpoint{0.788045in}{0.414757in}}%
\pgfpathlineto{\pgfqpoint{0.788045in}{1.558330in}}%
\pgfusepath{stroke}%
\end{pgfscope}%
\begin{pgfscope}%
\pgfsetbuttcap%
\pgfsetroundjoin%
\definecolor{currentfill}{rgb}{0.000000,0.000000,0.000000}%
\pgfsetfillcolor{currentfill}%
\pgfsetlinewidth{0.803000pt}%
\definecolor{currentstroke}{rgb}{0.000000,0.000000,0.000000}%
\pgfsetstrokecolor{currentstroke}%
\pgfsetdash{}{0pt}%
\pgfsys@defobject{currentmarker}{\pgfqpoint{0.000000in}{-0.048611in}}{\pgfqpoint{0.000000in}{0.000000in}}{%
\pgfpathmoveto{\pgfqpoint{0.000000in}{0.000000in}}%
\pgfpathlineto{\pgfqpoint{0.000000in}{-0.048611in}}%
\pgfusepath{stroke,fill}%
}%
\begin{pgfscope}%
\pgfsys@transformshift{0.788045in}{0.414757in}%
\pgfsys@useobject{currentmarker}{}%
\end{pgfscope}%
\end{pgfscope}%
\begin{pgfscope}%
\definecolor{textcolor}{rgb}{0.000000,0.000000,0.000000}%
\pgfsetstrokecolor{textcolor}%
\pgfsetfillcolor{textcolor}%
\pgftext[x=0.788045in,y=0.317535in,,top]{\color{textcolor}{\rmfamily\fontsize{7.000000}{8.400000}\selectfont\catcode`\^=\active\def^{\ifmmode\sp\else\^{}\fi}\catcode`\%=\active\def
\end{pgfscope}%
\begin{pgfscope}%
\pgfpathrectangle{\pgfqpoint{0.519743in}{0.414757in}}{\pgfqpoint{0.983773in}{1.143573in}}%
\pgfusepath{clip}%
\pgfsetrectcap%
\pgfsetroundjoin%
\pgfsetlinewidth{0.803000pt}%
\definecolor{currentstroke}{rgb}{0.690196,0.690196,0.690196}%
\pgfsetstrokecolor{currentstroke}%
\pgfsetdash{}{0pt}%
\pgfpathmoveto{\pgfqpoint{1.011630in}{0.414757in}}%
\pgfpathlineto{\pgfqpoint{1.011630in}{1.558330in}}%
\pgfusepath{stroke}%
\end{pgfscope}%
\begin{pgfscope}%
\pgfsetbuttcap%
\pgfsetroundjoin%
\definecolor{currentfill}{rgb}{0.000000,0.000000,0.000000}%
\pgfsetfillcolor{currentfill}%
\pgfsetlinewidth{0.803000pt}%
\definecolor{currentstroke}{rgb}{0.000000,0.000000,0.000000}%
\pgfsetstrokecolor{currentstroke}%
\pgfsetdash{}{0pt}%
\pgfsys@defobject{currentmarker}{\pgfqpoint{0.000000in}{-0.048611in}}{\pgfqpoint{0.000000in}{0.000000in}}{%
\pgfpathmoveto{\pgfqpoint{0.000000in}{0.000000in}}%
\pgfpathlineto{\pgfqpoint{0.000000in}{-0.048611in}}%
\pgfusepath{stroke,fill}%
}%
\begin{pgfscope}%
\pgfsys@transformshift{1.011630in}{0.414757in}%
\pgfsys@useobject{currentmarker}{}%
\end{pgfscope}%
\end{pgfscope}%
\begin{pgfscope}%
\definecolor{textcolor}{rgb}{0.000000,0.000000,0.000000}%
\pgfsetstrokecolor{textcolor}%
\pgfsetfillcolor{textcolor}%
\pgftext[x=1.011630in,y=0.317535in,,top]{\color{textcolor}{\rmfamily\fontsize{7.000000}{8.400000}\selectfont\catcode`\^=\active\def^{\ifmmode\sp\else\^{}\fi}\catcode`\%=\active\def
\end{pgfscope}%
\begin{pgfscope}%
\pgfpathrectangle{\pgfqpoint{0.519743in}{0.414757in}}{\pgfqpoint{0.983773in}{1.143573in}}%
\pgfusepath{clip}%
\pgfsetrectcap%
\pgfsetroundjoin%
\pgfsetlinewidth{0.803000pt}%
\definecolor{currentstroke}{rgb}{0.690196,0.690196,0.690196}%
\pgfsetstrokecolor{currentstroke}%
\pgfsetdash{}{0pt}%
\pgfpathmoveto{\pgfqpoint{1.235215in}{0.414757in}}%
\pgfpathlineto{\pgfqpoint{1.235215in}{1.558330in}}%
\pgfusepath{stroke}%
\end{pgfscope}%
\begin{pgfscope}%
\pgfsetbuttcap%
\pgfsetroundjoin%
\definecolor{currentfill}{rgb}{0.000000,0.000000,0.000000}%
\pgfsetfillcolor{currentfill}%
\pgfsetlinewidth{0.803000pt}%
\definecolor{currentstroke}{rgb}{0.000000,0.000000,0.000000}%
\pgfsetstrokecolor{currentstroke}%
\pgfsetdash{}{0pt}%
\pgfsys@defobject{currentmarker}{\pgfqpoint{0.000000in}{-0.048611in}}{\pgfqpoint{0.000000in}{0.000000in}}{%
\pgfpathmoveto{\pgfqpoint{0.000000in}{0.000000in}}%
\pgfpathlineto{\pgfqpoint{0.000000in}{-0.048611in}}%
\pgfusepath{stroke,fill}%
}%
\begin{pgfscope}%
\pgfsys@transformshift{1.235215in}{0.414757in}%
\pgfsys@useobject{currentmarker}{}%
\end{pgfscope}%
\end{pgfscope}%
\begin{pgfscope}%
\definecolor{textcolor}{rgb}{0.000000,0.000000,0.000000}%
\pgfsetstrokecolor{textcolor}%
\pgfsetfillcolor{textcolor}%
\pgftext[x=1.235215in,y=0.317535in,,top]{\color{textcolor}{\rmfamily\fontsize{7.000000}{8.400000}\selectfont\catcode`\^=\active\def^{\ifmmode\sp\else\^{}\fi}\catcode`\%=\active\def
\end{pgfscope}%
\begin{pgfscope}%
\pgfpathrectangle{\pgfqpoint{0.519743in}{0.414757in}}{\pgfqpoint{0.983773in}{1.143573in}}%
\pgfusepath{clip}%
\pgfsetrectcap%
\pgfsetroundjoin%
\pgfsetlinewidth{0.803000pt}%
\definecolor{currentstroke}{rgb}{0.690196,0.690196,0.690196}%
\pgfsetstrokecolor{currentstroke}%
\pgfsetdash{}{0pt}%
\pgfpathmoveto{\pgfqpoint{1.458799in}{0.414757in}}%
\pgfpathlineto{\pgfqpoint{1.458799in}{1.558330in}}%
\pgfusepath{stroke}%
\end{pgfscope}%
\begin{pgfscope}%
\pgfsetbuttcap%
\pgfsetroundjoin%
\definecolor{currentfill}{rgb}{0.000000,0.000000,0.000000}%
\pgfsetfillcolor{currentfill}%
\pgfsetlinewidth{0.803000pt}%
\definecolor{currentstroke}{rgb}{0.000000,0.000000,0.000000}%
\pgfsetstrokecolor{currentstroke}%
\pgfsetdash{}{0pt}%
\pgfsys@defobject{currentmarker}{\pgfqpoint{0.000000in}{-0.048611in}}{\pgfqpoint{0.000000in}{0.000000in}}{%
\pgfpathmoveto{\pgfqpoint{0.000000in}{0.000000in}}%
\pgfpathlineto{\pgfqpoint{0.000000in}{-0.048611in}}%
\pgfusepath{stroke,fill}%
}%
\begin{pgfscope}%
\pgfsys@transformshift{1.458799in}{0.414757in}%
\pgfsys@useobject{currentmarker}{}%
\end{pgfscope}%
\end{pgfscope}%
\begin{pgfscope}%
\definecolor{textcolor}{rgb}{0.000000,0.000000,0.000000}%
\pgfsetstrokecolor{textcolor}%
\pgfsetfillcolor{textcolor}%
\pgftext[x=1.458799in,y=0.317535in,,top]{\color{textcolor}{\rmfamily\fontsize{7.000000}{8.400000}\selectfont\catcode`\^=\active\def^{\ifmmode\sp\else\^{}\fi}\catcode`\%=\active\def
\end{pgfscope}%
\begin{pgfscope}%
\definecolor{textcolor}{rgb}{0.000000,0.000000,0.000000}%
\pgfsetstrokecolor{textcolor}%
\pgfsetfillcolor{textcolor}%
\pgftext[x=1.011630in,y=0.167891in,,top]{\color{textcolor}{\rmfamily\fontsize{9.000000}{10.800000}\selectfont\catcode`\^=\active\def^{\ifmmode\sp\else\^{}\fi}\catcode`\%=\active\def
\end{pgfscope}%
\begin{pgfscope}%
\pgfpathrectangle{\pgfqpoint{0.519743in}{0.414757in}}{\pgfqpoint{0.983773in}{1.143573in}}%
\pgfusepath{clip}%
\pgfsetrectcap%
\pgfsetroundjoin%
\pgfsetlinewidth{0.803000pt}%
\definecolor{currentstroke}{rgb}{0.690196,0.690196,0.690196}%
\pgfsetstrokecolor{currentstroke}%
\pgfsetdash{}{0pt}%
\pgfpathmoveto{\pgfqpoint{0.519743in}{0.466738in}}%
\pgfpathlineto{\pgfqpoint{1.503516in}{0.466738in}}%
\pgfusepath{stroke}%
\end{pgfscope}%
\begin{pgfscope}%
\pgfsetbuttcap%
\pgfsetroundjoin%
\definecolor{currentfill}{rgb}{0.000000,0.000000,0.000000}%
\pgfsetfillcolor{currentfill}%
\pgfsetlinewidth{0.803000pt}%
\definecolor{currentstroke}{rgb}{0.000000,0.000000,0.000000}%
\pgfsetstrokecolor{currentstroke}%
\pgfsetdash{}{0pt}%
\pgfsys@defobject{currentmarker}{\pgfqpoint{-0.048611in}{0.000000in}}{\pgfqpoint{-0.000000in}{0.000000in}}{%
\pgfpathmoveto{\pgfqpoint{-0.000000in}{0.000000in}}%
\pgfpathlineto{\pgfqpoint{-0.048611in}{0.000000in}}%
\pgfusepath{stroke,fill}%
}%
\begin{pgfscope}%
\pgfsys@transformshift{0.519743in}{0.466738in}%
\pgfsys@useobject{currentmarker}{}%
\end{pgfscope}%
\end{pgfscope}%
\begin{pgfscope}%
\definecolor{textcolor}{rgb}{0.000000,0.000000,0.000000}%
\pgfsetstrokecolor{textcolor}%
\pgfsetfillcolor{textcolor}%
\pgftext[x=0.223446in, y=0.429805in, left, base]{\color{textcolor}{\rmfamily\fontsize{7.000000}{8.400000}\selectfont\catcode`\^=\active\def^{\ifmmode\sp\else\^{}\fi}\catcode`\%=\active\def
\end{pgfscope}%
\begin{pgfscope}%
\pgfpathrectangle{\pgfqpoint{0.519743in}{0.414757in}}{\pgfqpoint{0.983773in}{1.143573in}}%
\pgfusepath{clip}%
\pgfsetrectcap%
\pgfsetroundjoin%
\pgfsetlinewidth{0.803000pt}%
\definecolor{currentstroke}{rgb}{0.690196,0.690196,0.690196}%
\pgfsetstrokecolor{currentstroke}%
\pgfsetdash{}{0pt}%
\pgfpathmoveto{\pgfqpoint{0.519743in}{0.726641in}}%
\pgfpathlineto{\pgfqpoint{1.503516in}{0.726641in}}%
\pgfusepath{stroke}%
\end{pgfscope}%
\begin{pgfscope}%
\pgfsetbuttcap%
\pgfsetroundjoin%
\definecolor{currentfill}{rgb}{0.000000,0.000000,0.000000}%
\pgfsetfillcolor{currentfill}%
\pgfsetlinewidth{0.803000pt}%
\definecolor{currentstroke}{rgb}{0.000000,0.000000,0.000000}%
\pgfsetstrokecolor{currentstroke}%
\pgfsetdash{}{0pt}%
\pgfsys@defobject{currentmarker}{\pgfqpoint{-0.048611in}{0.000000in}}{\pgfqpoint{-0.000000in}{0.000000in}}{%
\pgfpathmoveto{\pgfqpoint{-0.000000in}{0.000000in}}%
\pgfpathlineto{\pgfqpoint{-0.048611in}{0.000000in}}%
\pgfusepath{stroke,fill}%
}%
\begin{pgfscope}%
\pgfsys@transformshift{0.519743in}{0.726641in}%
\pgfsys@useobject{currentmarker}{}%
\end{pgfscope}%
\end{pgfscope}%
\begin{pgfscope}%
\definecolor{textcolor}{rgb}{0.000000,0.000000,0.000000}%
\pgfsetstrokecolor{textcolor}%
\pgfsetfillcolor{textcolor}%
\pgftext[x=0.223446in, y=0.689708in, left, base]{\color{textcolor}{\rmfamily\fontsize{7.000000}{8.400000}\selectfont\catcode`\^=\active\def^{\ifmmode\sp\else\^{}\fi}\catcode`\%=\active\def
\end{pgfscope}%
\begin{pgfscope}%
\pgfpathrectangle{\pgfqpoint{0.519743in}{0.414757in}}{\pgfqpoint{0.983773in}{1.143573in}}%
\pgfusepath{clip}%
\pgfsetrectcap%
\pgfsetroundjoin%
\pgfsetlinewidth{0.803000pt}%
\definecolor{currentstroke}{rgb}{0.690196,0.690196,0.690196}%
\pgfsetstrokecolor{currentstroke}%
\pgfsetdash{}{0pt}%
\pgfpathmoveto{\pgfqpoint{0.519743in}{0.986544in}}%
\pgfpathlineto{\pgfqpoint{1.503516in}{0.986544in}}%
\pgfusepath{stroke}%
\end{pgfscope}%
\begin{pgfscope}%
\pgfsetbuttcap%
\pgfsetroundjoin%
\definecolor{currentfill}{rgb}{0.000000,0.000000,0.000000}%
\pgfsetfillcolor{currentfill}%
\pgfsetlinewidth{0.803000pt}%
\definecolor{currentstroke}{rgb}{0.000000,0.000000,0.000000}%
\pgfsetstrokecolor{currentstroke}%
\pgfsetdash{}{0pt}%
\pgfsys@defobject{currentmarker}{\pgfqpoint{-0.048611in}{0.000000in}}{\pgfqpoint{-0.000000in}{0.000000in}}{%
\pgfpathmoveto{\pgfqpoint{-0.000000in}{0.000000in}}%
\pgfpathlineto{\pgfqpoint{-0.048611in}{0.000000in}}%
\pgfusepath{stroke,fill}%
}%
\begin{pgfscope}%
\pgfsys@transformshift{0.519743in}{0.986544in}%
\pgfsys@useobject{currentmarker}{}%
\end{pgfscope}%
\end{pgfscope}%
\begin{pgfscope}%
\definecolor{textcolor}{rgb}{0.000000,0.000000,0.000000}%
\pgfsetstrokecolor{textcolor}%
\pgfsetfillcolor{textcolor}%
\pgftext[x=0.223446in, y=0.949611in, left, base]{\color{textcolor}{\rmfamily\fontsize{7.000000}{8.400000}\selectfont\catcode`\^=\active\def^{\ifmmode\sp\else\^{}\fi}\catcode`\%=\active\def
\end{pgfscope}%
\begin{pgfscope}%
\pgfpathrectangle{\pgfqpoint{0.519743in}{0.414757in}}{\pgfqpoint{0.983773in}{1.143573in}}%
\pgfusepath{clip}%
\pgfsetrectcap%
\pgfsetroundjoin%
\pgfsetlinewidth{0.803000pt}%
\definecolor{currentstroke}{rgb}{0.690196,0.690196,0.690196}%
\pgfsetstrokecolor{currentstroke}%
\pgfsetdash{}{0pt}%
\pgfpathmoveto{\pgfqpoint{0.519743in}{1.246447in}}%
\pgfpathlineto{\pgfqpoint{1.503516in}{1.246447in}}%
\pgfusepath{stroke}%
\end{pgfscope}%
\begin{pgfscope}%
\pgfsetbuttcap%
\pgfsetroundjoin%
\definecolor{currentfill}{rgb}{0.000000,0.000000,0.000000}%
\pgfsetfillcolor{currentfill}%
\pgfsetlinewidth{0.803000pt}%
\definecolor{currentstroke}{rgb}{0.000000,0.000000,0.000000}%
\pgfsetstrokecolor{currentstroke}%
\pgfsetdash{}{0pt}%
\pgfsys@defobject{currentmarker}{\pgfqpoint{-0.048611in}{0.000000in}}{\pgfqpoint{-0.000000in}{0.000000in}}{%
\pgfpathmoveto{\pgfqpoint{-0.000000in}{0.000000in}}%
\pgfpathlineto{\pgfqpoint{-0.048611in}{0.000000in}}%
\pgfusepath{stroke,fill}%
}%
\begin{pgfscope}%
\pgfsys@transformshift{0.519743in}{1.246447in}%
\pgfsys@useobject{currentmarker}{}%
\end{pgfscope}%
\end{pgfscope}%
\begin{pgfscope}%
\definecolor{textcolor}{rgb}{0.000000,0.000000,0.000000}%
\pgfsetstrokecolor{textcolor}%
\pgfsetfillcolor{textcolor}%
\pgftext[x=0.223446in, y=1.209513in, left, base]{\color{textcolor}{\rmfamily\fontsize{7.000000}{8.400000}\selectfont\catcode`\^=\active\def^{\ifmmode\sp\else\^{}\fi}\catcode`\%=\active\def
\end{pgfscope}%
\begin{pgfscope}%
\pgfpathrectangle{\pgfqpoint{0.519743in}{0.414757in}}{\pgfqpoint{0.983773in}{1.143573in}}%
\pgfusepath{clip}%
\pgfsetrectcap%
\pgfsetroundjoin%
\pgfsetlinewidth{0.803000pt}%
\definecolor{currentstroke}{rgb}{0.690196,0.690196,0.690196}%
\pgfsetstrokecolor{currentstroke}%
\pgfsetdash{}{0pt}%
\pgfpathmoveto{\pgfqpoint{0.519743in}{1.506349in}}%
\pgfpathlineto{\pgfqpoint{1.503516in}{1.506349in}}%
\pgfusepath{stroke}%
\end{pgfscope}%
\begin{pgfscope}%
\pgfsetbuttcap%
\pgfsetroundjoin%
\definecolor{currentfill}{rgb}{0.000000,0.000000,0.000000}%
\pgfsetfillcolor{currentfill}%
\pgfsetlinewidth{0.803000pt}%
\definecolor{currentstroke}{rgb}{0.000000,0.000000,0.000000}%
\pgfsetstrokecolor{currentstroke}%
\pgfsetdash{}{0pt}%
\pgfsys@defobject{currentmarker}{\pgfqpoint{-0.048611in}{0.000000in}}{\pgfqpoint{-0.000000in}{0.000000in}}{%
\pgfpathmoveto{\pgfqpoint{-0.000000in}{0.000000in}}%
\pgfpathlineto{\pgfqpoint{-0.048611in}{0.000000in}}%
\pgfusepath{stroke,fill}%
}%
\begin{pgfscope}%
\pgfsys@transformshift{0.519743in}{1.506349in}%
\pgfsys@useobject{currentmarker}{}%
\end{pgfscope}%
\end{pgfscope}%
\begin{pgfscope}%
\definecolor{textcolor}{rgb}{0.000000,0.000000,0.000000}%
\pgfsetstrokecolor{textcolor}%
\pgfsetfillcolor{textcolor}%
\pgftext[x=0.223446in, y=1.469416in, left, base]{\color{textcolor}{\rmfamily\fontsize{7.000000}{8.400000}\selectfont\catcode`\^=\active\def^{\ifmmode\sp\else\^{}\fi}\catcode`\%=\active\def
\end{pgfscope}%
\begin{pgfscope}%
\definecolor{textcolor}{rgb}{0.000000,0.000000,0.000000}%
\pgfsetstrokecolor{textcolor}%
\pgfsetfillcolor{textcolor}%
\pgftext[x=0.167891in,y=0.986544in,,bottom,rotate=90.000000]{\color{textcolor}{\rmfamily\fontsize{9.000000}{10.800000}\selectfont\catcode`\^=\active\def^{\ifmmode\sp\else\^{}\fi}\catcode`\%=\active\def
\end{pgfscope}%
\begin{pgfscope}%
\pgfpathrectangle{\pgfqpoint{0.519743in}{0.414757in}}{\pgfqpoint{0.983773in}{1.143573in}}%
\pgfusepath{clip}%
\pgfsetrectcap%
\pgfsetroundjoin%
\pgfsetlinewidth{1.505625pt}%
\definecolor{currentstroke}{rgb}{0.003922,0.450980,0.698039}%
\pgfsetstrokecolor{currentstroke}%
\pgfsetstrokeopacity{0.200000}%
\pgfsetdash{}{0pt}%
\pgfpathmoveto{\pgfqpoint{0.638112in}{0.468006in}}%
\pgfpathlineto{\pgfqpoint{0.698611in}{0.471055in}}%
\pgfpathlineto{\pgfqpoint{0.743328in}{0.474897in}}%
\pgfpathlineto{\pgfqpoint{0.788045in}{0.480074in}}%
\pgfpathlineto{\pgfqpoint{0.832762in}{0.486615in}}%
\pgfpathlineto{\pgfqpoint{0.877479in}{0.494521in}}%
\pgfpathlineto{\pgfqpoint{0.922196in}{0.503774in}}%
\pgfpathlineto{\pgfqpoint{0.966913in}{0.514337in}}%
\pgfpathlineto{\pgfqpoint{1.011630in}{0.526156in}}%
\pgfpathlineto{\pgfqpoint{1.056347in}{0.539162in}}%
\pgfpathlineto{\pgfqpoint{1.101064in}{0.553270in}}%
\pgfpathlineto{\pgfqpoint{1.145781in}{0.568387in}}%
\pgfpathlineto{\pgfqpoint{1.190498in}{0.584408in}}%
\pgfpathlineto{\pgfqpoint{1.235215in}{0.601216in}}%
\pgfpathlineto{\pgfqpoint{1.279932in}{0.618683in}}%
\pgfpathlineto{\pgfqpoint{1.324649in}{0.636656in}}%
\pgfpathlineto{\pgfqpoint{1.369365in}{0.654933in}}%
\pgfpathlineto{\pgfqpoint{1.414082in}{0.673101in}}%
\pgfpathlineto{\pgfqpoint{1.458799in}{0.653346in}}%
\pgfusepath{stroke}%
\end{pgfscope}%
\begin{pgfscope}%
\pgfpathrectangle{\pgfqpoint{0.519743in}{0.414757in}}{\pgfqpoint{0.983773in}{1.143573in}}%
\pgfusepath{clip}%
\pgfsetrectcap%
\pgfsetroundjoin%
\pgfsetlinewidth{1.505625pt}%
\definecolor{currentstroke}{rgb}{0.003922,0.450980,0.698039}%
\pgfsetstrokecolor{currentstroke}%
\pgfsetstrokeopacity{0.200000}%
\pgfsetdash{}{0pt}%
\pgfpathmoveto{\pgfqpoint{0.631536in}{0.467656in}}%
\pgfpathlineto{\pgfqpoint{0.698611in}{0.470550in}}%
\pgfpathlineto{\pgfqpoint{0.743328in}{0.474039in}}%
\pgfpathlineto{\pgfqpoint{0.788045in}{0.478799in}}%
\pgfpathlineto{\pgfqpoint{0.832762in}{0.484872in}}%
\pgfpathlineto{\pgfqpoint{0.877479in}{0.492279in}}%
\pgfpathlineto{\pgfqpoint{0.922196in}{0.501016in}}%
\pgfpathlineto{\pgfqpoint{0.966913in}{0.511063in}}%
\pgfpathlineto{\pgfqpoint{1.011630in}{0.522378in}}%
\pgfpathlineto{\pgfqpoint{1.056347in}{0.534908in}}%
\pgfpathlineto{\pgfqpoint{1.101064in}{0.548582in}}%
\pgfpathlineto{\pgfqpoint{1.145781in}{0.563318in}}%
\pgfpathlineto{\pgfqpoint{1.190498in}{0.579024in}}%
\pgfpathlineto{\pgfqpoint{1.235215in}{0.595595in}}%
\pgfpathlineto{\pgfqpoint{1.279932in}{0.612916in}}%
\pgfpathlineto{\pgfqpoint{1.324649in}{0.630860in}}%
\pgfpathlineto{\pgfqpoint{1.369365in}{0.649262in}}%
\pgfpathlineto{\pgfqpoint{1.414082in}{0.667839in}}%
\pgfpathlineto{\pgfqpoint{1.458799in}{0.663804in}}%
\pgfusepath{stroke}%
\end{pgfscope}%
\begin{pgfscope}%
\pgfpathrectangle{\pgfqpoint{0.519743in}{0.414757in}}{\pgfqpoint{0.983773in}{1.143573in}}%
\pgfusepath{clip}%
\pgfsetrectcap%
\pgfsetroundjoin%
\pgfsetlinewidth{1.505625pt}%
\definecolor{currentstroke}{rgb}{0.003922,0.450980,0.698039}%
\pgfsetstrokecolor{currentstroke}%
\pgfsetstrokeopacity{0.200000}%
\pgfsetdash{}{0pt}%
\pgfpathmoveto{\pgfqpoint{0.645510in}{0.467713in}}%
\pgfpathlineto{\pgfqpoint{0.698611in}{0.469755in}}%
\pgfpathlineto{\pgfqpoint{0.743328in}{0.472712in}}%
\pgfpathlineto{\pgfqpoint{0.788045in}{0.476865in}}%
\pgfpathlineto{\pgfqpoint{0.832762in}{0.482292in}}%
\pgfpathlineto{\pgfqpoint{0.877479in}{0.489041in}}%
\pgfpathlineto{\pgfqpoint{0.922196in}{0.497138in}}%
\pgfpathlineto{\pgfqpoint{0.966913in}{0.506585in}}%
\pgfpathlineto{\pgfqpoint{1.011630in}{0.517363in}}%
\pgfpathlineto{\pgfqpoint{1.056347in}{0.529432in}}%
\pgfpathlineto{\pgfqpoint{1.101064in}{0.542734in}}%
\pgfpathlineto{\pgfqpoint{1.145781in}{0.557190in}}%
\pgfpathlineto{\pgfqpoint{1.190498in}{0.572706in}}%
\pgfpathlineto{\pgfqpoint{1.235215in}{0.589169in}}%
\pgfpathlineto{\pgfqpoint{1.279932in}{0.606443in}}%
\pgfpathlineto{\pgfqpoint{1.324649in}{0.624357in}}%
\pgfpathlineto{\pgfqpoint{1.369365in}{0.642662in}}%
\pgfpathlineto{\pgfqpoint{1.414082in}{0.660819in}}%
\pgfpathlineto{\pgfqpoint{1.458799in}{0.628539in}}%
\pgfusepath{stroke}%
\end{pgfscope}%
\begin{pgfscope}%
\pgfpathrectangle{\pgfqpoint{0.519743in}{0.414757in}}{\pgfqpoint{0.983773in}{1.143573in}}%
\pgfusepath{clip}%
\pgfsetrectcap%
\pgfsetroundjoin%
\pgfsetlinewidth{1.505625pt}%
\definecolor{currentstroke}{rgb}{0.003922,0.450980,0.698039}%
\pgfsetstrokecolor{currentstroke}%
\pgfsetstrokeopacity{0.200000}%
\pgfsetdash{}{0pt}%
\pgfpathmoveto{\pgfqpoint{0.638989in}{0.468127in}}%
\pgfpathlineto{\pgfqpoint{0.698611in}{0.471312in}}%
\pgfpathlineto{\pgfqpoint{0.743328in}{0.475317in}}%
\pgfpathlineto{\pgfqpoint{0.788045in}{0.480679in}}%
\pgfpathlineto{\pgfqpoint{0.832762in}{0.487416in}}%
\pgfpathlineto{\pgfqpoint{0.877479in}{0.495522in}}%
\pgfpathlineto{\pgfqpoint{0.922196in}{0.504972in}}%
\pgfpathlineto{\pgfqpoint{0.966913in}{0.515721in}}%
\pgfpathlineto{\pgfqpoint{1.011630in}{0.527710in}}%
\pgfpathlineto{\pgfqpoint{1.056347in}{0.540865in}}%
\pgfpathlineto{\pgfqpoint{1.101064in}{0.555100in}}%
\pgfpathlineto{\pgfqpoint{1.145781in}{0.570317in}}%
\pgfpathlineto{\pgfqpoint{1.190498in}{0.586411in}}%
\pgfpathlineto{\pgfqpoint{1.235215in}{0.603265in}}%
\pgfpathlineto{\pgfqpoint{1.279932in}{0.620751in}}%
\pgfpathlineto{\pgfqpoint{1.324649in}{0.638721in}}%
\pgfpathlineto{\pgfqpoint{1.369365in}{0.656978in}}%
\pgfpathlineto{\pgfqpoint{1.414082in}{0.675127in}}%
\pgfpathlineto{\pgfqpoint{1.458799in}{0.657119in}}%
\pgfusepath{stroke}%
\end{pgfscope}%
\begin{pgfscope}%
\pgfpathrectangle{\pgfqpoint{0.519743in}{0.414757in}}{\pgfqpoint{0.983773in}{1.143573in}}%
\pgfusepath{clip}%
\pgfsetrectcap%
\pgfsetroundjoin%
\pgfsetlinewidth{1.505625pt}%
\definecolor{currentstroke}{rgb}{0.003922,0.450980,0.698039}%
\pgfsetstrokecolor{currentstroke}%
\pgfsetstrokeopacity{0.200000}%
\pgfsetdash{}{0pt}%
\pgfpathmoveto{\pgfqpoint{0.639920in}{0.467865in}}%
\pgfpathlineto{\pgfqpoint{0.698611in}{0.470510in}}%
\pgfpathlineto{\pgfqpoint{0.743328in}{0.473984in}}%
\pgfpathlineto{\pgfqpoint{0.788045in}{0.478733in}}%
\pgfpathlineto{\pgfqpoint{0.832762in}{0.484802in}}%
\pgfpathlineto{\pgfqpoint{0.877479in}{0.492213in}}%
\pgfpathlineto{\pgfqpoint{0.922196in}{0.500962in}}%
\pgfpathlineto{\pgfqpoint{0.966913in}{0.511027in}}%
\pgfpathlineto{\pgfqpoint{1.011630in}{0.522366in}}%
\pgfpathlineto{\pgfqpoint{1.056347in}{0.534922in}}%
\pgfpathlineto{\pgfqpoint{1.101064in}{0.548619in}}%
\pgfpathlineto{\pgfqpoint{1.145781in}{0.563371in}}%
\pgfpathlineto{\pgfqpoint{1.190498in}{0.579076in}}%
\pgfpathlineto{\pgfqpoint{1.235215in}{0.595619in}}%
\pgfpathlineto{\pgfqpoint{1.279932in}{0.612869in}}%
\pgfpathlineto{\pgfqpoint{1.324649in}{0.630667in}}%
\pgfpathlineto{\pgfqpoint{1.369365in}{0.648790in}}%
\pgfpathlineto{\pgfqpoint{1.414082in}{0.666769in}}%
\pgfpathlineto{\pgfqpoint{1.458799in}{0.640837in}}%
\pgfusepath{stroke}%
\end{pgfscope}%
\begin{pgfscope}%
\pgfpathrectangle{\pgfqpoint{0.519743in}{0.414757in}}{\pgfqpoint{0.983773in}{1.143573in}}%
\pgfusepath{clip}%
\pgfsetbuttcap%
\pgfsetroundjoin%
\pgfsetlinewidth{1.505625pt}%
\definecolor{currentstroke}{rgb}{0.501961,0.501961,0.501961}%
\pgfsetstrokecolor{currentstroke}%
\pgfsetdash{{5.550000pt}{2.400000pt}}{0.000000pt}%
\pgfpathmoveto{\pgfqpoint{0.564460in}{0.726641in}}%
\pgfpathlineto{\pgfqpoint{1.458799in}{0.726641in}}%
\pgfusepath{stroke}%
\end{pgfscope}%
\begin{pgfscope}%
\pgfpathrectangle{\pgfqpoint{0.519743in}{0.414757in}}{\pgfqpoint{0.983773in}{1.143573in}}%
\pgfusepath{clip}%
\pgfsetbuttcap%
\pgfsetroundjoin%
\pgfsetlinewidth{1.505625pt}%
\definecolor{currentstroke}{rgb}{0.501961,0.501961,0.501961}%
\pgfsetstrokecolor{currentstroke}%
\pgfsetdash{{5.550000pt}{2.400000pt}}{0.000000pt}%
\pgfpathmoveto{\pgfqpoint{0.788045in}{0.466738in}}%
\pgfpathlineto{\pgfqpoint{0.788045in}{1.506349in}}%
\pgfusepath{stroke}%
\end{pgfscope}%
\begin{pgfscope}%
\pgfsetrectcap%
\pgfsetmiterjoin%
\pgfsetlinewidth{0.803000pt}%
\definecolor{currentstroke}{rgb}{0.000000,0.000000,0.000000}%
\pgfsetstrokecolor{currentstroke}%
\pgfsetdash{}{0pt}%
\pgfpathmoveto{\pgfqpoint{0.519743in}{0.414757in}}%
\pgfpathlineto{\pgfqpoint{0.519743in}{1.558330in}}%
\pgfusepath{stroke}%
\end{pgfscope}%
\begin{pgfscope}%
\pgfsetrectcap%
\pgfsetmiterjoin%
\pgfsetlinewidth{0.803000pt}%
\definecolor{currentstroke}{rgb}{0.000000,0.000000,0.000000}%
\pgfsetstrokecolor{currentstroke}%
\pgfsetdash{}{0pt}%
\pgfpathmoveto{\pgfqpoint{1.503516in}{0.414757in}}%
\pgfpathlineto{\pgfqpoint{1.503516in}{1.558330in}}%
\pgfusepath{stroke}%
\end{pgfscope}%
\begin{pgfscope}%
\pgfsetrectcap%
\pgfsetmiterjoin%
\pgfsetlinewidth{0.803000pt}%
\definecolor{currentstroke}{rgb}{0.000000,0.000000,0.000000}%
\pgfsetstrokecolor{currentstroke}%
\pgfsetdash{}{0pt}%
\pgfpathmoveto{\pgfqpoint{0.519743in}{0.414757in}}%
\pgfpathlineto{\pgfqpoint{1.503516in}{0.414757in}}%
\pgfusepath{stroke}%
\end{pgfscope}%
\begin{pgfscope}%
\pgfsetrectcap%
\pgfsetmiterjoin%
\pgfsetlinewidth{0.803000pt}%
\definecolor{currentstroke}{rgb}{0.000000,0.000000,0.000000}%
\pgfsetstrokecolor{currentstroke}%
\pgfsetdash{}{0pt}%
\pgfpathmoveto{\pgfqpoint{0.519743in}{1.558330in}}%
\pgfpathlineto{\pgfqpoint{1.503516in}{1.558330in}}%
\pgfusepath{stroke}%
\end{pgfscope}%
\begin{pgfscope}%
\pgfsetbuttcap%
\pgfsetmiterjoin%
\definecolor{currentfill}{rgb}{1.000000,1.000000,1.000000}%
\pgfsetfillcolor{currentfill}%
\pgfsetfillopacity{0.800000}%
\pgfsetlinewidth{1.003750pt}%
\definecolor{currentstroke}{rgb}{0.800000,0.800000,0.800000}%
\pgfsetstrokecolor{currentstroke}%
\pgfsetstrokeopacity{0.800000}%
\pgfsetdash{}{0pt}%
\pgfpathmoveto{\pgfqpoint{0.542878in}{1.008611in}}%
\pgfpathlineto{\pgfqpoint{1.537122in}{1.008611in}}%
\pgfpathquadraticcurveto{\pgfqpoint{1.556567in}{1.008611in}}{\pgfqpoint{1.556567in}{1.028056in}}%
\pgfpathlineto{\pgfqpoint{1.556567in}{1.446433in}}%
\pgfpathquadraticcurveto{\pgfqpoint{1.556567in}{1.465878in}}{\pgfqpoint{1.537122in}{1.465878in}}%
\pgfpathlineto{\pgfqpoint{0.542878in}{1.465878in}}%
\pgfpathquadraticcurveto{\pgfqpoint{0.523433in}{1.465878in}}{\pgfqpoint{0.523433in}{1.446433in}}%
\pgfpathlineto{\pgfqpoint{0.523433in}{1.028056in}}%
\pgfpathquadraticcurveto{\pgfqpoint{0.523433in}{1.008611in}}{\pgfqpoint{0.542878in}{1.008611in}}%
\pgfpathlineto{\pgfqpoint{0.542878in}{1.008611in}}%
\pgfpathclose%
\pgfusepath{stroke,fill}%
\end{pgfscope}%
\begin{pgfscope}%
\pgfsetbuttcap%
\pgfsetmiterjoin%
\definecolor{currentfill}{rgb}{0.007843,0.619608,0.447059}%
\pgfsetfillcolor{currentfill}%
\pgfsetfillopacity{0.850000}%
\pgfsetlinewidth{0.501875pt}%
\definecolor{currentstroke}{rgb}{0.000000,0.000000,0.000000}%
\pgfsetstrokecolor{currentstroke}%
\pgfsetstrokeopacity{0.850000}%
\pgfsetdash{}{0pt}%
\pgfpathmoveto{\pgfqpoint{0.562322in}{1.353123in}}%
\pgfpathlineto{\pgfqpoint{0.756766in}{1.353123in}}%
\pgfpathlineto{\pgfqpoint{0.756766in}{1.421178in}}%
\pgfpathlineto{\pgfqpoint{0.562322in}{1.421178in}}%
\pgfpathlineto{\pgfqpoint{0.562322in}{1.353123in}}%
\pgfpathclose%
\pgfusepath{stroke,fill}%
\end{pgfscope}%
\begin{pgfscope}%
\definecolor{textcolor}{rgb}{0.000000,0.000000,0.000000}%
\pgfsetstrokecolor{textcolor}%
\pgfsetfillcolor{textcolor}%
\pgftext[x=0.834544in,y=1.353123in,left,base]{\color{textcolor}{\rmfamily\fontsize{7.000000}{8.400000}\selectfont\catcode`\^=\active\def^{\ifmmode\sp\else\^{}\fi}\catcode`\%=\active\def
\end{pgfscope}%
\begin{pgfscope}%
\pgfsetbuttcap%
\pgfsetmiterjoin%
\definecolor{currentfill}{rgb}{0.870588,0.560784,0.011765}%
\pgfsetfillcolor{currentfill}%
\pgfsetfillopacity{0.850000}%
\pgfsetlinewidth{0.501875pt}%
\definecolor{currentstroke}{rgb}{0.000000,0.000000,0.000000}%
\pgfsetstrokecolor{currentstroke}%
\pgfsetstrokeopacity{0.850000}%
\pgfsetdash{}{0pt}%
\pgfpathmoveto{\pgfqpoint{0.562322in}{1.210423in}}%
\pgfpathlineto{\pgfqpoint{0.756766in}{1.210423in}}%
\pgfpathlineto{\pgfqpoint{0.756766in}{1.278478in}}%
\pgfpathlineto{\pgfqpoint{0.562322in}{1.278478in}}%
\pgfpathlineto{\pgfqpoint{0.562322in}{1.210423in}}%
\pgfpathclose%
\pgfusepath{stroke,fill}%
\end{pgfscope}%
\begin{pgfscope}%
\definecolor{textcolor}{rgb}{0.000000,0.000000,0.000000}%
\pgfsetstrokecolor{textcolor}%
\pgfsetfillcolor{textcolor}%
\pgftext[x=0.834544in,y=1.210423in,left,base]{\color{textcolor}{\rmfamily\fontsize{7.000000}{8.400000}\selectfont\catcode`\^=\active\def^{\ifmmode\sp\else\^{}\fi}\catcode`\%=\active\def
\end{pgfscope}%
\begin{pgfscope}%
\pgfsetbuttcap%
\pgfsetroundjoin%
\pgfsetlinewidth{0.803000pt}%
\definecolor{currentstroke}{rgb}{0.000000,0.000000,0.000000}%
\pgfsetstrokecolor{currentstroke}%
\pgfsetstrokeopacity{0.600000}%
\pgfsetdash{{2.960000pt}{1.280000pt}}{0.000000pt}%
\pgfpathmoveto{\pgfqpoint{0.562322in}{1.101751in}}%
\pgfpathlineto{\pgfqpoint{0.659544in}{1.101751in}}%
\pgfpathlineto{\pgfqpoint{0.756766in}{1.101751in}}%
\pgfusepath{stroke}%
\end{pgfscope}%
\begin{pgfscope}%
\definecolor{textcolor}{rgb}{0.000000,0.000000,0.000000}%
\pgfsetstrokecolor{textcolor}%
\pgfsetfillcolor{textcolor}%
\pgftext[x=0.834544in,y=1.067723in,left,base]{\color{textcolor}{\rmfamily\fontsize{7.000000}{8.400000}\selectfont\catcode`\^=\active\def^{\ifmmode\sp\else\^{}\fi}\catcode`\%=\active\def
\end{pgfscope}%
\end{pgfpicture}%
\makeatother%
\endgroup%

%% file: figures/calibration_maps/SIMPLE_QA_VERIFIED_Qwen3-4B-Instruct-2507.pgf
\begingroup%
\makeatletter%
\begin{pgfpicture}%
\pgfpathrectangle{\pgfpointorigin}{\pgfqpoint{1.600000in}{1.600000in}}%
\pgfusepath{use as bounding box, clip}%
\begin{pgfscope}%
\pgfsetbuttcap%
\pgfsetmiterjoin%
\definecolor{currentfill}{rgb}{1.000000,1.000000,1.000000}%
\pgfsetfillcolor{currentfill}%
\pgfsetlinewidth{0.000000pt}%
\definecolor{currentstroke}{rgb}{1.000000,1.000000,1.000000}%
\pgfsetstrokecolor{currentstroke}%
\pgfsetdash{}{0pt}%
\pgfpathmoveto{\pgfqpoint{0.000000in}{0.000000in}}%
\pgfpathlineto{\pgfqpoint{1.600000in}{0.000000in}}%
\pgfpathlineto{\pgfqpoint{1.600000in}{1.600000in}}%
\pgfpathlineto{\pgfqpoint{0.000000in}{1.600000in}}%
\pgfpathlineto{\pgfqpoint{0.000000in}{0.000000in}}%
\pgfpathclose%
\pgfusepath{fill}%
\end{pgfscope}%
\begin{pgfscope}%
\pgfsetbuttcap%
\pgfsetmiterjoin%
\definecolor{currentfill}{rgb}{1.000000,1.000000,1.000000}%
\pgfsetfillcolor{currentfill}%
\pgfsetlinewidth{0.000000pt}%
\definecolor{currentstroke}{rgb}{0.000000,0.000000,0.000000}%
\pgfsetstrokecolor{currentstroke}%
\pgfsetstrokeopacity{0.000000}%
\pgfsetdash{}{0pt}%
\pgfpathmoveto{\pgfqpoint{0.519743in}{0.414757in}}%
\pgfpathlineto{\pgfqpoint{1.503516in}{0.414757in}}%
\pgfpathlineto{\pgfqpoint{1.503516in}{1.558330in}}%
\pgfpathlineto{\pgfqpoint{0.519743in}{1.558330in}}%
\pgfpathlineto{\pgfqpoint{0.519743in}{0.414757in}}%
\pgfpathclose%
\pgfusepath{fill}%
\end{pgfscope}%
\begin{pgfscope}%
\pgfpathrectangle{\pgfqpoint{0.519743in}{0.414757in}}{\pgfqpoint{0.983773in}{1.143573in}}%
\pgfusepath{clip}%
\pgfsetbuttcap%
\pgfsetroundjoin%
\definecolor{currentfill}{rgb}{0.870588,0.560784,0.011765}%
\pgfsetfillcolor{currentfill}%
\pgfsetfillopacity{0.200000}%
\pgfsetlinewidth{1.003750pt}%
\definecolor{currentstroke}{rgb}{0.870588,0.560784,0.011765}%
\pgfsetstrokecolor{currentstroke}%
\pgfsetstrokeopacity{0.200000}%
\pgfsetdash{}{0pt}%
\pgfsys@defobject{currentmarker}{\pgfqpoint{0.564460in}{0.726641in}}{\pgfqpoint{0.788045in}{1.506349in}}{%
\pgfpathmoveto{\pgfqpoint{0.788045in}{0.726641in}}%
\pgfpathlineto{\pgfqpoint{0.564460in}{0.726641in}}%
\pgfpathlineto{\pgfqpoint{0.564460in}{1.506349in}}%
\pgfpathlineto{\pgfqpoint{0.788045in}{1.506349in}}%
\pgfpathlineto{\pgfqpoint{0.788045in}{1.506349in}}%
\pgfpathlineto{\pgfqpoint{0.788045in}{0.726641in}}%
\pgfpathlineto{\pgfqpoint{0.788045in}{0.726641in}}%
\pgfpathclose%
\pgfusepath{stroke,fill}%
}%
\begin{pgfscope}%
\pgfsys@transformshift{0.000000in}{0.000000in}%
\pgfsys@useobject{currentmarker}{}%
\end{pgfscope}%
\end{pgfscope}%
\begin{pgfscope}%
\pgfpathrectangle{\pgfqpoint{0.519743in}{0.414757in}}{\pgfqpoint{0.983773in}{1.143573in}}%
\pgfusepath{clip}%
\pgfsetbuttcap%
\pgfsetroundjoin%
\definecolor{currentfill}{rgb}{0.007843,0.619608,0.447059}%
\pgfsetfillcolor{currentfill}%
\pgfsetfillopacity{0.200000}%
\pgfsetlinewidth{1.003750pt}%
\definecolor{currentstroke}{rgb}{0.007843,0.619608,0.447059}%
\pgfsetstrokecolor{currentstroke}%
\pgfsetstrokeopacity{0.200000}%
\pgfsetdash{}{0pt}%
\pgfsys@defobject{currentmarker}{\pgfqpoint{0.788045in}{0.726641in}}{\pgfqpoint{1.458799in}{1.506349in}}{%
\pgfpathmoveto{\pgfqpoint{1.458799in}{0.726641in}}%
\pgfpathlineto{\pgfqpoint{0.788045in}{0.726641in}}%
\pgfpathlineto{\pgfqpoint{0.788045in}{1.506349in}}%
\pgfpathlineto{\pgfqpoint{1.458799in}{1.506349in}}%
\pgfpathlineto{\pgfqpoint{1.458799in}{1.506349in}}%
\pgfpathlineto{\pgfqpoint{1.458799in}{0.726641in}}%
\pgfpathlineto{\pgfqpoint{1.458799in}{0.726641in}}%
\pgfpathclose%
\pgfusepath{stroke,fill}%
}%
\begin{pgfscope}%
\pgfsys@transformshift{0.000000in}{0.000000in}%
\pgfsys@useobject{currentmarker}{}%
\end{pgfscope}%
\end{pgfscope}%
\begin{pgfscope}%
\pgfpathrectangle{\pgfqpoint{0.519743in}{0.414757in}}{\pgfqpoint{0.983773in}{1.143573in}}%
\pgfusepath{clip}%
\pgfsetbuttcap%
\pgfsetroundjoin%
\definecolor{currentfill}{rgb}{0.870588,0.560784,0.011765}%
\pgfsetfillcolor{currentfill}%
\pgfsetfillopacity{0.200000}%
\pgfsetlinewidth{1.003750pt}%
\definecolor{currentstroke}{rgb}{0.870588,0.560784,0.011765}%
\pgfsetstrokecolor{currentstroke}%
\pgfsetstrokeopacity{0.200000}%
\pgfsetdash{}{0pt}%
\pgfsys@defobject{currentmarker}{\pgfqpoint{0.788045in}{0.466738in}}{\pgfqpoint{1.458799in}{0.726641in}}{%
\pgfpathmoveto{\pgfqpoint{1.458799in}{0.466738in}}%
\pgfpathlineto{\pgfqpoint{0.788045in}{0.466738in}}%
\pgfpathlineto{\pgfqpoint{0.788045in}{0.726641in}}%
\pgfpathlineto{\pgfqpoint{1.458799in}{0.726641in}}%
\pgfpathlineto{\pgfqpoint{1.458799in}{0.726641in}}%
\pgfpathlineto{\pgfqpoint{1.458799in}{0.466738in}}%
\pgfpathlineto{\pgfqpoint{1.458799in}{0.466738in}}%
\pgfpathclose%
\pgfusepath{stroke,fill}%
}%
\begin{pgfscope}%
\pgfsys@transformshift{0.000000in}{0.000000in}%
\pgfsys@useobject{currentmarker}{}%
\end{pgfscope}%
\end{pgfscope}%
\begin{pgfscope}%
\pgfpathrectangle{\pgfqpoint{0.519743in}{0.414757in}}{\pgfqpoint{0.983773in}{1.143573in}}%
\pgfusepath{clip}%
\pgfsetbuttcap%
\pgfsetroundjoin%
\definecolor{currentfill}{rgb}{0.007843,0.619608,0.447059}%
\pgfsetfillcolor{currentfill}%
\pgfsetfillopacity{0.200000}%
\pgfsetlinewidth{1.003750pt}%
\definecolor{currentstroke}{rgb}{0.007843,0.619608,0.447059}%
\pgfsetstrokecolor{currentstroke}%
\pgfsetstrokeopacity{0.200000}%
\pgfsetdash{}{0pt}%
\pgfsys@defobject{currentmarker}{\pgfqpoint{0.564460in}{0.466738in}}{\pgfqpoint{0.788045in}{0.726641in}}{%
\pgfpathmoveto{\pgfqpoint{0.788045in}{0.466738in}}%
\pgfpathlineto{\pgfqpoint{0.564460in}{0.466738in}}%
\pgfpathlineto{\pgfqpoint{0.564460in}{0.726641in}}%
\pgfpathlineto{\pgfqpoint{0.788045in}{0.726641in}}%
\pgfpathlineto{\pgfqpoint{0.788045in}{0.726641in}}%
\pgfpathlineto{\pgfqpoint{0.788045in}{0.466738in}}%
\pgfpathlineto{\pgfqpoint{0.788045in}{0.466738in}}%
\pgfpathclose%
\pgfusepath{stroke,fill}%
}%
\begin{pgfscope}%
\pgfsys@transformshift{0.000000in}{0.000000in}%
\pgfsys@useobject{currentmarker}{}%
\end{pgfscope}%
\end{pgfscope}%
\begin{pgfscope}%
\pgfpathrectangle{\pgfqpoint{0.519743in}{0.414757in}}{\pgfqpoint{0.983773in}{1.143573in}}%
\pgfusepath{clip}%
\pgfsetrectcap%
\pgfsetroundjoin%
\pgfsetlinewidth{0.803000pt}%
\definecolor{currentstroke}{rgb}{0.690196,0.690196,0.690196}%
\pgfsetstrokecolor{currentstroke}%
\pgfsetdash{}{0pt}%
\pgfpathmoveto{\pgfqpoint{0.564460in}{0.414757in}}%
\pgfpathlineto{\pgfqpoint{0.564460in}{1.558330in}}%
\pgfusepath{stroke}%
\end{pgfscope}%
\begin{pgfscope}%
\pgfsetbuttcap%
\pgfsetroundjoin%
\definecolor{currentfill}{rgb}{0.000000,0.000000,0.000000}%
\pgfsetfillcolor{currentfill}%
\pgfsetlinewidth{0.803000pt}%
\definecolor{currentstroke}{rgb}{0.000000,0.000000,0.000000}%
\pgfsetstrokecolor{currentstroke}%
\pgfsetdash{}{0pt}%
\pgfsys@defobject{currentmarker}{\pgfqpoint{0.000000in}{-0.048611in}}{\pgfqpoint{0.000000in}{0.000000in}}{%
\pgfpathmoveto{\pgfqpoint{0.000000in}{0.000000in}}%
\pgfpathlineto{\pgfqpoint{0.000000in}{-0.048611in}}%
\pgfusepath{stroke,fill}%
}%
\begin{pgfscope}%
\pgfsys@transformshift{0.564460in}{0.414757in}%
\pgfsys@useobject{currentmarker}{}%
\end{pgfscope}%
\end{pgfscope}%
\begin{pgfscope}%
\definecolor{textcolor}{rgb}{0.000000,0.000000,0.000000}%
\pgfsetstrokecolor{textcolor}%
\pgfsetfillcolor{textcolor}%
\pgftext[x=0.564460in,y=0.317535in,,top]{\color{textcolor}{\rmfamily\fontsize{7.000000}{8.400000}\selectfont\catcode`\^=\active\def^{\ifmmode\sp\else\^{}\fi}\catcode`\%=\active\def
\end{pgfscope}%
\begin{pgfscope}%
\pgfpathrectangle{\pgfqpoint{0.519743in}{0.414757in}}{\pgfqpoint{0.983773in}{1.143573in}}%
\pgfusepath{clip}%
\pgfsetrectcap%
\pgfsetroundjoin%
\pgfsetlinewidth{0.803000pt}%
\definecolor{currentstroke}{rgb}{0.690196,0.690196,0.690196}%
\pgfsetstrokecolor{currentstroke}%
\pgfsetdash{}{0pt}%
\pgfpathmoveto{\pgfqpoint{0.788045in}{0.414757in}}%
\pgfpathlineto{\pgfqpoint{0.788045in}{1.558330in}}%
\pgfusepath{stroke}%
\end{pgfscope}%
\begin{pgfscope}%
\pgfsetbuttcap%
\pgfsetroundjoin%
\definecolor{currentfill}{rgb}{0.000000,0.000000,0.000000}%
\pgfsetfillcolor{currentfill}%
\pgfsetlinewidth{0.803000pt}%
\definecolor{currentstroke}{rgb}{0.000000,0.000000,0.000000}%
\pgfsetstrokecolor{currentstroke}%
\pgfsetdash{}{0pt}%
\pgfsys@defobject{currentmarker}{\pgfqpoint{0.000000in}{-0.048611in}}{\pgfqpoint{0.000000in}{0.000000in}}{%
\pgfpathmoveto{\pgfqpoint{0.000000in}{0.000000in}}%
\pgfpathlineto{\pgfqpoint{0.000000in}{-0.048611in}}%
\pgfusepath{stroke,fill}%
}%
\begin{pgfscope}%
\pgfsys@transformshift{0.788045in}{0.414757in}%
\pgfsys@useobject{currentmarker}{}%
\end{pgfscope}%
\end{pgfscope}%
\begin{pgfscope}%
\definecolor{textcolor}{rgb}{0.000000,0.000000,0.000000}%
\pgfsetstrokecolor{textcolor}%
\pgfsetfillcolor{textcolor}%
\pgftext[x=0.788045in,y=0.317535in,,top]{\color{textcolor}{\rmfamily\fontsize{7.000000}{8.400000}\selectfont\catcode`\^=\active\def^{\ifmmode\sp\else\^{}\fi}\catcode`\%=\active\def
\end{pgfscope}%
\begin{pgfscope}%
\pgfpathrectangle{\pgfqpoint{0.519743in}{0.414757in}}{\pgfqpoint{0.983773in}{1.143573in}}%
\pgfusepath{clip}%
\pgfsetrectcap%
\pgfsetroundjoin%
\pgfsetlinewidth{0.803000pt}%
\definecolor{currentstroke}{rgb}{0.690196,0.690196,0.690196}%
\pgfsetstrokecolor{currentstroke}%
\pgfsetdash{}{0pt}%
\pgfpathmoveto{\pgfqpoint{1.011630in}{0.414757in}}%
\pgfpathlineto{\pgfqpoint{1.011630in}{1.558330in}}%
\pgfusepath{stroke}%
\end{pgfscope}%
\begin{pgfscope}%
\pgfsetbuttcap%
\pgfsetroundjoin%
\definecolor{currentfill}{rgb}{0.000000,0.000000,0.000000}%
\pgfsetfillcolor{currentfill}%
\pgfsetlinewidth{0.803000pt}%
\definecolor{currentstroke}{rgb}{0.000000,0.000000,0.000000}%
\pgfsetstrokecolor{currentstroke}%
\pgfsetdash{}{0pt}%
\pgfsys@defobject{currentmarker}{\pgfqpoint{0.000000in}{-0.048611in}}{\pgfqpoint{0.000000in}{0.000000in}}{%
\pgfpathmoveto{\pgfqpoint{0.000000in}{0.000000in}}%
\pgfpathlineto{\pgfqpoint{0.000000in}{-0.048611in}}%
\pgfusepath{stroke,fill}%
}%
\begin{pgfscope}%
\pgfsys@transformshift{1.011630in}{0.414757in}%
\pgfsys@useobject{currentmarker}{}%
\end{pgfscope}%
\end{pgfscope}%
\begin{pgfscope}%
\definecolor{textcolor}{rgb}{0.000000,0.000000,0.000000}%
\pgfsetstrokecolor{textcolor}%
\pgfsetfillcolor{textcolor}%
\pgftext[x=1.011630in,y=0.317535in,,top]{\color{textcolor}{\rmfamily\fontsize{7.000000}{8.400000}\selectfont\catcode`\^=\active\def^{\ifmmode\sp\else\^{}\fi}\catcode`\%=\active\def
\end{pgfscope}%
\begin{pgfscope}%
\pgfpathrectangle{\pgfqpoint{0.519743in}{0.414757in}}{\pgfqpoint{0.983773in}{1.143573in}}%
\pgfusepath{clip}%
\pgfsetrectcap%
\pgfsetroundjoin%
\pgfsetlinewidth{0.803000pt}%
\definecolor{currentstroke}{rgb}{0.690196,0.690196,0.690196}%
\pgfsetstrokecolor{currentstroke}%
\pgfsetdash{}{0pt}%
\pgfpathmoveto{\pgfqpoint{1.235215in}{0.414757in}}%
\pgfpathlineto{\pgfqpoint{1.235215in}{1.558330in}}%
\pgfusepath{stroke}%
\end{pgfscope}%
\begin{pgfscope}%
\pgfsetbuttcap%
\pgfsetroundjoin%
\definecolor{currentfill}{rgb}{0.000000,0.000000,0.000000}%
\pgfsetfillcolor{currentfill}%
\pgfsetlinewidth{0.803000pt}%
\definecolor{currentstroke}{rgb}{0.000000,0.000000,0.000000}%
\pgfsetstrokecolor{currentstroke}%
\pgfsetdash{}{0pt}%
\pgfsys@defobject{currentmarker}{\pgfqpoint{0.000000in}{-0.048611in}}{\pgfqpoint{0.000000in}{0.000000in}}{%
\pgfpathmoveto{\pgfqpoint{0.000000in}{0.000000in}}%
\pgfpathlineto{\pgfqpoint{0.000000in}{-0.048611in}}%
\pgfusepath{stroke,fill}%
}%
\begin{pgfscope}%
\pgfsys@transformshift{1.235215in}{0.414757in}%
\pgfsys@useobject{currentmarker}{}%
\end{pgfscope}%
\end{pgfscope}%
\begin{pgfscope}%
\definecolor{textcolor}{rgb}{0.000000,0.000000,0.000000}%
\pgfsetstrokecolor{textcolor}%
\pgfsetfillcolor{textcolor}%
\pgftext[x=1.235215in,y=0.317535in,,top]{\color{textcolor}{\rmfamily\fontsize{7.000000}{8.400000}\selectfont\catcode`\^=\active\def^{\ifmmode\sp\else\^{}\fi}\catcode`\%=\active\def
\end{pgfscope}%
\begin{pgfscope}%
\pgfpathrectangle{\pgfqpoint{0.519743in}{0.414757in}}{\pgfqpoint{0.983773in}{1.143573in}}%
\pgfusepath{clip}%
\pgfsetrectcap%
\pgfsetroundjoin%
\pgfsetlinewidth{0.803000pt}%
\definecolor{currentstroke}{rgb}{0.690196,0.690196,0.690196}%
\pgfsetstrokecolor{currentstroke}%
\pgfsetdash{}{0pt}%
\pgfpathmoveto{\pgfqpoint{1.458799in}{0.414757in}}%
\pgfpathlineto{\pgfqpoint{1.458799in}{1.558330in}}%
\pgfusepath{stroke}%
\end{pgfscope}%
\begin{pgfscope}%
\pgfsetbuttcap%
\pgfsetroundjoin%
\definecolor{currentfill}{rgb}{0.000000,0.000000,0.000000}%
\pgfsetfillcolor{currentfill}%
\pgfsetlinewidth{0.803000pt}%
\definecolor{currentstroke}{rgb}{0.000000,0.000000,0.000000}%
\pgfsetstrokecolor{currentstroke}%
\pgfsetdash{}{0pt}%
\pgfsys@defobject{currentmarker}{\pgfqpoint{0.000000in}{-0.048611in}}{\pgfqpoint{0.000000in}{0.000000in}}{%
\pgfpathmoveto{\pgfqpoint{0.000000in}{0.000000in}}%
\pgfpathlineto{\pgfqpoint{0.000000in}{-0.048611in}}%
\pgfusepath{stroke,fill}%
}%
\begin{pgfscope}%
\pgfsys@transformshift{1.458799in}{0.414757in}%
\pgfsys@useobject{currentmarker}{}%
\end{pgfscope}%
\end{pgfscope}%
\begin{pgfscope}%
\definecolor{textcolor}{rgb}{0.000000,0.000000,0.000000}%
\pgfsetstrokecolor{textcolor}%
\pgfsetfillcolor{textcolor}%
\pgftext[x=1.458799in,y=0.317535in,,top]{\color{textcolor}{\rmfamily\fontsize{7.000000}{8.400000}\selectfont\catcode`\^=\active\def^{\ifmmode\sp\else\^{}\fi}\catcode`\%=\active\def
\end{pgfscope}%
\begin{pgfscope}%
\definecolor{textcolor}{rgb}{0.000000,0.000000,0.000000}%
\pgfsetstrokecolor{textcolor}%
\pgfsetfillcolor{textcolor}%
\pgftext[x=1.011630in,y=0.167891in,,top]{\color{textcolor}{\rmfamily\fontsize{9.000000}{10.800000}\selectfont\catcode`\^=\active\def^{\ifmmode\sp\else\^{}\fi}\catcode`\%=\active\def
\end{pgfscope}%
\begin{pgfscope}%
\pgfpathrectangle{\pgfqpoint{0.519743in}{0.414757in}}{\pgfqpoint{0.983773in}{1.143573in}}%
\pgfusepath{clip}%
\pgfsetrectcap%
\pgfsetroundjoin%
\pgfsetlinewidth{0.803000pt}%
\definecolor{currentstroke}{rgb}{0.690196,0.690196,0.690196}%
\pgfsetstrokecolor{currentstroke}%
\pgfsetdash{}{0pt}%
\pgfpathmoveto{\pgfqpoint{0.519743in}{0.466738in}}%
\pgfpathlineto{\pgfqpoint{1.503516in}{0.466738in}}%
\pgfusepath{stroke}%
\end{pgfscope}%
\begin{pgfscope}%
\pgfsetbuttcap%
\pgfsetroundjoin%
\definecolor{currentfill}{rgb}{0.000000,0.000000,0.000000}%
\pgfsetfillcolor{currentfill}%
\pgfsetlinewidth{0.803000pt}%
\definecolor{currentstroke}{rgb}{0.000000,0.000000,0.000000}%
\pgfsetstrokecolor{currentstroke}%
\pgfsetdash{}{0pt}%
\pgfsys@defobject{currentmarker}{\pgfqpoint{-0.048611in}{0.000000in}}{\pgfqpoint{-0.000000in}{0.000000in}}{%
\pgfpathmoveto{\pgfqpoint{-0.000000in}{0.000000in}}%
\pgfpathlineto{\pgfqpoint{-0.048611in}{0.000000in}}%
\pgfusepath{stroke,fill}%
}%
\begin{pgfscope}%
\pgfsys@transformshift{0.519743in}{0.466738in}%
\pgfsys@useobject{currentmarker}{}%
\end{pgfscope}%
\end{pgfscope}%
\begin{pgfscope}%
\definecolor{textcolor}{rgb}{0.000000,0.000000,0.000000}%
\pgfsetstrokecolor{textcolor}%
\pgfsetfillcolor{textcolor}%
\pgftext[x=0.223446in, y=0.429805in, left, base]{\color{textcolor}{\rmfamily\fontsize{7.000000}{8.400000}\selectfont\catcode`\^=\active\def^{\ifmmode\sp\else\^{}\fi}\catcode`\%=\active\def
\end{pgfscope}%
\begin{pgfscope}%
\pgfpathrectangle{\pgfqpoint{0.519743in}{0.414757in}}{\pgfqpoint{0.983773in}{1.143573in}}%
\pgfusepath{clip}%
\pgfsetrectcap%
\pgfsetroundjoin%
\pgfsetlinewidth{0.803000pt}%
\definecolor{currentstroke}{rgb}{0.690196,0.690196,0.690196}%
\pgfsetstrokecolor{currentstroke}%
\pgfsetdash{}{0pt}%
\pgfpathmoveto{\pgfqpoint{0.519743in}{0.726641in}}%
\pgfpathlineto{\pgfqpoint{1.503516in}{0.726641in}}%
\pgfusepath{stroke}%
\end{pgfscope}%
\begin{pgfscope}%
\pgfsetbuttcap%
\pgfsetroundjoin%
\definecolor{currentfill}{rgb}{0.000000,0.000000,0.000000}%
\pgfsetfillcolor{currentfill}%
\pgfsetlinewidth{0.803000pt}%
\definecolor{currentstroke}{rgb}{0.000000,0.000000,0.000000}%
\pgfsetstrokecolor{currentstroke}%
\pgfsetdash{}{0pt}%
\pgfsys@defobject{currentmarker}{\pgfqpoint{-0.048611in}{0.000000in}}{\pgfqpoint{-0.000000in}{0.000000in}}{%
\pgfpathmoveto{\pgfqpoint{-0.000000in}{0.000000in}}%
\pgfpathlineto{\pgfqpoint{-0.048611in}{0.000000in}}%
\pgfusepath{stroke,fill}%
}%
\begin{pgfscope}%
\pgfsys@transformshift{0.519743in}{0.726641in}%
\pgfsys@useobject{currentmarker}{}%
\end{pgfscope}%
\end{pgfscope}%
\begin{pgfscope}%
\definecolor{textcolor}{rgb}{0.000000,0.000000,0.000000}%
\pgfsetstrokecolor{textcolor}%
\pgfsetfillcolor{textcolor}%
\pgftext[x=0.223446in, y=0.689708in, left, base]{\color{textcolor}{\rmfamily\fontsize{7.000000}{8.400000}\selectfont\catcode`\^=\active\def^{\ifmmode\sp\else\^{}\fi}\catcode`\%=\active\def
\end{pgfscope}%
\begin{pgfscope}%
\pgfpathrectangle{\pgfqpoint{0.519743in}{0.414757in}}{\pgfqpoint{0.983773in}{1.143573in}}%
\pgfusepath{clip}%
\pgfsetrectcap%
\pgfsetroundjoin%
\pgfsetlinewidth{0.803000pt}%
\definecolor{currentstroke}{rgb}{0.690196,0.690196,0.690196}%
\pgfsetstrokecolor{currentstroke}%
\pgfsetdash{}{0pt}%
\pgfpathmoveto{\pgfqpoint{0.519743in}{0.986544in}}%
\pgfpathlineto{\pgfqpoint{1.503516in}{0.986544in}}%
\pgfusepath{stroke}%
\end{pgfscope}%
\begin{pgfscope}%
\pgfsetbuttcap%
\pgfsetroundjoin%
\definecolor{currentfill}{rgb}{0.000000,0.000000,0.000000}%
\pgfsetfillcolor{currentfill}%
\pgfsetlinewidth{0.803000pt}%
\definecolor{currentstroke}{rgb}{0.000000,0.000000,0.000000}%
\pgfsetstrokecolor{currentstroke}%
\pgfsetdash{}{0pt}%
\pgfsys@defobject{currentmarker}{\pgfqpoint{-0.048611in}{0.000000in}}{\pgfqpoint{-0.000000in}{0.000000in}}{%
\pgfpathmoveto{\pgfqpoint{-0.000000in}{0.000000in}}%
\pgfpathlineto{\pgfqpoint{-0.048611in}{0.000000in}}%
\pgfusepath{stroke,fill}%
}%
\begin{pgfscope}%
\pgfsys@transformshift{0.519743in}{0.986544in}%
\pgfsys@useobject{currentmarker}{}%
\end{pgfscope}%
\end{pgfscope}%
\begin{pgfscope}%
\definecolor{textcolor}{rgb}{0.000000,0.000000,0.000000}%
\pgfsetstrokecolor{textcolor}%
\pgfsetfillcolor{textcolor}%
\pgftext[x=0.223446in, y=0.949611in, left, base]{\color{textcolor}{\rmfamily\fontsize{7.000000}{8.400000}\selectfont\catcode`\^=\active\def^{\ifmmode\sp\else\^{}\fi}\catcode`\%=\active\def
\end{pgfscope}%
\begin{pgfscope}%
\pgfpathrectangle{\pgfqpoint{0.519743in}{0.414757in}}{\pgfqpoint{0.983773in}{1.143573in}}%
\pgfusepath{clip}%
\pgfsetrectcap%
\pgfsetroundjoin%
\pgfsetlinewidth{0.803000pt}%
\definecolor{currentstroke}{rgb}{0.690196,0.690196,0.690196}%
\pgfsetstrokecolor{currentstroke}%
\pgfsetdash{}{0pt}%
\pgfpathmoveto{\pgfqpoint{0.519743in}{1.246447in}}%
\pgfpathlineto{\pgfqpoint{1.503516in}{1.246447in}}%
\pgfusepath{stroke}%
\end{pgfscope}%
\begin{pgfscope}%
\pgfsetbuttcap%
\pgfsetroundjoin%
\definecolor{currentfill}{rgb}{0.000000,0.000000,0.000000}%
\pgfsetfillcolor{currentfill}%
\pgfsetlinewidth{0.803000pt}%
\definecolor{currentstroke}{rgb}{0.000000,0.000000,0.000000}%
\pgfsetstrokecolor{currentstroke}%
\pgfsetdash{}{0pt}%
\pgfsys@defobject{currentmarker}{\pgfqpoint{-0.048611in}{0.000000in}}{\pgfqpoint{-0.000000in}{0.000000in}}{%
\pgfpathmoveto{\pgfqpoint{-0.000000in}{0.000000in}}%
\pgfpathlineto{\pgfqpoint{-0.048611in}{0.000000in}}%
\pgfusepath{stroke,fill}%
}%
\begin{pgfscope}%
\pgfsys@transformshift{0.519743in}{1.246447in}%
\pgfsys@useobject{currentmarker}{}%
\end{pgfscope}%
\end{pgfscope}%
\begin{pgfscope}%
\definecolor{textcolor}{rgb}{0.000000,0.000000,0.000000}%
\pgfsetstrokecolor{textcolor}%
\pgfsetfillcolor{textcolor}%
\pgftext[x=0.223446in, y=1.209513in, left, base]{\color{textcolor}{\rmfamily\fontsize{7.000000}{8.400000}\selectfont\catcode`\^=\active\def^{\ifmmode\sp\else\^{}\fi}\catcode`\%=\active\def
\end{pgfscope}%
\begin{pgfscope}%
\pgfpathrectangle{\pgfqpoint{0.519743in}{0.414757in}}{\pgfqpoint{0.983773in}{1.143573in}}%
\pgfusepath{clip}%
\pgfsetrectcap%
\pgfsetroundjoin%
\pgfsetlinewidth{0.803000pt}%
\definecolor{currentstroke}{rgb}{0.690196,0.690196,0.690196}%
\pgfsetstrokecolor{currentstroke}%
\pgfsetdash{}{0pt}%
\pgfpathmoveto{\pgfqpoint{0.519743in}{1.506349in}}%
\pgfpathlineto{\pgfqpoint{1.503516in}{1.506349in}}%
\pgfusepath{stroke}%
\end{pgfscope}%
\begin{pgfscope}%
\pgfsetbuttcap%
\pgfsetroundjoin%
\definecolor{currentfill}{rgb}{0.000000,0.000000,0.000000}%
\pgfsetfillcolor{currentfill}%
\pgfsetlinewidth{0.803000pt}%
\definecolor{currentstroke}{rgb}{0.000000,0.000000,0.000000}%
\pgfsetstrokecolor{currentstroke}%
\pgfsetdash{}{0pt}%
\pgfsys@defobject{currentmarker}{\pgfqpoint{-0.048611in}{0.000000in}}{\pgfqpoint{-0.000000in}{0.000000in}}{%
\pgfpathmoveto{\pgfqpoint{-0.000000in}{0.000000in}}%
\pgfpathlineto{\pgfqpoint{-0.048611in}{0.000000in}}%
\pgfusepath{stroke,fill}%
}%
\begin{pgfscope}%
\pgfsys@transformshift{0.519743in}{1.506349in}%
\pgfsys@useobject{currentmarker}{}%
\end{pgfscope}%
\end{pgfscope}%
\begin{pgfscope}%
\definecolor{textcolor}{rgb}{0.000000,0.000000,0.000000}%
\pgfsetstrokecolor{textcolor}%
\pgfsetfillcolor{textcolor}%
\pgftext[x=0.223446in, y=1.469416in, left, base]{\color{textcolor}{\rmfamily\fontsize{7.000000}{8.400000}\selectfont\catcode`\^=\active\def^{\ifmmode\sp\else\^{}\fi}\catcode`\%=\active\def
\end{pgfscope}%
\begin{pgfscope}%
\definecolor{textcolor}{rgb}{0.000000,0.000000,0.000000}%
\pgfsetstrokecolor{textcolor}%
\pgfsetfillcolor{textcolor}%
\pgftext[x=0.167891in,y=0.986544in,,bottom,rotate=90.000000]{\color{textcolor}{\rmfamily\fontsize{9.000000}{10.800000}\selectfont\catcode`\^=\active\def^{\ifmmode\sp\else\^{}\fi}\catcode`\%=\active\def
\end{pgfscope}%
\begin{pgfscope}%
\pgfpathrectangle{\pgfqpoint{0.519743in}{0.414757in}}{\pgfqpoint{0.983773in}{1.143573in}}%
\pgfusepath{clip}%
\pgfsetrectcap%
\pgfsetroundjoin%
\pgfsetlinewidth{1.505625pt}%
\definecolor{currentstroke}{rgb}{0.003922,0.450980,0.698039}%
\pgfsetstrokecolor{currentstroke}%
\pgfsetstrokeopacity{0.200000}%
\pgfsetdash{}{0pt}%
\pgfpathmoveto{\pgfqpoint{0.640479in}{0.469800in}}%
\pgfpathlineto{\pgfqpoint{0.698611in}{0.475183in}}%
\pgfpathlineto{\pgfqpoint{0.743328in}{0.481240in}}%
\pgfpathlineto{\pgfqpoint{0.788045in}{0.488731in}}%
\pgfpathlineto{\pgfqpoint{0.832762in}{0.497553in}}%
\pgfpathlineto{\pgfqpoint{0.877479in}{0.507602in}}%
\pgfpathlineto{\pgfqpoint{0.922196in}{0.518773in}}%
\pgfpathlineto{\pgfqpoint{0.966913in}{0.530959in}}%
\pgfpathlineto{\pgfqpoint{1.011630in}{0.544052in}}%
\pgfpathlineto{\pgfqpoint{1.056347in}{0.557943in}}%
\pgfpathlineto{\pgfqpoint{1.101064in}{0.572521in}}%
\pgfpathlineto{\pgfqpoint{1.145781in}{0.587677in}}%
\pgfpathlineto{\pgfqpoint{1.190498in}{0.603299in}}%
\pgfpathlineto{\pgfqpoint{1.235215in}{0.619272in}}%
\pgfpathlineto{\pgfqpoint{1.279932in}{0.635468in}}%
\pgfpathlineto{\pgfqpoint{1.324649in}{0.651731in}}%
\pgfpathlineto{\pgfqpoint{1.369365in}{0.667823in}}%
\pgfpathlineto{\pgfqpoint{1.414082in}{0.683166in}}%
\pgfpathlineto{\pgfqpoint{1.458799in}{0.637580in}}%
\pgfusepath{stroke}%
\end{pgfscope}%
\begin{pgfscope}%
\pgfpathrectangle{\pgfqpoint{0.519743in}{0.414757in}}{\pgfqpoint{0.983773in}{1.143573in}}%
\pgfusepath{clip}%
\pgfsetrectcap%
\pgfsetroundjoin%
\pgfsetlinewidth{1.505625pt}%
\definecolor{currentstroke}{rgb}{0.003922,0.450980,0.698039}%
\pgfsetstrokecolor{currentstroke}%
\pgfsetstrokeopacity{0.200000}%
\pgfsetdash{}{0pt}%
\pgfpathmoveto{\pgfqpoint{0.644312in}{0.469132in}}%
\pgfpathlineto{\pgfqpoint{0.698611in}{0.473285in}}%
\pgfpathlineto{\pgfqpoint{0.743328in}{0.478489in}}%
\pgfpathlineto{\pgfqpoint{0.788045in}{0.485180in}}%
\pgfpathlineto{\pgfqpoint{0.832762in}{0.493311in}}%
\pgfpathlineto{\pgfqpoint{0.877479in}{0.502817in}}%
\pgfpathlineto{\pgfqpoint{0.922196in}{0.513622in}}%
\pgfpathlineto{\pgfqpoint{0.966913in}{0.525634in}}%
\pgfpathlineto{\pgfqpoint{1.011630in}{0.538754in}}%
\pgfpathlineto{\pgfqpoint{1.056347in}{0.552872in}}%
\pgfpathlineto{\pgfqpoint{1.101064in}{0.567874in}}%
\pgfpathlineto{\pgfqpoint{1.145781in}{0.583637in}}%
\pgfpathlineto{\pgfqpoint{1.190498in}{0.600031in}}%
\pgfpathlineto{\pgfqpoint{1.235215in}{0.616917in}}%
\pgfpathlineto{\pgfqpoint{1.324649in}{0.651497in}}%
\pgfpathlineto{\pgfqpoint{1.369365in}{0.668685in}}%
\pgfpathlineto{\pgfqpoint{1.414082in}{0.684954in}}%
\pgfpathlineto{\pgfqpoint{1.458799in}{0.624913in}}%
\pgfusepath{stroke}%
\end{pgfscope}%
\begin{pgfscope}%
\pgfpathrectangle{\pgfqpoint{0.519743in}{0.414757in}}{\pgfqpoint{0.983773in}{1.143573in}}%
\pgfusepath{clip}%
\pgfsetrectcap%
\pgfsetroundjoin%
\pgfsetlinewidth{1.505625pt}%
\definecolor{currentstroke}{rgb}{0.003922,0.450980,0.698039}%
\pgfsetstrokecolor{currentstroke}%
\pgfsetstrokeopacity{0.200000}%
\pgfsetdash{}{0pt}%
\pgfpathmoveto{\pgfqpoint{0.638989in}{0.469853in}}%
\pgfpathlineto{\pgfqpoint{0.698611in}{0.475575in}}%
\pgfpathlineto{\pgfqpoint{0.743328in}{0.481881in}}%
\pgfpathlineto{\pgfqpoint{0.788045in}{0.489658in}}%
\pgfpathlineto{\pgfqpoint{0.832762in}{0.498794in}}%
\pgfpathlineto{\pgfqpoint{0.877479in}{0.509176in}}%
\pgfpathlineto{\pgfqpoint{0.922196in}{0.520688in}}%
\pgfpathlineto{\pgfqpoint{0.966913in}{0.533213in}}%
\pgfpathlineto{\pgfqpoint{1.011630in}{0.546636in}}%
\pgfpathlineto{\pgfqpoint{1.056347in}{0.560835in}}%
\pgfpathlineto{\pgfqpoint{1.101064in}{0.575694in}}%
\pgfpathlineto{\pgfqpoint{1.145781in}{0.591090in}}%
\pgfpathlineto{\pgfqpoint{1.190498in}{0.606901in}}%
\pgfpathlineto{\pgfqpoint{1.235215in}{0.622996in}}%
\pgfpathlineto{\pgfqpoint{1.279932in}{0.639225in}}%
\pgfpathlineto{\pgfqpoint{1.324649in}{0.655396in}}%
\pgfpathlineto{\pgfqpoint{1.369365in}{0.671192in}}%
\pgfpathlineto{\pgfqpoint{1.414082in}{0.685795in}}%
\pgfpathlineto{\pgfqpoint{1.458799in}{0.615220in}}%
\pgfusepath{stroke}%
\end{pgfscope}%
\begin{pgfscope}%
\pgfpathrectangle{\pgfqpoint{0.519743in}{0.414757in}}{\pgfqpoint{0.983773in}{1.143573in}}%
\pgfusepath{clip}%
\pgfsetrectcap%
\pgfsetroundjoin%
\pgfsetlinewidth{1.505625pt}%
\definecolor{currentstroke}{rgb}{0.003922,0.450980,0.698039}%
\pgfsetstrokecolor{currentstroke}%
\pgfsetstrokeopacity{0.200000}%
\pgfsetdash{}{0pt}%
\pgfpathmoveto{\pgfqpoint{0.646441in}{0.470172in}}%
\pgfpathlineto{\pgfqpoint{0.698611in}{0.475203in}}%
\pgfpathlineto{\pgfqpoint{0.743328in}{0.481328in}}%
\pgfpathlineto{\pgfqpoint{0.788045in}{0.488925in}}%
\pgfpathlineto{\pgfqpoint{0.832762in}{0.497891in}}%
\pgfpathlineto{\pgfqpoint{0.877479in}{0.508124in}}%
\pgfpathlineto{\pgfqpoint{0.922196in}{0.519515in}}%
\pgfpathlineto{\pgfqpoint{0.966913in}{0.531955in}}%
\pgfpathlineto{\pgfqpoint{1.011630in}{0.545331in}}%
\pgfpathlineto{\pgfqpoint{1.056347in}{0.559530in}}%
\pgfpathlineto{\pgfqpoint{1.101064in}{0.574439in}}%
\pgfpathlineto{\pgfqpoint{1.145781in}{0.589942in}}%
\pgfpathlineto{\pgfqpoint{1.190498in}{0.605924in}}%
\pgfpathlineto{\pgfqpoint{1.235215in}{0.622263in}}%
\pgfpathlineto{\pgfqpoint{1.279932in}{0.638828in}}%
\pgfpathlineto{\pgfqpoint{1.324649in}{0.655456in}}%
\pgfpathlineto{\pgfqpoint{1.369365in}{0.671903in}}%
\pgfpathlineto{\pgfqpoint{1.414082in}{0.687575in}}%
\pgfpathlineto{\pgfqpoint{1.458799in}{0.640740in}}%
\pgfusepath{stroke}%
\end{pgfscope}%
\begin{pgfscope}%
\pgfpathrectangle{\pgfqpoint{0.519743in}{0.414757in}}{\pgfqpoint{0.983773in}{1.143573in}}%
\pgfusepath{clip}%
\pgfsetrectcap%
\pgfsetroundjoin%
\pgfsetlinewidth{1.505625pt}%
\definecolor{currentstroke}{rgb}{0.003922,0.450980,0.698039}%
\pgfsetstrokecolor{currentstroke}%
\pgfsetstrokeopacity{0.200000}%
\pgfsetdash{}{0pt}%
\pgfpathmoveto{\pgfqpoint{0.642715in}{0.470007in}}%
\pgfpathlineto{\pgfqpoint{0.698611in}{0.475351in}}%
\pgfpathlineto{\pgfqpoint{0.743328in}{0.481531in}}%
\pgfpathlineto{\pgfqpoint{0.788045in}{0.489169in}}%
\pgfpathlineto{\pgfqpoint{0.832762in}{0.498161in}}%
\pgfpathlineto{\pgfqpoint{0.877479in}{0.508399in}}%
\pgfpathlineto{\pgfqpoint{0.922196in}{0.519772in}}%
\pgfpathlineto{\pgfqpoint{0.966913in}{0.532169in}}%
\pgfpathlineto{\pgfqpoint{1.011630in}{0.545476in}}%
\pgfpathlineto{\pgfqpoint{1.056347in}{0.559579in}}%
\pgfpathlineto{\pgfqpoint{1.101064in}{0.574363in}}%
\pgfpathlineto{\pgfqpoint{1.145781in}{0.589714in}}%
\pgfpathlineto{\pgfqpoint{1.190498in}{0.605512in}}%
\pgfpathlineto{\pgfqpoint{1.235215in}{0.621635in}}%
\pgfpathlineto{\pgfqpoint{1.279932in}{0.637944in}}%
\pgfpathlineto{\pgfqpoint{1.324649in}{0.654268in}}%
\pgfpathlineto{\pgfqpoint{1.369365in}{0.670331in}}%
\pgfpathlineto{\pgfqpoint{1.414082in}{0.685453in}}%
\pgfpathlineto{\pgfqpoint{1.458799in}{0.628694in}}%
\pgfusepath{stroke}%
\end{pgfscope}%
\begin{pgfscope}%
\pgfpathrectangle{\pgfqpoint{0.519743in}{0.414757in}}{\pgfqpoint{0.983773in}{1.143573in}}%
\pgfusepath{clip}%
\pgfsetbuttcap%
\pgfsetroundjoin%
\pgfsetlinewidth{1.505625pt}%
\definecolor{currentstroke}{rgb}{0.501961,0.501961,0.501961}%
\pgfsetstrokecolor{currentstroke}%
\pgfsetdash{{5.550000pt}{2.400000pt}}{0.000000pt}%
\pgfpathmoveto{\pgfqpoint{0.564460in}{0.726641in}}%
\pgfpathlineto{\pgfqpoint{1.458799in}{0.726641in}}%
\pgfusepath{stroke}%
\end{pgfscope}%
\begin{pgfscope}%
\pgfpathrectangle{\pgfqpoint{0.519743in}{0.414757in}}{\pgfqpoint{0.983773in}{1.143573in}}%
\pgfusepath{clip}%
\pgfsetbuttcap%
\pgfsetroundjoin%
\pgfsetlinewidth{1.505625pt}%
\definecolor{currentstroke}{rgb}{0.501961,0.501961,0.501961}%
\pgfsetstrokecolor{currentstroke}%
\pgfsetdash{{5.550000pt}{2.400000pt}}{0.000000pt}%
\pgfpathmoveto{\pgfqpoint{0.788045in}{0.466738in}}%
\pgfpathlineto{\pgfqpoint{0.788045in}{1.506349in}}%
\pgfusepath{stroke}%
\end{pgfscope}%
\begin{pgfscope}%
\pgfsetrectcap%
\pgfsetmiterjoin%
\pgfsetlinewidth{0.803000pt}%
\definecolor{currentstroke}{rgb}{0.000000,0.000000,0.000000}%
\pgfsetstrokecolor{currentstroke}%
\pgfsetdash{}{0pt}%
\pgfpathmoveto{\pgfqpoint{0.519743in}{0.414757in}}%
\pgfpathlineto{\pgfqpoint{0.519743in}{1.558330in}}%
\pgfusepath{stroke}%
\end{pgfscope}%
\begin{pgfscope}%
\pgfsetrectcap%
\pgfsetmiterjoin%
\pgfsetlinewidth{0.803000pt}%
\definecolor{currentstroke}{rgb}{0.000000,0.000000,0.000000}%
\pgfsetstrokecolor{currentstroke}%
\pgfsetdash{}{0pt}%
\pgfpathmoveto{\pgfqpoint{1.503516in}{0.414757in}}%
\pgfpathlineto{\pgfqpoint{1.503516in}{1.558330in}}%
\pgfusepath{stroke}%
\end{pgfscope}%
\begin{pgfscope}%
\pgfsetrectcap%
\pgfsetmiterjoin%
\pgfsetlinewidth{0.803000pt}%
\definecolor{currentstroke}{rgb}{0.000000,0.000000,0.000000}%
\pgfsetstrokecolor{currentstroke}%
\pgfsetdash{}{0pt}%
\pgfpathmoveto{\pgfqpoint{0.519743in}{0.414757in}}%
\pgfpathlineto{\pgfqpoint{1.503516in}{0.414757in}}%
\pgfusepath{stroke}%
\end{pgfscope}%
\begin{pgfscope}%
\pgfsetrectcap%
\pgfsetmiterjoin%
\pgfsetlinewidth{0.803000pt}%
\definecolor{currentstroke}{rgb}{0.000000,0.000000,0.000000}%
\pgfsetstrokecolor{currentstroke}%
\pgfsetdash{}{0pt}%
\pgfpathmoveto{\pgfqpoint{0.519743in}{1.558330in}}%
\pgfpathlineto{\pgfqpoint{1.503516in}{1.558330in}}%
\pgfusepath{stroke}%
\end{pgfscope}%
\begin{pgfscope}%
\pgfsetbuttcap%
\pgfsetmiterjoin%
\definecolor{currentfill}{rgb}{1.000000,1.000000,1.000000}%
\pgfsetfillcolor{currentfill}%
\pgfsetfillopacity{0.800000}%
\pgfsetlinewidth{1.003750pt}%
\definecolor{currentstroke}{rgb}{0.800000,0.800000,0.800000}%
\pgfsetstrokecolor{currentstroke}%
\pgfsetstrokeopacity{0.800000}%
\pgfsetdash{}{0pt}%
\pgfpathmoveto{\pgfqpoint{0.542878in}{1.008611in}}%
\pgfpathlineto{\pgfqpoint{1.537122in}{1.008611in}}%
\pgfpathquadraticcurveto{\pgfqpoint{1.556567in}{1.008611in}}{\pgfqpoint{1.556567in}{1.028056in}}%
\pgfpathlineto{\pgfqpoint{1.556567in}{1.446433in}}%
\pgfpathquadraticcurveto{\pgfqpoint{1.556567in}{1.465878in}}{\pgfqpoint{1.537122in}{1.465878in}}%
\pgfpathlineto{\pgfqpoint{0.542878in}{1.465878in}}%
\pgfpathquadraticcurveto{\pgfqpoint{0.523433in}{1.465878in}}{\pgfqpoint{0.523433in}{1.446433in}}%
\pgfpathlineto{\pgfqpoint{0.523433in}{1.028056in}}%
\pgfpathquadraticcurveto{\pgfqpoint{0.523433in}{1.008611in}}{\pgfqpoint{0.542878in}{1.008611in}}%
\pgfpathlineto{\pgfqpoint{0.542878in}{1.008611in}}%
\pgfpathclose%
\pgfusepath{stroke,fill}%
\end{pgfscope}%
\begin{pgfscope}%
\pgfsetbuttcap%
\pgfsetmiterjoin%
\definecolor{currentfill}{rgb}{0.007843,0.619608,0.447059}%
\pgfsetfillcolor{currentfill}%
\pgfsetfillopacity{0.850000}%
\pgfsetlinewidth{0.501875pt}%
\definecolor{currentstroke}{rgb}{0.000000,0.000000,0.000000}%
\pgfsetstrokecolor{currentstroke}%
\pgfsetstrokeopacity{0.850000}%
\pgfsetdash{}{0pt}%
\pgfpathmoveto{\pgfqpoint{0.562322in}{1.353123in}}%
\pgfpathlineto{\pgfqpoint{0.756766in}{1.353123in}}%
\pgfpathlineto{\pgfqpoint{0.756766in}{1.421178in}}%
\pgfpathlineto{\pgfqpoint{0.562322in}{1.421178in}}%
\pgfpathlineto{\pgfqpoint{0.562322in}{1.353123in}}%
\pgfpathclose%
\pgfusepath{stroke,fill}%
\end{pgfscope}%
\begin{pgfscope}%
\definecolor{textcolor}{rgb}{0.000000,0.000000,0.000000}%
\pgfsetstrokecolor{textcolor}%
\pgfsetfillcolor{textcolor}%
\pgftext[x=0.834544in,y=1.353123in,left,base]{\color{textcolor}{\rmfamily\fontsize{7.000000}{8.400000}\selectfont\catcode`\^=\active\def^{\ifmmode\sp\else\^{}\fi}\catcode`\%=\active\def
\end{pgfscope}%
\begin{pgfscope}%
\pgfsetbuttcap%
\pgfsetmiterjoin%
\definecolor{currentfill}{rgb}{0.870588,0.560784,0.011765}%
\pgfsetfillcolor{currentfill}%
\pgfsetfillopacity{0.850000}%
\pgfsetlinewidth{0.501875pt}%
\definecolor{currentstroke}{rgb}{0.000000,0.000000,0.000000}%
\pgfsetstrokecolor{currentstroke}%
\pgfsetstrokeopacity{0.850000}%
\pgfsetdash{}{0pt}%
\pgfpathmoveto{\pgfqpoint{0.562322in}{1.210423in}}%
\pgfpathlineto{\pgfqpoint{0.756766in}{1.210423in}}%
\pgfpathlineto{\pgfqpoint{0.756766in}{1.278478in}}%
\pgfpathlineto{\pgfqpoint{0.562322in}{1.278478in}}%
\pgfpathlineto{\pgfqpoint{0.562322in}{1.210423in}}%
\pgfpathclose%
\pgfusepath{stroke,fill}%
\end{pgfscope}%
\begin{pgfscope}%
\definecolor{textcolor}{rgb}{0.000000,0.000000,0.000000}%
\pgfsetstrokecolor{textcolor}%
\pgfsetfillcolor{textcolor}%
\pgftext[x=0.834544in,y=1.210423in,left,base]{\color{textcolor}{\rmfamily\fontsize{7.000000}{8.400000}\selectfont\catcode`\^=\active\def^{\ifmmode\sp\else\^{}\fi}\catcode`\%=\active\def
\end{pgfscope}%
\begin{pgfscope}%
\pgfsetbuttcap%
\pgfsetroundjoin%
\pgfsetlinewidth{0.803000pt}%
\definecolor{currentstroke}{rgb}{0.000000,0.000000,0.000000}%
\pgfsetstrokecolor{currentstroke}%
\pgfsetstrokeopacity{0.600000}%
\pgfsetdash{{2.960000pt}{1.280000pt}}{0.000000pt}%
\pgfpathmoveto{\pgfqpoint{0.562322in}{1.101751in}}%
\pgfpathlineto{\pgfqpoint{0.659544in}{1.101751in}}%
\pgfpathlineto{\pgfqpoint{0.756766in}{1.101751in}}%
\pgfusepath{stroke}%
\end{pgfscope}%
\begin{pgfscope}%
\definecolor{textcolor}{rgb}{0.000000,0.000000,0.000000}%
\pgfsetstrokecolor{textcolor}%
\pgfsetfillcolor{textcolor}%
\pgftext[x=0.834544in,y=1.067723in,left,base]{\color{textcolor}{\rmfamily\fontsize{7.000000}{8.400000}\selectfont\catcode`\^=\active\def^{\ifmmode\sp\else\^{}\fi}\catcode`\%=\active\def
\end{pgfscope}%
\end{pgfpicture}%
\makeatother%
\endgroup%

%% file: figures/calibration_maps/SIMPLE_QA_VERIFIED_Qwen3-30B-A3B-Instruct-2507.pgf
\begingroup%
\makeatletter%
\begin{pgfpicture}%
\pgfpathrectangle{\pgfpointorigin}{\pgfqpoint{1.600000in}{1.600000in}}%
\pgfusepath{use as bounding box, clip}%
\begin{pgfscope}%
\pgfsetbuttcap%
\pgfsetmiterjoin%
\definecolor{currentfill}{rgb}{1.000000,1.000000,1.000000}%
\pgfsetfillcolor{currentfill}%
\pgfsetlinewidth{0.000000pt}%
\definecolor{currentstroke}{rgb}{1.000000,1.000000,1.000000}%
\pgfsetstrokecolor{currentstroke}%
\pgfsetdash{}{0pt}%
\pgfpathmoveto{\pgfqpoint{0.000000in}{0.000000in}}%
\pgfpathlineto{\pgfqpoint{1.600000in}{0.000000in}}%
\pgfpathlineto{\pgfqpoint{1.600000in}{1.600000in}}%
\pgfpathlineto{\pgfqpoint{0.000000in}{1.600000in}}%
\pgfpathlineto{\pgfqpoint{0.000000in}{0.000000in}}%
\pgfpathclose%
\pgfusepath{fill}%
\end{pgfscope}%
\begin{pgfscope}%
\pgfsetbuttcap%
\pgfsetmiterjoin%
\definecolor{currentfill}{rgb}{1.000000,1.000000,1.000000}%
\pgfsetfillcolor{currentfill}%
\pgfsetlinewidth{0.000000pt}%
\definecolor{currentstroke}{rgb}{0.000000,0.000000,0.000000}%
\pgfsetstrokecolor{currentstroke}%
\pgfsetstrokeopacity{0.000000}%
\pgfsetdash{}{0pt}%
\pgfpathmoveto{\pgfqpoint{0.519743in}{0.414757in}}%
\pgfpathlineto{\pgfqpoint{1.503516in}{0.414757in}}%
\pgfpathlineto{\pgfqpoint{1.503516in}{1.558330in}}%
\pgfpathlineto{\pgfqpoint{0.519743in}{1.558330in}}%
\pgfpathlineto{\pgfqpoint{0.519743in}{0.414757in}}%
\pgfpathclose%
\pgfusepath{fill}%
\end{pgfscope}%
\begin{pgfscope}%
\pgfpathrectangle{\pgfqpoint{0.519743in}{0.414757in}}{\pgfqpoint{0.983773in}{1.143573in}}%
\pgfusepath{clip}%
\pgfsetbuttcap%
\pgfsetroundjoin%
\definecolor{currentfill}{rgb}{0.870588,0.560784,0.011765}%
\pgfsetfillcolor{currentfill}%
\pgfsetfillopacity{0.200000}%
\pgfsetlinewidth{1.003750pt}%
\definecolor{currentstroke}{rgb}{0.870588,0.560784,0.011765}%
\pgfsetstrokecolor{currentstroke}%
\pgfsetstrokeopacity{0.200000}%
\pgfsetdash{}{0pt}%
\pgfsys@defobject{currentmarker}{\pgfqpoint{0.564460in}{0.726641in}}{\pgfqpoint{0.788045in}{1.506349in}}{%
\pgfpathmoveto{\pgfqpoint{0.788045in}{0.726641in}}%
\pgfpathlineto{\pgfqpoint{0.564460in}{0.726641in}}%
\pgfpathlineto{\pgfqpoint{0.564460in}{1.506349in}}%
\pgfpathlineto{\pgfqpoint{0.788045in}{1.506349in}}%
\pgfpathlineto{\pgfqpoint{0.788045in}{1.506349in}}%
\pgfpathlineto{\pgfqpoint{0.788045in}{0.726641in}}%
\pgfpathlineto{\pgfqpoint{0.788045in}{0.726641in}}%
\pgfpathclose%
\pgfusepath{stroke,fill}%
}%
\begin{pgfscope}%
\pgfsys@transformshift{0.000000in}{0.000000in}%
\pgfsys@useobject{currentmarker}{}%
\end{pgfscope}%
\end{pgfscope}%
\begin{pgfscope}%
\pgfpathrectangle{\pgfqpoint{0.519743in}{0.414757in}}{\pgfqpoint{0.983773in}{1.143573in}}%
\pgfusepath{clip}%
\pgfsetbuttcap%
\pgfsetroundjoin%
\definecolor{currentfill}{rgb}{0.007843,0.619608,0.447059}%
\pgfsetfillcolor{currentfill}%
\pgfsetfillopacity{0.200000}%
\pgfsetlinewidth{1.003750pt}%
\definecolor{currentstroke}{rgb}{0.007843,0.619608,0.447059}%
\pgfsetstrokecolor{currentstroke}%
\pgfsetstrokeopacity{0.200000}%
\pgfsetdash{}{0pt}%
\pgfsys@defobject{currentmarker}{\pgfqpoint{0.788045in}{0.726641in}}{\pgfqpoint{1.458799in}{1.506349in}}{%
\pgfpathmoveto{\pgfqpoint{1.458799in}{0.726641in}}%
\pgfpathlineto{\pgfqpoint{0.788045in}{0.726641in}}%
\pgfpathlineto{\pgfqpoint{0.788045in}{1.506349in}}%
\pgfpathlineto{\pgfqpoint{1.458799in}{1.506349in}}%
\pgfpathlineto{\pgfqpoint{1.458799in}{1.506349in}}%
\pgfpathlineto{\pgfqpoint{1.458799in}{0.726641in}}%
\pgfpathlineto{\pgfqpoint{1.458799in}{0.726641in}}%
\pgfpathclose%
\pgfusepath{stroke,fill}%
}%
\begin{pgfscope}%
\pgfsys@transformshift{0.000000in}{0.000000in}%
\pgfsys@useobject{currentmarker}{}%
\end{pgfscope}%
\end{pgfscope}%
\begin{pgfscope}%
\pgfpathrectangle{\pgfqpoint{0.519743in}{0.414757in}}{\pgfqpoint{0.983773in}{1.143573in}}%
\pgfusepath{clip}%
\pgfsetbuttcap%
\pgfsetroundjoin%
\definecolor{currentfill}{rgb}{0.870588,0.560784,0.011765}%
\pgfsetfillcolor{currentfill}%
\pgfsetfillopacity{0.200000}%
\pgfsetlinewidth{1.003750pt}%
\definecolor{currentstroke}{rgb}{0.870588,0.560784,0.011765}%
\pgfsetstrokecolor{currentstroke}%
\pgfsetstrokeopacity{0.200000}%
\pgfsetdash{}{0pt}%
\pgfsys@defobject{currentmarker}{\pgfqpoint{0.788045in}{0.466738in}}{\pgfqpoint{1.458799in}{0.726641in}}{%
\pgfpathmoveto{\pgfqpoint{1.458799in}{0.466738in}}%
\pgfpathlineto{\pgfqpoint{0.788045in}{0.466738in}}%
\pgfpathlineto{\pgfqpoint{0.788045in}{0.726641in}}%
\pgfpathlineto{\pgfqpoint{1.458799in}{0.726641in}}%
\pgfpathlineto{\pgfqpoint{1.458799in}{0.726641in}}%
\pgfpathlineto{\pgfqpoint{1.458799in}{0.466738in}}%
\pgfpathlineto{\pgfqpoint{1.458799in}{0.466738in}}%
\pgfpathclose%
\pgfusepath{stroke,fill}%
}%
\begin{pgfscope}%
\pgfsys@transformshift{0.000000in}{0.000000in}%
\pgfsys@useobject{currentmarker}{}%
\end{pgfscope}%
\end{pgfscope}%
\begin{pgfscope}%
\pgfpathrectangle{\pgfqpoint{0.519743in}{0.414757in}}{\pgfqpoint{0.983773in}{1.143573in}}%
\pgfusepath{clip}%
\pgfsetbuttcap%
\pgfsetroundjoin%
\definecolor{currentfill}{rgb}{0.007843,0.619608,0.447059}%
\pgfsetfillcolor{currentfill}%
\pgfsetfillopacity{0.200000}%
\pgfsetlinewidth{1.003750pt}%
\definecolor{currentstroke}{rgb}{0.007843,0.619608,0.447059}%
\pgfsetstrokecolor{currentstroke}%
\pgfsetstrokeopacity{0.200000}%
\pgfsetdash{}{0pt}%
\pgfsys@defobject{currentmarker}{\pgfqpoint{0.564460in}{0.466738in}}{\pgfqpoint{0.788045in}{0.726641in}}{%
\pgfpathmoveto{\pgfqpoint{0.788045in}{0.466738in}}%
\pgfpathlineto{\pgfqpoint{0.564460in}{0.466738in}}%
\pgfpathlineto{\pgfqpoint{0.564460in}{0.726641in}}%
\pgfpathlineto{\pgfqpoint{0.788045in}{0.726641in}}%
\pgfpathlineto{\pgfqpoint{0.788045in}{0.726641in}}%
\pgfpathlineto{\pgfqpoint{0.788045in}{0.466738in}}%
\pgfpathlineto{\pgfqpoint{0.788045in}{0.466738in}}%
\pgfpathclose%
\pgfusepath{stroke,fill}%
}%
\begin{pgfscope}%
\pgfsys@transformshift{0.000000in}{0.000000in}%
\pgfsys@useobject{currentmarker}{}%
\end{pgfscope}%
\end{pgfscope}%
\begin{pgfscope}%
\pgfpathrectangle{\pgfqpoint{0.519743in}{0.414757in}}{\pgfqpoint{0.983773in}{1.143573in}}%
\pgfusepath{clip}%
\pgfsetrectcap%
\pgfsetroundjoin%
\pgfsetlinewidth{0.803000pt}%
\definecolor{currentstroke}{rgb}{0.690196,0.690196,0.690196}%
\pgfsetstrokecolor{currentstroke}%
\pgfsetdash{}{0pt}%
\pgfpathmoveto{\pgfqpoint{0.564460in}{0.414757in}}%
\pgfpathlineto{\pgfqpoint{0.564460in}{1.558330in}}%
\pgfusepath{stroke}%
\end{pgfscope}%
\begin{pgfscope}%
\pgfsetbuttcap%
\pgfsetroundjoin%
\definecolor{currentfill}{rgb}{0.000000,0.000000,0.000000}%
\pgfsetfillcolor{currentfill}%
\pgfsetlinewidth{0.803000pt}%
\definecolor{currentstroke}{rgb}{0.000000,0.000000,0.000000}%
\pgfsetstrokecolor{currentstroke}%
\pgfsetdash{}{0pt}%
\pgfsys@defobject{currentmarker}{\pgfqpoint{0.000000in}{-0.048611in}}{\pgfqpoint{0.000000in}{0.000000in}}{%
\pgfpathmoveto{\pgfqpoint{0.000000in}{0.000000in}}%
\pgfpathlineto{\pgfqpoint{0.000000in}{-0.048611in}}%
\pgfusepath{stroke,fill}%
}%
\begin{pgfscope}%
\pgfsys@transformshift{0.564460in}{0.414757in}%
\pgfsys@useobject{currentmarker}{}%
\end{pgfscope}%
\end{pgfscope}%
\begin{pgfscope}%
\definecolor{textcolor}{rgb}{0.000000,0.000000,0.000000}%
\pgfsetstrokecolor{textcolor}%
\pgfsetfillcolor{textcolor}%
\pgftext[x=0.564460in,y=0.317535in,,top]{\color{textcolor}{\rmfamily\fontsize{7.000000}{8.400000}\selectfont\catcode`\^=\active\def^{\ifmmode\sp\else\^{}\fi}\catcode`\%=\active\def
\end{pgfscope}%
\begin{pgfscope}%
\pgfpathrectangle{\pgfqpoint{0.519743in}{0.414757in}}{\pgfqpoint{0.983773in}{1.143573in}}%
\pgfusepath{clip}%
\pgfsetrectcap%
\pgfsetroundjoin%
\pgfsetlinewidth{0.803000pt}%
\definecolor{currentstroke}{rgb}{0.690196,0.690196,0.690196}%
\pgfsetstrokecolor{currentstroke}%
\pgfsetdash{}{0pt}%
\pgfpathmoveto{\pgfqpoint{0.788045in}{0.414757in}}%
\pgfpathlineto{\pgfqpoint{0.788045in}{1.558330in}}%
\pgfusepath{stroke}%
\end{pgfscope}%
\begin{pgfscope}%
\pgfsetbuttcap%
\pgfsetroundjoin%
\definecolor{currentfill}{rgb}{0.000000,0.000000,0.000000}%
\pgfsetfillcolor{currentfill}%
\pgfsetlinewidth{0.803000pt}%
\definecolor{currentstroke}{rgb}{0.000000,0.000000,0.000000}%
\pgfsetstrokecolor{currentstroke}%
\pgfsetdash{}{0pt}%
\pgfsys@defobject{currentmarker}{\pgfqpoint{0.000000in}{-0.048611in}}{\pgfqpoint{0.000000in}{0.000000in}}{%
\pgfpathmoveto{\pgfqpoint{0.000000in}{0.000000in}}%
\pgfpathlineto{\pgfqpoint{0.000000in}{-0.048611in}}%
\pgfusepath{stroke,fill}%
}%
\begin{pgfscope}%
\pgfsys@transformshift{0.788045in}{0.414757in}%
\pgfsys@useobject{currentmarker}{}%
\end{pgfscope}%
\end{pgfscope}%
\begin{pgfscope}%
\definecolor{textcolor}{rgb}{0.000000,0.000000,0.000000}%
\pgfsetstrokecolor{textcolor}%
\pgfsetfillcolor{textcolor}%
\pgftext[x=0.788045in,y=0.317535in,,top]{\color{textcolor}{\rmfamily\fontsize{7.000000}{8.400000}\selectfont\catcode`\^=\active\def^{\ifmmode\sp\else\^{}\fi}\catcode`\%=\active\def
\end{pgfscope}%
\begin{pgfscope}%
\pgfpathrectangle{\pgfqpoint{0.519743in}{0.414757in}}{\pgfqpoint{0.983773in}{1.143573in}}%
\pgfusepath{clip}%
\pgfsetrectcap%
\pgfsetroundjoin%
\pgfsetlinewidth{0.803000pt}%
\definecolor{currentstroke}{rgb}{0.690196,0.690196,0.690196}%
\pgfsetstrokecolor{currentstroke}%
\pgfsetdash{}{0pt}%
\pgfpathmoveto{\pgfqpoint{1.011630in}{0.414757in}}%
\pgfpathlineto{\pgfqpoint{1.011630in}{1.558330in}}%
\pgfusepath{stroke}%
\end{pgfscope}%
\begin{pgfscope}%
\pgfsetbuttcap%
\pgfsetroundjoin%
\definecolor{currentfill}{rgb}{0.000000,0.000000,0.000000}%
\pgfsetfillcolor{currentfill}%
\pgfsetlinewidth{0.803000pt}%
\definecolor{currentstroke}{rgb}{0.000000,0.000000,0.000000}%
\pgfsetstrokecolor{currentstroke}%
\pgfsetdash{}{0pt}%
\pgfsys@defobject{currentmarker}{\pgfqpoint{0.000000in}{-0.048611in}}{\pgfqpoint{0.000000in}{0.000000in}}{%
\pgfpathmoveto{\pgfqpoint{0.000000in}{0.000000in}}%
\pgfpathlineto{\pgfqpoint{0.000000in}{-0.048611in}}%
\pgfusepath{stroke,fill}%
}%
\begin{pgfscope}%
\pgfsys@transformshift{1.011630in}{0.414757in}%
\pgfsys@useobject{currentmarker}{}%
\end{pgfscope}%
\end{pgfscope}%
\begin{pgfscope}%
\definecolor{textcolor}{rgb}{0.000000,0.000000,0.000000}%
\pgfsetstrokecolor{textcolor}%
\pgfsetfillcolor{textcolor}%
\pgftext[x=1.011630in,y=0.317535in,,top]{\color{textcolor}{\rmfamily\fontsize{7.000000}{8.400000}\selectfont\catcode`\^=\active\def^{\ifmmode\sp\else\^{}\fi}\catcode`\%=\active\def
\end{pgfscope}%
\begin{pgfscope}%
\pgfpathrectangle{\pgfqpoint{0.519743in}{0.414757in}}{\pgfqpoint{0.983773in}{1.143573in}}%
\pgfusepath{clip}%
\pgfsetrectcap%
\pgfsetroundjoin%
\pgfsetlinewidth{0.803000pt}%
\definecolor{currentstroke}{rgb}{0.690196,0.690196,0.690196}%
\pgfsetstrokecolor{currentstroke}%
\pgfsetdash{}{0pt}%
\pgfpathmoveto{\pgfqpoint{1.235215in}{0.414757in}}%
\pgfpathlineto{\pgfqpoint{1.235215in}{1.558330in}}%
\pgfusepath{stroke}%
\end{pgfscope}%
\begin{pgfscope}%
\pgfsetbuttcap%
\pgfsetroundjoin%
\definecolor{currentfill}{rgb}{0.000000,0.000000,0.000000}%
\pgfsetfillcolor{currentfill}%
\pgfsetlinewidth{0.803000pt}%
\definecolor{currentstroke}{rgb}{0.000000,0.000000,0.000000}%
\pgfsetstrokecolor{currentstroke}%
\pgfsetdash{}{0pt}%
\pgfsys@defobject{currentmarker}{\pgfqpoint{0.000000in}{-0.048611in}}{\pgfqpoint{0.000000in}{0.000000in}}{%
\pgfpathmoveto{\pgfqpoint{0.000000in}{0.000000in}}%
\pgfpathlineto{\pgfqpoint{0.000000in}{-0.048611in}}%
\pgfusepath{stroke,fill}%
}%
\begin{pgfscope}%
\pgfsys@transformshift{1.235215in}{0.414757in}%
\pgfsys@useobject{currentmarker}{}%
\end{pgfscope}%
\end{pgfscope}%
\begin{pgfscope}%
\definecolor{textcolor}{rgb}{0.000000,0.000000,0.000000}%
\pgfsetstrokecolor{textcolor}%
\pgfsetfillcolor{textcolor}%
\pgftext[x=1.235215in,y=0.317535in,,top]{\color{textcolor}{\rmfamily\fontsize{7.000000}{8.400000}\selectfont\catcode`\^=\active\def^{\ifmmode\sp\else\^{}\fi}\catcode`\%=\active\def
\end{pgfscope}%
\begin{pgfscope}%
\pgfpathrectangle{\pgfqpoint{0.519743in}{0.414757in}}{\pgfqpoint{0.983773in}{1.143573in}}%
\pgfusepath{clip}%
\pgfsetrectcap%
\pgfsetroundjoin%
\pgfsetlinewidth{0.803000pt}%
\definecolor{currentstroke}{rgb}{0.690196,0.690196,0.690196}%
\pgfsetstrokecolor{currentstroke}%
\pgfsetdash{}{0pt}%
\pgfpathmoveto{\pgfqpoint{1.458799in}{0.414757in}}%
\pgfpathlineto{\pgfqpoint{1.458799in}{1.558330in}}%
\pgfusepath{stroke}%
\end{pgfscope}%
\begin{pgfscope}%
\pgfsetbuttcap%
\pgfsetroundjoin%
\definecolor{currentfill}{rgb}{0.000000,0.000000,0.000000}%
\pgfsetfillcolor{currentfill}%
\pgfsetlinewidth{0.803000pt}%
\definecolor{currentstroke}{rgb}{0.000000,0.000000,0.000000}%
\pgfsetstrokecolor{currentstroke}%
\pgfsetdash{}{0pt}%
\pgfsys@defobject{currentmarker}{\pgfqpoint{0.000000in}{-0.048611in}}{\pgfqpoint{0.000000in}{0.000000in}}{%
\pgfpathmoveto{\pgfqpoint{0.000000in}{0.000000in}}%
\pgfpathlineto{\pgfqpoint{0.000000in}{-0.048611in}}%
\pgfusepath{stroke,fill}%
}%
\begin{pgfscope}%
\pgfsys@transformshift{1.458799in}{0.414757in}%
\pgfsys@useobject{currentmarker}{}%
\end{pgfscope}%
\end{pgfscope}%
\begin{pgfscope}%
\definecolor{textcolor}{rgb}{0.000000,0.000000,0.000000}%
\pgfsetstrokecolor{textcolor}%
\pgfsetfillcolor{textcolor}%
\pgftext[x=1.458799in,y=0.317535in,,top]{\color{textcolor}{\rmfamily\fontsize{7.000000}{8.400000}\selectfont\catcode`\^=\active\def^{\ifmmode\sp\else\^{}\fi}\catcode`\%=\active\def
\end{pgfscope}%
\begin{pgfscope}%
\definecolor{textcolor}{rgb}{0.000000,0.000000,0.000000}%
\pgfsetstrokecolor{textcolor}%
\pgfsetfillcolor{textcolor}%
\pgftext[x=1.011630in,y=0.167891in,,top]{\color{textcolor}{\rmfamily\fontsize{9.000000}{10.800000}\selectfont\catcode`\^=\active\def^{\ifmmode\sp\else\^{}\fi}\catcode`\%=\active\def
\end{pgfscope}%
\begin{pgfscope}%
\pgfpathrectangle{\pgfqpoint{0.519743in}{0.414757in}}{\pgfqpoint{0.983773in}{1.143573in}}%
\pgfusepath{clip}%
\pgfsetrectcap%
\pgfsetroundjoin%
\pgfsetlinewidth{0.803000pt}%
\definecolor{currentstroke}{rgb}{0.690196,0.690196,0.690196}%
\pgfsetstrokecolor{currentstroke}%
\pgfsetdash{}{0pt}%
\pgfpathmoveto{\pgfqpoint{0.519743in}{0.466738in}}%
\pgfpathlineto{\pgfqpoint{1.503516in}{0.466738in}}%
\pgfusepath{stroke}%
\end{pgfscope}%
\begin{pgfscope}%
\pgfsetbuttcap%
\pgfsetroundjoin%
\definecolor{currentfill}{rgb}{0.000000,0.000000,0.000000}%
\pgfsetfillcolor{currentfill}%
\pgfsetlinewidth{0.803000pt}%
\definecolor{currentstroke}{rgb}{0.000000,0.000000,0.000000}%
\pgfsetstrokecolor{currentstroke}%
\pgfsetdash{}{0pt}%
\pgfsys@defobject{currentmarker}{\pgfqpoint{-0.048611in}{0.000000in}}{\pgfqpoint{-0.000000in}{0.000000in}}{%
\pgfpathmoveto{\pgfqpoint{-0.000000in}{0.000000in}}%
\pgfpathlineto{\pgfqpoint{-0.048611in}{0.000000in}}%
\pgfusepath{stroke,fill}%
}%
\begin{pgfscope}%
\pgfsys@transformshift{0.519743in}{0.466738in}%
\pgfsys@useobject{currentmarker}{}%
\end{pgfscope}%
\end{pgfscope}%
\begin{pgfscope}%
\definecolor{textcolor}{rgb}{0.000000,0.000000,0.000000}%
\pgfsetstrokecolor{textcolor}%
\pgfsetfillcolor{textcolor}%
\pgftext[x=0.223446in, y=0.429805in, left, base]{\color{textcolor}{\rmfamily\fontsize{7.000000}{8.400000}\selectfont\catcode`\^=\active\def^{\ifmmode\sp\else\^{}\fi}\catcode`\%=\active\def
\end{pgfscope}%
\begin{pgfscope}%
\pgfpathrectangle{\pgfqpoint{0.519743in}{0.414757in}}{\pgfqpoint{0.983773in}{1.143573in}}%
\pgfusepath{clip}%
\pgfsetrectcap%
\pgfsetroundjoin%
\pgfsetlinewidth{0.803000pt}%
\definecolor{currentstroke}{rgb}{0.690196,0.690196,0.690196}%
\pgfsetstrokecolor{currentstroke}%
\pgfsetdash{}{0pt}%
\pgfpathmoveto{\pgfqpoint{0.519743in}{0.726641in}}%
\pgfpathlineto{\pgfqpoint{1.503516in}{0.726641in}}%
\pgfusepath{stroke}%
\end{pgfscope}%
\begin{pgfscope}%
\pgfsetbuttcap%
\pgfsetroundjoin%
\definecolor{currentfill}{rgb}{0.000000,0.000000,0.000000}%
\pgfsetfillcolor{currentfill}%
\pgfsetlinewidth{0.803000pt}%
\definecolor{currentstroke}{rgb}{0.000000,0.000000,0.000000}%
\pgfsetstrokecolor{currentstroke}%
\pgfsetdash{}{0pt}%
\pgfsys@defobject{currentmarker}{\pgfqpoint{-0.048611in}{0.000000in}}{\pgfqpoint{-0.000000in}{0.000000in}}{%
\pgfpathmoveto{\pgfqpoint{-0.000000in}{0.000000in}}%
\pgfpathlineto{\pgfqpoint{-0.048611in}{0.000000in}}%
\pgfusepath{stroke,fill}%
}%
\begin{pgfscope}%
\pgfsys@transformshift{0.519743in}{0.726641in}%
\pgfsys@useobject{currentmarker}{}%
\end{pgfscope}%
\end{pgfscope}%
\begin{pgfscope}%
\definecolor{textcolor}{rgb}{0.000000,0.000000,0.000000}%
\pgfsetstrokecolor{textcolor}%
\pgfsetfillcolor{textcolor}%
\pgftext[x=0.223446in, y=0.689708in, left, base]{\color{textcolor}{\rmfamily\fontsize{7.000000}{8.400000}\selectfont\catcode`\^=\active\def^{\ifmmode\sp\else\^{}\fi}\catcode`\%=\active\def
\end{pgfscope}%
\begin{pgfscope}%
\pgfpathrectangle{\pgfqpoint{0.519743in}{0.414757in}}{\pgfqpoint{0.983773in}{1.143573in}}%
\pgfusepath{clip}%
\pgfsetrectcap%
\pgfsetroundjoin%
\pgfsetlinewidth{0.803000pt}%
\definecolor{currentstroke}{rgb}{0.690196,0.690196,0.690196}%
\pgfsetstrokecolor{currentstroke}%
\pgfsetdash{}{0pt}%
\pgfpathmoveto{\pgfqpoint{0.519743in}{0.986544in}}%
\pgfpathlineto{\pgfqpoint{1.503516in}{0.986544in}}%
\pgfusepath{stroke}%
\end{pgfscope}%
\begin{pgfscope}%
\pgfsetbuttcap%
\pgfsetroundjoin%
\definecolor{currentfill}{rgb}{0.000000,0.000000,0.000000}%
\pgfsetfillcolor{currentfill}%
\pgfsetlinewidth{0.803000pt}%
\definecolor{currentstroke}{rgb}{0.000000,0.000000,0.000000}%
\pgfsetstrokecolor{currentstroke}%
\pgfsetdash{}{0pt}%
\pgfsys@defobject{currentmarker}{\pgfqpoint{-0.048611in}{0.000000in}}{\pgfqpoint{-0.000000in}{0.000000in}}{%
\pgfpathmoveto{\pgfqpoint{-0.000000in}{0.000000in}}%
\pgfpathlineto{\pgfqpoint{-0.048611in}{0.000000in}}%
\pgfusepath{stroke,fill}%
}%
\begin{pgfscope}%
\pgfsys@transformshift{0.519743in}{0.986544in}%
\pgfsys@useobject{currentmarker}{}%
\end{pgfscope}%
\end{pgfscope}%
\begin{pgfscope}%
\definecolor{textcolor}{rgb}{0.000000,0.000000,0.000000}%
\pgfsetstrokecolor{textcolor}%
\pgfsetfillcolor{textcolor}%
\pgftext[x=0.223446in, y=0.949611in, left, base]{\color{textcolor}{\rmfamily\fontsize{7.000000}{8.400000}\selectfont\catcode`\^=\active\def^{\ifmmode\sp\else\^{}\fi}\catcode`\%=\active\def
\end{pgfscope}%
\begin{pgfscope}%
\pgfpathrectangle{\pgfqpoint{0.519743in}{0.414757in}}{\pgfqpoint{0.983773in}{1.143573in}}%
\pgfusepath{clip}%
\pgfsetrectcap%
\pgfsetroundjoin%
\pgfsetlinewidth{0.803000pt}%
\definecolor{currentstroke}{rgb}{0.690196,0.690196,0.690196}%
\pgfsetstrokecolor{currentstroke}%
\pgfsetdash{}{0pt}%
\pgfpathmoveto{\pgfqpoint{0.519743in}{1.246447in}}%
\pgfpathlineto{\pgfqpoint{1.503516in}{1.246447in}}%
\pgfusepath{stroke}%
\end{pgfscope}%
\begin{pgfscope}%
\pgfsetbuttcap%
\pgfsetroundjoin%
\definecolor{currentfill}{rgb}{0.000000,0.000000,0.000000}%
\pgfsetfillcolor{currentfill}%
\pgfsetlinewidth{0.803000pt}%
\definecolor{currentstroke}{rgb}{0.000000,0.000000,0.000000}%
\pgfsetstrokecolor{currentstroke}%
\pgfsetdash{}{0pt}%
\pgfsys@defobject{currentmarker}{\pgfqpoint{-0.048611in}{0.000000in}}{\pgfqpoint{-0.000000in}{0.000000in}}{%
\pgfpathmoveto{\pgfqpoint{-0.000000in}{0.000000in}}%
\pgfpathlineto{\pgfqpoint{-0.048611in}{0.000000in}}%
\pgfusepath{stroke,fill}%
}%
\begin{pgfscope}%
\pgfsys@transformshift{0.519743in}{1.246447in}%
\pgfsys@useobject{currentmarker}{}%
\end{pgfscope}%
\end{pgfscope}%
\begin{pgfscope}%
\definecolor{textcolor}{rgb}{0.000000,0.000000,0.000000}%
\pgfsetstrokecolor{textcolor}%
\pgfsetfillcolor{textcolor}%
\pgftext[x=0.223446in, y=1.209513in, left, base]{\color{textcolor}{\rmfamily\fontsize{7.000000}{8.400000}\selectfont\catcode`\^=\active\def^{\ifmmode\sp\else\^{}\fi}\catcode`\%=\active\def
\end{pgfscope}%
\begin{pgfscope}%
\pgfpathrectangle{\pgfqpoint{0.519743in}{0.414757in}}{\pgfqpoint{0.983773in}{1.143573in}}%
\pgfusepath{clip}%
\pgfsetrectcap%
\pgfsetroundjoin%
\pgfsetlinewidth{0.803000pt}%
\definecolor{currentstroke}{rgb}{0.690196,0.690196,0.690196}%
\pgfsetstrokecolor{currentstroke}%
\pgfsetdash{}{0pt}%
\pgfpathmoveto{\pgfqpoint{0.519743in}{1.506349in}}%
\pgfpathlineto{\pgfqpoint{1.503516in}{1.506349in}}%
\pgfusepath{stroke}%
\end{pgfscope}%
\begin{pgfscope}%
\pgfsetbuttcap%
\pgfsetroundjoin%
\definecolor{currentfill}{rgb}{0.000000,0.000000,0.000000}%
\pgfsetfillcolor{currentfill}%
\pgfsetlinewidth{0.803000pt}%
\definecolor{currentstroke}{rgb}{0.000000,0.000000,0.000000}%
\pgfsetstrokecolor{currentstroke}%
\pgfsetdash{}{0pt}%
\pgfsys@defobject{currentmarker}{\pgfqpoint{-0.048611in}{0.000000in}}{\pgfqpoint{-0.000000in}{0.000000in}}{%
\pgfpathmoveto{\pgfqpoint{-0.000000in}{0.000000in}}%
\pgfpathlineto{\pgfqpoint{-0.048611in}{0.000000in}}%
\pgfusepath{stroke,fill}%
}%
\begin{pgfscope}%
\pgfsys@transformshift{0.519743in}{1.506349in}%
\pgfsys@useobject{currentmarker}{}%
\end{pgfscope}%
\end{pgfscope}%
\begin{pgfscope}%
\definecolor{textcolor}{rgb}{0.000000,0.000000,0.000000}%
\pgfsetstrokecolor{textcolor}%
\pgfsetfillcolor{textcolor}%
\pgftext[x=0.223446in, y=1.469416in, left, base]{\color{textcolor}{\rmfamily\fontsize{7.000000}{8.400000}\selectfont\catcode`\^=\active\def^{\ifmmode\sp\else\^{}\fi}\catcode`\%=\active\def
\end{pgfscope}%
\begin{pgfscope}%
\definecolor{textcolor}{rgb}{0.000000,0.000000,0.000000}%
\pgfsetstrokecolor{textcolor}%
\pgfsetfillcolor{textcolor}%
\pgftext[x=0.167891in,y=0.986544in,,bottom,rotate=90.000000]{\color{textcolor}{\rmfamily\fontsize{9.000000}{10.800000}\selectfont\catcode`\^=\active\def^{\ifmmode\sp\else\^{}\fi}\catcode`\%=\active\def
\end{pgfscope}%
\begin{pgfscope}%
\pgfpathrectangle{\pgfqpoint{0.519743in}{0.414757in}}{\pgfqpoint{0.983773in}{1.143573in}}%
\pgfusepath{clip}%
\pgfsetrectcap%
\pgfsetroundjoin%
\pgfsetlinewidth{1.505625pt}%
\definecolor{currentstroke}{rgb}{0.003922,0.450980,0.698039}%
\pgfsetstrokecolor{currentstroke}%
\pgfsetstrokeopacity{0.200000}%
\pgfsetdash{}{0pt}%
\pgfpathmoveto{\pgfqpoint{0.634730in}{0.471030in}}%
\pgfpathlineto{\pgfqpoint{0.698611in}{0.479352in}}%
\pgfpathlineto{\pgfqpoint{0.743328in}{0.487637in}}%
\pgfpathlineto{\pgfqpoint{0.788045in}{0.497552in}}%
\pgfpathlineto{\pgfqpoint{0.832762in}{0.508922in}}%
\pgfpathlineto{\pgfqpoint{0.877479in}{0.521584in}}%
\pgfpathlineto{\pgfqpoint{0.922196in}{0.535388in}}%
\pgfpathlineto{\pgfqpoint{0.966913in}{0.550189in}}%
\pgfpathlineto{\pgfqpoint{1.011630in}{0.565855in}}%
\pgfpathlineto{\pgfqpoint{1.056347in}{0.582260in}}%
\pgfpathlineto{\pgfqpoint{1.101064in}{0.599286in}}%
\pgfpathlineto{\pgfqpoint{1.145781in}{0.616828in}}%
\pgfpathlineto{\pgfqpoint{1.190498in}{0.634791in}}%
\pgfpathlineto{\pgfqpoint{1.235215in}{0.653092in}}%
\pgfpathlineto{\pgfqpoint{1.279932in}{0.671669in}}%
\pgfpathlineto{\pgfqpoint{1.324649in}{0.690490in}}%
\pgfpathlineto{\pgfqpoint{1.369365in}{0.709588in}}%
\pgfpathlineto{\pgfqpoint{1.414082in}{0.729237in}}%
\pgfpathlineto{\pgfqpoint{1.458799in}{0.797063in}}%
\pgfusepath{stroke}%
\end{pgfscope}%
\begin{pgfscope}%
\pgfpathrectangle{\pgfqpoint{0.519743in}{0.414757in}}{\pgfqpoint{0.983773in}{1.143573in}}%
\pgfusepath{clip}%
\pgfsetrectcap%
\pgfsetroundjoin%
\pgfsetlinewidth{1.505625pt}%
\definecolor{currentstroke}{rgb}{0.003922,0.450980,0.698039}%
\pgfsetstrokecolor{currentstroke}%
\pgfsetstrokeopacity{0.200000}%
\pgfsetdash{}{0pt}%
\pgfpathmoveto{\pgfqpoint{0.642715in}{0.471662in}}%
\pgfpathlineto{\pgfqpoint{0.698611in}{0.479075in}}%
\pgfpathlineto{\pgfqpoint{0.743328in}{0.487270in}}%
\pgfpathlineto{\pgfqpoint{0.788045in}{0.497117in}}%
\pgfpathlineto{\pgfqpoint{0.832762in}{0.508443in}}%
\pgfpathlineto{\pgfqpoint{0.877479in}{0.521090in}}%
\pgfpathlineto{\pgfqpoint{0.922196in}{0.534907in}}%
\pgfpathlineto{\pgfqpoint{0.966913in}{0.549750in}}%
\pgfpathlineto{\pgfqpoint{1.011630in}{0.565485in}}%
\pgfpathlineto{\pgfqpoint{1.056347in}{0.581984in}}%
\pgfpathlineto{\pgfqpoint{1.101064in}{0.599129in}}%
\pgfpathlineto{\pgfqpoint{1.145781in}{0.616810in}}%
\pgfpathlineto{\pgfqpoint{1.190498in}{0.634930in}}%
\pgfpathlineto{\pgfqpoint{1.235215in}{0.653404in}}%
\pgfpathlineto{\pgfqpoint{1.279932in}{0.672167in}}%
\pgfpathlineto{\pgfqpoint{1.324649in}{0.691184in}}%
\pgfpathlineto{\pgfqpoint{1.369365in}{0.710487in}}%
\pgfpathlineto{\pgfqpoint{1.414082in}{0.730348in}}%
\pgfpathlineto{\pgfqpoint{1.458799in}{0.798626in}}%
\pgfusepath{stroke}%
\end{pgfscope}%
\begin{pgfscope}%
\pgfpathrectangle{\pgfqpoint{0.519743in}{0.414757in}}{\pgfqpoint{0.983773in}{1.143573in}}%
\pgfusepath{clip}%
\pgfsetrectcap%
\pgfsetroundjoin%
\pgfsetlinewidth{1.505625pt}%
\definecolor{currentstroke}{rgb}{0.003922,0.450980,0.698039}%
\pgfsetstrokecolor{currentstroke}%
\pgfsetstrokeopacity{0.200000}%
\pgfsetdash{}{0pt}%
\pgfpathmoveto{\pgfqpoint{0.638989in}{0.471204in}}%
\pgfpathlineto{\pgfqpoint{0.698611in}{0.478844in}}%
\pgfpathlineto{\pgfqpoint{0.743328in}{0.486934in}}%
\pgfpathlineto{\pgfqpoint{0.788045in}{0.496678in}}%
\pgfpathlineto{\pgfqpoint{0.832762in}{0.507909in}}%
\pgfpathlineto{\pgfqpoint{0.877479in}{0.520472in}}%
\pgfpathlineto{\pgfqpoint{0.922196in}{0.534219in}}%
\pgfpathlineto{\pgfqpoint{0.966913in}{0.549011in}}%
\pgfpathlineto{\pgfqpoint{1.011630in}{0.564714in}}%
\pgfpathlineto{\pgfqpoint{1.056347in}{0.581204in}}%
\pgfpathlineto{\pgfqpoint{1.101064in}{0.598362in}}%
\pgfpathlineto{\pgfqpoint{1.145781in}{0.616083in}}%
\pgfpathlineto{\pgfqpoint{1.190498in}{0.634271in}}%
\pgfpathlineto{\pgfqpoint{1.235215in}{0.652846in}}%
\pgfpathlineto{\pgfqpoint{1.279932in}{0.671750in}}%
\pgfpathlineto{\pgfqpoint{1.324649in}{0.690963in}}%
\pgfpathlineto{\pgfqpoint{1.369365in}{0.710548in}}%
\pgfpathlineto{\pgfqpoint{1.414082in}{0.730881in}}%
\pgfpathlineto{\pgfqpoint{1.458799in}{0.815394in}}%
\pgfusepath{stroke}%
\end{pgfscope}%
\begin{pgfscope}%
\pgfpathrectangle{\pgfqpoint{0.519743in}{0.414757in}}{\pgfqpoint{0.983773in}{1.143573in}}%
\pgfusepath{clip}%
\pgfsetrectcap%
\pgfsetroundjoin%
\pgfsetlinewidth{1.505625pt}%
\definecolor{currentstroke}{rgb}{0.003922,0.450980,0.698039}%
\pgfsetstrokecolor{currentstroke}%
\pgfsetstrokeopacity{0.200000}%
\pgfsetdash{}{0pt}%
\pgfpathmoveto{\pgfqpoint{0.653894in}{0.474311in}}%
\pgfpathlineto{\pgfqpoint{0.698611in}{0.481705in}}%
\pgfpathlineto{\pgfqpoint{0.743328in}{0.490916in}}%
\pgfpathlineto{\pgfqpoint{0.788045in}{0.501689in}}%
\pgfpathlineto{\pgfqpoint{0.832762in}{0.513812in}}%
\pgfpathlineto{\pgfqpoint{0.877479in}{0.527101in}}%
\pgfpathlineto{\pgfqpoint{0.922196in}{0.541393in}}%
\pgfpathlineto{\pgfqpoint{0.966913in}{0.556538in}}%
\pgfpathlineto{\pgfqpoint{1.011630in}{0.572401in}}%
\pgfpathlineto{\pgfqpoint{1.056347in}{0.588858in}}%
\pgfpathlineto{\pgfqpoint{1.101064in}{0.605799in}}%
\pgfpathlineto{\pgfqpoint{1.145781in}{0.623123in}}%
\pgfpathlineto{\pgfqpoint{1.190498in}{0.640744in}}%
\pgfpathlineto{\pgfqpoint{1.235215in}{0.658588in}}%
\pgfpathlineto{\pgfqpoint{1.279932in}{0.676598in}}%
\pgfpathlineto{\pgfqpoint{1.324649in}{0.694745in}}%
\pgfpathlineto{\pgfqpoint{1.369365in}{0.713053in}}%
\pgfpathlineto{\pgfqpoint{1.414082in}{0.731743in}}%
\pgfpathlineto{\pgfqpoint{1.458799in}{0.789122in}}%
\pgfusepath{stroke}%
\end{pgfscope}%
\begin{pgfscope}%
\pgfpathrectangle{\pgfqpoint{0.519743in}{0.414757in}}{\pgfqpoint{0.983773in}{1.143573in}}%
\pgfusepath{clip}%
\pgfsetrectcap%
\pgfsetroundjoin%
\pgfsetlinewidth{1.505625pt}%
\definecolor{currentstroke}{rgb}{0.003922,0.450980,0.698039}%
\pgfsetstrokecolor{currentstroke}%
\pgfsetstrokeopacity{0.200000}%
\pgfsetdash{}{0pt}%
\pgfpathmoveto{\pgfqpoint{0.653894in}{0.471759in}}%
\pgfpathlineto{\pgfqpoint{0.698611in}{0.477412in}}%
\pgfpathlineto{\pgfqpoint{0.743328in}{0.484905in}}%
\pgfpathlineto{\pgfqpoint{0.788045in}{0.494093in}}%
\pgfpathlineto{\pgfqpoint{0.832762in}{0.504838in}}%
\pgfpathlineto{\pgfqpoint{0.877479in}{0.517003in}}%
\pgfpathlineto{\pgfqpoint{0.922196in}{0.530453in}}%
\pgfpathlineto{\pgfqpoint{0.966913in}{0.545052in}}%
\pgfpathlineto{\pgfqpoint{1.011630in}{0.560670in}}%
\pgfpathlineto{\pgfqpoint{1.056347in}{0.577179in}}%
\pgfpathlineto{\pgfqpoint{1.101064in}{0.594456in}}%
\pgfpathlineto{\pgfqpoint{1.145781in}{0.612388in}}%
\pgfpathlineto{\pgfqpoint{1.190498in}{0.630869in}}%
\pgfpathlineto{\pgfqpoint{1.235215in}{0.649809in}}%
\pgfpathlineto{\pgfqpoint{1.279932in}{0.669134in}}%
\pgfpathlineto{\pgfqpoint{1.324649in}{0.688806in}}%
\pgfpathlineto{\pgfqpoint{1.369365in}{0.708862in}}%
\pgfpathlineto{\pgfqpoint{1.414082in}{0.729610in}}%
\pgfpathlineto{\pgfqpoint{1.458799in}{0.805908in}}%
\pgfusepath{stroke}%
\end{pgfscope}%
\begin{pgfscope}%
\pgfpathrectangle{\pgfqpoint{0.519743in}{0.414757in}}{\pgfqpoint{0.983773in}{1.143573in}}%
\pgfusepath{clip}%
\pgfsetbuttcap%
\pgfsetroundjoin%
\pgfsetlinewidth{1.505625pt}%
\definecolor{currentstroke}{rgb}{0.501961,0.501961,0.501961}%
\pgfsetstrokecolor{currentstroke}%
\pgfsetdash{{5.550000pt}{2.400000pt}}{0.000000pt}%
\pgfpathmoveto{\pgfqpoint{0.564460in}{0.726641in}}%
\pgfpathlineto{\pgfqpoint{1.458799in}{0.726641in}}%
\pgfusepath{stroke}%
\end{pgfscope}%
\begin{pgfscope}%
\pgfpathrectangle{\pgfqpoint{0.519743in}{0.414757in}}{\pgfqpoint{0.983773in}{1.143573in}}%
\pgfusepath{clip}%
\pgfsetbuttcap%
\pgfsetroundjoin%
\pgfsetlinewidth{1.505625pt}%
\definecolor{currentstroke}{rgb}{0.501961,0.501961,0.501961}%
\pgfsetstrokecolor{currentstroke}%
\pgfsetdash{{5.550000pt}{2.400000pt}}{0.000000pt}%
\pgfpathmoveto{\pgfqpoint{0.788045in}{0.466738in}}%
\pgfpathlineto{\pgfqpoint{0.788045in}{1.506349in}}%
\pgfusepath{stroke}%
\end{pgfscope}%
\begin{pgfscope}%
\pgfsetrectcap%
\pgfsetmiterjoin%
\pgfsetlinewidth{0.803000pt}%
\definecolor{currentstroke}{rgb}{0.000000,0.000000,0.000000}%
\pgfsetstrokecolor{currentstroke}%
\pgfsetdash{}{0pt}%
\pgfpathmoveto{\pgfqpoint{0.519743in}{0.414757in}}%
\pgfpathlineto{\pgfqpoint{0.519743in}{1.558330in}}%
\pgfusepath{stroke}%
\end{pgfscope}%
\begin{pgfscope}%
\pgfsetrectcap%
\pgfsetmiterjoin%
\pgfsetlinewidth{0.803000pt}%
\definecolor{currentstroke}{rgb}{0.000000,0.000000,0.000000}%
\pgfsetstrokecolor{currentstroke}%
\pgfsetdash{}{0pt}%
\pgfpathmoveto{\pgfqpoint{1.503516in}{0.414757in}}%
\pgfpathlineto{\pgfqpoint{1.503516in}{1.558330in}}%
\pgfusepath{stroke}%
\end{pgfscope}%
\begin{pgfscope}%
\pgfsetrectcap%
\pgfsetmiterjoin%
\pgfsetlinewidth{0.803000pt}%
\definecolor{currentstroke}{rgb}{0.000000,0.000000,0.000000}%
\pgfsetstrokecolor{currentstroke}%
\pgfsetdash{}{0pt}%
\pgfpathmoveto{\pgfqpoint{0.519743in}{0.414757in}}%
\pgfpathlineto{\pgfqpoint{1.503516in}{0.414757in}}%
\pgfusepath{stroke}%
\end{pgfscope}%
\begin{pgfscope}%
\pgfsetrectcap%
\pgfsetmiterjoin%
\pgfsetlinewidth{0.803000pt}%
\definecolor{currentstroke}{rgb}{0.000000,0.000000,0.000000}%
\pgfsetstrokecolor{currentstroke}%
\pgfsetdash{}{0pt}%
\pgfpathmoveto{\pgfqpoint{0.519743in}{1.558330in}}%
\pgfpathlineto{\pgfqpoint{1.503516in}{1.558330in}}%
\pgfusepath{stroke}%
\end{pgfscope}%
\begin{pgfscope}%
\pgfsetbuttcap%
\pgfsetmiterjoin%
\definecolor{currentfill}{rgb}{1.000000,1.000000,1.000000}%
\pgfsetfillcolor{currentfill}%
\pgfsetfillopacity{0.800000}%
\pgfsetlinewidth{1.003750pt}%
\definecolor{currentstroke}{rgb}{0.800000,0.800000,0.800000}%
\pgfsetstrokecolor{currentstroke}%
\pgfsetstrokeopacity{0.800000}%
\pgfsetdash{}{0pt}%
\pgfpathmoveto{\pgfqpoint{0.542878in}{1.008611in}}%
\pgfpathlineto{\pgfqpoint{1.537122in}{1.008611in}}%
\pgfpathquadraticcurveto{\pgfqpoint{1.556567in}{1.008611in}}{\pgfqpoint{1.556567in}{1.028056in}}%
\pgfpathlineto{\pgfqpoint{1.556567in}{1.446433in}}%
\pgfpathquadraticcurveto{\pgfqpoint{1.556567in}{1.465878in}}{\pgfqpoint{1.537122in}{1.465878in}}%
\pgfpathlineto{\pgfqpoint{0.542878in}{1.465878in}}%
\pgfpathquadraticcurveto{\pgfqpoint{0.523433in}{1.465878in}}{\pgfqpoint{0.523433in}{1.446433in}}%
\pgfpathlineto{\pgfqpoint{0.523433in}{1.028056in}}%
\pgfpathquadraticcurveto{\pgfqpoint{0.523433in}{1.008611in}}{\pgfqpoint{0.542878in}{1.008611in}}%
\pgfpathlineto{\pgfqpoint{0.542878in}{1.008611in}}%
\pgfpathclose%
\pgfusepath{stroke,fill}%
\end{pgfscope}%
\begin{pgfscope}%
\pgfsetbuttcap%
\pgfsetmiterjoin%
\definecolor{currentfill}{rgb}{0.007843,0.619608,0.447059}%
\pgfsetfillcolor{currentfill}%
\pgfsetfillopacity{0.850000}%
\pgfsetlinewidth{0.501875pt}%
\definecolor{currentstroke}{rgb}{0.000000,0.000000,0.000000}%
\pgfsetstrokecolor{currentstroke}%
\pgfsetstrokeopacity{0.850000}%
\pgfsetdash{}{0pt}%
\pgfpathmoveto{\pgfqpoint{0.562322in}{1.353123in}}%
\pgfpathlineto{\pgfqpoint{0.756766in}{1.353123in}}%
\pgfpathlineto{\pgfqpoint{0.756766in}{1.421178in}}%
\pgfpathlineto{\pgfqpoint{0.562322in}{1.421178in}}%
\pgfpathlineto{\pgfqpoint{0.562322in}{1.353123in}}%
\pgfpathclose%
\pgfusepath{stroke,fill}%
\end{pgfscope}%
\begin{pgfscope}%
\definecolor{textcolor}{rgb}{0.000000,0.000000,0.000000}%
\pgfsetstrokecolor{textcolor}%
\pgfsetfillcolor{textcolor}%
\pgftext[x=0.834544in,y=1.353123in,left,base]{\color{textcolor}{\rmfamily\fontsize{7.000000}{8.400000}\selectfont\catcode`\^=\active\def^{\ifmmode\sp\else\^{}\fi}\catcode`\%=\active\def
\end{pgfscope}%
\begin{pgfscope}%
\pgfsetbuttcap%
\pgfsetmiterjoin%
\definecolor{currentfill}{rgb}{0.870588,0.560784,0.011765}%
\pgfsetfillcolor{currentfill}%
\pgfsetfillopacity{0.850000}%
\pgfsetlinewidth{0.501875pt}%
\definecolor{currentstroke}{rgb}{0.000000,0.000000,0.000000}%
\pgfsetstrokecolor{currentstroke}%
\pgfsetstrokeopacity{0.850000}%
\pgfsetdash{}{0pt}%
\pgfpathmoveto{\pgfqpoint{0.562322in}{1.210423in}}%
\pgfpathlineto{\pgfqpoint{0.756766in}{1.210423in}}%
\pgfpathlineto{\pgfqpoint{0.756766in}{1.278478in}}%
\pgfpathlineto{\pgfqpoint{0.562322in}{1.278478in}}%
\pgfpathlineto{\pgfqpoint{0.562322in}{1.210423in}}%
\pgfpathclose%
\pgfusepath{stroke,fill}%
\end{pgfscope}%
\begin{pgfscope}%
\definecolor{textcolor}{rgb}{0.000000,0.000000,0.000000}%
\pgfsetstrokecolor{textcolor}%
\pgfsetfillcolor{textcolor}%
\pgftext[x=0.834544in,y=1.210423in,left,base]{\color{textcolor}{\rmfamily\fontsize{7.000000}{8.400000}\selectfont\catcode`\^=\active\def^{\ifmmode\sp\else\^{}\fi}\catcode`\%=\active\def
\end{pgfscope}%
\begin{pgfscope}%
\pgfsetbuttcap%
\pgfsetroundjoin%
\pgfsetlinewidth{0.803000pt}%
\definecolor{currentstroke}{rgb}{0.000000,0.000000,0.000000}%
\pgfsetstrokecolor{currentstroke}%
\pgfsetstrokeopacity{0.600000}%
\pgfsetdash{{2.960000pt}{1.280000pt}}{0.000000pt}%
\pgfpathmoveto{\pgfqpoint{0.562322in}{1.101751in}}%
\pgfpathlineto{\pgfqpoint{0.659544in}{1.101751in}}%
\pgfpathlineto{\pgfqpoint{0.756766in}{1.101751in}}%
\pgfusepath{stroke}%
\end{pgfscope}%
\begin{pgfscope}%
\definecolor{textcolor}{rgb}{0.000000,0.000000,0.000000}%
\pgfsetstrokecolor{textcolor}%
\pgfsetfillcolor{textcolor}%
\pgftext[x=0.834544in,y=1.067723in,left,base]{\color{textcolor}{\rmfamily\fontsize{7.000000}{8.400000}\selectfont\catcode`\^=\active\def^{\ifmmode\sp\else\^{}\fi}\catcode`\%=\active\def
\end{pgfscope}%
\end{pgfpicture}%
\makeatother%
\endgroup%

%% file: figures/calibration_maps/TRIVIAQA_gemma-3-4b-it.pgf
\begingroup%
\makeatletter%
\begin{pgfpicture}%
\pgfpathrectangle{\pgfpointorigin}{\pgfqpoint{1.600000in}{1.600000in}}%
\pgfusepath{use as bounding box, clip}%
\begin{pgfscope}%
\pgfsetbuttcap%
\pgfsetmiterjoin%
\definecolor{currentfill}{rgb}{1.000000,1.000000,1.000000}%
\pgfsetfillcolor{currentfill}%
\pgfsetlinewidth{0.000000pt}%
\definecolor{currentstroke}{rgb}{1.000000,1.000000,1.000000}%
\pgfsetstrokecolor{currentstroke}%
\pgfsetdash{}{0pt}%
\pgfpathmoveto{\pgfqpoint{0.000000in}{0.000000in}}%
\pgfpathlineto{\pgfqpoint{1.600000in}{0.000000in}}%
\pgfpathlineto{\pgfqpoint{1.600000in}{1.600000in}}%
\pgfpathlineto{\pgfqpoint{0.000000in}{1.600000in}}%
\pgfpathlineto{\pgfqpoint{0.000000in}{0.000000in}}%
\pgfpathclose%
\pgfusepath{fill}%
\end{pgfscope}%
\begin{pgfscope}%
\pgfsetbuttcap%
\pgfsetmiterjoin%
\definecolor{currentfill}{rgb}{1.000000,1.000000,1.000000}%
\pgfsetfillcolor{currentfill}%
\pgfsetlinewidth{0.000000pt}%
\definecolor{currentstroke}{rgb}{0.000000,0.000000,0.000000}%
\pgfsetstrokecolor{currentstroke}%
\pgfsetstrokeopacity{0.000000}%
\pgfsetdash{}{0pt}%
\pgfpathmoveto{\pgfqpoint{0.519743in}{0.414757in}}%
\pgfpathlineto{\pgfqpoint{1.503516in}{0.414757in}}%
\pgfpathlineto{\pgfqpoint{1.503516in}{1.558330in}}%
\pgfpathlineto{\pgfqpoint{0.519743in}{1.558330in}}%
\pgfpathlineto{\pgfqpoint{0.519743in}{0.414757in}}%
\pgfpathclose%
\pgfusepath{fill}%
\end{pgfscope}%
\begin{pgfscope}%
\pgfpathrectangle{\pgfqpoint{0.519743in}{0.414757in}}{\pgfqpoint{0.983773in}{1.143573in}}%
\pgfusepath{clip}%
\pgfsetbuttcap%
\pgfsetroundjoin%
\definecolor{currentfill}{rgb}{0.870588,0.560784,0.011765}%
\pgfsetfillcolor{currentfill}%
\pgfsetfillopacity{0.200000}%
\pgfsetlinewidth{1.003750pt}%
\definecolor{currentstroke}{rgb}{0.870588,0.560784,0.011765}%
\pgfsetstrokecolor{currentstroke}%
\pgfsetstrokeopacity{0.200000}%
\pgfsetdash{}{0pt}%
\pgfsys@defobject{currentmarker}{\pgfqpoint{0.564460in}{0.726641in}}{\pgfqpoint{0.788045in}{1.506349in}}{%
\pgfpathmoveto{\pgfqpoint{0.788045in}{0.726641in}}%
\pgfpathlineto{\pgfqpoint{0.564460in}{0.726641in}}%
\pgfpathlineto{\pgfqpoint{0.564460in}{1.506349in}}%
\pgfpathlineto{\pgfqpoint{0.788045in}{1.506349in}}%
\pgfpathlineto{\pgfqpoint{0.788045in}{1.506349in}}%
\pgfpathlineto{\pgfqpoint{0.788045in}{0.726641in}}%
\pgfpathlineto{\pgfqpoint{0.788045in}{0.726641in}}%
\pgfpathclose%
\pgfusepath{stroke,fill}%
}%
\begin{pgfscope}%
\pgfsys@transformshift{0.000000in}{0.000000in}%
\pgfsys@useobject{currentmarker}{}%
\end{pgfscope}%
\end{pgfscope}%
\begin{pgfscope}%
\pgfpathrectangle{\pgfqpoint{0.519743in}{0.414757in}}{\pgfqpoint{0.983773in}{1.143573in}}%
\pgfusepath{clip}%
\pgfsetbuttcap%
\pgfsetroundjoin%
\definecolor{currentfill}{rgb}{0.007843,0.619608,0.447059}%
\pgfsetfillcolor{currentfill}%
\pgfsetfillopacity{0.200000}%
\pgfsetlinewidth{1.003750pt}%
\definecolor{currentstroke}{rgb}{0.007843,0.619608,0.447059}%
\pgfsetstrokecolor{currentstroke}%
\pgfsetstrokeopacity{0.200000}%
\pgfsetdash{}{0pt}%
\pgfsys@defobject{currentmarker}{\pgfqpoint{0.788045in}{0.726641in}}{\pgfqpoint{1.458799in}{1.506349in}}{%
\pgfpathmoveto{\pgfqpoint{1.458799in}{0.726641in}}%
\pgfpathlineto{\pgfqpoint{0.788045in}{0.726641in}}%
\pgfpathlineto{\pgfqpoint{0.788045in}{1.506349in}}%
\pgfpathlineto{\pgfqpoint{1.458799in}{1.506349in}}%
\pgfpathlineto{\pgfqpoint{1.458799in}{1.506349in}}%
\pgfpathlineto{\pgfqpoint{1.458799in}{0.726641in}}%
\pgfpathlineto{\pgfqpoint{1.458799in}{0.726641in}}%
\pgfpathclose%
\pgfusepath{stroke,fill}%
}%
\begin{pgfscope}%
\pgfsys@transformshift{0.000000in}{0.000000in}%
\pgfsys@useobject{currentmarker}{}%
\end{pgfscope}%
\end{pgfscope}%
\begin{pgfscope}%
\pgfpathrectangle{\pgfqpoint{0.519743in}{0.414757in}}{\pgfqpoint{0.983773in}{1.143573in}}%
\pgfusepath{clip}%
\pgfsetbuttcap%
\pgfsetroundjoin%
\definecolor{currentfill}{rgb}{0.870588,0.560784,0.011765}%
\pgfsetfillcolor{currentfill}%
\pgfsetfillopacity{0.200000}%
\pgfsetlinewidth{1.003750pt}%
\definecolor{currentstroke}{rgb}{0.870588,0.560784,0.011765}%
\pgfsetstrokecolor{currentstroke}%
\pgfsetstrokeopacity{0.200000}%
\pgfsetdash{}{0pt}%
\pgfsys@defobject{currentmarker}{\pgfqpoint{0.788045in}{0.466738in}}{\pgfqpoint{1.458799in}{0.726641in}}{%
\pgfpathmoveto{\pgfqpoint{1.458799in}{0.466738in}}%
\pgfpathlineto{\pgfqpoint{0.788045in}{0.466738in}}%
\pgfpathlineto{\pgfqpoint{0.788045in}{0.726641in}}%
\pgfpathlineto{\pgfqpoint{1.458799in}{0.726641in}}%
\pgfpathlineto{\pgfqpoint{1.458799in}{0.726641in}}%
\pgfpathlineto{\pgfqpoint{1.458799in}{0.466738in}}%
\pgfpathlineto{\pgfqpoint{1.458799in}{0.466738in}}%
\pgfpathclose%
\pgfusepath{stroke,fill}%
}%
\begin{pgfscope}%
\pgfsys@transformshift{0.000000in}{0.000000in}%
\pgfsys@useobject{currentmarker}{}%
\end{pgfscope}%
\end{pgfscope}%
\begin{pgfscope}%
\pgfpathrectangle{\pgfqpoint{0.519743in}{0.414757in}}{\pgfqpoint{0.983773in}{1.143573in}}%
\pgfusepath{clip}%
\pgfsetbuttcap%
\pgfsetroundjoin%
\definecolor{currentfill}{rgb}{0.007843,0.619608,0.447059}%
\pgfsetfillcolor{currentfill}%
\pgfsetfillopacity{0.200000}%
\pgfsetlinewidth{1.003750pt}%
\definecolor{currentstroke}{rgb}{0.007843,0.619608,0.447059}%
\pgfsetstrokecolor{currentstroke}%
\pgfsetstrokeopacity{0.200000}%
\pgfsetdash{}{0pt}%
\pgfsys@defobject{currentmarker}{\pgfqpoint{0.564460in}{0.466738in}}{\pgfqpoint{0.788045in}{0.726641in}}{%
\pgfpathmoveto{\pgfqpoint{0.788045in}{0.466738in}}%
\pgfpathlineto{\pgfqpoint{0.564460in}{0.466738in}}%
\pgfpathlineto{\pgfqpoint{0.564460in}{0.726641in}}%
\pgfpathlineto{\pgfqpoint{0.788045in}{0.726641in}}%
\pgfpathlineto{\pgfqpoint{0.788045in}{0.726641in}}%
\pgfpathlineto{\pgfqpoint{0.788045in}{0.466738in}}%
\pgfpathlineto{\pgfqpoint{0.788045in}{0.466738in}}%
\pgfpathclose%
\pgfusepath{stroke,fill}%
}%
\begin{pgfscope}%
\pgfsys@transformshift{0.000000in}{0.000000in}%
\pgfsys@useobject{currentmarker}{}%
\end{pgfscope}%
\end{pgfscope}%
\begin{pgfscope}%
\pgfpathrectangle{\pgfqpoint{0.519743in}{0.414757in}}{\pgfqpoint{0.983773in}{1.143573in}}%
\pgfusepath{clip}%
\pgfsetrectcap%
\pgfsetroundjoin%
\pgfsetlinewidth{0.803000pt}%
\definecolor{currentstroke}{rgb}{0.690196,0.690196,0.690196}%
\pgfsetstrokecolor{currentstroke}%
\pgfsetdash{}{0pt}%
\pgfpathmoveto{\pgfqpoint{0.564460in}{0.414757in}}%
\pgfpathlineto{\pgfqpoint{0.564460in}{1.558330in}}%
\pgfusepath{stroke}%
\end{pgfscope}%
\begin{pgfscope}%
\pgfsetbuttcap%
\pgfsetroundjoin%
\definecolor{currentfill}{rgb}{0.000000,0.000000,0.000000}%
\pgfsetfillcolor{currentfill}%
\pgfsetlinewidth{0.803000pt}%
\definecolor{currentstroke}{rgb}{0.000000,0.000000,0.000000}%
\pgfsetstrokecolor{currentstroke}%
\pgfsetdash{}{0pt}%
\pgfsys@defobject{currentmarker}{\pgfqpoint{0.000000in}{-0.048611in}}{\pgfqpoint{0.000000in}{0.000000in}}{%
\pgfpathmoveto{\pgfqpoint{0.000000in}{0.000000in}}%
\pgfpathlineto{\pgfqpoint{0.000000in}{-0.048611in}}%
\pgfusepath{stroke,fill}%
}%
\begin{pgfscope}%
\pgfsys@transformshift{0.564460in}{0.414757in}%
\pgfsys@useobject{currentmarker}{}%
\end{pgfscope}%
\end{pgfscope}%
\begin{pgfscope}%
\definecolor{textcolor}{rgb}{0.000000,0.000000,0.000000}%
\pgfsetstrokecolor{textcolor}%
\pgfsetfillcolor{textcolor}%
\pgftext[x=0.564460in,y=0.317535in,,top]{\color{textcolor}{\rmfamily\fontsize{7.000000}{8.400000}\selectfont\catcode`\^=\active\def^{\ifmmode\sp\else\^{}\fi}\catcode`\%=\active\def
\end{pgfscope}%
\begin{pgfscope}%
\pgfpathrectangle{\pgfqpoint{0.519743in}{0.414757in}}{\pgfqpoint{0.983773in}{1.143573in}}%
\pgfusepath{clip}%
\pgfsetrectcap%
\pgfsetroundjoin%
\pgfsetlinewidth{0.803000pt}%
\definecolor{currentstroke}{rgb}{0.690196,0.690196,0.690196}%
\pgfsetstrokecolor{currentstroke}%
\pgfsetdash{}{0pt}%
\pgfpathmoveto{\pgfqpoint{0.788045in}{0.414757in}}%
\pgfpathlineto{\pgfqpoint{0.788045in}{1.558330in}}%
\pgfusepath{stroke}%
\end{pgfscope}%
\begin{pgfscope}%
\pgfsetbuttcap%
\pgfsetroundjoin%
\definecolor{currentfill}{rgb}{0.000000,0.000000,0.000000}%
\pgfsetfillcolor{currentfill}%
\pgfsetlinewidth{0.803000pt}%
\definecolor{currentstroke}{rgb}{0.000000,0.000000,0.000000}%
\pgfsetstrokecolor{currentstroke}%
\pgfsetdash{}{0pt}%
\pgfsys@defobject{currentmarker}{\pgfqpoint{0.000000in}{-0.048611in}}{\pgfqpoint{0.000000in}{0.000000in}}{%
\pgfpathmoveto{\pgfqpoint{0.000000in}{0.000000in}}%
\pgfpathlineto{\pgfqpoint{0.000000in}{-0.048611in}}%
\pgfusepath{stroke,fill}%
}%
\begin{pgfscope}%
\pgfsys@transformshift{0.788045in}{0.414757in}%
\pgfsys@useobject{currentmarker}{}%
\end{pgfscope}%
\end{pgfscope}%
\begin{pgfscope}%
\definecolor{textcolor}{rgb}{0.000000,0.000000,0.000000}%
\pgfsetstrokecolor{textcolor}%
\pgfsetfillcolor{textcolor}%
\pgftext[x=0.788045in,y=0.317535in,,top]{\color{textcolor}{\rmfamily\fontsize{7.000000}{8.400000}\selectfont\catcode`\^=\active\def^{\ifmmode\sp\else\^{}\fi}\catcode`\%=\active\def
\end{pgfscope}%
\begin{pgfscope}%
\pgfpathrectangle{\pgfqpoint{0.519743in}{0.414757in}}{\pgfqpoint{0.983773in}{1.143573in}}%
\pgfusepath{clip}%
\pgfsetrectcap%
\pgfsetroundjoin%
\pgfsetlinewidth{0.803000pt}%
\definecolor{currentstroke}{rgb}{0.690196,0.690196,0.690196}%
\pgfsetstrokecolor{currentstroke}%
\pgfsetdash{}{0pt}%
\pgfpathmoveto{\pgfqpoint{1.011630in}{0.414757in}}%
\pgfpathlineto{\pgfqpoint{1.011630in}{1.558330in}}%
\pgfusepath{stroke}%
\end{pgfscope}%
\begin{pgfscope}%
\pgfsetbuttcap%
\pgfsetroundjoin%
\definecolor{currentfill}{rgb}{0.000000,0.000000,0.000000}%
\pgfsetfillcolor{currentfill}%
\pgfsetlinewidth{0.803000pt}%
\definecolor{currentstroke}{rgb}{0.000000,0.000000,0.000000}%
\pgfsetstrokecolor{currentstroke}%
\pgfsetdash{}{0pt}%
\pgfsys@defobject{currentmarker}{\pgfqpoint{0.000000in}{-0.048611in}}{\pgfqpoint{0.000000in}{0.000000in}}{%
\pgfpathmoveto{\pgfqpoint{0.000000in}{0.000000in}}%
\pgfpathlineto{\pgfqpoint{0.000000in}{-0.048611in}}%
\pgfusepath{stroke,fill}%
}%
\begin{pgfscope}%
\pgfsys@transformshift{1.011630in}{0.414757in}%
\pgfsys@useobject{currentmarker}{}%
\end{pgfscope}%
\end{pgfscope}%
\begin{pgfscope}%
\definecolor{textcolor}{rgb}{0.000000,0.000000,0.000000}%
\pgfsetstrokecolor{textcolor}%
\pgfsetfillcolor{textcolor}%
\pgftext[x=1.011630in,y=0.317535in,,top]{\color{textcolor}{\rmfamily\fontsize{7.000000}{8.400000}\selectfont\catcode`\^=\active\def^{\ifmmode\sp\else\^{}\fi}\catcode`\%=\active\def
\end{pgfscope}%
\begin{pgfscope}%
\pgfpathrectangle{\pgfqpoint{0.519743in}{0.414757in}}{\pgfqpoint{0.983773in}{1.143573in}}%
\pgfusepath{clip}%
\pgfsetrectcap%
\pgfsetroundjoin%
\pgfsetlinewidth{0.803000pt}%
\definecolor{currentstroke}{rgb}{0.690196,0.690196,0.690196}%
\pgfsetstrokecolor{currentstroke}%
\pgfsetdash{}{0pt}%
\pgfpathmoveto{\pgfqpoint{1.235215in}{0.414757in}}%
\pgfpathlineto{\pgfqpoint{1.235215in}{1.558330in}}%
\pgfusepath{stroke}%
\end{pgfscope}%
\begin{pgfscope}%
\pgfsetbuttcap%
\pgfsetroundjoin%
\definecolor{currentfill}{rgb}{0.000000,0.000000,0.000000}%
\pgfsetfillcolor{currentfill}%
\pgfsetlinewidth{0.803000pt}%
\definecolor{currentstroke}{rgb}{0.000000,0.000000,0.000000}%
\pgfsetstrokecolor{currentstroke}%
\pgfsetdash{}{0pt}%
\pgfsys@defobject{currentmarker}{\pgfqpoint{0.000000in}{-0.048611in}}{\pgfqpoint{0.000000in}{0.000000in}}{%
\pgfpathmoveto{\pgfqpoint{0.000000in}{0.000000in}}%
\pgfpathlineto{\pgfqpoint{0.000000in}{-0.048611in}}%
\pgfusepath{stroke,fill}%
}%
\begin{pgfscope}%
\pgfsys@transformshift{1.235215in}{0.414757in}%
\pgfsys@useobject{currentmarker}{}%
\end{pgfscope}%
\end{pgfscope}%
\begin{pgfscope}%
\definecolor{textcolor}{rgb}{0.000000,0.000000,0.000000}%
\pgfsetstrokecolor{textcolor}%
\pgfsetfillcolor{textcolor}%
\pgftext[x=1.235215in,y=0.317535in,,top]{\color{textcolor}{\rmfamily\fontsize{7.000000}{8.400000}\selectfont\catcode`\^=\active\def^{\ifmmode\sp\else\^{}\fi}\catcode`\%=\active\def
\end{pgfscope}%
\begin{pgfscope}%
\pgfpathrectangle{\pgfqpoint{0.519743in}{0.414757in}}{\pgfqpoint{0.983773in}{1.143573in}}%
\pgfusepath{clip}%
\pgfsetrectcap%
\pgfsetroundjoin%
\pgfsetlinewidth{0.803000pt}%
\definecolor{currentstroke}{rgb}{0.690196,0.690196,0.690196}%
\pgfsetstrokecolor{currentstroke}%
\pgfsetdash{}{0pt}%
\pgfpathmoveto{\pgfqpoint{1.458799in}{0.414757in}}%
\pgfpathlineto{\pgfqpoint{1.458799in}{1.558330in}}%
\pgfusepath{stroke}%
\end{pgfscope}%
\begin{pgfscope}%
\pgfsetbuttcap%
\pgfsetroundjoin%
\definecolor{currentfill}{rgb}{0.000000,0.000000,0.000000}%
\pgfsetfillcolor{currentfill}%
\pgfsetlinewidth{0.803000pt}%
\definecolor{currentstroke}{rgb}{0.000000,0.000000,0.000000}%
\pgfsetstrokecolor{currentstroke}%
\pgfsetdash{}{0pt}%
\pgfsys@defobject{currentmarker}{\pgfqpoint{0.000000in}{-0.048611in}}{\pgfqpoint{0.000000in}{0.000000in}}{%
\pgfpathmoveto{\pgfqpoint{0.000000in}{0.000000in}}%
\pgfpathlineto{\pgfqpoint{0.000000in}{-0.048611in}}%
\pgfusepath{stroke,fill}%
}%
\begin{pgfscope}%
\pgfsys@transformshift{1.458799in}{0.414757in}%
\pgfsys@useobject{currentmarker}{}%
\end{pgfscope}%
\end{pgfscope}%
\begin{pgfscope}%
\definecolor{textcolor}{rgb}{0.000000,0.000000,0.000000}%
\pgfsetstrokecolor{textcolor}%
\pgfsetfillcolor{textcolor}%
\pgftext[x=1.458799in,y=0.317535in,,top]{\color{textcolor}{\rmfamily\fontsize{7.000000}{8.400000}\selectfont\catcode`\^=\active\def^{\ifmmode\sp\else\^{}\fi}\catcode`\%=\active\def
\end{pgfscope}%
\begin{pgfscope}%
\definecolor{textcolor}{rgb}{0.000000,0.000000,0.000000}%
\pgfsetstrokecolor{textcolor}%
\pgfsetfillcolor{textcolor}%
\pgftext[x=1.011630in,y=0.167891in,,top]{\color{textcolor}{\rmfamily\fontsize{9.000000}{10.800000}\selectfont\catcode`\^=\active\def^{\ifmmode\sp\else\^{}\fi}\catcode`\%=\active\def
\end{pgfscope}%
\begin{pgfscope}%
\pgfpathrectangle{\pgfqpoint{0.519743in}{0.414757in}}{\pgfqpoint{0.983773in}{1.143573in}}%
\pgfusepath{clip}%
\pgfsetrectcap%
\pgfsetroundjoin%
\pgfsetlinewidth{0.803000pt}%
\definecolor{currentstroke}{rgb}{0.690196,0.690196,0.690196}%
\pgfsetstrokecolor{currentstroke}%
\pgfsetdash{}{0pt}%
\pgfpathmoveto{\pgfqpoint{0.519743in}{0.466738in}}%
\pgfpathlineto{\pgfqpoint{1.503516in}{0.466738in}}%
\pgfusepath{stroke}%
\end{pgfscope}%
\begin{pgfscope}%
\pgfsetbuttcap%
\pgfsetroundjoin%
\definecolor{currentfill}{rgb}{0.000000,0.000000,0.000000}%
\pgfsetfillcolor{currentfill}%
\pgfsetlinewidth{0.803000pt}%
\definecolor{currentstroke}{rgb}{0.000000,0.000000,0.000000}%
\pgfsetstrokecolor{currentstroke}%
\pgfsetdash{}{0pt}%
\pgfsys@defobject{currentmarker}{\pgfqpoint{-0.048611in}{0.000000in}}{\pgfqpoint{-0.000000in}{0.000000in}}{%
\pgfpathmoveto{\pgfqpoint{-0.000000in}{0.000000in}}%
\pgfpathlineto{\pgfqpoint{-0.048611in}{0.000000in}}%
\pgfusepath{stroke,fill}%
}%
\begin{pgfscope}%
\pgfsys@transformshift{0.519743in}{0.466738in}%
\pgfsys@useobject{currentmarker}{}%
\end{pgfscope}%
\end{pgfscope}%
\begin{pgfscope}%
\definecolor{textcolor}{rgb}{0.000000,0.000000,0.000000}%
\pgfsetstrokecolor{textcolor}%
\pgfsetfillcolor{textcolor}%
\pgftext[x=0.223446in, y=0.429805in, left, base]{\color{textcolor}{\rmfamily\fontsize{7.000000}{8.400000}\selectfont\catcode`\^=\active\def^{\ifmmode\sp\else\^{}\fi}\catcode`\%=\active\def
\end{pgfscope}%
\begin{pgfscope}%
\pgfpathrectangle{\pgfqpoint{0.519743in}{0.414757in}}{\pgfqpoint{0.983773in}{1.143573in}}%
\pgfusepath{clip}%
\pgfsetrectcap%
\pgfsetroundjoin%
\pgfsetlinewidth{0.803000pt}%
\definecolor{currentstroke}{rgb}{0.690196,0.690196,0.690196}%
\pgfsetstrokecolor{currentstroke}%
\pgfsetdash{}{0pt}%
\pgfpathmoveto{\pgfqpoint{0.519743in}{0.726641in}}%
\pgfpathlineto{\pgfqpoint{1.503516in}{0.726641in}}%
\pgfusepath{stroke}%
\end{pgfscope}%
\begin{pgfscope}%
\pgfsetbuttcap%
\pgfsetroundjoin%
\definecolor{currentfill}{rgb}{0.000000,0.000000,0.000000}%
\pgfsetfillcolor{currentfill}%
\pgfsetlinewidth{0.803000pt}%
\definecolor{currentstroke}{rgb}{0.000000,0.000000,0.000000}%
\pgfsetstrokecolor{currentstroke}%
\pgfsetdash{}{0pt}%
\pgfsys@defobject{currentmarker}{\pgfqpoint{-0.048611in}{0.000000in}}{\pgfqpoint{-0.000000in}{0.000000in}}{%
\pgfpathmoveto{\pgfqpoint{-0.000000in}{0.000000in}}%
\pgfpathlineto{\pgfqpoint{-0.048611in}{0.000000in}}%
\pgfusepath{stroke,fill}%
}%
\begin{pgfscope}%
\pgfsys@transformshift{0.519743in}{0.726641in}%
\pgfsys@useobject{currentmarker}{}%
\end{pgfscope}%
\end{pgfscope}%
\begin{pgfscope}%
\definecolor{textcolor}{rgb}{0.000000,0.000000,0.000000}%
\pgfsetstrokecolor{textcolor}%
\pgfsetfillcolor{textcolor}%
\pgftext[x=0.223446in, y=0.689708in, left, base]{\color{textcolor}{\rmfamily\fontsize{7.000000}{8.400000}\selectfont\catcode`\^=\active\def^{\ifmmode\sp\else\^{}\fi}\catcode`\%=\active\def
\end{pgfscope}%
\begin{pgfscope}%
\pgfpathrectangle{\pgfqpoint{0.519743in}{0.414757in}}{\pgfqpoint{0.983773in}{1.143573in}}%
\pgfusepath{clip}%
\pgfsetrectcap%
\pgfsetroundjoin%
\pgfsetlinewidth{0.803000pt}%
\definecolor{currentstroke}{rgb}{0.690196,0.690196,0.690196}%
\pgfsetstrokecolor{currentstroke}%
\pgfsetdash{}{0pt}%
\pgfpathmoveto{\pgfqpoint{0.519743in}{0.986544in}}%
\pgfpathlineto{\pgfqpoint{1.503516in}{0.986544in}}%
\pgfusepath{stroke}%
\end{pgfscope}%
\begin{pgfscope}%
\pgfsetbuttcap%
\pgfsetroundjoin%
\definecolor{currentfill}{rgb}{0.000000,0.000000,0.000000}%
\pgfsetfillcolor{currentfill}%
\pgfsetlinewidth{0.803000pt}%
\definecolor{currentstroke}{rgb}{0.000000,0.000000,0.000000}%
\pgfsetstrokecolor{currentstroke}%
\pgfsetdash{}{0pt}%
\pgfsys@defobject{currentmarker}{\pgfqpoint{-0.048611in}{0.000000in}}{\pgfqpoint{-0.000000in}{0.000000in}}{%
\pgfpathmoveto{\pgfqpoint{-0.000000in}{0.000000in}}%
\pgfpathlineto{\pgfqpoint{-0.048611in}{0.000000in}}%
\pgfusepath{stroke,fill}%
}%
\begin{pgfscope}%
\pgfsys@transformshift{0.519743in}{0.986544in}%
\pgfsys@useobject{currentmarker}{}%
\end{pgfscope}%
\end{pgfscope}%
\begin{pgfscope}%
\definecolor{textcolor}{rgb}{0.000000,0.000000,0.000000}%
\pgfsetstrokecolor{textcolor}%
\pgfsetfillcolor{textcolor}%
\pgftext[x=0.223446in, y=0.949611in, left, base]{\color{textcolor}{\rmfamily\fontsize{7.000000}{8.400000}\selectfont\catcode`\^=\active\def^{\ifmmode\sp\else\^{}\fi}\catcode`\%=\active\def
\end{pgfscope}%
\begin{pgfscope}%
\pgfpathrectangle{\pgfqpoint{0.519743in}{0.414757in}}{\pgfqpoint{0.983773in}{1.143573in}}%
\pgfusepath{clip}%
\pgfsetrectcap%
\pgfsetroundjoin%
\pgfsetlinewidth{0.803000pt}%
\definecolor{currentstroke}{rgb}{0.690196,0.690196,0.690196}%
\pgfsetstrokecolor{currentstroke}%
\pgfsetdash{}{0pt}%
\pgfpathmoveto{\pgfqpoint{0.519743in}{1.246447in}}%
\pgfpathlineto{\pgfqpoint{1.503516in}{1.246447in}}%
\pgfusepath{stroke}%
\end{pgfscope}%
\begin{pgfscope}%
\pgfsetbuttcap%
\pgfsetroundjoin%
\definecolor{currentfill}{rgb}{0.000000,0.000000,0.000000}%
\pgfsetfillcolor{currentfill}%
\pgfsetlinewidth{0.803000pt}%
\definecolor{currentstroke}{rgb}{0.000000,0.000000,0.000000}%
\pgfsetstrokecolor{currentstroke}%
\pgfsetdash{}{0pt}%
\pgfsys@defobject{currentmarker}{\pgfqpoint{-0.048611in}{0.000000in}}{\pgfqpoint{-0.000000in}{0.000000in}}{%
\pgfpathmoveto{\pgfqpoint{-0.000000in}{0.000000in}}%
\pgfpathlineto{\pgfqpoint{-0.048611in}{0.000000in}}%
\pgfusepath{stroke,fill}%
}%
\begin{pgfscope}%
\pgfsys@transformshift{0.519743in}{1.246447in}%
\pgfsys@useobject{currentmarker}{}%
\end{pgfscope}%
\end{pgfscope}%
\begin{pgfscope}%
\definecolor{textcolor}{rgb}{0.000000,0.000000,0.000000}%
\pgfsetstrokecolor{textcolor}%
\pgfsetfillcolor{textcolor}%
\pgftext[x=0.223446in, y=1.209513in, left, base]{\color{textcolor}{\rmfamily\fontsize{7.000000}{8.400000}\selectfont\catcode`\^=\active\def^{\ifmmode\sp\else\^{}\fi}\catcode`\%=\active\def
\end{pgfscope}%
\begin{pgfscope}%
\pgfpathrectangle{\pgfqpoint{0.519743in}{0.414757in}}{\pgfqpoint{0.983773in}{1.143573in}}%
\pgfusepath{clip}%
\pgfsetrectcap%
\pgfsetroundjoin%
\pgfsetlinewidth{0.803000pt}%
\definecolor{currentstroke}{rgb}{0.690196,0.690196,0.690196}%
\pgfsetstrokecolor{currentstroke}%
\pgfsetdash{}{0pt}%
\pgfpathmoveto{\pgfqpoint{0.519743in}{1.506349in}}%
\pgfpathlineto{\pgfqpoint{1.503516in}{1.506349in}}%
\pgfusepath{stroke}%
\end{pgfscope}%
\begin{pgfscope}%
\pgfsetbuttcap%
\pgfsetroundjoin%
\definecolor{currentfill}{rgb}{0.000000,0.000000,0.000000}%
\pgfsetfillcolor{currentfill}%
\pgfsetlinewidth{0.803000pt}%
\definecolor{currentstroke}{rgb}{0.000000,0.000000,0.000000}%
\pgfsetstrokecolor{currentstroke}%
\pgfsetdash{}{0pt}%
\pgfsys@defobject{currentmarker}{\pgfqpoint{-0.048611in}{0.000000in}}{\pgfqpoint{-0.000000in}{0.000000in}}{%
\pgfpathmoveto{\pgfqpoint{-0.000000in}{0.000000in}}%
\pgfpathlineto{\pgfqpoint{-0.048611in}{0.000000in}}%
\pgfusepath{stroke,fill}%
}%
\begin{pgfscope}%
\pgfsys@transformshift{0.519743in}{1.506349in}%
\pgfsys@useobject{currentmarker}{}%
\end{pgfscope}%
\end{pgfscope}%
\begin{pgfscope}%
\definecolor{textcolor}{rgb}{0.000000,0.000000,0.000000}%
\pgfsetstrokecolor{textcolor}%
\pgfsetfillcolor{textcolor}%
\pgftext[x=0.223446in, y=1.469416in, left, base]{\color{textcolor}{\rmfamily\fontsize{7.000000}{8.400000}\selectfont\catcode`\^=\active\def^{\ifmmode\sp\else\^{}\fi}\catcode`\%=\active\def
\end{pgfscope}%
\begin{pgfscope}%
\definecolor{textcolor}{rgb}{0.000000,0.000000,0.000000}%
\pgfsetstrokecolor{textcolor}%
\pgfsetfillcolor{textcolor}%
\pgftext[x=0.167891in,y=0.986544in,,bottom,rotate=90.000000]{\color{textcolor}{\rmfamily\fontsize{9.000000}{10.800000}\selectfont\catcode`\^=\active\def^{\ifmmode\sp\else\^{}\fi}\catcode`\%=\active\def
\end{pgfscope}%
\begin{pgfscope}%
\pgfpathrectangle{\pgfqpoint{0.519743in}{0.414757in}}{\pgfqpoint{0.983773in}{1.143573in}}%
\pgfusepath{clip}%
\pgfsetrectcap%
\pgfsetroundjoin%
\pgfsetlinewidth{1.505625pt}%
\definecolor{currentstroke}{rgb}{0.003922,0.450980,0.698039}%
\pgfsetstrokecolor{currentstroke}%
\pgfsetstrokeopacity{0.200000}%
\pgfsetdash{}{0pt}%
\pgfpathmoveto{\pgfqpoint{0.685835in}{0.498766in}}%
\pgfpathlineto{\pgfqpoint{0.743328in}{0.518899in}}%
\pgfpathlineto{\pgfqpoint{0.788045in}{0.535677in}}%
\pgfpathlineto{\pgfqpoint{0.832762in}{0.553040in}}%
\pgfpathlineto{\pgfqpoint{0.877479in}{0.570801in}}%
\pgfpathlineto{\pgfqpoint{0.922196in}{0.588824in}}%
\pgfpathlineto{\pgfqpoint{0.966913in}{0.607008in}}%
\pgfpathlineto{\pgfqpoint{1.011630in}{0.625280in}}%
\pgfpathlineto{\pgfqpoint{1.056347in}{0.643587in}}%
\pgfpathlineto{\pgfqpoint{1.101064in}{0.661902in}}%
\pgfpathlineto{\pgfqpoint{1.145781in}{0.680219in}}%
\pgfpathlineto{\pgfqpoint{1.190498in}{0.698558in}}%
\pgfpathlineto{\pgfqpoint{1.235215in}{0.716980in}}%
\pgfpathlineto{\pgfqpoint{1.279932in}{0.735614in}}%
\pgfpathlineto{\pgfqpoint{1.324649in}{0.754722in}}%
\pgfpathlineto{\pgfqpoint{1.369365in}{0.774920in}}%
\pgfpathlineto{\pgfqpoint{1.414082in}{0.798200in}}%
\pgfpathlineto{\pgfqpoint{1.458799in}{1.119756in}}%
\pgfusepath{stroke}%
\end{pgfscope}%
\begin{pgfscope}%
\pgfpathrectangle{\pgfqpoint{0.519743in}{0.414757in}}{\pgfqpoint{0.983773in}{1.143573in}}%
\pgfusepath{clip}%
\pgfsetrectcap%
\pgfsetroundjoin%
\pgfsetlinewidth{1.505625pt}%
\definecolor{currentstroke}{rgb}{0.003922,0.450980,0.698039}%
\pgfsetstrokecolor{currentstroke}%
\pgfsetstrokeopacity{0.200000}%
\pgfsetdash{}{0pt}%
\pgfpathmoveto{\pgfqpoint{0.653894in}{0.484813in}}%
\pgfpathlineto{\pgfqpoint{0.743328in}{0.512896in}}%
\pgfpathlineto{\pgfqpoint{0.788045in}{0.528797in}}%
\pgfpathlineto{\pgfqpoint{0.832762in}{0.545522in}}%
\pgfpathlineto{\pgfqpoint{0.877479in}{0.562864in}}%
\pgfpathlineto{\pgfqpoint{0.922196in}{0.580663in}}%
\pgfpathlineto{\pgfqpoint{0.966913in}{0.598796in}}%
\pgfpathlineto{\pgfqpoint{1.011630in}{0.617169in}}%
\pgfpathlineto{\pgfqpoint{1.056347in}{0.635712in}}%
\pgfpathlineto{\pgfqpoint{1.101064in}{0.654381in}}%
\pgfpathlineto{\pgfqpoint{1.145781in}{0.673155in}}%
\pgfpathlineto{\pgfqpoint{1.190498in}{0.692041in}}%
\pgfpathlineto{\pgfqpoint{1.235215in}{0.711093in}}%
\pgfpathlineto{\pgfqpoint{1.279932in}{0.730429in}}%
\pgfpathlineto{\pgfqpoint{1.324649in}{0.750315in}}%
\pgfpathlineto{\pgfqpoint{1.369365in}{0.771381in}}%
\pgfpathlineto{\pgfqpoint{1.414082in}{0.795688in}}%
\pgfpathlineto{\pgfqpoint{1.458799in}{1.129528in}}%
\pgfusepath{stroke}%
\end{pgfscope}%
\begin{pgfscope}%
\pgfpathrectangle{\pgfqpoint{0.519743in}{0.414757in}}{\pgfqpoint{0.983773in}{1.143573in}}%
\pgfusepath{clip}%
\pgfsetrectcap%
\pgfsetroundjoin%
\pgfsetlinewidth{1.505625pt}%
\definecolor{currentstroke}{rgb}{0.003922,0.450980,0.698039}%
\pgfsetstrokecolor{currentstroke}%
\pgfsetstrokeopacity{0.200000}%
\pgfsetdash{}{0pt}%
\pgfpathmoveto{\pgfqpoint{0.676253in}{0.491375in}}%
\pgfpathlineto{\pgfqpoint{0.743328in}{0.512630in}}%
\pgfpathlineto{\pgfqpoint{0.788045in}{0.528366in}}%
\pgfpathlineto{\pgfqpoint{0.832762in}{0.544904in}}%
\pgfpathlineto{\pgfqpoint{0.877479in}{0.562041in}}%
\pgfpathlineto{\pgfqpoint{0.922196in}{0.579624in}}%
\pgfpathlineto{\pgfqpoint{0.966913in}{0.597533in}}%
\pgfpathlineto{\pgfqpoint{1.011630in}{0.615677in}}%
\pgfpathlineto{\pgfqpoint{1.056347in}{0.633990in}}%
\pgfpathlineto{\pgfqpoint{1.101064in}{0.652428in}}%
\pgfpathlineto{\pgfqpoint{1.145781in}{0.670973in}}%
\pgfpathlineto{\pgfqpoint{1.190498in}{0.689634in}}%
\pgfpathlineto{\pgfqpoint{1.235215in}{0.708465in}}%
\pgfpathlineto{\pgfqpoint{1.279932in}{0.727585in}}%
\pgfpathlineto{\pgfqpoint{1.324649in}{0.747259in}}%
\pgfpathlineto{\pgfqpoint{1.369365in}{0.768115in}}%
\pgfpathlineto{\pgfqpoint{1.414082in}{0.792206in}}%
\pgfpathlineto{\pgfqpoint{1.458799in}{1.125813in}}%
\pgfusepath{stroke}%
\end{pgfscope}%
\begin{pgfscope}%
\pgfpathrectangle{\pgfqpoint{0.519743in}{0.414757in}}{\pgfqpoint{0.983773in}{1.143573in}}%
\pgfusepath{clip}%
\pgfsetrectcap%
\pgfsetroundjoin%
\pgfsetlinewidth{1.505625pt}%
\definecolor{currentstroke}{rgb}{0.003922,0.450980,0.698039}%
\pgfsetstrokecolor{currentstroke}%
\pgfsetstrokeopacity{0.200000}%
\pgfsetdash{}{0pt}%
\pgfpathmoveto{\pgfqpoint{0.689668in}{0.505059in}}%
\pgfpathlineto{\pgfqpoint{0.743328in}{0.524507in}}%
\pgfpathlineto{\pgfqpoint{0.788045in}{0.541212in}}%
\pgfpathlineto{\pgfqpoint{0.832762in}{0.558129in}}%
\pgfpathlineto{\pgfqpoint{0.877479in}{0.575144in}}%
\pgfpathlineto{\pgfqpoint{0.922196in}{0.592176in}}%
\pgfpathlineto{\pgfqpoint{0.966913in}{0.609173in}}%
\pgfpathlineto{\pgfqpoint{1.011630in}{0.626102in}}%
\pgfpathlineto{\pgfqpoint{1.056347in}{0.642948in}}%
\pgfpathlineto{\pgfqpoint{1.101064in}{0.659712in}}%
\pgfpathlineto{\pgfqpoint{1.145781in}{0.676413in}}%
\pgfpathlineto{\pgfqpoint{1.190498in}{0.693095in}}%
\pgfpathlineto{\pgfqpoint{1.235215in}{0.709838in}}%
\pgfpathlineto{\pgfqpoint{1.279932in}{0.726787in}}%
\pgfpathlineto{\pgfqpoint{1.324649in}{0.744224in}}%
\pgfpathlineto{\pgfqpoint{1.369365in}{0.762784in}}%
\pgfpathlineto{\pgfqpoint{1.414082in}{0.784501in}}%
\pgfpathlineto{\pgfqpoint{1.458799in}{1.114834in}}%
\pgfusepath{stroke}%
\end{pgfscope}%
\begin{pgfscope}%
\pgfpathrectangle{\pgfqpoint{0.519743in}{0.414757in}}{\pgfqpoint{0.983773in}{1.143573in}}%
\pgfusepath{clip}%
\pgfsetrectcap%
\pgfsetroundjoin%
\pgfsetlinewidth{1.505625pt}%
\definecolor{currentstroke}{rgb}{0.003922,0.450980,0.698039}%
\pgfsetstrokecolor{currentstroke}%
\pgfsetstrokeopacity{0.200000}%
\pgfsetdash{}{0pt}%
\pgfpathmoveto{\pgfqpoint{0.671781in}{0.494310in}}%
\pgfpathlineto{\pgfqpoint{0.743328in}{0.519081in}}%
\pgfpathlineto{\pgfqpoint{0.788045in}{0.536324in}}%
\pgfpathlineto{\pgfqpoint{0.832762in}{0.554250in}}%
\pgfpathlineto{\pgfqpoint{0.877479in}{0.572651in}}%
\pgfpathlineto{\pgfqpoint{0.922196in}{0.591371in}}%
\pgfpathlineto{\pgfqpoint{0.966913in}{0.610293in}}%
\pgfpathlineto{\pgfqpoint{1.011630in}{0.629332in}}%
\pgfpathlineto{\pgfqpoint{1.056347in}{0.648425in}}%
\pgfpathlineto{\pgfqpoint{1.101064in}{0.667536in}}%
\pgfpathlineto{\pgfqpoint{1.145781in}{0.686651in}}%
\pgfpathlineto{\pgfqpoint{1.190498in}{0.705788in}}%
\pgfpathlineto{\pgfqpoint{1.235215in}{0.725005in}}%
\pgfpathlineto{\pgfqpoint{1.279932in}{0.744429in}}%
\pgfpathlineto{\pgfqpoint{1.324649in}{0.764329in}}%
\pgfpathlineto{\pgfqpoint{1.369365in}{0.785336in}}%
\pgfpathlineto{\pgfqpoint{1.414082in}{0.809500in}}%
\pgfpathlineto{\pgfqpoint{1.458799in}{1.136314in}}%
\pgfusepath{stroke}%
\end{pgfscope}%
\begin{pgfscope}%
\pgfpathrectangle{\pgfqpoint{0.519743in}{0.414757in}}{\pgfqpoint{0.983773in}{1.143573in}}%
\pgfusepath{clip}%
\pgfsetbuttcap%
\pgfsetroundjoin%
\pgfsetlinewidth{1.505625pt}%
\definecolor{currentstroke}{rgb}{0.501961,0.501961,0.501961}%
\pgfsetstrokecolor{currentstroke}%
\pgfsetdash{{5.550000pt}{2.400000pt}}{0.000000pt}%
\pgfpathmoveto{\pgfqpoint{0.564460in}{0.726641in}}%
\pgfpathlineto{\pgfqpoint{1.458799in}{0.726641in}}%
\pgfusepath{stroke}%
\end{pgfscope}%
\begin{pgfscope}%
\pgfpathrectangle{\pgfqpoint{0.519743in}{0.414757in}}{\pgfqpoint{0.983773in}{1.143573in}}%
\pgfusepath{clip}%
\pgfsetbuttcap%
\pgfsetroundjoin%
\pgfsetlinewidth{1.505625pt}%
\definecolor{currentstroke}{rgb}{0.501961,0.501961,0.501961}%
\pgfsetstrokecolor{currentstroke}%
\pgfsetdash{{5.550000pt}{2.400000pt}}{0.000000pt}%
\pgfpathmoveto{\pgfqpoint{0.788045in}{0.466738in}}%
\pgfpathlineto{\pgfqpoint{0.788045in}{1.506349in}}%
\pgfusepath{stroke}%
\end{pgfscope}%
\begin{pgfscope}%
\pgfsetrectcap%
\pgfsetmiterjoin%
\pgfsetlinewidth{0.803000pt}%
\definecolor{currentstroke}{rgb}{0.000000,0.000000,0.000000}%
\pgfsetstrokecolor{currentstroke}%
\pgfsetdash{}{0pt}%
\pgfpathmoveto{\pgfqpoint{0.519743in}{0.414757in}}%
\pgfpathlineto{\pgfqpoint{0.519743in}{1.558330in}}%
\pgfusepath{stroke}%
\end{pgfscope}%
\begin{pgfscope}%
\pgfsetrectcap%
\pgfsetmiterjoin%
\pgfsetlinewidth{0.803000pt}%
\definecolor{currentstroke}{rgb}{0.000000,0.000000,0.000000}%
\pgfsetstrokecolor{currentstroke}%
\pgfsetdash{}{0pt}%
\pgfpathmoveto{\pgfqpoint{1.503516in}{0.414757in}}%
\pgfpathlineto{\pgfqpoint{1.503516in}{1.558330in}}%
\pgfusepath{stroke}%
\end{pgfscope}%
\begin{pgfscope}%
\pgfsetrectcap%
\pgfsetmiterjoin%
\pgfsetlinewidth{0.803000pt}%
\definecolor{currentstroke}{rgb}{0.000000,0.000000,0.000000}%
\pgfsetstrokecolor{currentstroke}%
\pgfsetdash{}{0pt}%
\pgfpathmoveto{\pgfqpoint{0.519743in}{0.414757in}}%
\pgfpathlineto{\pgfqpoint{1.503516in}{0.414757in}}%
\pgfusepath{stroke}%
\end{pgfscope}%
\begin{pgfscope}%
\pgfsetrectcap%
\pgfsetmiterjoin%
\pgfsetlinewidth{0.803000pt}%
\definecolor{currentstroke}{rgb}{0.000000,0.000000,0.000000}%
\pgfsetstrokecolor{currentstroke}%
\pgfsetdash{}{0pt}%
\pgfpathmoveto{\pgfqpoint{0.519743in}{1.558330in}}%
\pgfpathlineto{\pgfqpoint{1.503516in}{1.558330in}}%
\pgfusepath{stroke}%
\end{pgfscope}%
\begin{pgfscope}%
\pgfsetbuttcap%
\pgfsetmiterjoin%
\definecolor{currentfill}{rgb}{1.000000,1.000000,1.000000}%
\pgfsetfillcolor{currentfill}%
\pgfsetfillopacity{0.800000}%
\pgfsetlinewidth{1.003750pt}%
\definecolor{currentstroke}{rgb}{0.800000,0.800000,0.800000}%
\pgfsetstrokecolor{currentstroke}%
\pgfsetstrokeopacity{0.800000}%
\pgfsetdash{}{0pt}%
\pgfpathmoveto{\pgfqpoint{0.542878in}{1.008611in}}%
\pgfpathlineto{\pgfqpoint{1.537122in}{1.008611in}}%
\pgfpathquadraticcurveto{\pgfqpoint{1.556567in}{1.008611in}}{\pgfqpoint{1.556567in}{1.028056in}}%
\pgfpathlineto{\pgfqpoint{1.556567in}{1.446433in}}%
\pgfpathquadraticcurveto{\pgfqpoint{1.556567in}{1.465878in}}{\pgfqpoint{1.537122in}{1.465878in}}%
\pgfpathlineto{\pgfqpoint{0.542878in}{1.465878in}}%
\pgfpathquadraticcurveto{\pgfqpoint{0.523433in}{1.465878in}}{\pgfqpoint{0.523433in}{1.446433in}}%
\pgfpathlineto{\pgfqpoint{0.523433in}{1.028056in}}%
\pgfpathquadraticcurveto{\pgfqpoint{0.523433in}{1.008611in}}{\pgfqpoint{0.542878in}{1.008611in}}%
\pgfpathlineto{\pgfqpoint{0.542878in}{1.008611in}}%
\pgfpathclose%
\pgfusepath{stroke,fill}%
\end{pgfscope}%
\begin{pgfscope}%
\pgfsetbuttcap%
\pgfsetmiterjoin%
\definecolor{currentfill}{rgb}{0.007843,0.619608,0.447059}%
\pgfsetfillcolor{currentfill}%
\pgfsetfillopacity{0.850000}%
\pgfsetlinewidth{0.501875pt}%
\definecolor{currentstroke}{rgb}{0.000000,0.000000,0.000000}%
\pgfsetstrokecolor{currentstroke}%
\pgfsetstrokeopacity{0.850000}%
\pgfsetdash{}{0pt}%
\pgfpathmoveto{\pgfqpoint{0.562322in}{1.353123in}}%
\pgfpathlineto{\pgfqpoint{0.756766in}{1.353123in}}%
\pgfpathlineto{\pgfqpoint{0.756766in}{1.421178in}}%
\pgfpathlineto{\pgfqpoint{0.562322in}{1.421178in}}%
\pgfpathlineto{\pgfqpoint{0.562322in}{1.353123in}}%
\pgfpathclose%
\pgfusepath{stroke,fill}%
\end{pgfscope}%
\begin{pgfscope}%
\definecolor{textcolor}{rgb}{0.000000,0.000000,0.000000}%
\pgfsetstrokecolor{textcolor}%
\pgfsetfillcolor{textcolor}%
\pgftext[x=0.834544in,y=1.353123in,left,base]{\color{textcolor}{\rmfamily\fontsize{7.000000}{8.400000}\selectfont\catcode`\^=\active\def^{\ifmmode\sp\else\^{}\fi}\catcode`\%=\active\def
\end{pgfscope}%
\begin{pgfscope}%
\pgfsetbuttcap%
\pgfsetmiterjoin%
\definecolor{currentfill}{rgb}{0.870588,0.560784,0.011765}%
\pgfsetfillcolor{currentfill}%
\pgfsetfillopacity{0.850000}%
\pgfsetlinewidth{0.501875pt}%
\definecolor{currentstroke}{rgb}{0.000000,0.000000,0.000000}%
\pgfsetstrokecolor{currentstroke}%
\pgfsetstrokeopacity{0.850000}%
\pgfsetdash{}{0pt}%
\pgfpathmoveto{\pgfqpoint{0.562322in}{1.210423in}}%
\pgfpathlineto{\pgfqpoint{0.756766in}{1.210423in}}%
\pgfpathlineto{\pgfqpoint{0.756766in}{1.278478in}}%
\pgfpathlineto{\pgfqpoint{0.562322in}{1.278478in}}%
\pgfpathlineto{\pgfqpoint{0.562322in}{1.210423in}}%
\pgfpathclose%
\pgfusepath{stroke,fill}%
\end{pgfscope}%
\begin{pgfscope}%
\definecolor{textcolor}{rgb}{0.000000,0.000000,0.000000}%
\pgfsetstrokecolor{textcolor}%
\pgfsetfillcolor{textcolor}%
\pgftext[x=0.834544in,y=1.210423in,left,base]{\color{textcolor}{\rmfamily\fontsize{7.000000}{8.400000}\selectfont\catcode`\^=\active\def^{\ifmmode\sp\else\^{}\fi}\catcode`\%=\active\def
\end{pgfscope}%
\begin{pgfscope}%
\pgfsetbuttcap%
\pgfsetroundjoin%
\pgfsetlinewidth{0.803000pt}%
\definecolor{currentstroke}{rgb}{0.000000,0.000000,0.000000}%
\pgfsetstrokecolor{currentstroke}%
\pgfsetstrokeopacity{0.600000}%
\pgfsetdash{{2.960000pt}{1.280000pt}}{0.000000pt}%
\pgfpathmoveto{\pgfqpoint{0.562322in}{1.101751in}}%
\pgfpathlineto{\pgfqpoint{0.659544in}{1.101751in}}%
\pgfpathlineto{\pgfqpoint{0.756766in}{1.101751in}}%
\pgfusepath{stroke}%
\end{pgfscope}%
\begin{pgfscope}%
\definecolor{textcolor}{rgb}{0.000000,0.000000,0.000000}%
\pgfsetstrokecolor{textcolor}%
\pgfsetfillcolor{textcolor}%
\pgftext[x=0.834544in,y=1.067723in,left,base]{\color{textcolor}{\rmfamily\fontsize{7.000000}{8.400000}\selectfont\catcode`\^=\active\def^{\ifmmode\sp\else\^{}\fi}\catcode`\%=\active\def
\end{pgfscope}%
\end{pgfpicture}%
\makeatother%
\endgroup%

%% file: figures/calibration_maps/TRIVIAQA_gemma-3-12b-it.pgf
\begingroup%
\makeatletter%
\begin{pgfpicture}%
\pgfpathrectangle{\pgfpointorigin}{\pgfqpoint{1.600000in}{1.600000in}}%
\pgfusepath{use as bounding box, clip}%
\begin{pgfscope}%
\pgfsetbuttcap%
\pgfsetmiterjoin%
\definecolor{currentfill}{rgb}{1.000000,1.000000,1.000000}%
\pgfsetfillcolor{currentfill}%
\pgfsetlinewidth{0.000000pt}%
\definecolor{currentstroke}{rgb}{1.000000,1.000000,1.000000}%
\pgfsetstrokecolor{currentstroke}%
\pgfsetdash{}{0pt}%
\pgfpathmoveto{\pgfqpoint{0.000000in}{0.000000in}}%
\pgfpathlineto{\pgfqpoint{1.600000in}{0.000000in}}%
\pgfpathlineto{\pgfqpoint{1.600000in}{1.600000in}}%
\pgfpathlineto{\pgfqpoint{0.000000in}{1.600000in}}%
\pgfpathlineto{\pgfqpoint{0.000000in}{0.000000in}}%
\pgfpathclose%
\pgfusepath{fill}%
\end{pgfscope}%
\begin{pgfscope}%
\pgfsetbuttcap%
\pgfsetmiterjoin%
\definecolor{currentfill}{rgb}{1.000000,1.000000,1.000000}%
\pgfsetfillcolor{currentfill}%
\pgfsetlinewidth{0.000000pt}%
\definecolor{currentstroke}{rgb}{0.000000,0.000000,0.000000}%
\pgfsetstrokecolor{currentstroke}%
\pgfsetstrokeopacity{0.000000}%
\pgfsetdash{}{0pt}%
\pgfpathmoveto{\pgfqpoint{0.519743in}{0.414757in}}%
\pgfpathlineto{\pgfqpoint{1.503516in}{0.414757in}}%
\pgfpathlineto{\pgfqpoint{1.503516in}{1.558330in}}%
\pgfpathlineto{\pgfqpoint{0.519743in}{1.558330in}}%
\pgfpathlineto{\pgfqpoint{0.519743in}{0.414757in}}%
\pgfpathclose%
\pgfusepath{fill}%
\end{pgfscope}%
\begin{pgfscope}%
\pgfpathrectangle{\pgfqpoint{0.519743in}{0.414757in}}{\pgfqpoint{0.983773in}{1.143573in}}%
\pgfusepath{clip}%
\pgfsetbuttcap%
\pgfsetroundjoin%
\definecolor{currentfill}{rgb}{0.870588,0.560784,0.011765}%
\pgfsetfillcolor{currentfill}%
\pgfsetfillopacity{0.200000}%
\pgfsetlinewidth{1.003750pt}%
\definecolor{currentstroke}{rgb}{0.870588,0.560784,0.011765}%
\pgfsetstrokecolor{currentstroke}%
\pgfsetstrokeopacity{0.200000}%
\pgfsetdash{}{0pt}%
\pgfsys@defobject{currentmarker}{\pgfqpoint{0.564460in}{0.726641in}}{\pgfqpoint{0.788045in}{1.506349in}}{%
\pgfpathmoveto{\pgfqpoint{0.788045in}{0.726641in}}%
\pgfpathlineto{\pgfqpoint{0.564460in}{0.726641in}}%
\pgfpathlineto{\pgfqpoint{0.564460in}{1.506349in}}%
\pgfpathlineto{\pgfqpoint{0.788045in}{1.506349in}}%
\pgfpathlineto{\pgfqpoint{0.788045in}{1.506349in}}%
\pgfpathlineto{\pgfqpoint{0.788045in}{0.726641in}}%
\pgfpathlineto{\pgfqpoint{0.788045in}{0.726641in}}%
\pgfpathclose%
\pgfusepath{stroke,fill}%
}%
\begin{pgfscope}%
\pgfsys@transformshift{0.000000in}{0.000000in}%
\pgfsys@useobject{currentmarker}{}%
\end{pgfscope}%
\end{pgfscope}%
\begin{pgfscope}%
\pgfpathrectangle{\pgfqpoint{0.519743in}{0.414757in}}{\pgfqpoint{0.983773in}{1.143573in}}%
\pgfusepath{clip}%
\pgfsetbuttcap%
\pgfsetroundjoin%
\definecolor{currentfill}{rgb}{0.007843,0.619608,0.447059}%
\pgfsetfillcolor{currentfill}%
\pgfsetfillopacity{0.200000}%
\pgfsetlinewidth{1.003750pt}%
\definecolor{currentstroke}{rgb}{0.007843,0.619608,0.447059}%
\pgfsetstrokecolor{currentstroke}%
\pgfsetstrokeopacity{0.200000}%
\pgfsetdash{}{0pt}%
\pgfsys@defobject{currentmarker}{\pgfqpoint{0.788045in}{0.726641in}}{\pgfqpoint{1.458799in}{1.506349in}}{%
\pgfpathmoveto{\pgfqpoint{1.458799in}{0.726641in}}%
\pgfpathlineto{\pgfqpoint{0.788045in}{0.726641in}}%
\pgfpathlineto{\pgfqpoint{0.788045in}{1.506349in}}%
\pgfpathlineto{\pgfqpoint{1.458799in}{1.506349in}}%
\pgfpathlineto{\pgfqpoint{1.458799in}{1.506349in}}%
\pgfpathlineto{\pgfqpoint{1.458799in}{0.726641in}}%
\pgfpathlineto{\pgfqpoint{1.458799in}{0.726641in}}%
\pgfpathclose%
\pgfusepath{stroke,fill}%
}%
\begin{pgfscope}%
\pgfsys@transformshift{0.000000in}{0.000000in}%
\pgfsys@useobject{currentmarker}{}%
\end{pgfscope}%
\end{pgfscope}%
\begin{pgfscope}%
\pgfpathrectangle{\pgfqpoint{0.519743in}{0.414757in}}{\pgfqpoint{0.983773in}{1.143573in}}%
\pgfusepath{clip}%
\pgfsetbuttcap%
\pgfsetroundjoin%
\definecolor{currentfill}{rgb}{0.870588,0.560784,0.011765}%
\pgfsetfillcolor{currentfill}%
\pgfsetfillopacity{0.200000}%
\pgfsetlinewidth{1.003750pt}%
\definecolor{currentstroke}{rgb}{0.870588,0.560784,0.011765}%
\pgfsetstrokecolor{currentstroke}%
\pgfsetstrokeopacity{0.200000}%
\pgfsetdash{}{0pt}%
\pgfsys@defobject{currentmarker}{\pgfqpoint{0.788045in}{0.466738in}}{\pgfqpoint{1.458799in}{0.726641in}}{%
\pgfpathmoveto{\pgfqpoint{1.458799in}{0.466738in}}%
\pgfpathlineto{\pgfqpoint{0.788045in}{0.466738in}}%
\pgfpathlineto{\pgfqpoint{0.788045in}{0.726641in}}%
\pgfpathlineto{\pgfqpoint{1.458799in}{0.726641in}}%
\pgfpathlineto{\pgfqpoint{1.458799in}{0.726641in}}%
\pgfpathlineto{\pgfqpoint{1.458799in}{0.466738in}}%
\pgfpathlineto{\pgfqpoint{1.458799in}{0.466738in}}%
\pgfpathclose%
\pgfusepath{stroke,fill}%
}%
\begin{pgfscope}%
\pgfsys@transformshift{0.000000in}{0.000000in}%
\pgfsys@useobject{currentmarker}{}%
\end{pgfscope}%
\end{pgfscope}%
\begin{pgfscope}%
\pgfpathrectangle{\pgfqpoint{0.519743in}{0.414757in}}{\pgfqpoint{0.983773in}{1.143573in}}%
\pgfusepath{clip}%
\pgfsetbuttcap%
\pgfsetroundjoin%
\definecolor{currentfill}{rgb}{0.007843,0.619608,0.447059}%
\pgfsetfillcolor{currentfill}%
\pgfsetfillopacity{0.200000}%
\pgfsetlinewidth{1.003750pt}%
\definecolor{currentstroke}{rgb}{0.007843,0.619608,0.447059}%
\pgfsetstrokecolor{currentstroke}%
\pgfsetstrokeopacity{0.200000}%
\pgfsetdash{}{0pt}%
\pgfsys@defobject{currentmarker}{\pgfqpoint{0.564460in}{0.466738in}}{\pgfqpoint{0.788045in}{0.726641in}}{%
\pgfpathmoveto{\pgfqpoint{0.788045in}{0.466738in}}%
\pgfpathlineto{\pgfqpoint{0.564460in}{0.466738in}}%
\pgfpathlineto{\pgfqpoint{0.564460in}{0.726641in}}%
\pgfpathlineto{\pgfqpoint{0.788045in}{0.726641in}}%
\pgfpathlineto{\pgfqpoint{0.788045in}{0.726641in}}%
\pgfpathlineto{\pgfqpoint{0.788045in}{0.466738in}}%
\pgfpathlineto{\pgfqpoint{0.788045in}{0.466738in}}%
\pgfpathclose%
\pgfusepath{stroke,fill}%
}%
\begin{pgfscope}%
\pgfsys@transformshift{0.000000in}{0.000000in}%
\pgfsys@useobject{currentmarker}{}%
\end{pgfscope}%
\end{pgfscope}%
\begin{pgfscope}%
\pgfpathrectangle{\pgfqpoint{0.519743in}{0.414757in}}{\pgfqpoint{0.983773in}{1.143573in}}%
\pgfusepath{clip}%
\pgfsetrectcap%
\pgfsetroundjoin%
\pgfsetlinewidth{0.803000pt}%
\definecolor{currentstroke}{rgb}{0.690196,0.690196,0.690196}%
\pgfsetstrokecolor{currentstroke}%
\pgfsetdash{}{0pt}%
\pgfpathmoveto{\pgfqpoint{0.564460in}{0.414757in}}%
\pgfpathlineto{\pgfqpoint{0.564460in}{1.558330in}}%
\pgfusepath{stroke}%
\end{pgfscope}%
\begin{pgfscope}%
\pgfsetbuttcap%
\pgfsetroundjoin%
\definecolor{currentfill}{rgb}{0.000000,0.000000,0.000000}%
\pgfsetfillcolor{currentfill}%
\pgfsetlinewidth{0.803000pt}%
\definecolor{currentstroke}{rgb}{0.000000,0.000000,0.000000}%
\pgfsetstrokecolor{currentstroke}%
\pgfsetdash{}{0pt}%
\pgfsys@defobject{currentmarker}{\pgfqpoint{0.000000in}{-0.048611in}}{\pgfqpoint{0.000000in}{0.000000in}}{%
\pgfpathmoveto{\pgfqpoint{0.000000in}{0.000000in}}%
\pgfpathlineto{\pgfqpoint{0.000000in}{-0.048611in}}%
\pgfusepath{stroke,fill}%
}%
\begin{pgfscope}%
\pgfsys@transformshift{0.564460in}{0.414757in}%
\pgfsys@useobject{currentmarker}{}%
\end{pgfscope}%
\end{pgfscope}%
\begin{pgfscope}%
\definecolor{textcolor}{rgb}{0.000000,0.000000,0.000000}%
\pgfsetstrokecolor{textcolor}%
\pgfsetfillcolor{textcolor}%
\pgftext[x=0.564460in,y=0.317535in,,top]{\color{textcolor}{\rmfamily\fontsize{7.000000}{8.400000}\selectfont\catcode`\^=\active\def^{\ifmmode\sp\else\^{}\fi}\catcode`\%=\active\def
\end{pgfscope}%
\begin{pgfscope}%
\pgfpathrectangle{\pgfqpoint{0.519743in}{0.414757in}}{\pgfqpoint{0.983773in}{1.143573in}}%
\pgfusepath{clip}%
\pgfsetrectcap%
\pgfsetroundjoin%
\pgfsetlinewidth{0.803000pt}%
\definecolor{currentstroke}{rgb}{0.690196,0.690196,0.690196}%
\pgfsetstrokecolor{currentstroke}%
\pgfsetdash{}{0pt}%
\pgfpathmoveto{\pgfqpoint{0.788045in}{0.414757in}}%
\pgfpathlineto{\pgfqpoint{0.788045in}{1.558330in}}%
\pgfusepath{stroke}%
\end{pgfscope}%
\begin{pgfscope}%
\pgfsetbuttcap%
\pgfsetroundjoin%
\definecolor{currentfill}{rgb}{0.000000,0.000000,0.000000}%
\pgfsetfillcolor{currentfill}%
\pgfsetlinewidth{0.803000pt}%
\definecolor{currentstroke}{rgb}{0.000000,0.000000,0.000000}%
\pgfsetstrokecolor{currentstroke}%
\pgfsetdash{}{0pt}%
\pgfsys@defobject{currentmarker}{\pgfqpoint{0.000000in}{-0.048611in}}{\pgfqpoint{0.000000in}{0.000000in}}{%
\pgfpathmoveto{\pgfqpoint{0.000000in}{0.000000in}}%
\pgfpathlineto{\pgfqpoint{0.000000in}{-0.048611in}}%
\pgfusepath{stroke,fill}%
}%
\begin{pgfscope}%
\pgfsys@transformshift{0.788045in}{0.414757in}%
\pgfsys@useobject{currentmarker}{}%
\end{pgfscope}%
\end{pgfscope}%
\begin{pgfscope}%
\definecolor{textcolor}{rgb}{0.000000,0.000000,0.000000}%
\pgfsetstrokecolor{textcolor}%
\pgfsetfillcolor{textcolor}%
\pgftext[x=0.788045in,y=0.317535in,,top]{\color{textcolor}{\rmfamily\fontsize{7.000000}{8.400000}\selectfont\catcode`\^=\active\def^{\ifmmode\sp\else\^{}\fi}\catcode`\%=\active\def
\end{pgfscope}%
\begin{pgfscope}%
\pgfpathrectangle{\pgfqpoint{0.519743in}{0.414757in}}{\pgfqpoint{0.983773in}{1.143573in}}%
\pgfusepath{clip}%
\pgfsetrectcap%
\pgfsetroundjoin%
\pgfsetlinewidth{0.803000pt}%
\definecolor{currentstroke}{rgb}{0.690196,0.690196,0.690196}%
\pgfsetstrokecolor{currentstroke}%
\pgfsetdash{}{0pt}%
\pgfpathmoveto{\pgfqpoint{1.011630in}{0.414757in}}%
\pgfpathlineto{\pgfqpoint{1.011630in}{1.558330in}}%
\pgfusepath{stroke}%
\end{pgfscope}%
\begin{pgfscope}%
\pgfsetbuttcap%
\pgfsetroundjoin%
\definecolor{currentfill}{rgb}{0.000000,0.000000,0.000000}%
\pgfsetfillcolor{currentfill}%
\pgfsetlinewidth{0.803000pt}%
\definecolor{currentstroke}{rgb}{0.000000,0.000000,0.000000}%
\pgfsetstrokecolor{currentstroke}%
\pgfsetdash{}{0pt}%
\pgfsys@defobject{currentmarker}{\pgfqpoint{0.000000in}{-0.048611in}}{\pgfqpoint{0.000000in}{0.000000in}}{%
\pgfpathmoveto{\pgfqpoint{0.000000in}{0.000000in}}%
\pgfpathlineto{\pgfqpoint{0.000000in}{-0.048611in}}%
\pgfusepath{stroke,fill}%
}%
\begin{pgfscope}%
\pgfsys@transformshift{1.011630in}{0.414757in}%
\pgfsys@useobject{currentmarker}{}%
\end{pgfscope}%
\end{pgfscope}%
\begin{pgfscope}%
\definecolor{textcolor}{rgb}{0.000000,0.000000,0.000000}%
\pgfsetstrokecolor{textcolor}%
\pgfsetfillcolor{textcolor}%
\pgftext[x=1.011630in,y=0.317535in,,top]{\color{textcolor}{\rmfamily\fontsize{7.000000}{8.400000}\selectfont\catcode`\^=\active\def^{\ifmmode\sp\else\^{}\fi}\catcode`\%=\active\def
\end{pgfscope}%
\begin{pgfscope}%
\pgfpathrectangle{\pgfqpoint{0.519743in}{0.414757in}}{\pgfqpoint{0.983773in}{1.143573in}}%
\pgfusepath{clip}%
\pgfsetrectcap%
\pgfsetroundjoin%
\pgfsetlinewidth{0.803000pt}%
\definecolor{currentstroke}{rgb}{0.690196,0.690196,0.690196}%
\pgfsetstrokecolor{currentstroke}%
\pgfsetdash{}{0pt}%
\pgfpathmoveto{\pgfqpoint{1.235215in}{0.414757in}}%
\pgfpathlineto{\pgfqpoint{1.235215in}{1.558330in}}%
\pgfusepath{stroke}%
\end{pgfscope}%
\begin{pgfscope}%
\pgfsetbuttcap%
\pgfsetroundjoin%
\definecolor{currentfill}{rgb}{0.000000,0.000000,0.000000}%
\pgfsetfillcolor{currentfill}%
\pgfsetlinewidth{0.803000pt}%
\definecolor{currentstroke}{rgb}{0.000000,0.000000,0.000000}%
\pgfsetstrokecolor{currentstroke}%
\pgfsetdash{}{0pt}%
\pgfsys@defobject{currentmarker}{\pgfqpoint{0.000000in}{-0.048611in}}{\pgfqpoint{0.000000in}{0.000000in}}{%
\pgfpathmoveto{\pgfqpoint{0.000000in}{0.000000in}}%
\pgfpathlineto{\pgfqpoint{0.000000in}{-0.048611in}}%
\pgfusepath{stroke,fill}%
}%
\begin{pgfscope}%
\pgfsys@transformshift{1.235215in}{0.414757in}%
\pgfsys@useobject{currentmarker}{}%
\end{pgfscope}%
\end{pgfscope}%
\begin{pgfscope}%
\definecolor{textcolor}{rgb}{0.000000,0.000000,0.000000}%
\pgfsetstrokecolor{textcolor}%
\pgfsetfillcolor{textcolor}%
\pgftext[x=1.235215in,y=0.317535in,,top]{\color{textcolor}{\rmfamily\fontsize{7.000000}{8.400000}\selectfont\catcode`\^=\active\def^{\ifmmode\sp\else\^{}\fi}\catcode`\%=\active\def
\end{pgfscope}%
\begin{pgfscope}%
\pgfpathrectangle{\pgfqpoint{0.519743in}{0.414757in}}{\pgfqpoint{0.983773in}{1.143573in}}%
\pgfusepath{clip}%
\pgfsetrectcap%
\pgfsetroundjoin%
\pgfsetlinewidth{0.803000pt}%
\definecolor{currentstroke}{rgb}{0.690196,0.690196,0.690196}%
\pgfsetstrokecolor{currentstroke}%
\pgfsetdash{}{0pt}%
\pgfpathmoveto{\pgfqpoint{1.458799in}{0.414757in}}%
\pgfpathlineto{\pgfqpoint{1.458799in}{1.558330in}}%
\pgfusepath{stroke}%
\end{pgfscope}%
\begin{pgfscope}%
\pgfsetbuttcap%
\pgfsetroundjoin%
\definecolor{currentfill}{rgb}{0.000000,0.000000,0.000000}%
\pgfsetfillcolor{currentfill}%
\pgfsetlinewidth{0.803000pt}%
\definecolor{currentstroke}{rgb}{0.000000,0.000000,0.000000}%
\pgfsetstrokecolor{currentstroke}%
\pgfsetdash{}{0pt}%
\pgfsys@defobject{currentmarker}{\pgfqpoint{0.000000in}{-0.048611in}}{\pgfqpoint{0.000000in}{0.000000in}}{%
\pgfpathmoveto{\pgfqpoint{0.000000in}{0.000000in}}%
\pgfpathlineto{\pgfqpoint{0.000000in}{-0.048611in}}%
\pgfusepath{stroke,fill}%
}%
\begin{pgfscope}%
\pgfsys@transformshift{1.458799in}{0.414757in}%
\pgfsys@useobject{currentmarker}{}%
\end{pgfscope}%
\end{pgfscope}%
\begin{pgfscope}%
\definecolor{textcolor}{rgb}{0.000000,0.000000,0.000000}%
\pgfsetstrokecolor{textcolor}%
\pgfsetfillcolor{textcolor}%
\pgftext[x=1.458799in,y=0.317535in,,top]{\color{textcolor}{\rmfamily\fontsize{7.000000}{8.400000}\selectfont\catcode`\^=\active\def^{\ifmmode\sp\else\^{}\fi}\catcode`\%=\active\def
\end{pgfscope}%
\begin{pgfscope}%
\definecolor{textcolor}{rgb}{0.000000,0.000000,0.000000}%
\pgfsetstrokecolor{textcolor}%
\pgfsetfillcolor{textcolor}%
\pgftext[x=1.011630in,y=0.167891in,,top]{\color{textcolor}{\rmfamily\fontsize{9.000000}{10.800000}\selectfont\catcode`\^=\active\def^{\ifmmode\sp\else\^{}\fi}\catcode`\%=\active\def
\end{pgfscope}%
\begin{pgfscope}%
\pgfpathrectangle{\pgfqpoint{0.519743in}{0.414757in}}{\pgfqpoint{0.983773in}{1.143573in}}%
\pgfusepath{clip}%
\pgfsetrectcap%
\pgfsetroundjoin%
\pgfsetlinewidth{0.803000pt}%
\definecolor{currentstroke}{rgb}{0.690196,0.690196,0.690196}%
\pgfsetstrokecolor{currentstroke}%
\pgfsetdash{}{0pt}%
\pgfpathmoveto{\pgfqpoint{0.519743in}{0.466738in}}%
\pgfpathlineto{\pgfqpoint{1.503516in}{0.466738in}}%
\pgfusepath{stroke}%
\end{pgfscope}%
\begin{pgfscope}%
\pgfsetbuttcap%
\pgfsetroundjoin%
\definecolor{currentfill}{rgb}{0.000000,0.000000,0.000000}%
\pgfsetfillcolor{currentfill}%
\pgfsetlinewidth{0.803000pt}%
\definecolor{currentstroke}{rgb}{0.000000,0.000000,0.000000}%
\pgfsetstrokecolor{currentstroke}%
\pgfsetdash{}{0pt}%
\pgfsys@defobject{currentmarker}{\pgfqpoint{-0.048611in}{0.000000in}}{\pgfqpoint{-0.000000in}{0.000000in}}{%
\pgfpathmoveto{\pgfqpoint{-0.000000in}{0.000000in}}%
\pgfpathlineto{\pgfqpoint{-0.048611in}{0.000000in}}%
\pgfusepath{stroke,fill}%
}%
\begin{pgfscope}%
\pgfsys@transformshift{0.519743in}{0.466738in}%
\pgfsys@useobject{currentmarker}{}%
\end{pgfscope}%
\end{pgfscope}%
\begin{pgfscope}%
\definecolor{textcolor}{rgb}{0.000000,0.000000,0.000000}%
\pgfsetstrokecolor{textcolor}%
\pgfsetfillcolor{textcolor}%
\pgftext[x=0.223446in, y=0.429805in, left, base]{\color{textcolor}{\rmfamily\fontsize{7.000000}{8.400000}\selectfont\catcode`\^=\active\def^{\ifmmode\sp\else\^{}\fi}\catcode`\%=\active\def
\end{pgfscope}%
\begin{pgfscope}%
\pgfpathrectangle{\pgfqpoint{0.519743in}{0.414757in}}{\pgfqpoint{0.983773in}{1.143573in}}%
\pgfusepath{clip}%
\pgfsetrectcap%
\pgfsetroundjoin%
\pgfsetlinewidth{0.803000pt}%
\definecolor{currentstroke}{rgb}{0.690196,0.690196,0.690196}%
\pgfsetstrokecolor{currentstroke}%
\pgfsetdash{}{0pt}%
\pgfpathmoveto{\pgfqpoint{0.519743in}{0.726641in}}%
\pgfpathlineto{\pgfqpoint{1.503516in}{0.726641in}}%
\pgfusepath{stroke}%
\end{pgfscope}%
\begin{pgfscope}%
\pgfsetbuttcap%
\pgfsetroundjoin%
\definecolor{currentfill}{rgb}{0.000000,0.000000,0.000000}%
\pgfsetfillcolor{currentfill}%
\pgfsetlinewidth{0.803000pt}%
\definecolor{currentstroke}{rgb}{0.000000,0.000000,0.000000}%
\pgfsetstrokecolor{currentstroke}%
\pgfsetdash{}{0pt}%
\pgfsys@defobject{currentmarker}{\pgfqpoint{-0.048611in}{0.000000in}}{\pgfqpoint{-0.000000in}{0.000000in}}{%
\pgfpathmoveto{\pgfqpoint{-0.000000in}{0.000000in}}%
\pgfpathlineto{\pgfqpoint{-0.048611in}{0.000000in}}%
\pgfusepath{stroke,fill}%
}%
\begin{pgfscope}%
\pgfsys@transformshift{0.519743in}{0.726641in}%
\pgfsys@useobject{currentmarker}{}%
\end{pgfscope}%
\end{pgfscope}%
\begin{pgfscope}%
\definecolor{textcolor}{rgb}{0.000000,0.000000,0.000000}%
\pgfsetstrokecolor{textcolor}%
\pgfsetfillcolor{textcolor}%
\pgftext[x=0.223446in, y=0.689708in, left, base]{\color{textcolor}{\rmfamily\fontsize{7.000000}{8.400000}\selectfont\catcode`\^=\active\def^{\ifmmode\sp\else\^{}\fi}\catcode`\%=\active\def
\end{pgfscope}%
\begin{pgfscope}%
\pgfpathrectangle{\pgfqpoint{0.519743in}{0.414757in}}{\pgfqpoint{0.983773in}{1.143573in}}%
\pgfusepath{clip}%
\pgfsetrectcap%
\pgfsetroundjoin%
\pgfsetlinewidth{0.803000pt}%
\definecolor{currentstroke}{rgb}{0.690196,0.690196,0.690196}%
\pgfsetstrokecolor{currentstroke}%
\pgfsetdash{}{0pt}%
\pgfpathmoveto{\pgfqpoint{0.519743in}{0.986544in}}%
\pgfpathlineto{\pgfqpoint{1.503516in}{0.986544in}}%
\pgfusepath{stroke}%
\end{pgfscope}%
\begin{pgfscope}%
\pgfsetbuttcap%
\pgfsetroundjoin%
\definecolor{currentfill}{rgb}{0.000000,0.000000,0.000000}%
\pgfsetfillcolor{currentfill}%
\pgfsetlinewidth{0.803000pt}%
\definecolor{currentstroke}{rgb}{0.000000,0.000000,0.000000}%
\pgfsetstrokecolor{currentstroke}%
\pgfsetdash{}{0pt}%
\pgfsys@defobject{currentmarker}{\pgfqpoint{-0.048611in}{0.000000in}}{\pgfqpoint{-0.000000in}{0.000000in}}{%
\pgfpathmoveto{\pgfqpoint{-0.000000in}{0.000000in}}%
\pgfpathlineto{\pgfqpoint{-0.048611in}{0.000000in}}%
\pgfusepath{stroke,fill}%
}%
\begin{pgfscope}%
\pgfsys@transformshift{0.519743in}{0.986544in}%
\pgfsys@useobject{currentmarker}{}%
\end{pgfscope}%
\end{pgfscope}%
\begin{pgfscope}%
\definecolor{textcolor}{rgb}{0.000000,0.000000,0.000000}%
\pgfsetstrokecolor{textcolor}%
\pgfsetfillcolor{textcolor}%
\pgftext[x=0.223446in, y=0.949611in, left, base]{\color{textcolor}{\rmfamily\fontsize{7.000000}{8.400000}\selectfont\catcode`\^=\active\def^{\ifmmode\sp\else\^{}\fi}\catcode`\%=\active\def
\end{pgfscope}%
\begin{pgfscope}%
\pgfpathrectangle{\pgfqpoint{0.519743in}{0.414757in}}{\pgfqpoint{0.983773in}{1.143573in}}%
\pgfusepath{clip}%
\pgfsetrectcap%
\pgfsetroundjoin%
\pgfsetlinewidth{0.803000pt}%
\definecolor{currentstroke}{rgb}{0.690196,0.690196,0.690196}%
\pgfsetstrokecolor{currentstroke}%
\pgfsetdash{}{0pt}%
\pgfpathmoveto{\pgfqpoint{0.519743in}{1.246447in}}%
\pgfpathlineto{\pgfqpoint{1.503516in}{1.246447in}}%
\pgfusepath{stroke}%
\end{pgfscope}%
\begin{pgfscope}%
\pgfsetbuttcap%
\pgfsetroundjoin%
\definecolor{currentfill}{rgb}{0.000000,0.000000,0.000000}%
\pgfsetfillcolor{currentfill}%
\pgfsetlinewidth{0.803000pt}%
\definecolor{currentstroke}{rgb}{0.000000,0.000000,0.000000}%
\pgfsetstrokecolor{currentstroke}%
\pgfsetdash{}{0pt}%
\pgfsys@defobject{currentmarker}{\pgfqpoint{-0.048611in}{0.000000in}}{\pgfqpoint{-0.000000in}{0.000000in}}{%
\pgfpathmoveto{\pgfqpoint{-0.000000in}{0.000000in}}%
\pgfpathlineto{\pgfqpoint{-0.048611in}{0.000000in}}%
\pgfusepath{stroke,fill}%
}%
\begin{pgfscope}%
\pgfsys@transformshift{0.519743in}{1.246447in}%
\pgfsys@useobject{currentmarker}{}%
\end{pgfscope}%
\end{pgfscope}%
\begin{pgfscope}%
\definecolor{textcolor}{rgb}{0.000000,0.000000,0.000000}%
\pgfsetstrokecolor{textcolor}%
\pgfsetfillcolor{textcolor}%
\pgftext[x=0.223446in, y=1.209513in, left, base]{\color{textcolor}{\rmfamily\fontsize{7.000000}{8.400000}\selectfont\catcode`\^=\active\def^{\ifmmode\sp\else\^{}\fi}\catcode`\%=\active\def
\end{pgfscope}%
\begin{pgfscope}%
\pgfpathrectangle{\pgfqpoint{0.519743in}{0.414757in}}{\pgfqpoint{0.983773in}{1.143573in}}%
\pgfusepath{clip}%
\pgfsetrectcap%
\pgfsetroundjoin%
\pgfsetlinewidth{0.803000pt}%
\definecolor{currentstroke}{rgb}{0.690196,0.690196,0.690196}%
\pgfsetstrokecolor{currentstroke}%
\pgfsetdash{}{0pt}%
\pgfpathmoveto{\pgfqpoint{0.519743in}{1.506349in}}%
\pgfpathlineto{\pgfqpoint{1.503516in}{1.506349in}}%
\pgfusepath{stroke}%
\end{pgfscope}%
\begin{pgfscope}%
\pgfsetbuttcap%
\pgfsetroundjoin%
\definecolor{currentfill}{rgb}{0.000000,0.000000,0.000000}%
\pgfsetfillcolor{currentfill}%
\pgfsetlinewidth{0.803000pt}%
\definecolor{currentstroke}{rgb}{0.000000,0.000000,0.000000}%
\pgfsetstrokecolor{currentstroke}%
\pgfsetdash{}{0pt}%
\pgfsys@defobject{currentmarker}{\pgfqpoint{-0.048611in}{0.000000in}}{\pgfqpoint{-0.000000in}{0.000000in}}{%
\pgfpathmoveto{\pgfqpoint{-0.000000in}{0.000000in}}%
\pgfpathlineto{\pgfqpoint{-0.048611in}{0.000000in}}%
\pgfusepath{stroke,fill}%
}%
\begin{pgfscope}%
\pgfsys@transformshift{0.519743in}{1.506349in}%
\pgfsys@useobject{currentmarker}{}%
\end{pgfscope}%
\end{pgfscope}%
\begin{pgfscope}%
\definecolor{textcolor}{rgb}{0.000000,0.000000,0.000000}%
\pgfsetstrokecolor{textcolor}%
\pgfsetfillcolor{textcolor}%
\pgftext[x=0.223446in, y=1.469416in, left, base]{\color{textcolor}{\rmfamily\fontsize{7.000000}{8.400000}\selectfont\catcode`\^=\active\def^{\ifmmode\sp\else\^{}\fi}\catcode`\%=\active\def
\end{pgfscope}%
\begin{pgfscope}%
\definecolor{textcolor}{rgb}{0.000000,0.000000,0.000000}%
\pgfsetstrokecolor{textcolor}%
\pgfsetfillcolor{textcolor}%
\pgftext[x=0.167891in,y=0.986544in,,bottom,rotate=90.000000]{\color{textcolor}{\rmfamily\fontsize{9.000000}{10.800000}\selectfont\catcode`\^=\active\def^{\ifmmode\sp\else\^{}\fi}\catcode`\%=\active\def
\end{pgfscope}%
\begin{pgfscope}%
\pgfpathrectangle{\pgfqpoint{0.519743in}{0.414757in}}{\pgfqpoint{0.983773in}{1.143573in}}%
\pgfusepath{clip}%
\pgfsetrectcap%
\pgfsetroundjoin%
\pgfsetlinewidth{1.505625pt}%
\definecolor{currentstroke}{rgb}{0.003922,0.450980,0.698039}%
\pgfsetstrokecolor{currentstroke}%
\pgfsetstrokeopacity{0.200000}%
\pgfsetdash{}{0pt}%
\pgfpathmoveto{\pgfqpoint{0.773139in}{0.502062in}}%
\pgfpathlineto{\pgfqpoint{0.877479in}{0.536356in}}%
\pgfpathlineto{\pgfqpoint{0.922196in}{0.553563in}}%
\pgfpathlineto{\pgfqpoint{0.966913in}{0.571974in}}%
\pgfpathlineto{\pgfqpoint{1.011630in}{0.591441in}}%
\pgfpathlineto{\pgfqpoint{1.056347in}{0.611835in}}%
\pgfpathlineto{\pgfqpoint{1.101064in}{0.633048in}}%
\pgfpathlineto{\pgfqpoint{1.145781in}{0.655006in}}%
\pgfpathlineto{\pgfqpoint{1.190498in}{0.677671in}}%
\pgfpathlineto{\pgfqpoint{1.235215in}{0.701067in}}%
\pgfpathlineto{\pgfqpoint{1.279932in}{0.725325in}}%
\pgfpathlineto{\pgfqpoint{1.324649in}{0.750786in}}%
\pgfpathlineto{\pgfqpoint{1.369365in}{0.778343in}}%
\pgfpathlineto{\pgfqpoint{1.414082in}{0.811034in}}%
\pgfpathlineto{\pgfqpoint{1.458799in}{1.268384in}}%
\pgfusepath{stroke}%
\end{pgfscope}%
\begin{pgfscope}%
\pgfpathrectangle{\pgfqpoint{0.519743in}{0.414757in}}{\pgfqpoint{0.983773in}{1.143573in}}%
\pgfusepath{clip}%
\pgfsetrectcap%
\pgfsetroundjoin%
\pgfsetlinewidth{1.505625pt}%
\definecolor{currentstroke}{rgb}{0.003922,0.450980,0.698039}%
\pgfsetstrokecolor{currentstroke}%
\pgfsetstrokeopacity{0.200000}%
\pgfsetdash{}{0pt}%
\pgfpathmoveto{\pgfqpoint{0.698611in}{0.492914in}}%
\pgfpathlineto{\pgfqpoint{0.832762in}{0.536772in}}%
\pgfpathlineto{\pgfqpoint{0.877479in}{0.554155in}}%
\pgfpathlineto{\pgfqpoint{0.922196in}{0.572401in}}%
\pgfpathlineto{\pgfqpoint{0.966913in}{0.591356in}}%
\pgfpathlineto{\pgfqpoint{1.011630in}{0.610896in}}%
\pgfpathlineto{\pgfqpoint{1.056347in}{0.630925in}}%
\pgfpathlineto{\pgfqpoint{1.101064in}{0.651378in}}%
\pgfpathlineto{\pgfqpoint{1.145781in}{0.672219in}}%
\pgfpathlineto{\pgfqpoint{1.190498in}{0.693457in}}%
\pgfpathlineto{\pgfqpoint{1.235215in}{0.715161in}}%
\pgfpathlineto{\pgfqpoint{1.279932in}{0.737504in}}%
\pgfpathlineto{\pgfqpoint{1.324649in}{0.760873in}}%
\pgfpathlineto{\pgfqpoint{1.369365in}{0.786205in}}%
\pgfpathlineto{\pgfqpoint{1.414082in}{0.816601in}}%
\pgfpathlineto{\pgfqpoint{1.458799in}{1.273350in}}%
\pgfusepath{stroke}%
\end{pgfscope}%
\begin{pgfscope}%
\pgfpathrectangle{\pgfqpoint{0.519743in}{0.414757in}}{\pgfqpoint{0.983773in}{1.143573in}}%
\pgfusepath{clip}%
\pgfsetrectcap%
\pgfsetroundjoin%
\pgfsetlinewidth{1.505625pt}%
\definecolor{currentstroke}{rgb}{0.003922,0.450980,0.698039}%
\pgfsetstrokecolor{currentstroke}%
\pgfsetstrokeopacity{0.200000}%
\pgfsetdash{}{0pt}%
\pgfpathmoveto{\pgfqpoint{0.788045in}{0.522422in}}%
\pgfpathlineto{\pgfqpoint{0.832762in}{0.538665in}}%
\pgfpathlineto{\pgfqpoint{0.877479in}{0.555807in}}%
\pgfpathlineto{\pgfqpoint{0.922196in}{0.573677in}}%
\pgfpathlineto{\pgfqpoint{0.966913in}{0.592135in}}%
\pgfpathlineto{\pgfqpoint{1.011630in}{0.611076in}}%
\pgfpathlineto{\pgfqpoint{1.056347in}{0.630418in}}%
\pgfpathlineto{\pgfqpoint{1.101064in}{0.650109in}}%
\pgfpathlineto{\pgfqpoint{1.145781in}{0.670128in}}%
\pgfpathlineto{\pgfqpoint{1.190498in}{0.690491in}}%
\pgfpathlineto{\pgfqpoint{1.235215in}{0.711277in}}%
\pgfpathlineto{\pgfqpoint{1.279932in}{0.732663in}}%
\pgfpathlineto{\pgfqpoint{1.324649in}{0.755032in}}%
\pgfpathlineto{\pgfqpoint{1.369365in}{0.779306in}}%
\pgfpathlineto{\pgfqpoint{1.414082in}{0.808504in}}%
\pgfpathlineto{\pgfqpoint{1.458799in}{1.260589in}}%
\pgfusepath{stroke}%
\end{pgfscope}%
\begin{pgfscope}%
\pgfpathrectangle{\pgfqpoint{0.519743in}{0.414757in}}{\pgfqpoint{0.983773in}{1.143573in}}%
\pgfusepath{clip}%
\pgfsetrectcap%
\pgfsetroundjoin%
\pgfsetlinewidth{1.505625pt}%
\definecolor{currentstroke}{rgb}{0.003922,0.450980,0.698039}%
\pgfsetstrokecolor{currentstroke}%
\pgfsetstrokeopacity{0.200000}%
\pgfsetdash{}{0pt}%
\pgfpathmoveto{\pgfqpoint{0.788045in}{0.504554in}}%
\pgfpathlineto{\pgfqpoint{0.832762in}{0.518309in}}%
\pgfpathlineto{\pgfqpoint{0.877479in}{0.533593in}}%
\pgfpathlineto{\pgfqpoint{0.922196in}{0.550235in}}%
\pgfpathlineto{\pgfqpoint{0.966913in}{0.568077in}}%
\pgfpathlineto{\pgfqpoint{1.011630in}{0.586980in}}%
\pgfpathlineto{\pgfqpoint{1.056347in}{0.606822in}}%
\pgfpathlineto{\pgfqpoint{1.101064in}{0.627503in}}%
\pgfpathlineto{\pgfqpoint{1.145781in}{0.648951in}}%
\pgfpathlineto{\pgfqpoint{1.190498in}{0.671133in}}%
\pgfpathlineto{\pgfqpoint{1.235215in}{0.694076in}}%
\pgfpathlineto{\pgfqpoint{1.279932in}{0.717909in}}%
\pgfpathlineto{\pgfqpoint{1.324649in}{0.742974in}}%
\pgfpathlineto{\pgfqpoint{1.369365in}{0.770156in}}%
\pgfpathlineto{\pgfqpoint{1.414082in}{0.802469in}}%
\pgfpathlineto{\pgfqpoint{1.458799in}{1.261971in}}%
\pgfusepath{stroke}%
\end{pgfscope}%
\begin{pgfscope}%
\pgfpathrectangle{\pgfqpoint{0.519743in}{0.414757in}}{\pgfqpoint{0.983773in}{1.143573in}}%
\pgfusepath{clip}%
\pgfsetrectcap%
\pgfsetroundjoin%
\pgfsetlinewidth{1.505625pt}%
\definecolor{currentstroke}{rgb}{0.003922,0.450980,0.698039}%
\pgfsetstrokecolor{currentstroke}%
\pgfsetstrokeopacity{0.200000}%
\pgfsetdash{}{0pt}%
\pgfpathmoveto{\pgfqpoint{0.716498in}{0.491462in}}%
\pgfpathlineto{\pgfqpoint{0.832762in}{0.527403in}}%
\pgfpathlineto{\pgfqpoint{0.877479in}{0.544269in}}%
\pgfpathlineto{\pgfqpoint{0.922196in}{0.562359in}}%
\pgfpathlineto{\pgfqpoint{0.966913in}{0.581497in}}%
\pgfpathlineto{\pgfqpoint{1.011630in}{0.601530in}}%
\pgfpathlineto{\pgfqpoint{1.056347in}{0.622329in}}%
\pgfpathlineto{\pgfqpoint{1.101064in}{0.643792in}}%
\pgfpathlineto{\pgfqpoint{1.145781in}{0.665851in}}%
\pgfpathlineto{\pgfqpoint{1.190498in}{0.688478in}}%
\pgfpathlineto{\pgfqpoint{1.235215in}{0.711706in}}%
\pgfpathlineto{\pgfqpoint{1.279932in}{0.735675in}}%
\pgfpathlineto{\pgfqpoint{1.324649in}{0.760734in}}%
\pgfpathlineto{\pgfqpoint{1.369365in}{0.787778in}}%
\pgfpathlineto{\pgfqpoint{1.414082in}{0.819819in}}%
\pgfpathlineto{\pgfqpoint{1.458799in}{1.268594in}}%
\pgfusepath{stroke}%
\end{pgfscope}%
\begin{pgfscope}%
\pgfpathrectangle{\pgfqpoint{0.519743in}{0.414757in}}{\pgfqpoint{0.983773in}{1.143573in}}%
\pgfusepath{clip}%
\pgfsetbuttcap%
\pgfsetroundjoin%
\pgfsetlinewidth{1.505625pt}%
\definecolor{currentstroke}{rgb}{0.501961,0.501961,0.501961}%
\pgfsetstrokecolor{currentstroke}%
\pgfsetdash{{5.550000pt}{2.400000pt}}{0.000000pt}%
\pgfpathmoveto{\pgfqpoint{0.564460in}{0.726641in}}%
\pgfpathlineto{\pgfqpoint{1.458799in}{0.726641in}}%
\pgfusepath{stroke}%
\end{pgfscope}%
\begin{pgfscope}%
\pgfpathrectangle{\pgfqpoint{0.519743in}{0.414757in}}{\pgfqpoint{0.983773in}{1.143573in}}%
\pgfusepath{clip}%
\pgfsetbuttcap%
\pgfsetroundjoin%
\pgfsetlinewidth{1.505625pt}%
\definecolor{currentstroke}{rgb}{0.501961,0.501961,0.501961}%
\pgfsetstrokecolor{currentstroke}%
\pgfsetdash{{5.550000pt}{2.400000pt}}{0.000000pt}%
\pgfpathmoveto{\pgfqpoint{0.788045in}{0.466738in}}%
\pgfpathlineto{\pgfqpoint{0.788045in}{1.506349in}}%
\pgfusepath{stroke}%
\end{pgfscope}%
\begin{pgfscope}%
\pgfsetrectcap%
\pgfsetmiterjoin%
\pgfsetlinewidth{0.803000pt}%
\definecolor{currentstroke}{rgb}{0.000000,0.000000,0.000000}%
\pgfsetstrokecolor{currentstroke}%
\pgfsetdash{}{0pt}%
\pgfpathmoveto{\pgfqpoint{0.519743in}{0.414757in}}%
\pgfpathlineto{\pgfqpoint{0.519743in}{1.558330in}}%
\pgfusepath{stroke}%
\end{pgfscope}%
\begin{pgfscope}%
\pgfsetrectcap%
\pgfsetmiterjoin%
\pgfsetlinewidth{0.803000pt}%
\definecolor{currentstroke}{rgb}{0.000000,0.000000,0.000000}%
\pgfsetstrokecolor{currentstroke}%
\pgfsetdash{}{0pt}%
\pgfpathmoveto{\pgfqpoint{1.503516in}{0.414757in}}%
\pgfpathlineto{\pgfqpoint{1.503516in}{1.558330in}}%
\pgfusepath{stroke}%
\end{pgfscope}%
\begin{pgfscope}%
\pgfsetrectcap%
\pgfsetmiterjoin%
\pgfsetlinewidth{0.803000pt}%
\definecolor{currentstroke}{rgb}{0.000000,0.000000,0.000000}%
\pgfsetstrokecolor{currentstroke}%
\pgfsetdash{}{0pt}%
\pgfpathmoveto{\pgfqpoint{0.519743in}{0.414757in}}%
\pgfpathlineto{\pgfqpoint{1.503516in}{0.414757in}}%
\pgfusepath{stroke}%
\end{pgfscope}%
\begin{pgfscope}%
\pgfsetrectcap%
\pgfsetmiterjoin%
\pgfsetlinewidth{0.803000pt}%
\definecolor{currentstroke}{rgb}{0.000000,0.000000,0.000000}%
\pgfsetstrokecolor{currentstroke}%
\pgfsetdash{}{0pt}%
\pgfpathmoveto{\pgfqpoint{0.519743in}{1.558330in}}%
\pgfpathlineto{\pgfqpoint{1.503516in}{1.558330in}}%
\pgfusepath{stroke}%
\end{pgfscope}%
\begin{pgfscope}%
\pgfsetbuttcap%
\pgfsetmiterjoin%
\definecolor{currentfill}{rgb}{1.000000,1.000000,1.000000}%
\pgfsetfillcolor{currentfill}%
\pgfsetfillopacity{0.800000}%
\pgfsetlinewidth{1.003750pt}%
\definecolor{currentstroke}{rgb}{0.800000,0.800000,0.800000}%
\pgfsetstrokecolor{currentstroke}%
\pgfsetstrokeopacity{0.800000}%
\pgfsetdash{}{0pt}%
\pgfpathmoveto{\pgfqpoint{0.542878in}{1.008611in}}%
\pgfpathlineto{\pgfqpoint{1.537122in}{1.008611in}}%
\pgfpathquadraticcurveto{\pgfqpoint{1.556567in}{1.008611in}}{\pgfqpoint{1.556567in}{1.028056in}}%
\pgfpathlineto{\pgfqpoint{1.556567in}{1.446433in}}%
\pgfpathquadraticcurveto{\pgfqpoint{1.556567in}{1.465878in}}{\pgfqpoint{1.537122in}{1.465878in}}%
\pgfpathlineto{\pgfqpoint{0.542878in}{1.465878in}}%
\pgfpathquadraticcurveto{\pgfqpoint{0.523433in}{1.465878in}}{\pgfqpoint{0.523433in}{1.446433in}}%
\pgfpathlineto{\pgfqpoint{0.523433in}{1.028056in}}%
\pgfpathquadraticcurveto{\pgfqpoint{0.523433in}{1.008611in}}{\pgfqpoint{0.542878in}{1.008611in}}%
\pgfpathlineto{\pgfqpoint{0.542878in}{1.008611in}}%
\pgfpathclose%
\pgfusepath{stroke,fill}%
\end{pgfscope}%
\begin{pgfscope}%
\pgfsetbuttcap%
\pgfsetmiterjoin%
\definecolor{currentfill}{rgb}{0.007843,0.619608,0.447059}%
\pgfsetfillcolor{currentfill}%
\pgfsetfillopacity{0.850000}%
\pgfsetlinewidth{0.501875pt}%
\definecolor{currentstroke}{rgb}{0.000000,0.000000,0.000000}%
\pgfsetstrokecolor{currentstroke}%
\pgfsetstrokeopacity{0.850000}%
\pgfsetdash{}{0pt}%
\pgfpathmoveto{\pgfqpoint{0.562322in}{1.353123in}}%
\pgfpathlineto{\pgfqpoint{0.756766in}{1.353123in}}%
\pgfpathlineto{\pgfqpoint{0.756766in}{1.421178in}}%
\pgfpathlineto{\pgfqpoint{0.562322in}{1.421178in}}%
\pgfpathlineto{\pgfqpoint{0.562322in}{1.353123in}}%
\pgfpathclose%
\pgfusepath{stroke,fill}%
\end{pgfscope}%
\begin{pgfscope}%
\definecolor{textcolor}{rgb}{0.000000,0.000000,0.000000}%
\pgfsetstrokecolor{textcolor}%
\pgfsetfillcolor{textcolor}%
\pgftext[x=0.834544in,y=1.353123in,left,base]{\color{textcolor}{\rmfamily\fontsize{7.000000}{8.400000}\selectfont\catcode`\^=\active\def^{\ifmmode\sp\else\^{}\fi}\catcode`\%=\active\def
\end{pgfscope}%
\begin{pgfscope}%
\pgfsetbuttcap%
\pgfsetmiterjoin%
\definecolor{currentfill}{rgb}{0.870588,0.560784,0.011765}%
\pgfsetfillcolor{currentfill}%
\pgfsetfillopacity{0.850000}%
\pgfsetlinewidth{0.501875pt}%
\definecolor{currentstroke}{rgb}{0.000000,0.000000,0.000000}%
\pgfsetstrokecolor{currentstroke}%
\pgfsetstrokeopacity{0.850000}%
\pgfsetdash{}{0pt}%
\pgfpathmoveto{\pgfqpoint{0.562322in}{1.210423in}}%
\pgfpathlineto{\pgfqpoint{0.756766in}{1.210423in}}%
\pgfpathlineto{\pgfqpoint{0.756766in}{1.278478in}}%
\pgfpathlineto{\pgfqpoint{0.562322in}{1.278478in}}%
\pgfpathlineto{\pgfqpoint{0.562322in}{1.210423in}}%
\pgfpathclose%
\pgfusepath{stroke,fill}%
\end{pgfscope}%
\begin{pgfscope}%
\definecolor{textcolor}{rgb}{0.000000,0.000000,0.000000}%
\pgfsetstrokecolor{textcolor}%
\pgfsetfillcolor{textcolor}%
\pgftext[x=0.834544in,y=1.210423in,left,base]{\color{textcolor}{\rmfamily\fontsize{7.000000}{8.400000}\selectfont\catcode`\^=\active\def^{\ifmmode\sp\else\^{}\fi}\catcode`\%=\active\def
\end{pgfscope}%
\begin{pgfscope}%
\pgfsetbuttcap%
\pgfsetroundjoin%
\pgfsetlinewidth{0.803000pt}%
\definecolor{currentstroke}{rgb}{0.000000,0.000000,0.000000}%
\pgfsetstrokecolor{currentstroke}%
\pgfsetstrokeopacity{0.600000}%
\pgfsetdash{{2.960000pt}{1.280000pt}}{0.000000pt}%
\pgfpathmoveto{\pgfqpoint{0.562322in}{1.101751in}}%
\pgfpathlineto{\pgfqpoint{0.659544in}{1.101751in}}%
\pgfpathlineto{\pgfqpoint{0.756766in}{1.101751in}}%
\pgfusepath{stroke}%
\end{pgfscope}%
\begin{pgfscope}%
\definecolor{textcolor}{rgb}{0.000000,0.000000,0.000000}%
\pgfsetstrokecolor{textcolor}%
\pgfsetfillcolor{textcolor}%
\pgftext[x=0.834544in,y=1.067723in,left,base]{\color{textcolor}{\rmfamily\fontsize{7.000000}{8.400000}\selectfont\catcode`\^=\active\def^{\ifmmode\sp\else\^{}\fi}\catcode`\%=\active\def
\end{pgfscope}%
\end{pgfpicture}%
\makeatother%
\endgroup%

%% file: figures/calibration_maps/TRIVIAQA_Qwen3-4B-Instruct-2507.pgf
\begingroup%
\makeatletter%
\begin{pgfpicture}%
\pgfpathrectangle{\pgfpointorigin}{\pgfqpoint{1.600000in}{1.600000in}}%
\pgfusepath{use as bounding box, clip}%
\begin{pgfscope}%
\pgfsetbuttcap%
\pgfsetmiterjoin%
\definecolor{currentfill}{rgb}{1.000000,1.000000,1.000000}%
\pgfsetfillcolor{currentfill}%
\pgfsetlinewidth{0.000000pt}%
\definecolor{currentstroke}{rgb}{1.000000,1.000000,1.000000}%
\pgfsetstrokecolor{currentstroke}%
\pgfsetdash{}{0pt}%
\pgfpathmoveto{\pgfqpoint{0.000000in}{0.000000in}}%
\pgfpathlineto{\pgfqpoint{1.600000in}{0.000000in}}%
\pgfpathlineto{\pgfqpoint{1.600000in}{1.600000in}}%
\pgfpathlineto{\pgfqpoint{0.000000in}{1.600000in}}%
\pgfpathlineto{\pgfqpoint{0.000000in}{0.000000in}}%
\pgfpathclose%
\pgfusepath{fill}%
\end{pgfscope}%
\begin{pgfscope}%
\pgfsetbuttcap%
\pgfsetmiterjoin%
\definecolor{currentfill}{rgb}{1.000000,1.000000,1.000000}%
\pgfsetfillcolor{currentfill}%
\pgfsetlinewidth{0.000000pt}%
\definecolor{currentstroke}{rgb}{0.000000,0.000000,0.000000}%
\pgfsetstrokecolor{currentstroke}%
\pgfsetstrokeopacity{0.000000}%
\pgfsetdash{}{0pt}%
\pgfpathmoveto{\pgfqpoint{0.519743in}{0.414757in}}%
\pgfpathlineto{\pgfqpoint{1.503516in}{0.414757in}}%
\pgfpathlineto{\pgfqpoint{1.503516in}{1.558330in}}%
\pgfpathlineto{\pgfqpoint{0.519743in}{1.558330in}}%
\pgfpathlineto{\pgfqpoint{0.519743in}{0.414757in}}%
\pgfpathclose%
\pgfusepath{fill}%
\end{pgfscope}%
\begin{pgfscope}%
\pgfpathrectangle{\pgfqpoint{0.519743in}{0.414757in}}{\pgfqpoint{0.983773in}{1.143573in}}%
\pgfusepath{clip}%
\pgfsetbuttcap%
\pgfsetroundjoin%
\definecolor{currentfill}{rgb}{0.870588,0.560784,0.011765}%
\pgfsetfillcolor{currentfill}%
\pgfsetfillopacity{0.200000}%
\pgfsetlinewidth{1.003750pt}%
\definecolor{currentstroke}{rgb}{0.870588,0.560784,0.011765}%
\pgfsetstrokecolor{currentstroke}%
\pgfsetstrokeopacity{0.200000}%
\pgfsetdash{}{0pt}%
\pgfsys@defobject{currentmarker}{\pgfqpoint{0.564460in}{0.726641in}}{\pgfqpoint{0.788045in}{1.506349in}}{%
\pgfpathmoveto{\pgfqpoint{0.788045in}{0.726641in}}%
\pgfpathlineto{\pgfqpoint{0.564460in}{0.726641in}}%
\pgfpathlineto{\pgfqpoint{0.564460in}{1.506349in}}%
\pgfpathlineto{\pgfqpoint{0.788045in}{1.506349in}}%
\pgfpathlineto{\pgfqpoint{0.788045in}{1.506349in}}%
\pgfpathlineto{\pgfqpoint{0.788045in}{0.726641in}}%
\pgfpathlineto{\pgfqpoint{0.788045in}{0.726641in}}%
\pgfpathclose%
\pgfusepath{stroke,fill}%
}%
\begin{pgfscope}%
\pgfsys@transformshift{0.000000in}{0.000000in}%
\pgfsys@useobject{currentmarker}{}%
\end{pgfscope}%
\end{pgfscope}%
\begin{pgfscope}%
\pgfpathrectangle{\pgfqpoint{0.519743in}{0.414757in}}{\pgfqpoint{0.983773in}{1.143573in}}%
\pgfusepath{clip}%
\pgfsetbuttcap%
\pgfsetroundjoin%
\definecolor{currentfill}{rgb}{0.007843,0.619608,0.447059}%
\pgfsetfillcolor{currentfill}%
\pgfsetfillopacity{0.200000}%
\pgfsetlinewidth{1.003750pt}%
\definecolor{currentstroke}{rgb}{0.007843,0.619608,0.447059}%
\pgfsetstrokecolor{currentstroke}%
\pgfsetstrokeopacity{0.200000}%
\pgfsetdash{}{0pt}%
\pgfsys@defobject{currentmarker}{\pgfqpoint{0.788045in}{0.726641in}}{\pgfqpoint{1.458799in}{1.506349in}}{%
\pgfpathmoveto{\pgfqpoint{1.458799in}{0.726641in}}%
\pgfpathlineto{\pgfqpoint{0.788045in}{0.726641in}}%
\pgfpathlineto{\pgfqpoint{0.788045in}{1.506349in}}%
\pgfpathlineto{\pgfqpoint{1.458799in}{1.506349in}}%
\pgfpathlineto{\pgfqpoint{1.458799in}{1.506349in}}%
\pgfpathlineto{\pgfqpoint{1.458799in}{0.726641in}}%
\pgfpathlineto{\pgfqpoint{1.458799in}{0.726641in}}%
\pgfpathclose%
\pgfusepath{stroke,fill}%
}%
\begin{pgfscope}%
\pgfsys@transformshift{0.000000in}{0.000000in}%
\pgfsys@useobject{currentmarker}{}%
\end{pgfscope}%
\end{pgfscope}%
\begin{pgfscope}%
\pgfpathrectangle{\pgfqpoint{0.519743in}{0.414757in}}{\pgfqpoint{0.983773in}{1.143573in}}%
\pgfusepath{clip}%
\pgfsetbuttcap%
\pgfsetroundjoin%
\definecolor{currentfill}{rgb}{0.870588,0.560784,0.011765}%
\pgfsetfillcolor{currentfill}%
\pgfsetfillopacity{0.200000}%
\pgfsetlinewidth{1.003750pt}%
\definecolor{currentstroke}{rgb}{0.870588,0.560784,0.011765}%
\pgfsetstrokecolor{currentstroke}%
\pgfsetstrokeopacity{0.200000}%
\pgfsetdash{}{0pt}%
\pgfsys@defobject{currentmarker}{\pgfqpoint{0.788045in}{0.466738in}}{\pgfqpoint{1.458799in}{0.726641in}}{%
\pgfpathmoveto{\pgfqpoint{1.458799in}{0.466738in}}%
\pgfpathlineto{\pgfqpoint{0.788045in}{0.466738in}}%
\pgfpathlineto{\pgfqpoint{0.788045in}{0.726641in}}%
\pgfpathlineto{\pgfqpoint{1.458799in}{0.726641in}}%
\pgfpathlineto{\pgfqpoint{1.458799in}{0.726641in}}%
\pgfpathlineto{\pgfqpoint{1.458799in}{0.466738in}}%
\pgfpathlineto{\pgfqpoint{1.458799in}{0.466738in}}%
\pgfpathclose%
\pgfusepath{stroke,fill}%
}%
\begin{pgfscope}%
\pgfsys@transformshift{0.000000in}{0.000000in}%
\pgfsys@useobject{currentmarker}{}%
\end{pgfscope}%
\end{pgfscope}%
\begin{pgfscope}%
\pgfpathrectangle{\pgfqpoint{0.519743in}{0.414757in}}{\pgfqpoint{0.983773in}{1.143573in}}%
\pgfusepath{clip}%
\pgfsetbuttcap%
\pgfsetroundjoin%
\definecolor{currentfill}{rgb}{0.007843,0.619608,0.447059}%
\pgfsetfillcolor{currentfill}%
\pgfsetfillopacity{0.200000}%
\pgfsetlinewidth{1.003750pt}%
\definecolor{currentstroke}{rgb}{0.007843,0.619608,0.447059}%
\pgfsetstrokecolor{currentstroke}%
\pgfsetstrokeopacity{0.200000}%
\pgfsetdash{}{0pt}%
\pgfsys@defobject{currentmarker}{\pgfqpoint{0.564460in}{0.466738in}}{\pgfqpoint{0.788045in}{0.726641in}}{%
\pgfpathmoveto{\pgfqpoint{0.788045in}{0.466738in}}%
\pgfpathlineto{\pgfqpoint{0.564460in}{0.466738in}}%
\pgfpathlineto{\pgfqpoint{0.564460in}{0.726641in}}%
\pgfpathlineto{\pgfqpoint{0.788045in}{0.726641in}}%
\pgfpathlineto{\pgfqpoint{0.788045in}{0.726641in}}%
\pgfpathlineto{\pgfqpoint{0.788045in}{0.466738in}}%
\pgfpathlineto{\pgfqpoint{0.788045in}{0.466738in}}%
\pgfpathclose%
\pgfusepath{stroke,fill}%
}%
\begin{pgfscope}%
\pgfsys@transformshift{0.000000in}{0.000000in}%
\pgfsys@useobject{currentmarker}{}%
\end{pgfscope}%
\end{pgfscope}%
\begin{pgfscope}%
\pgfpathrectangle{\pgfqpoint{0.519743in}{0.414757in}}{\pgfqpoint{0.983773in}{1.143573in}}%
\pgfusepath{clip}%
\pgfsetrectcap%
\pgfsetroundjoin%
\pgfsetlinewidth{0.803000pt}%
\definecolor{currentstroke}{rgb}{0.690196,0.690196,0.690196}%
\pgfsetstrokecolor{currentstroke}%
\pgfsetdash{}{0pt}%
\pgfpathmoveto{\pgfqpoint{0.564460in}{0.414757in}}%
\pgfpathlineto{\pgfqpoint{0.564460in}{1.558330in}}%
\pgfusepath{stroke}%
\end{pgfscope}%
\begin{pgfscope}%
\pgfsetbuttcap%
\pgfsetroundjoin%
\definecolor{currentfill}{rgb}{0.000000,0.000000,0.000000}%
\pgfsetfillcolor{currentfill}%
\pgfsetlinewidth{0.803000pt}%
\definecolor{currentstroke}{rgb}{0.000000,0.000000,0.000000}%
\pgfsetstrokecolor{currentstroke}%
\pgfsetdash{}{0pt}%
\pgfsys@defobject{currentmarker}{\pgfqpoint{0.000000in}{-0.048611in}}{\pgfqpoint{0.000000in}{0.000000in}}{%
\pgfpathmoveto{\pgfqpoint{0.000000in}{0.000000in}}%
\pgfpathlineto{\pgfqpoint{0.000000in}{-0.048611in}}%
\pgfusepath{stroke,fill}%
}%
\begin{pgfscope}%
\pgfsys@transformshift{0.564460in}{0.414757in}%
\pgfsys@useobject{currentmarker}{}%
\end{pgfscope}%
\end{pgfscope}%
\begin{pgfscope}%
\definecolor{textcolor}{rgb}{0.000000,0.000000,0.000000}%
\pgfsetstrokecolor{textcolor}%
\pgfsetfillcolor{textcolor}%
\pgftext[x=0.564460in,y=0.317535in,,top]{\color{textcolor}{\rmfamily\fontsize{7.000000}{8.400000}\selectfont\catcode`\^=\active\def^{\ifmmode\sp\else\^{}\fi}\catcode`\%=\active\def
\end{pgfscope}%
\begin{pgfscope}%
\pgfpathrectangle{\pgfqpoint{0.519743in}{0.414757in}}{\pgfqpoint{0.983773in}{1.143573in}}%
\pgfusepath{clip}%
\pgfsetrectcap%
\pgfsetroundjoin%
\pgfsetlinewidth{0.803000pt}%
\definecolor{currentstroke}{rgb}{0.690196,0.690196,0.690196}%
\pgfsetstrokecolor{currentstroke}%
\pgfsetdash{}{0pt}%
\pgfpathmoveto{\pgfqpoint{0.788045in}{0.414757in}}%
\pgfpathlineto{\pgfqpoint{0.788045in}{1.558330in}}%
\pgfusepath{stroke}%
\end{pgfscope}%
\begin{pgfscope}%
\pgfsetbuttcap%
\pgfsetroundjoin%
\definecolor{currentfill}{rgb}{0.000000,0.000000,0.000000}%
\pgfsetfillcolor{currentfill}%
\pgfsetlinewidth{0.803000pt}%
\definecolor{currentstroke}{rgb}{0.000000,0.000000,0.000000}%
\pgfsetstrokecolor{currentstroke}%
\pgfsetdash{}{0pt}%
\pgfsys@defobject{currentmarker}{\pgfqpoint{0.000000in}{-0.048611in}}{\pgfqpoint{0.000000in}{0.000000in}}{%
\pgfpathmoveto{\pgfqpoint{0.000000in}{0.000000in}}%
\pgfpathlineto{\pgfqpoint{0.000000in}{-0.048611in}}%
\pgfusepath{stroke,fill}%
}%
\begin{pgfscope}%
\pgfsys@transformshift{0.788045in}{0.414757in}%
\pgfsys@useobject{currentmarker}{}%
\end{pgfscope}%
\end{pgfscope}%
\begin{pgfscope}%
\definecolor{textcolor}{rgb}{0.000000,0.000000,0.000000}%
\pgfsetstrokecolor{textcolor}%
\pgfsetfillcolor{textcolor}%
\pgftext[x=0.788045in,y=0.317535in,,top]{\color{textcolor}{\rmfamily\fontsize{7.000000}{8.400000}\selectfont\catcode`\^=\active\def^{\ifmmode\sp\else\^{}\fi}\catcode`\%=\active\def
\end{pgfscope}%
\begin{pgfscope}%
\pgfpathrectangle{\pgfqpoint{0.519743in}{0.414757in}}{\pgfqpoint{0.983773in}{1.143573in}}%
\pgfusepath{clip}%
\pgfsetrectcap%
\pgfsetroundjoin%
\pgfsetlinewidth{0.803000pt}%
\definecolor{currentstroke}{rgb}{0.690196,0.690196,0.690196}%
\pgfsetstrokecolor{currentstroke}%
\pgfsetdash{}{0pt}%
\pgfpathmoveto{\pgfqpoint{1.011630in}{0.414757in}}%
\pgfpathlineto{\pgfqpoint{1.011630in}{1.558330in}}%
\pgfusepath{stroke}%
\end{pgfscope}%
\begin{pgfscope}%
\pgfsetbuttcap%
\pgfsetroundjoin%
\definecolor{currentfill}{rgb}{0.000000,0.000000,0.000000}%
\pgfsetfillcolor{currentfill}%
\pgfsetlinewidth{0.803000pt}%
\definecolor{currentstroke}{rgb}{0.000000,0.000000,0.000000}%
\pgfsetstrokecolor{currentstroke}%
\pgfsetdash{}{0pt}%
\pgfsys@defobject{currentmarker}{\pgfqpoint{0.000000in}{-0.048611in}}{\pgfqpoint{0.000000in}{0.000000in}}{%
\pgfpathmoveto{\pgfqpoint{0.000000in}{0.000000in}}%
\pgfpathlineto{\pgfqpoint{0.000000in}{-0.048611in}}%
\pgfusepath{stroke,fill}%
}%
\begin{pgfscope}%
\pgfsys@transformshift{1.011630in}{0.414757in}%
\pgfsys@useobject{currentmarker}{}%
\end{pgfscope}%
\end{pgfscope}%
\begin{pgfscope}%
\definecolor{textcolor}{rgb}{0.000000,0.000000,0.000000}%
\pgfsetstrokecolor{textcolor}%
\pgfsetfillcolor{textcolor}%
\pgftext[x=1.011630in,y=0.317535in,,top]{\color{textcolor}{\rmfamily\fontsize{7.000000}{8.400000}\selectfont\catcode`\^=\active\def^{\ifmmode\sp\else\^{}\fi}\catcode`\%=\active\def
\end{pgfscope}%
\begin{pgfscope}%
\pgfpathrectangle{\pgfqpoint{0.519743in}{0.414757in}}{\pgfqpoint{0.983773in}{1.143573in}}%
\pgfusepath{clip}%
\pgfsetrectcap%
\pgfsetroundjoin%
\pgfsetlinewidth{0.803000pt}%
\definecolor{currentstroke}{rgb}{0.690196,0.690196,0.690196}%
\pgfsetstrokecolor{currentstroke}%
\pgfsetdash{}{0pt}%
\pgfpathmoveto{\pgfqpoint{1.235215in}{0.414757in}}%
\pgfpathlineto{\pgfqpoint{1.235215in}{1.558330in}}%
\pgfusepath{stroke}%
\end{pgfscope}%
\begin{pgfscope}%
\pgfsetbuttcap%
\pgfsetroundjoin%
\definecolor{currentfill}{rgb}{0.000000,0.000000,0.000000}%
\pgfsetfillcolor{currentfill}%
\pgfsetlinewidth{0.803000pt}%
\definecolor{currentstroke}{rgb}{0.000000,0.000000,0.000000}%
\pgfsetstrokecolor{currentstroke}%
\pgfsetdash{}{0pt}%
\pgfsys@defobject{currentmarker}{\pgfqpoint{0.000000in}{-0.048611in}}{\pgfqpoint{0.000000in}{0.000000in}}{%
\pgfpathmoveto{\pgfqpoint{0.000000in}{0.000000in}}%
\pgfpathlineto{\pgfqpoint{0.000000in}{-0.048611in}}%
\pgfusepath{stroke,fill}%
}%
\begin{pgfscope}%
\pgfsys@transformshift{1.235215in}{0.414757in}%
\pgfsys@useobject{currentmarker}{}%
\end{pgfscope}%
\end{pgfscope}%
\begin{pgfscope}%
\definecolor{textcolor}{rgb}{0.000000,0.000000,0.000000}%
\pgfsetstrokecolor{textcolor}%
\pgfsetfillcolor{textcolor}%
\pgftext[x=1.235215in,y=0.317535in,,top]{\color{textcolor}{\rmfamily\fontsize{7.000000}{8.400000}\selectfont\catcode`\^=\active\def^{\ifmmode\sp\else\^{}\fi}\catcode`\%=\active\def
\end{pgfscope}%
\begin{pgfscope}%
\pgfpathrectangle{\pgfqpoint{0.519743in}{0.414757in}}{\pgfqpoint{0.983773in}{1.143573in}}%
\pgfusepath{clip}%
\pgfsetrectcap%
\pgfsetroundjoin%
\pgfsetlinewidth{0.803000pt}%
\definecolor{currentstroke}{rgb}{0.690196,0.690196,0.690196}%
\pgfsetstrokecolor{currentstroke}%
\pgfsetdash{}{0pt}%
\pgfpathmoveto{\pgfqpoint{1.458799in}{0.414757in}}%
\pgfpathlineto{\pgfqpoint{1.458799in}{1.558330in}}%
\pgfusepath{stroke}%
\end{pgfscope}%
\begin{pgfscope}%
\pgfsetbuttcap%
\pgfsetroundjoin%
\definecolor{currentfill}{rgb}{0.000000,0.000000,0.000000}%
\pgfsetfillcolor{currentfill}%
\pgfsetlinewidth{0.803000pt}%
\definecolor{currentstroke}{rgb}{0.000000,0.000000,0.000000}%
\pgfsetstrokecolor{currentstroke}%
\pgfsetdash{}{0pt}%
\pgfsys@defobject{currentmarker}{\pgfqpoint{0.000000in}{-0.048611in}}{\pgfqpoint{0.000000in}{0.000000in}}{%
\pgfpathmoveto{\pgfqpoint{0.000000in}{0.000000in}}%
\pgfpathlineto{\pgfqpoint{0.000000in}{-0.048611in}}%
\pgfusepath{stroke,fill}%
}%
\begin{pgfscope}%
\pgfsys@transformshift{1.458799in}{0.414757in}%
\pgfsys@useobject{currentmarker}{}%
\end{pgfscope}%
\end{pgfscope}%
\begin{pgfscope}%
\definecolor{textcolor}{rgb}{0.000000,0.000000,0.000000}%
\pgfsetstrokecolor{textcolor}%
\pgfsetfillcolor{textcolor}%
\pgftext[x=1.458799in,y=0.317535in,,top]{\color{textcolor}{\rmfamily\fontsize{7.000000}{8.400000}\selectfont\catcode`\^=\active\def^{\ifmmode\sp\else\^{}\fi}\catcode`\%=\active\def
\end{pgfscope}%
\begin{pgfscope}%
\definecolor{textcolor}{rgb}{0.000000,0.000000,0.000000}%
\pgfsetstrokecolor{textcolor}%
\pgfsetfillcolor{textcolor}%
\pgftext[x=1.011630in,y=0.167891in,,top]{\color{textcolor}{\rmfamily\fontsize{9.000000}{10.800000}\selectfont\catcode`\^=\active\def^{\ifmmode\sp\else\^{}\fi}\catcode`\%=\active\def
\end{pgfscope}%
\begin{pgfscope}%
\pgfpathrectangle{\pgfqpoint{0.519743in}{0.414757in}}{\pgfqpoint{0.983773in}{1.143573in}}%
\pgfusepath{clip}%
\pgfsetrectcap%
\pgfsetroundjoin%
\pgfsetlinewidth{0.803000pt}%
\definecolor{currentstroke}{rgb}{0.690196,0.690196,0.690196}%
\pgfsetstrokecolor{currentstroke}%
\pgfsetdash{}{0pt}%
\pgfpathmoveto{\pgfqpoint{0.519743in}{0.466738in}}%
\pgfpathlineto{\pgfqpoint{1.503516in}{0.466738in}}%
\pgfusepath{stroke}%
\end{pgfscope}%
\begin{pgfscope}%
\pgfsetbuttcap%
\pgfsetroundjoin%
\definecolor{currentfill}{rgb}{0.000000,0.000000,0.000000}%
\pgfsetfillcolor{currentfill}%
\pgfsetlinewidth{0.803000pt}%
\definecolor{currentstroke}{rgb}{0.000000,0.000000,0.000000}%
\pgfsetstrokecolor{currentstroke}%
\pgfsetdash{}{0pt}%
\pgfsys@defobject{currentmarker}{\pgfqpoint{-0.048611in}{0.000000in}}{\pgfqpoint{-0.000000in}{0.000000in}}{%
\pgfpathmoveto{\pgfqpoint{-0.000000in}{0.000000in}}%
\pgfpathlineto{\pgfqpoint{-0.048611in}{0.000000in}}%
\pgfusepath{stroke,fill}%
}%
\begin{pgfscope}%
\pgfsys@transformshift{0.519743in}{0.466738in}%
\pgfsys@useobject{currentmarker}{}%
\end{pgfscope}%
\end{pgfscope}%
\begin{pgfscope}%
\definecolor{textcolor}{rgb}{0.000000,0.000000,0.000000}%
\pgfsetstrokecolor{textcolor}%
\pgfsetfillcolor{textcolor}%
\pgftext[x=0.223446in, y=0.429805in, left, base]{\color{textcolor}{\rmfamily\fontsize{7.000000}{8.400000}\selectfont\catcode`\^=\active\def^{\ifmmode\sp\else\^{}\fi}\catcode`\%=\active\def
\end{pgfscope}%
\begin{pgfscope}%
\pgfpathrectangle{\pgfqpoint{0.519743in}{0.414757in}}{\pgfqpoint{0.983773in}{1.143573in}}%
\pgfusepath{clip}%
\pgfsetrectcap%
\pgfsetroundjoin%
\pgfsetlinewidth{0.803000pt}%
\definecolor{currentstroke}{rgb}{0.690196,0.690196,0.690196}%
\pgfsetstrokecolor{currentstroke}%
\pgfsetdash{}{0pt}%
\pgfpathmoveto{\pgfqpoint{0.519743in}{0.726641in}}%
\pgfpathlineto{\pgfqpoint{1.503516in}{0.726641in}}%
\pgfusepath{stroke}%
\end{pgfscope}%
\begin{pgfscope}%
\pgfsetbuttcap%
\pgfsetroundjoin%
\definecolor{currentfill}{rgb}{0.000000,0.000000,0.000000}%
\pgfsetfillcolor{currentfill}%
\pgfsetlinewidth{0.803000pt}%
\definecolor{currentstroke}{rgb}{0.000000,0.000000,0.000000}%
\pgfsetstrokecolor{currentstroke}%
\pgfsetdash{}{0pt}%
\pgfsys@defobject{currentmarker}{\pgfqpoint{-0.048611in}{0.000000in}}{\pgfqpoint{-0.000000in}{0.000000in}}{%
\pgfpathmoveto{\pgfqpoint{-0.000000in}{0.000000in}}%
\pgfpathlineto{\pgfqpoint{-0.048611in}{0.000000in}}%
\pgfusepath{stroke,fill}%
}%
\begin{pgfscope}%
\pgfsys@transformshift{0.519743in}{0.726641in}%
\pgfsys@useobject{currentmarker}{}%
\end{pgfscope}%
\end{pgfscope}%
\begin{pgfscope}%
\definecolor{textcolor}{rgb}{0.000000,0.000000,0.000000}%
\pgfsetstrokecolor{textcolor}%
\pgfsetfillcolor{textcolor}%
\pgftext[x=0.223446in, y=0.689708in, left, base]{\color{textcolor}{\rmfamily\fontsize{7.000000}{8.400000}\selectfont\catcode`\^=\active\def^{\ifmmode\sp\else\^{}\fi}\catcode`\%=\active\def
\end{pgfscope}%
\begin{pgfscope}%
\pgfpathrectangle{\pgfqpoint{0.519743in}{0.414757in}}{\pgfqpoint{0.983773in}{1.143573in}}%
\pgfusepath{clip}%
\pgfsetrectcap%
\pgfsetroundjoin%
\pgfsetlinewidth{0.803000pt}%
\definecolor{currentstroke}{rgb}{0.690196,0.690196,0.690196}%
\pgfsetstrokecolor{currentstroke}%
\pgfsetdash{}{0pt}%
\pgfpathmoveto{\pgfqpoint{0.519743in}{0.986544in}}%
\pgfpathlineto{\pgfqpoint{1.503516in}{0.986544in}}%
\pgfusepath{stroke}%
\end{pgfscope}%
\begin{pgfscope}%
\pgfsetbuttcap%
\pgfsetroundjoin%
\definecolor{currentfill}{rgb}{0.000000,0.000000,0.000000}%
\pgfsetfillcolor{currentfill}%
\pgfsetlinewidth{0.803000pt}%
\definecolor{currentstroke}{rgb}{0.000000,0.000000,0.000000}%
\pgfsetstrokecolor{currentstroke}%
\pgfsetdash{}{0pt}%
\pgfsys@defobject{currentmarker}{\pgfqpoint{-0.048611in}{0.000000in}}{\pgfqpoint{-0.000000in}{0.000000in}}{%
\pgfpathmoveto{\pgfqpoint{-0.000000in}{0.000000in}}%
\pgfpathlineto{\pgfqpoint{-0.048611in}{0.000000in}}%
\pgfusepath{stroke,fill}%
}%
\begin{pgfscope}%
\pgfsys@transformshift{0.519743in}{0.986544in}%
\pgfsys@useobject{currentmarker}{}%
\end{pgfscope}%
\end{pgfscope}%
\begin{pgfscope}%
\definecolor{textcolor}{rgb}{0.000000,0.000000,0.000000}%
\pgfsetstrokecolor{textcolor}%
\pgfsetfillcolor{textcolor}%
\pgftext[x=0.223446in, y=0.949611in, left, base]{\color{textcolor}{\rmfamily\fontsize{7.000000}{8.400000}\selectfont\catcode`\^=\active\def^{\ifmmode\sp\else\^{}\fi}\catcode`\%=\active\def
\end{pgfscope}%
\begin{pgfscope}%
\pgfpathrectangle{\pgfqpoint{0.519743in}{0.414757in}}{\pgfqpoint{0.983773in}{1.143573in}}%
\pgfusepath{clip}%
\pgfsetrectcap%
\pgfsetroundjoin%
\pgfsetlinewidth{0.803000pt}%
\definecolor{currentstroke}{rgb}{0.690196,0.690196,0.690196}%
\pgfsetstrokecolor{currentstroke}%
\pgfsetdash{}{0pt}%
\pgfpathmoveto{\pgfqpoint{0.519743in}{1.246447in}}%
\pgfpathlineto{\pgfqpoint{1.503516in}{1.246447in}}%
\pgfusepath{stroke}%
\end{pgfscope}%
\begin{pgfscope}%
\pgfsetbuttcap%
\pgfsetroundjoin%
\definecolor{currentfill}{rgb}{0.000000,0.000000,0.000000}%
\pgfsetfillcolor{currentfill}%
\pgfsetlinewidth{0.803000pt}%
\definecolor{currentstroke}{rgb}{0.000000,0.000000,0.000000}%
\pgfsetstrokecolor{currentstroke}%
\pgfsetdash{}{0pt}%
\pgfsys@defobject{currentmarker}{\pgfqpoint{-0.048611in}{0.000000in}}{\pgfqpoint{-0.000000in}{0.000000in}}{%
\pgfpathmoveto{\pgfqpoint{-0.000000in}{0.000000in}}%
\pgfpathlineto{\pgfqpoint{-0.048611in}{0.000000in}}%
\pgfusepath{stroke,fill}%
}%
\begin{pgfscope}%
\pgfsys@transformshift{0.519743in}{1.246447in}%
\pgfsys@useobject{currentmarker}{}%
\end{pgfscope}%
\end{pgfscope}%
\begin{pgfscope}%
\definecolor{textcolor}{rgb}{0.000000,0.000000,0.000000}%
\pgfsetstrokecolor{textcolor}%
\pgfsetfillcolor{textcolor}%
\pgftext[x=0.223446in, y=1.209513in, left, base]{\color{textcolor}{\rmfamily\fontsize{7.000000}{8.400000}\selectfont\catcode`\^=\active\def^{\ifmmode\sp\else\^{}\fi}\catcode`\%=\active\def
\end{pgfscope}%
\begin{pgfscope}%
\pgfpathrectangle{\pgfqpoint{0.519743in}{0.414757in}}{\pgfqpoint{0.983773in}{1.143573in}}%
\pgfusepath{clip}%
\pgfsetrectcap%
\pgfsetroundjoin%
\pgfsetlinewidth{0.803000pt}%
\definecolor{currentstroke}{rgb}{0.690196,0.690196,0.690196}%
\pgfsetstrokecolor{currentstroke}%
\pgfsetdash{}{0pt}%
\pgfpathmoveto{\pgfqpoint{0.519743in}{1.506349in}}%
\pgfpathlineto{\pgfqpoint{1.503516in}{1.506349in}}%
\pgfusepath{stroke}%
\end{pgfscope}%
\begin{pgfscope}%
\pgfsetbuttcap%
\pgfsetroundjoin%
\definecolor{currentfill}{rgb}{0.000000,0.000000,0.000000}%
\pgfsetfillcolor{currentfill}%
\pgfsetlinewidth{0.803000pt}%
\definecolor{currentstroke}{rgb}{0.000000,0.000000,0.000000}%
\pgfsetstrokecolor{currentstroke}%
\pgfsetdash{}{0pt}%
\pgfsys@defobject{currentmarker}{\pgfqpoint{-0.048611in}{0.000000in}}{\pgfqpoint{-0.000000in}{0.000000in}}{%
\pgfpathmoveto{\pgfqpoint{-0.000000in}{0.000000in}}%
\pgfpathlineto{\pgfqpoint{-0.048611in}{0.000000in}}%
\pgfusepath{stroke,fill}%
}%
\begin{pgfscope}%
\pgfsys@transformshift{0.519743in}{1.506349in}%
\pgfsys@useobject{currentmarker}{}%
\end{pgfscope}%
\end{pgfscope}%
\begin{pgfscope}%
\definecolor{textcolor}{rgb}{0.000000,0.000000,0.000000}%
\pgfsetstrokecolor{textcolor}%
\pgfsetfillcolor{textcolor}%
\pgftext[x=0.223446in, y=1.469416in, left, base]{\color{textcolor}{\rmfamily\fontsize{7.000000}{8.400000}\selectfont\catcode`\^=\active\def^{\ifmmode\sp\else\^{}\fi}\catcode`\%=\active\def
\end{pgfscope}%
\begin{pgfscope}%
\definecolor{textcolor}{rgb}{0.000000,0.000000,0.000000}%
\pgfsetstrokecolor{textcolor}%
\pgfsetfillcolor{textcolor}%
\pgftext[x=0.167891in,y=0.986544in,,bottom,rotate=90.000000]{\color{textcolor}{\rmfamily\fontsize{9.000000}{10.800000}\selectfont\catcode`\^=\active\def^{\ifmmode\sp\else\^{}\fi}\catcode`\%=\active\def
\end{pgfscope}%
\begin{pgfscope}%
\pgfpathrectangle{\pgfqpoint{0.519743in}{0.414757in}}{\pgfqpoint{0.983773in}{1.143573in}}%
\pgfusepath{clip}%
\pgfsetrectcap%
\pgfsetroundjoin%
\pgfsetlinewidth{1.505625pt}%
\definecolor{currentstroke}{rgb}{0.003922,0.450980,0.698039}%
\pgfsetstrokecolor{currentstroke}%
\pgfsetstrokeopacity{0.200000}%
\pgfsetdash{}{0pt}%
\pgfpathmoveto{\pgfqpoint{0.698611in}{0.499941in}}%
\pgfpathlineto{\pgfqpoint{0.743328in}{0.515391in}}%
\pgfpathlineto{\pgfqpoint{0.788045in}{0.531903in}}%
\pgfpathlineto{\pgfqpoint{0.832762in}{0.549183in}}%
\pgfpathlineto{\pgfqpoint{0.877479in}{0.567012in}}%
\pgfpathlineto{\pgfqpoint{0.922196in}{0.585225in}}%
\pgfpathlineto{\pgfqpoint{0.966913in}{0.603692in}}%
\pgfpathlineto{\pgfqpoint{1.011630in}{0.622315in}}%
\pgfpathlineto{\pgfqpoint{1.056347in}{0.641018in}}%
\pgfpathlineto{\pgfqpoint{1.101064in}{0.659750in}}%
\pgfpathlineto{\pgfqpoint{1.145781in}{0.678480in}}%
\pgfpathlineto{\pgfqpoint{1.190498in}{0.697203in}}%
\pgfpathlineto{\pgfqpoint{1.235215in}{0.715948in}}%
\pgfpathlineto{\pgfqpoint{1.279932in}{0.734798in}}%
\pgfpathlineto{\pgfqpoint{1.324649in}{0.753939in}}%
\pgfpathlineto{\pgfqpoint{1.369365in}{0.773829in}}%
\pgfpathlineto{\pgfqpoint{1.414082in}{0.795968in}}%
\pgfpathlineto{\pgfqpoint{1.458799in}{1.045043in}}%
\pgfusepath{stroke}%
\end{pgfscope}%
\begin{pgfscope}%
\pgfpathrectangle{\pgfqpoint{0.519743in}{0.414757in}}{\pgfqpoint{0.983773in}{1.143573in}}%
\pgfusepath{clip}%
\pgfsetrectcap%
\pgfsetroundjoin%
\pgfsetlinewidth{1.505625pt}%
\definecolor{currentstroke}{rgb}{0.003922,0.450980,0.698039}%
\pgfsetstrokecolor{currentstroke}%
\pgfsetstrokeopacity{0.200000}%
\pgfsetdash{}{0pt}%
\pgfpathmoveto{\pgfqpoint{0.687432in}{0.488857in}}%
\pgfpathlineto{\pgfqpoint{0.788045in}{0.518869in}}%
\pgfpathlineto{\pgfqpoint{0.832762in}{0.534246in}}%
\pgfpathlineto{\pgfqpoint{0.877479in}{0.550488in}}%
\pgfpathlineto{\pgfqpoint{0.922196in}{0.567416in}}%
\pgfpathlineto{\pgfqpoint{0.966913in}{0.584884in}}%
\pgfpathlineto{\pgfqpoint{1.011630in}{0.602773in}}%
\pgfpathlineto{\pgfqpoint{1.056347in}{0.620987in}}%
\pgfpathlineto{\pgfqpoint{1.101064in}{0.639452in}}%
\pgfpathlineto{\pgfqpoint{1.145781in}{0.658115in}}%
\pgfpathlineto{\pgfqpoint{1.190498in}{0.676952in}}%
\pgfpathlineto{\pgfqpoint{1.235215in}{0.695971in}}%
\pgfpathlineto{\pgfqpoint{1.279932in}{0.715237in}}%
\pgfpathlineto{\pgfqpoint{1.324649in}{0.734922in}}%
\pgfpathlineto{\pgfqpoint{1.369365in}{0.755473in}}%
\pgfpathlineto{\pgfqpoint{1.414082in}{0.778393in}}%
\pgfpathlineto{\pgfqpoint{1.458799in}{1.033970in}}%
\pgfusepath{stroke}%
\end{pgfscope}%
\begin{pgfscope}%
\pgfpathrectangle{\pgfqpoint{0.519743in}{0.414757in}}{\pgfqpoint{0.983773in}{1.143573in}}%
\pgfusepath{clip}%
\pgfsetrectcap%
\pgfsetroundjoin%
\pgfsetlinewidth{1.505625pt}%
\definecolor{currentstroke}{rgb}{0.003922,0.450980,0.698039}%
\pgfsetstrokecolor{currentstroke}%
\pgfsetstrokeopacity{0.200000}%
\pgfsetdash{}{0pt}%
\pgfpathmoveto{\pgfqpoint{0.698611in}{0.504210in}}%
\pgfpathlineto{\pgfqpoint{0.743328in}{0.520315in}}%
\pgfpathlineto{\pgfqpoint{0.788045in}{0.537156in}}%
\pgfpathlineto{\pgfqpoint{0.832762in}{0.554483in}}%
\pgfpathlineto{\pgfqpoint{0.877479in}{0.572120in}}%
\pgfpathlineto{\pgfqpoint{0.922196in}{0.589937in}}%
\pgfpathlineto{\pgfqpoint{0.966913in}{0.607838in}}%
\pgfpathlineto{\pgfqpoint{1.011630in}{0.625755in}}%
\pgfpathlineto{\pgfqpoint{1.056347in}{0.643637in}}%
\pgfpathlineto{\pgfqpoint{1.101064in}{0.661457in}}%
\pgfpathlineto{\pgfqpoint{1.145781in}{0.679205in}}%
\pgfpathlineto{\pgfqpoint{1.190498in}{0.696894in}}%
\pgfpathlineto{\pgfqpoint{1.235215in}{0.714568in}}%
\pgfpathlineto{\pgfqpoint{1.279932in}{0.732324in}}%
\pgfpathlineto{\pgfqpoint{1.324649in}{0.750363in}}%
\pgfpathlineto{\pgfqpoint{1.369365in}{0.769158in}}%
\pgfpathlineto{\pgfqpoint{1.414082in}{0.790234in}}%
\pgfpathlineto{\pgfqpoint{1.458799in}{1.042467in}}%
\pgfusepath{stroke}%
\end{pgfscope}%
\begin{pgfscope}%
\pgfpathrectangle{\pgfqpoint{0.519743in}{0.414757in}}{\pgfqpoint{0.983773in}{1.143573in}}%
\pgfusepath{clip}%
\pgfsetrectcap%
\pgfsetroundjoin%
\pgfsetlinewidth{1.505625pt}%
\definecolor{currentstroke}{rgb}{0.003922,0.450980,0.698039}%
\pgfsetstrokecolor{currentstroke}%
\pgfsetstrokeopacity{0.200000}%
\pgfsetdash{}{0pt}%
\pgfpathmoveto{\pgfqpoint{0.676253in}{0.488044in}}%
\pgfpathlineto{\pgfqpoint{0.743328in}{0.507971in}}%
\pgfpathlineto{\pgfqpoint{0.788045in}{0.523187in}}%
\pgfpathlineto{\pgfqpoint{0.832762in}{0.539445in}}%
\pgfpathlineto{\pgfqpoint{0.877479in}{0.556512in}}%
\pgfpathlineto{\pgfqpoint{0.922196in}{0.574202in}}%
\pgfpathlineto{\pgfqpoint{0.966913in}{0.592366in}}%
\pgfpathlineto{\pgfqpoint{1.011630in}{0.610879in}}%
\pgfpathlineto{\pgfqpoint{1.056347in}{0.629646in}}%
\pgfpathlineto{\pgfqpoint{1.101064in}{0.648593in}}%
\pgfpathlineto{\pgfqpoint{1.145781in}{0.667667in}}%
\pgfpathlineto{\pgfqpoint{1.190498in}{0.686845in}}%
\pgfpathlineto{\pgfqpoint{1.235215in}{0.706135in}}%
\pgfpathlineto{\pgfqpoint{1.279932in}{0.725601in}}%
\pgfpathlineto{\pgfqpoint{1.324649in}{0.745410in}}%
\pgfpathlineto{\pgfqpoint{1.369365in}{0.765995in}}%
\pgfpathlineto{\pgfqpoint{1.414082in}{0.788801in}}%
\pgfpathlineto{\pgfqpoint{1.458799in}{1.033188in}}%
\pgfusepath{stroke}%
\end{pgfscope}%
\begin{pgfscope}%
\pgfpathrectangle{\pgfqpoint{0.519743in}{0.414757in}}{\pgfqpoint{0.983773in}{1.143573in}}%
\pgfusepath{clip}%
\pgfsetrectcap%
\pgfsetroundjoin%
\pgfsetlinewidth{1.505625pt}%
\definecolor{currentstroke}{rgb}{0.003922,0.450980,0.698039}%
\pgfsetstrokecolor{currentstroke}%
\pgfsetstrokeopacity{0.200000}%
\pgfsetdash{}{0pt}%
\pgfpathmoveto{\pgfqpoint{0.683706in}{0.489226in}}%
\pgfpathlineto{\pgfqpoint{0.743328in}{0.506616in}}%
\pgfpathlineto{\pgfqpoint{0.788045in}{0.521444in}}%
\pgfpathlineto{\pgfqpoint{0.832762in}{0.537330in}}%
\pgfpathlineto{\pgfqpoint{0.877479in}{0.554047in}}%
\pgfpathlineto{\pgfqpoint{0.922196in}{0.571415in}}%
\pgfpathlineto{\pgfqpoint{0.966913in}{0.589288in}}%
\pgfpathlineto{\pgfqpoint{1.011630in}{0.607547in}}%
\pgfpathlineto{\pgfqpoint{1.056347in}{0.626099in}}%
\pgfpathlineto{\pgfqpoint{1.101064in}{0.644871in}}%
\pgfpathlineto{\pgfqpoint{1.145781in}{0.663817in}}%
\pgfpathlineto{\pgfqpoint{1.190498in}{0.682915in}}%
\pgfpathlineto{\pgfqpoint{1.235215in}{0.702182in}}%
\pgfpathlineto{\pgfqpoint{1.279932in}{0.721694in}}%
\pgfpathlineto{\pgfqpoint{1.324649in}{0.741641in}}%
\pgfpathlineto{\pgfqpoint{1.369365in}{0.762507in}}%
\pgfpathlineto{\pgfqpoint{1.414082in}{0.785910in}}%
\pgfpathlineto{\pgfqpoint{1.458799in}{1.057756in}}%
\pgfusepath{stroke}%
\end{pgfscope}%
\begin{pgfscope}%
\pgfpathrectangle{\pgfqpoint{0.519743in}{0.414757in}}{\pgfqpoint{0.983773in}{1.143573in}}%
\pgfusepath{clip}%
\pgfsetbuttcap%
\pgfsetroundjoin%
\pgfsetlinewidth{1.505625pt}%
\definecolor{currentstroke}{rgb}{0.501961,0.501961,0.501961}%
\pgfsetstrokecolor{currentstroke}%
\pgfsetdash{{5.550000pt}{2.400000pt}}{0.000000pt}%
\pgfpathmoveto{\pgfqpoint{0.564460in}{0.726641in}}%
\pgfpathlineto{\pgfqpoint{1.458799in}{0.726641in}}%
\pgfusepath{stroke}%
\end{pgfscope}%
\begin{pgfscope}%
\pgfpathrectangle{\pgfqpoint{0.519743in}{0.414757in}}{\pgfqpoint{0.983773in}{1.143573in}}%
\pgfusepath{clip}%
\pgfsetbuttcap%
\pgfsetroundjoin%
\pgfsetlinewidth{1.505625pt}%
\definecolor{currentstroke}{rgb}{0.501961,0.501961,0.501961}%
\pgfsetstrokecolor{currentstroke}%
\pgfsetdash{{5.550000pt}{2.400000pt}}{0.000000pt}%
\pgfpathmoveto{\pgfqpoint{0.788045in}{0.466738in}}%
\pgfpathlineto{\pgfqpoint{0.788045in}{1.506349in}}%
\pgfusepath{stroke}%
\end{pgfscope}%
\begin{pgfscope}%
\pgfsetrectcap%
\pgfsetmiterjoin%
\pgfsetlinewidth{0.803000pt}%
\definecolor{currentstroke}{rgb}{0.000000,0.000000,0.000000}%
\pgfsetstrokecolor{currentstroke}%
\pgfsetdash{}{0pt}%
\pgfpathmoveto{\pgfqpoint{0.519743in}{0.414757in}}%
\pgfpathlineto{\pgfqpoint{0.519743in}{1.558330in}}%
\pgfusepath{stroke}%
\end{pgfscope}%
\begin{pgfscope}%
\pgfsetrectcap%
\pgfsetmiterjoin%
\pgfsetlinewidth{0.803000pt}%
\definecolor{currentstroke}{rgb}{0.000000,0.000000,0.000000}%
\pgfsetstrokecolor{currentstroke}%
\pgfsetdash{}{0pt}%
\pgfpathmoveto{\pgfqpoint{1.503516in}{0.414757in}}%
\pgfpathlineto{\pgfqpoint{1.503516in}{1.558330in}}%
\pgfusepath{stroke}%
\end{pgfscope}%
\begin{pgfscope}%
\pgfsetrectcap%
\pgfsetmiterjoin%
\pgfsetlinewidth{0.803000pt}%
\definecolor{currentstroke}{rgb}{0.000000,0.000000,0.000000}%
\pgfsetstrokecolor{currentstroke}%
\pgfsetdash{}{0pt}%
\pgfpathmoveto{\pgfqpoint{0.519743in}{0.414757in}}%
\pgfpathlineto{\pgfqpoint{1.503516in}{0.414757in}}%
\pgfusepath{stroke}%
\end{pgfscope}%
\begin{pgfscope}%
\pgfsetrectcap%
\pgfsetmiterjoin%
\pgfsetlinewidth{0.803000pt}%
\definecolor{currentstroke}{rgb}{0.000000,0.000000,0.000000}%
\pgfsetstrokecolor{currentstroke}%
\pgfsetdash{}{0pt}%
\pgfpathmoveto{\pgfqpoint{0.519743in}{1.558330in}}%
\pgfpathlineto{\pgfqpoint{1.503516in}{1.558330in}}%
\pgfusepath{stroke}%
\end{pgfscope}%
\begin{pgfscope}%
\pgfsetbuttcap%
\pgfsetmiterjoin%
\definecolor{currentfill}{rgb}{1.000000,1.000000,1.000000}%
\pgfsetfillcolor{currentfill}%
\pgfsetfillopacity{0.800000}%
\pgfsetlinewidth{1.003750pt}%
\definecolor{currentstroke}{rgb}{0.800000,0.800000,0.800000}%
\pgfsetstrokecolor{currentstroke}%
\pgfsetstrokeopacity{0.800000}%
\pgfsetdash{}{0pt}%
\pgfpathmoveto{\pgfqpoint{0.542878in}{1.008611in}}%
\pgfpathlineto{\pgfqpoint{1.537122in}{1.008611in}}%
\pgfpathquadraticcurveto{\pgfqpoint{1.556567in}{1.008611in}}{\pgfqpoint{1.556567in}{1.028056in}}%
\pgfpathlineto{\pgfqpoint{1.556567in}{1.446433in}}%
\pgfpathquadraticcurveto{\pgfqpoint{1.556567in}{1.465878in}}{\pgfqpoint{1.537122in}{1.465878in}}%
\pgfpathlineto{\pgfqpoint{0.542878in}{1.465878in}}%
\pgfpathquadraticcurveto{\pgfqpoint{0.523433in}{1.465878in}}{\pgfqpoint{0.523433in}{1.446433in}}%
\pgfpathlineto{\pgfqpoint{0.523433in}{1.028056in}}%
\pgfpathquadraticcurveto{\pgfqpoint{0.523433in}{1.008611in}}{\pgfqpoint{0.542878in}{1.008611in}}%
\pgfpathlineto{\pgfqpoint{0.542878in}{1.008611in}}%
\pgfpathclose%
\pgfusepath{stroke,fill}%
\end{pgfscope}%
\begin{pgfscope}%
\pgfsetbuttcap%
\pgfsetmiterjoin%
\definecolor{currentfill}{rgb}{0.007843,0.619608,0.447059}%
\pgfsetfillcolor{currentfill}%
\pgfsetfillopacity{0.850000}%
\pgfsetlinewidth{0.501875pt}%
\definecolor{currentstroke}{rgb}{0.000000,0.000000,0.000000}%
\pgfsetstrokecolor{currentstroke}%
\pgfsetstrokeopacity{0.850000}%
\pgfsetdash{}{0pt}%
\pgfpathmoveto{\pgfqpoint{0.562322in}{1.353123in}}%
\pgfpathlineto{\pgfqpoint{0.756766in}{1.353123in}}%
\pgfpathlineto{\pgfqpoint{0.756766in}{1.421178in}}%
\pgfpathlineto{\pgfqpoint{0.562322in}{1.421178in}}%
\pgfpathlineto{\pgfqpoint{0.562322in}{1.353123in}}%
\pgfpathclose%
\pgfusepath{stroke,fill}%
\end{pgfscope}%
\begin{pgfscope}%
\definecolor{textcolor}{rgb}{0.000000,0.000000,0.000000}%
\pgfsetstrokecolor{textcolor}%
\pgfsetfillcolor{textcolor}%
\pgftext[x=0.834544in,y=1.353123in,left,base]{\color{textcolor}{\rmfamily\fontsize{7.000000}{8.400000}\selectfont\catcode`\^=\active\def^{\ifmmode\sp\else\^{}\fi}\catcode`\%=\active\def
\end{pgfscope}%
\begin{pgfscope}%
\pgfsetbuttcap%
\pgfsetmiterjoin%
\definecolor{currentfill}{rgb}{0.870588,0.560784,0.011765}%
\pgfsetfillcolor{currentfill}%
\pgfsetfillopacity{0.850000}%
\pgfsetlinewidth{0.501875pt}%
\definecolor{currentstroke}{rgb}{0.000000,0.000000,0.000000}%
\pgfsetstrokecolor{currentstroke}%
\pgfsetstrokeopacity{0.850000}%
\pgfsetdash{}{0pt}%
\pgfpathmoveto{\pgfqpoint{0.562322in}{1.210423in}}%
\pgfpathlineto{\pgfqpoint{0.756766in}{1.210423in}}%
\pgfpathlineto{\pgfqpoint{0.756766in}{1.278478in}}%
\pgfpathlineto{\pgfqpoint{0.562322in}{1.278478in}}%
\pgfpathlineto{\pgfqpoint{0.562322in}{1.210423in}}%
\pgfpathclose%
\pgfusepath{stroke,fill}%
\end{pgfscope}%
\begin{pgfscope}%
\definecolor{textcolor}{rgb}{0.000000,0.000000,0.000000}%
\pgfsetstrokecolor{textcolor}%
\pgfsetfillcolor{textcolor}%
\pgftext[x=0.834544in,y=1.210423in,left,base]{\color{textcolor}{\rmfamily\fontsize{7.000000}{8.400000}\selectfont\catcode`\^=\active\def^{\ifmmode\sp\else\^{}\fi}\catcode`\%=\active\def
\end{pgfscope}%
\begin{pgfscope}%
\pgfsetbuttcap%
\pgfsetroundjoin%
\pgfsetlinewidth{0.803000pt}%
\definecolor{currentstroke}{rgb}{0.000000,0.000000,0.000000}%
\pgfsetstrokecolor{currentstroke}%
\pgfsetstrokeopacity{0.600000}%
\pgfsetdash{{2.960000pt}{1.280000pt}}{0.000000pt}%
\pgfpathmoveto{\pgfqpoint{0.562322in}{1.101751in}}%
\pgfpathlineto{\pgfqpoint{0.659544in}{1.101751in}}%
\pgfpathlineto{\pgfqpoint{0.756766in}{1.101751in}}%
\pgfusepath{stroke}%
\end{pgfscope}%
\begin{pgfscope}%
\definecolor{textcolor}{rgb}{0.000000,0.000000,0.000000}%
\pgfsetstrokecolor{textcolor}%
\pgfsetfillcolor{textcolor}%
\pgftext[x=0.834544in,y=1.067723in,left,base]{\color{textcolor}{\rmfamily\fontsize{7.000000}{8.400000}\selectfont\catcode`\^=\active\def^{\ifmmode\sp\else\^{}\fi}\catcode`\%=\active\def
\end{pgfscope}%
\end{pgfpicture}%
\makeatother%
\endgroup%

%% file: tables/calibration_table.tex
\begin{tabular}{ll|c|c|cc|cc|c|c}
\toprule
\multicolumn{2}{c|}{\textbf{Model/Dataset Latent} $\mathcal{L}$} & \multicolumn{1}{c|}{$\{0,...,4\}$ + L1} & \multicolumn{1}{c|}{$\{0,...,5\}$ + L1} & \multicolumn{2}{c|}{$[C]$ + Exact Match} & \multicolumn{2}{c|}{$\{A,\perp\}$ + BAS} & \multicolumn{1}{c|}{$\{0,1\}^{\leq C}$ + Hamming} & \multicolumn{1}{c}{$\{0,...,C\}^2$ + L1} \\
\multicolumn{2}{c|}{} & HelpSteer & STSB & MMLU & When2Call & SimpleQA & TriviaQA & MAQA & HelpSteer \\
\midrule
\multirow{2}{*}{\small{Gemma-3-4B}} & $q$ & {$0.282$}  & {$0.323$}  & {$0.085$}  & {$0.061$}  & {$0.196$}  & {$0.048$}  & {$0.103$}  & {$1.216$}  \\
& $f_{\phi}(q)$ & {$\textcolor{colfirst}{\textbf{0.099}}$}  & {$\textcolor{colfirst}{\textbf{0.050}}$}  & {$\textcolor{colfirst}{\textbf{0.030}}$}  & {$\textcolor{colfirst}{\textbf{0.001}}$}  & {$\textcolor{colfirst}{\textbf{0.000}}$}  & {$\textcolor{colfirst}{\textbf{0.004}}$}  & {$\textcolor{colfirst}{\textbf{0.000}}$}  & {$\textcolor{colfirst}{\textbf{0.424}}$}  \\
\midrule
\multirow{2}{*}{\small{Gemma-3-12B}} & $q$ & {$0.197$}  & {$0.364$}  & {$0.040$}  & {$0.034$}  & {$0.187$}  & {$0.029$}  & {$0.059$}  & {$1.249$}  \\
& $f_{\phi}(q)$ & {$\textcolor{colfirst}{\textbf{0.037}}$}  & {$\textcolor{colfirst}{\textbf{0.023}}$}  & {$\textcolor{colfirst}{\textbf{0.000}}$}  & {$\textcolor{colfirst}{\textbf{0.001}}$}  & {$\textcolor{colfirst}{\textbf{0.000}}$}  & {$\textcolor{colfirst}{\textbf{0.000}}$}  & {$\textcolor{colfirst}{\textbf{0.000}}$}  & {$\textcolor{colfirst}{\textbf{0.547}}$}  \\
\midrule
\multirow{2}{*}{\small{Qwen3-4B}} & $q$ & {$0.139$}  & {$0.692$}  & {$0.043$}  & {$0.018$}  & {$0.182$}  & {$0.048$}  & {$0.137$}  & {$1.375$}  \\
& $f_{\phi}(q)$ & {$\textcolor{colfirst}{\textbf{0.039}}$}  & {$\textcolor{colfirst}{\textbf{0.033}}$}  & {$\textcolor{colfirst}{\textbf{0.006}}$}  & {$\textcolor{colfirst}{\textbf{0.004}}$}  & {$\textcolor{colfirst}{\textbf{0.000}}$}  & {$\textcolor{colfirst}{\textbf{0.004}}$}  & {$\textcolor{colfirst}{\textbf{0.000}}$}  & {$\textcolor{colfirst}{\textbf{0.594}}$}  \\
\midrule
\multirow{2}{*}{\small{Qwen3-30B-A3B}} & $q$ & {$0.117$}  & {$0.589$}  & {$0.083$}  & {$0.035$}  & {$0.087$}  & {$0.009$}  & {$0.052$}  & {$1.449$}  \\
& $f_{\phi}(q)$ & {$\textcolor{colfirst}{\textbf{0.026}}$}  & {$\textcolor{colfirst}{\textbf{0.041}}$}  & {$\textcolor{colfirst}{\textbf{0.006}}$}  & {$\textcolor{colfirst}{\textbf{0.001}}$}  & {$\textcolor{colfirst}{\textbf{0.000}}$}  & {$\textcolor{colfirst}{\textbf{0.007}}$}  & {$\textcolor{colfirst}{\textbf{0.000}}$}  & {$\textcolor{colfirst}{\textbf{0.530}}$}  \\
\bottomrule
\end{tabular}

%% file: tables/calibration_table_kde.tex
\begin{tabular}{ll|c|c|cc|cc|c|c}
\toprule
\multicolumn{2}{c|}{\textbf{Model/Dataset Latent} $\mathcal{L}$} & \multicolumn{1}{c|}{$\{0,...,4\}$ + L1} & \multicolumn{1}{c|}{$\{0,...,5\}$ + L1} & \multicolumn{2}{c|}{$[C]$ + Exact Match} & \multicolumn{2}{c|}{$\{A,\perp\}$ + BAS} & \multicolumn{1}{c|}{$\{0,1\}^{\leq C}$ + Hamming} & \multicolumn{1}{c}{$\{0,...,C\}^2$ + L1} \\
\multicolumn{2}{c|}{} & HelpSteer & STSB & MMLU & When2Call & SimpleQA & TriviaQA & MAQA & HelpSteer \\
\midrule
\multirow{2}{*}{\small{Gemma-3-4B}} & $q$ & {$0.266$}  & {$0.340$}  & {$\textcolor{colfirst}{\textbf{0.124}}$}  & {$0.112$}  & {$0.196$}  & {$0.059$}  & {$0.678$}  & {$\textcolor{colfirst}{\textbf{0.212}}$}  \\
& $f_{\phi}(q)$ & {$\textcolor{colfirst}{\textbf{0.182}}$}  & {$\textcolor{colfirst}{\textbf{0.072}}$}  & {$0.145$}  & {$\textcolor{colfirst}{\textbf{0.001}}$}  & {$\textcolor{colfirst}{\textbf{0.000}}$}  & {$\textcolor{colfirst}{\textbf{0.014}}$}  & {$\textcolor{colfirst}{\textbf{0.000}}$}  & {$0.299$}  \\
\midrule
\multirow{2}{*}{\small{Gemma-3-12B}} & $q$ & {$0.238$}  & {$0.419$}  & {$0.129$}  & {$0.068$}  & {$0.187$}  & {$0.035$}  & {$1.036$}  & {$0.118$}  \\
& $f_{\phi}(q)$ & {$\textcolor{colfirst}{\textbf{0.114}}$}  & {$\textcolor{colfirst}{\textbf{0.048}}$}  & {$\textcolor{colfirst}{\textbf{0.019}}$}  & {$\textcolor{colfirst}{\textbf{0.003}}$}  & {$\textcolor{colfirst}{\textbf{0.000}}$}  & {$\textcolor{colfirst}{\textbf{0.002}}$}  & {$\textcolor{colfirst}{\textbf{0.206}}$}  & {$\textcolor{colfirst}{\textbf{0.115}}$}  \\
\midrule
\multirow{2}{*}{\small{Qwen3-4B}} & $q$ & {$0.253$}  & {$0.719$}  & {$0.094$}  & {$0.047$}  & {$0.182$}  & {$0.052$}  & {$0.806$}  & {$0.143$}  \\
& $f_{\phi}(q)$ & {$\textcolor{colfirst}{\textbf{0.102}}$}  & {$\textcolor{colfirst}{\textbf{0.077}}$}  & {$\textcolor{colfirst}{\textbf{0.037}}$}  & {$\textcolor{colfirst}{\textbf{0.013}}$}  & {$\textcolor{colfirst}{\textbf{0.000}}$}  & {$\textcolor{colfirst}{\textbf{0.004}}$}  & {$\textcolor{colfirst}{\textbf{0.000}}$}  & {$\textcolor{colfirst}{\textbf{0.109}}$}  \\
\midrule
\multirow{2}{*}{\small{Qwen3-30B-A3B}} & $q$ & {$0.223$}  & {$0.617$}  & {$0.148$}  & {$0.055$}  & {$0.101$}  & {$0.024$}  & {$0.836$}  & {$0.155$}  \\
& $f_{\phi}(q)$ & {$\textcolor{colfirst}{\textbf{0.061}}$}  & {$\textcolor{colfirst}{\textbf{0.082}}$}  & {$\textcolor{colfirst}{\textbf{0.031}}$}  & {$\textcolor{colfirst}{\textbf{0.004}}$}  & {$\textcolor{colfirst}{\textbf{0.001}}$}  & {$\textcolor{colfirst}{\textbf{0.009}}$}  & {$\textcolor{colfirst}{\textbf{0.072}}$}  & {$\textcolor{colfirst}{\textbf{0.075}}$}  \\
\bottomrule
\end{tabular}

%% file: appendix/implementation_details.tex
\section{Implementation details}
\label{app:experimental_details}

Here, we present a high-level algorithmic overview of the proposed task-calibrated MBR decoding method. \Cref{alg:fitting_task_calibration} describes how the calibration map $f_\phi$ is fitted on a calibration dataset while \Cref{alg:inference} describes how to perform MBR decoding on the task calibrated latent distribution as shown in \Cref{tab:performance_generations}.

\begin{algorithm}
\caption{Fitting task calibration map $f_\phi$ for a given task $T$}\label{alg:fitting_task_calibration}
\begin{algorithmic}
\Require {Calibration dataset of queries and free-form true answers $\{x^{(i)}, y^{(i)}\}_{i=1}^N$}, task-dependent latent encoder $g_T : \mathcal{V}^\infty \mapsto \mathcal{L}$, $\text{LLM} : \mathcal{V}^\infty \mapsto \mathcal{V}^\infty$
\Ensure {Calibration map $f_\phi : \Delta^{C-1} \mapsto\Delta^{C-1}$}
\For{$i \gets 1$ to $N$}
   \State Generate true latents $\ell^{(i)} \gets g_T(y^{(i)})$
   \For{$j \gets 1$ to $M$}
        \State Sample LLM responses $\hat{y}^{(i, j)} \gets \text{LLM}(x^{(i)})$
        \State Generate sampled latent $\hat{\ell}^{(i,j)} = g_T(y^{(i, j)})$
   \EndFor
   \State Compute empirical distribution $\hat{p}^{(i)}_c \gets \frac{1}{M}\sum_{j=1}^M \mathbf{1}[\hat{\ell}^{(i, j)} = c]$, where $\hat{p}^{(i)} \in \Delta^{C-1}$
\EndFor
\State  $f_\phi \gets \arg\min_{f_\phi}\frac{1}{N} \sum_{i=1}^N\text{Cross-Entropy}(f_\phi(\hat{p}^{(i)}), \text{one-hot}(y^{(i)}))$
\end{algorithmic}
\end{algorithm}

\begin{algorithm}
\caption{MBR decoding with task calibrated LLM $f_\phi$ for a given task $T$}\label{alg:inference}
\begin{algorithmic}
\Require Task-calibration map $f_\phi : \Delta^{C-1} \mapsto\Delta^{C-1}$ from \Cref{alg:fitting_task_calibration}, task-dependent latent encoder $g_T : \mathcal{V}^\infty \mapsto \mathcal{L}$, task-loss $d_T : \mathcal{L} \times \mathcal{L} \mapsto \mathbb{R}$, Query $x \in \mathcal{V}^\infty$, $\text{LLM} : \mathcal{V}^\infty \mapsto \mathcal{V}^\infty$
\Ensure {MBR decoding output $\ell \in \mathcal{L}$}

\For{$j \gets 1$ to $M$}
    \State Sample LLM responses $\hat{y}^{(j)} \gets \text{LLM}(x)$
    \State Generate sampled latents $\hat{\ell}^{(j)} = g_T(y^{(j)})$
\EndFor
\State Compute empirical distribution $\hat{p}_c \gets \frac{1}{M}\sum_{j=1}^M \mathbf{1}[\hat{\ell}^{(j)} = c]$, where $\hat{p} \in \Delta^{C-1}$
\State Compute calibrated distribution $q \gets f_\phi(\hat{{p}})$
\State Output MBR action $\ell \gets \delta^\mathrm{MBR}(q) = \arg\min_{\ell \in \mathcal{L}} \mathbb{E}_{\ell^\prime \sim q} \left[d_T(\ell^\prime, \ell)\right]$
\end{algorithmic}
\end{algorithm}

\begin{algorithm}
\caption{Binning estimator for Task Calibration Error (TCE)}
\label{alg:tce_binning_estimator}
\begin{algorithmic}
\Require {Predicted latent distributions $\{p_i\}_{i=1}^n$ with $p_i \in \Delta^{C-1}$, observed true latents $\{\ell^*_i\}_{i=1}^n$, number of bins per dimension $m$.}
\Ensure {Binned estimate $\widehat{\mathrm{TCE}}$}
\State Partition the simplex $\Delta^{C-1}$ into $m^{C-1}$ equally volume bins
\For{$i \gets 1$ to $n$}
    \State Assign prediction $p_i$ to its bin $B_i$
\EndFor
\vspace{1em}
\State $\widehat{\mathrm{TCE}} \gets 0$
\For{each occupied bin $B$}
    \State Collect indices $I_B \gets \{i : B_i = B\}$
    \State Estimate the true conditional distribution in bin $B$ \quad 
    \(
        \hat q_B \gets \frac{1}{|I_B|}\sum_{i \in I_B} \ell^*_i
    \)
    \State Accumulate the bin contribution \quad
    \(
        \widehat{\mathrm{TCE}}
        \gets
        \widehat{\mathrm{TCE}}
        +
        \sum_{i \in I_B} D_{S_T}(p_i \| \hat q_B)
    \)
\EndFor
\vspace{1em}
\State \(\widehat{\mathrm{TCE}} \gets \frac{1}{n}\widehat{\mathrm{TCE}}\)
\end{algorithmic}
\end{algorithm}

\paragraph{Computational Resources.}
We evaluate all LLMs with default hyperparameters from HuggingFace (top-k, top-p, temperature, etc.) to ensure a realistic practical environment. We perform inference on two types of machines:
\begin{inparaenum}[(i)]
    \item Nvidia H100 GPUs, and
    \item Nvidia H200 GPUs.
\end{inparaenum}
Calibration is done on these machines as well, even though they are computationally lightweight and can easily be done on CPU-only machines as well. While inference (sampling responses from the LLM) takes up to multiple days for the biggest datasets with longest prompts and largest LLMs, calibration runs finish within a few minutes on our machines.

\paragraph{Calibration}
When fitting calibrators (temperature scaling and Dirichlet calibration) using NLL loss, we transform the logarithm of the predicted latent beliefs $\hat{p}(\rx)$. We add $\eps = 10^{-12}$ to these beliefs to prevent taking the logarithm of zero. In particular, temperature scaling with a positive temperature $\tau$ is realized by optimizing over the logarithm of the temperature $t = \log \tau$:
\begin{equation}
    f_t(q) = \mathrm{softmax}(\exp(t)^{-1}\log(q + \epsilon)), \quad\quad t \in \mathbb{R}.
\end{equation}
Similarly, Dirichlet calibration is realized as:
\begin{equation}
    f_{\mW, \vb}(q) = \mathrm{softmax}(\mW \log(q + \epsilon) + \vb), \quad\quad \mW \in \mathbb{R}^{C \times C}, \vb \in \mathbb{R}^C.
\end{equation}

\paragraph{Datasets.}
\begin{table}[]
    \caption{Datasets with tasks $T$, latent structures $\mathcal{L}$, and task-dependent loss functions $d_T$.}
    \centering
    \resizebox{0.99\textwidth}{!}{
        \input{tables/datasets}
    }
    \label{tab:datasets}
\end{table}

We detail the datasets, corresponding tasks and the associated latent structures and loss functions in \Cref{tab:datasets}. Here, we overload elements $\ell$ which do not represent one-hot vectors in this context but instead the true corresponding latent meaning. In practice, each one-hot representation directly corresponds to one latent interpretation, so the one-hot representation of a structure and its ``true meaning'' are isomorphic (denoted by $\cong$).

We also briefly describe each dataset and the associated LLM task.
\begin{inparaenum}[(i)]
    \item For Helpsteer, the LLM is shown a prompt and corresponding response and should rate the response according to one or more criteria. For the ordinal regression task, we choose the criterion ``correctness'' and let the LLM output a natural number for a rating between $0$ and $4$.
    \item For STSB, the LLM is shown two sentences and it should rate their similarity on a scale on $1$ to $5$. We again only consider natural numbers through rounding to the closest integer.
    \item For MMLU, the LLM is shown a question and four potential answers enumerated with (a) to (d). The LLM should select the correct answer among the provided candidates.
    \item For When2Call, we present the LLM with a prompt and a set of available tools. The LLM should then assess whether to call one of the tools, request more information from user or abstain from answering alltogether.
    \item For SimpleQA and \item TriviaQA, we present the LLM with a question and ask for the answer. Whether the LLM action is to answer $A$ or abstain $\perp$ is determined by the relative frequency of the semantic majority answer (see \Cref{app:postprocessing}).
    \item For MAQA, we present the LLM with an ambiguous question that permits multiple answers. The LLM should list all valid answers. We construct a set of all observed answers in the generated responses and the truth and then decompose this decision task into binary decision tasks of whether to include an individual answer $i$ in the final response.
    \item For Vector-valued Ordinal Regression, we again use Helpsteer and show the LLM a prompt and a corresponding response. It now should rate the response in terms of two criteria simultaneously (``correctness'', ``helpfulness'') by outputting two integer scores in $\{0, \dots, 4\}$.
\end{inparaenum}

For all datasets except MAQA, we use HuggingFace as a data source: For STSB, we use \colorbox{lightgray}{mteb/stsbenchmark-sts}, for MMLU, we use \colorbox{lightgray}{cais/mmlu}, for When2Call, we use \colorbox{lightgray}{nvidia/When2Call}, for SimpleQA, we use \colorbox{lightgray}{google/simpleqa-verified}, for TriviaQA, we use \colorbox{lightgray}{mandarjoshi/trivia\_qa}, for Helpsteer, we use \colorbox{lightgray}{nvidia/HelpSteer}. All datasets correspond to their latest available versions as of May 2026. For MAQA we use the official repository \colorbox{lightgray}{%
\href{https://github.com/YangYongJin/MAQA-Official-Repo/blob/master/datasets/MAQA_world_knowledge_nq.json}{MAQA-Data}%
}.

\subsection{Mapping Language Outputs to Latent Structures}\label{app:postprocessing}

Here, we detail how we realize the mappings $g_T$ to free-form text LLM responses into the corresponding latent structures. As our core assumption is that the LLM outputs free-form text that is embedded in the latent structure, i.e. the LLM is able to reasonable address the task we prompt it with, we discard answers that can not properly parsed through any mapping $g_T$. These occurrences are extremely rare as the instruction-tuned LLMs studied in this work respect the instruction prompts sufficiently well. Therefore, $g_T$ can often be implemented just with simple Regular Expressions or lightweight Natural Language Inference models. In general, for all structures $\mathcal{L}$ considered in this work, we empirically find that the mapping $g_T$ does not lose any semantically relevant information and succesfully extracts the correct meaning from a response in almost all cases.

\paragraph{Ordinal Regression}
Models are instructed to follow a specific output format as seen in \Cref{app:prompts}. The LLM's output is post-processed by an auxiliary language model extracting the integer response (again, see \Cref{app:prompts}) and then parsed using a Regex. 

\paragraph{Classification}
Models are instructed to follow a specific output format as seen in \Cref{app:prompts}. The answer is then passed using a Regex. 

\paragraph{Answer-or-Abstain}
Similar to \citet{kuhn2023semanticuncertaintylinguisticinvariances}, we use DeBERTa \cite{he2021deberta} as an entailment model to group the individual free-form text responses into semantic clusters that form the support of the domain over which the most frequent answer is selected. All elements falling into the cluster of the most frequent answer are mapped to \(\texttt{Answer}\) while all others are mapped to \(\perp\). 

\paragraph{Open Set Prediction}
Given an answer string \(s\), we want to extract all \emph{semantically distinct} answers in that string. As an example to the question ``Which ocean borders the USA? `` the string \(s=\)`` Both the Atlantic and Pacific Ocean border the USA. ``gets mapped to \(\{\texttt{Atlantic}, \texttt{Pacific}\}\) by \(g_T\). Concretely \(g_T\) is implemented using \texttt{gpt-5-mini} with prompts in (\Cref{app:prompts}).

\subsection{Task Calibration Error}
\paragraph{Binning Estimator for TCE}
\label{app:binning_estimator}
We assume we have \(m^{C-1}\) equally spaced bins on the simplex. For each prediction \(p_i \in \Delta^{C-1}\), we assign a bin index \(B_i\). For each occupied bin \(B\), let
\[
I_B = \{ i : B_i = B \}.
\]
The empirical conditional label distribution in bin \(B\) is
\[
\hat q_B
=
\frac{1}{|I_B|}
\sum_{i\in I_B} \ell_i,
\]
where \(\ell_i\) is the one-hot encoding of the observed label as explained in \Cref{sec:background}.
The binned TCE estimator is then
\[
\widehat{\mathrm{TCE}}
=
\frac{1}{n}\sum_B
\sum_{i\in I_B}
D(p_i || \hat q_B).
\]
The procedure is also shown in \Cref{alg:tce_binning_estimator}.

%% file: tables/datasets.tex
\begin{tabular}{c|c|c|c|c|c|c|c|c}
        \toprule
         Task & \multicolumn{2}{c|}{Ordinal Regression} & \multicolumn{2}{c|}{Classification} & \multicolumn{2}{c|}{Answer-or-Abstain} & Open-Set Prediction & \makecell{Vector Ordinal \\ Regression} \\
         \midrule
         Dataset & Helpsteer & STSB & MMLU & When2Call & SimpleQA & TriviaQA & MAQA & Helpsteer
         \\
         \midrule
         \makecell{Latent Structure \\ $\mathcal{L} \cong$} & $\{0, \dots, 4\}$ & $\{1, \dots 5\}$ & \makecell{\{(a), (b), \\ (c), (d)\}} & \makecell{\{``call tool'', \\ ``request info'', \\ ``abstain''\}} & $\{A, \perp\}$ & $\{A, \perp\}$ & $\{0, 1\}^{\leq C}$ & $\{0, \dots, 4\}^2$ \\
         \makecell{Task-Dependent \\ Loss $d_T(\ell, \ell^\prime)$} & $\lvert \ell - \ell^\prime \rvert$ & $\lvert \ell - \ell^\prime \rvert$ & $ \mathbf{1}[\ell \neq \ell^\prime]$ & $ \mathbf{1}[\ell \neq \ell^\prime]$ & $d_t^\textrm{BAS}(\ell, \ell^\prime)$ & $d_t^\textrm{BAS}(\ell, \ell^\prime)$ & %
         \makecell{$\mathbf{1}[\ell_i \neq \ell_i^\prime]$ \\ for binary decisions \\ about including \\ answer $i$}
         & $\lVert \ell - \ell^\prime \rVert_1$
         \\
         \bottomrule
    \end{tabular}

%% file: appendix/related_work_extended.tex
\section{Related Work Extended}

\subsection{BAS: A Decision-Theoretic Approach to Evaluating LLM Confidence}
\label{appendix:bas}
The BAS metric of \citet{wu2026basdecisiontheoreticapproachevaluating} can be viewed as a special case of our latent-structure decision framework. Let \(\mathcal L=\{\mathrm{A},\perp\}\), where \(\mathrm{A}\) denotes \texttt{answer} and \(\perp\) denotes \texttt{abstain}. We identify both the latent and action space with these two selective decisions. For a fixed threshold \(t\in[0,1)\), define the task loss
\begin{align}
d_T^\textrm{BAS}(a,b)=
\begin{cases}
0, & b=\perp,\\
-1, & b=\mathrm{A},\ a=\mathrm{A},\\
\frac{t}{1-t}, & b=\mathrm{A},\ a=\perp.
\end{cases}
\label{eq:bas}
\end{align}
Importantly, the loss is asymmetric, and \(a\) corresponds to the true decision while \(b\) denotes the model prediction. For a given \(x\), let \(\hat p(x)_{\mathrm{A}}\) denote the predictive probability that answering is correct. The corresponding Bayes decision is
\[
\delta^{\mathrm{MBR}}_t(\hat p(x))
= \argmin_{b\in\{\mathrm{A},\perp\}} 
\mathbb{E}_{a\sim \hat p(x)}\bigl[d_t^\textrm{BAS}(a,b)\bigr].
\]
This coincides with the BAS answer-or-abstain policy. To see this, we compare the two possible actions:
\[
\mathbb{E}_{a\sim \hat p(x)}[d_t^\textrm{BAS}(a,\perp)] = 0,
\]
and
\[
\mathbb{E}_{a\sim \hat p(x)}[d_t^\textrm{BAS}(a,\mathrm{A})]
= -\hat p(x)_{\mathrm{A}} + \frac{t}{1-t}\hat p(x)_{\perp}.
\]
Thus, choosing \(b=\mathrm{A}\) is optimal iff
\[
-\hat p(x)_{\mathrm{A}} + \frac{t}{1-t}\hat p(x)_{\perp} \le 0.
\]
Using \(\hat p(x)_{\perp} = 1 - \hat p(x)_{\mathrm{A}}\), this is equivalent to
\[
-\hat p(x)_{\mathrm{A}} + \frac{t}{1-t}(1 - \hat p(x)_{\mathrm{A}}) \le 0
\;\Longleftrightarrow\;
\hat p(x)_{\mathrm{A}} \ge t.
\]
Therefore,
\[
\delta^{\mathrm{MBR}}_t(\hat p(x))=
\begin{cases}
\mathrm{A}, & \hat p(x)_{\mathrm{A}}\ge t,\\
\perp, & \text{otherwise},
\end{cases}
\]
which exactly recovers the BAS answer-or-abstain rule. In our experiments, we set \(t=0.25\), and the implementation of the mapping \(g_T\) is detailed in \Cref{app:postprocessing}.

%% file: appendix/uq_and_calibration.tex
\section{On Calibration And Uncertainty Quantification.}
\label{app:uq_and_calibration}

Model calibration enables \emph{estimation of the average loss incurred by a decision rule}.
Importantly, it does not ensure good per-sample decision-making. This is particularly relevant in the context of uncertainty quantification (UQ), a field that the literature often confounds with model calibration \cite{minderer2021revisiting,nixon2019measuring}. UQ methods are typically evaluated by how well they predict model errors or out-of-distribution examples. In the context of LLMs, hallucination detection is a well-established surrogate task for evaluating UQ performance. Here, the uncertainty associated with an individual instance is thresholded to predict if a response is a confabulation. Hence, the reported metrics often depend on the relative rankings of inputs based on the predicted uncertainty. Calibrating a (latent) predictive distribution has no formal implications on either this relative ranking or the per-instance thresholding. In particular, calibration does not claim to improve the discrimination of individual predictions into correct and incorrect answers by using uncertainty. Instead, it ensures that the expected loss under a decision rule computed from the model's beliefs matches the true expected loss. In the context of our work, our calibration map $f^*$ further ensures that MBR is the optimal decision rule with respect to some loss function. This does not imply that, for example, the maximum probability or similar statistics computed from the calibrated latent beliefs (e.g. Bayes risk as proposed for UQ by \citet{tomov2026taskawarenessimprovesllmgenerations}) are more reliable to detect hallucinations or forecast model errors. 
For an in-depth discussion of calibration and its relationship to reliability and uncertainty quantification, with illustrative examples, we refer to the excellent blog post by \citet{xia2026whatandwhat}. 

\begin{figure}[]
    \centering\input{figures/calibration_risk_auc_cont.pgf} %
      \vspace{-3em}
        \caption{Uncertainty quantification (UQ) $\mathrm{AUC}_c$ scores for Bayes risk of \Cref{eq:mbr_decoding} and the task-specific loss $d_T$ incurred by the MBR decoding action for different tasks and models. There is no clear trend for task calibration: On some datasets it improves UQ, on others, it does not. This aligns with our discussion.}
        \label{fig:uq_auc}
\end{figure}

Nonetheless, we showcase how task calibration affects uncertainty quantification for completeness in \Cref{fig:uq_auc}. Here, we measure the concordance statistic $\mathrm{AUC}_c$ that assesses how well a continuous uncertainty estimator can predict a continuous target variable. The concordance \(\mathrm{AUC}_c\) is an estimate of \(\mathbb{P}(y_i > y_j \mid \ \hat{y}_i > \hat{y}_j)\). We follow \cite{therneau2024concordance} and discard ties on \(y\), i.e. \(y_i = y_j\) and treat ties on the estimator, i.e. \(y_i = y_j\) with a score of \(\frac{1}{2}\). The value of the resulting score can be interpreted analogously to the traditional Area-under-the-Precision-Recall-Curve (AUC-ROC) metrics, with 0.5 corresponding to random chance and 1 to perfect ranking ability. In our case, we evaluate the Bayes risk (\Cref{eq:mbr_decoding} like \citet{tomov2026taskawarenessimprovesllmgenerations} as a proxy for per-instance uncertainty and measure how well it predicts the task dependent loss $d_T$. We observe that on some tasks, there surprisingly is an improved capacity to estimate uncertainty from the task calibrated LLM beliefs (MMLU, When2Call, Helpsteer). On other tasks, however, the quality of the uncertainty estimate deteriorates. Overall, there is no trend that is consistent among all tasks. This aligns well with our discussion: Calibration enables improved decision making in terms of average loss. It does not improve per-instance decisions like predicting the error caused by the prediction from the Bayes risk as a proxy for uncertainty.

%% file: appendix/proofs.tex
\section{Proofs}
\label{app:proofs}
\OptimalStrategyPartition*
\begin{proof}
Define
\[
\rs := \hat p(\rx),
\qquad
\rl := g_T(Y).
\]
Then \(\rs\) is the observable model output, and for any realization of \(\rs\), \(f^*_{g_T}(\rs)\) denotes the true conditional distribution of \(\rl\) given \(\rs\).

Now for any \(\delta'\):
\begin{align}
\mathbb E_{\rx,\ry}\!\left[d_T\!\left(g_T(\ry),\delta'(\hat{p}(\rx))\right) \mid \hat{p}(\rx) \in P\right]
&=
\mathbb E_{\rx,\ry}\!\left[d_T\!\left(\rl,\delta'(\rs)\right) \mid \rs \in P\right]
\\
&=
\mathbb E_{\rs \mid \rs \in P}\!\left[
\mathbb E_{\rx, \ry}\!\left[d_T\!\left(\rl,\delta'(\rs)\right)\mid \rs, \rs \in P\right]
\right]
\label{eq:opt_pf_1}
\\
&=
\mathbb E_{\rs \mid \rs \in P}\!\left[
\mathbb E_{\ell\sim f^*_{g_T}(\rs)}
\!\left[d_T\!\left(\ell,\delta'(\rs)\right)\mid \rs, \rs \in P\right]
\right]
\label{eq:opt_pf_2}
\\
&\ge
\mathbb E_{\rs \mid \rs \in P}\!\left[
\min_{a\in\mathcal L}
\mathbb E_{\ell\sim f^*_{g_T}(\rs)}
\!\left[d_T(\ell,a)\mid \rs, \rs \in P\right]
\right]
\label{eq:opt_pf_3}
\\
&=
\mathbb E_{\rs \mid \rs \in P}\!\left[
\mathbb E_{\ell\sim f^*_{g_T}(\rs)}
\!\left[
d_T\!\left(
\ell,
\delta^\mathrm{MBR}(f^*_{g_T}(\rs))
\right)
\,\middle|\, \rs, \rs \in P
\right]
\right]
\label{eq:opt_pf_4}
\\
&=
\mathbb E_{\rs \mid \rs \in P}\!\left[
\mathbb E_{\rx, \ry}\!\left[d_T\!\left(\rl,\delta^\mathrm{MBR}(f^*_{g_T}(\rs))\right)\mid \rs, \rs \in P\right]
\right]
\label{eq:opt_pf_5}
\\
&=
\mathbb E_{\rx,\ry}\!\left[
d_T\!\left(
\rl,
\delta^\mathrm{MBR}(f^*_{g_T}(\rs))
\right) \mid \rs \in P
\right].
\label{eq:opt_pf_6}
\end{align}

Substituting back \(\rl=g_T(\ry)\) and \(\rs=\hat p(\rx)\) yields
\begin{align}
\mathbb E_{\rx,\ry}\!\left[
d_T\!\left(
g_T(\ry),
\delta^\mathrm{MBR}(f^*_{g_T}(\hat p(\rx)))
\right) \mid \hat{p}(\rx) \in P
\right]
\le
\mathbb E_{\rx,\ry}\!\left[
d_T\!\left(
g_T(\ry),
\delta'(\hat p(\rx))
\right) \mid \hat{p}(\rx) \in P
\right].
\end{align}

In \Cref{eq:opt_pf_1} we apply the tower rule by conditioning on \(\rs=\hat p(\rx)\). In \Cref{eq:opt_pf_2} we use that, conditional on \(S\), the random variable \(\rl=g_T(\ry)\) is distributed according to \(f^*_{g_T}(\rs)\) when taking the expectation over $\rx, \ry$ with respect to the true data generation distribution. In \Cref{eq:opt_pf_3} we use that, for each realization of \(S\), the conditional risk of the action \(\delta'(\rs)\) is at least the minimum achievable conditional risk over all actions \(a\in\mathcal L\). In \Cref{eq:opt_pf_4} we use the definition of the Minimum Bayes Risk decision rule. Finally, \Cref{eq:opt_pf_5} uses again the tower rule. Hence, no other decision strategy depending only on \(\hat p(\rx)\) can attain lower expected task loss for any condition on $\hat{p}(\rx)$.
\end{proof}

\OptimalStrategy*
\begin{proof}
    This follows directly from \Cref{lem:optimal_strategy} by setting $P = \Delta^{C-1}$.
\end{proof}

\begin{restatable}{proposition}{ApproximateTaskMap}
\label{prop:approximate_task_map}
For a given parametric calibration map family $\mathcal{F}$, the map with minimal negative log-likelihood minimizes the expected KL-divergence to $f^*_{g_T}$:
\begin{align}
\begin{split}
    & \arg\min_{f_\phi \in \mathcal{F}} \mathbb{E}_{\rx, \ry}\left[ -g_T(\ry)^T \log f_\phi(\hat{p}(\rx))\right] 
    =  \arg\min_{f_\phi \in \mathcal{F}} \mathbb{E}_\rx \left[ \mathrm{KL}\left[f_{g_T}^*(\hat{p}(\rx) \Vert f_\phi(\hat{p}(\rx))\right]\right].
\end{split}
\end{align}
\end{restatable}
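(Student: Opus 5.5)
The plan is to condition on the observable model output $\rs := \hat p(\rx)$, as in the proof of \Cref{lem:optimal_strategy}, and rewrite the NLL objective as an expected cross-entropy between $f^*_{g_T}(\rs)$ and $f_\phi(\rs)$. That cross-entropy then splits into the KL divergence plus an entropy term that does not depend on $\phi$.

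First, for any fixed $f_\phi \in \mathcal{F}$, we apply the tower rule:
\begin{align}
\mathbb{E}_{\rx,\ry}\left[-g_T(\ry)^T \log f_\phi(\rs)\right] = \mathbb{E}_{\rs}\left[-\mathbb{E}_{\rx,\ry}\left[g_T(\ry) \mid \rs\right]^T \log f_\phi(\rs)\right].
\end{align}
Here we use that $\log f_\phi(\rs)$ is $\rs$-measurable and can therefore be pulled out of the inner conditional expectation. By definition \eqref{eq:optimal_task_map}, the inner conditional expectation equals $f^*_{g_T}(\rs)$, so the objective becomes $\mathbb{E}_{\rs}[H(f^*_{g_T}(\rs), f_\phi(\rs))]$, where $H(p,q) = -\sum_c p_c \log q_c$ is the cross-entropy.

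Second, we use the identity $H(p,q) = H(p) + \mathrm{KL}[p\Vert q]$, with $H(p)$ the Shannon entropy. This gives
\begin{align}
\mathbb{E}_{\rs}\left[H(f^*_{g_T}(\rs), f_\phi(\rs))\right] = \mathbb{E}_{\rs}\left[H(f^*_{g_T}(\rs))\right] + \mathbb{E}_{\rs}\left[\mathrm{KL}\left[f^*_{g_T}(\rs)\Vert f_\phi(\rs)\right]\right].
\end{align}
The first term depends only on the data distribution and $\hat p$, not on $\phi$. The two objectives therefore differ by a constant, so their argmin sets over $\mathcal{F}$ coincide. Finally, we replace $\mathbb{E}_{\rs}$ by $\mathbb{E}_\rx$, since $\rs$ is a function of $\rx$.

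The main obstacle is bookkeeping for degenerate cases, not the core algebra. The entropy term is finite because $C$ is finite, so $H \le \log C$. The KL term may be $+\infty$ when $f_\phi$ puts zero mass where $f^*_{g_T}$ does not. The NLL objective is then $+\infty$ for the same $f_\phi$, so the identity still holds in the extended reals and the argmin equality is unaffected. The softmax parametrizations used in practice are strictly positive, so this case does not arise for them. We would also remark that the conditional expectation defining $f^*_{g_T}$ is only defined almost surely with respect to the law of $\rs$, which suffices because both objectives integrate against that law.
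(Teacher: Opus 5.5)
Your proposal is correct and follows essentially the same route as the paper. Both condition on $\hat p(\rx)$ via the tower rule, replace the inner conditional expectation by $f^*_{g_T}$, and split the resulting cross-entropy into the expected KL term plus an entropy term that does not depend on $f_\phi$ (the paper adds and subtracts that entropy explicitly); your extended-real remarks on infinite KL are a valid refinement the paper omits.
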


\begin{proof}
    \begin{align}
    \mathbb{E}_{\rx, \ry}\left[ -g_T(\ry^T) \log f_\phi(\hat{p}(\rx)\right] &= \mathbb{E}_{\hat{p}(\rx)} \left[\mathbb{E}_{\rx, \ry}\left[ -g_T(\ry)^T \log f_\phi(\hat{p}(\rx) \right] \mid \hat{p}(\rx)\right] \\
    &= \mathbb{E}_{\hat{p}(\rx)} \left[ 
        - \mathbb{E}_{\rx, \ry}[g_T(\ry) \mid \hat{p}(\rx)]^T \log f_\phi(\hat{p}(\rx))
    \right]
    \\
    &= \mathbb{E}_{\hat{p}(\rx)} \left[ \mathbb{E}_{\rx \mid \hat{p}(\rx)} \left[
        - (f_{g_T}^*(\hat{p}(\rx)))^T \log f_\phi(\hat{p}(\rx))
    \right]\right]
    \\
    \begin{split}
    &= \mathbb{E}_{\hat{p}(\rx)} \left[ \mathbb{E}_{\rx \mid \hat{p}(\rx)} \left[
        - (f_{g_T}^*(\hat{p}(\rx)))^T \log f_\phi(\hat{p}(\rx)) \right]\right]
        \\ & \quad + \mathbb{E}_{\hat{p}(\rx)} \left[\mathbb{E}_{\rx \mid \hat{p}(\rx)} \left[f_{g_T}^*(\hat{p}(\rx))^T \log f_{g_T}^*(\hat{p}(\rx))\right]\right]
        \\ & \quad - \mathbb{E}_{\hat{p}(\rx)} \left[\mathbb{E}_{\rx \mid \hat{p}(\rx)} \left[f_{g_T}^*(\hat{p}(\rx))^T \log f_{g_T}^*(\hat{p}(\rx))
    \right]\right]
    \end{split}
    \\ 
    \begin{split}
    &= \mathbb{E}_{\hat{p}(\rx)}\left[  \mathbb{E}_{\rx\mid \hat{p}(\rx)} \left[
        (f_{g_T}^*(\hat{p}(\rx)))^T \log \frac{f^*_{g_T}(\hat{p}(\rx))}{f_\phi(\hat{p}(\rx))} \right]\right] 
        \\ & \quad - \mathbb{E}_{\hat{p}(\rx)} \left[\mathbb{E}_{\rx \mid \hat{p}(\rx)} \left[f_{g_T}^*(\hat{p}(\rx))^T \log f_{g_T}^*(\hat{p}(\rx))
    \right]\right]
    \end{split}
    \\
    \begin{split}
    &= \mathbb{E}_{\hat{p}(\rx)}\left[  \mathbb{E}_{\rx\mid \hat{p}(\rx)} \left[
        \mathrm{KL}\left[f_{g_T}^*(\hat{p}(\rx)) \Vert f_\phi(\hat{p}(\rx))\right] \right]\right] 
        \\ & \quad + \mathbb{E}_{\hat{p}(\rx)} \left[\mathbb{E}_{\rx \mid \hat{p}(\rx)} \left[\mathbb{H}\left[f_{g_T}^*(\hat{p}(\rx))\right]
    \right]\right]
    \end{split}
    \\ 
    \begin{split}
    &=  \mathbb{E}_{\rx} \left[
        \mathrm{KL}\left[f_{g_T}^*(\hat{p}(\rx)) \Vert f_\phi(\hat{p}(\rx))\right] \right]
        \\ & \quad + \mathbb{E}_{\rx } \left[\mathbb{H}\left[f_{g_T}^*(\hat{p}(\rx))\right]
    \right]
    \end{split}
\end{align}
The second expected entropy term does not depend on $f_\phi(\hat{p}(\rx))$ and hence minimizing the cross entropy also minimizes the expected KL divergence.
\end{proof}

\begin{restatable}{proposition}{DecisionAwareEntropyDivergence}
\label{prop:decision_aware_entropy_divergence}
The generalized entropy \& divergence induced by 
\(S_T(q,\ell)=d_{T}(\ell, \delta^\text{MBR}(q))\) are:
\begin{align}
H_{S_T}(q) = R_{\mathrm{Bayes}}(\delta^\text{MBR}(q), q), \quad
D_{S_T}(q \,\|\, r) = 
R_{\mathrm{Bayes}}(\delta^\text{MBR}(q), r)
- R_{\mathrm{Bayes}}(\delta^\text{MBR}(r), r).
\end{align}
\end{restatable}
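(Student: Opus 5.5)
I would start from the standard definitions attached to any scoring rule $S : \Delta^{C-1}\times\mathcal{L}\to\mathbb{R}$ in loss orientation. The expected score of reporting $q$ when outcomes follow $r$ is
\[
S(q,r) := \mathbb{E}_{\ell\sim r}[S(q,\ell)].
\]
The generalized entropy is $H_S(r) := S(r,r)$, and the divergence is $D_S(q\,\|\,r) := S(q,r) - S(r,r)$. This is the convention of \citet{hu2024calibrationerrordecisionmaking} and \citet{kull2015novel} that the paper appeals to.

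The key step is to unfold the expected score of $S_T$. By definition $S_T(q,\ell) = d_T(\ell,\delta^{\mathrm{MBR}}(q))$, so
\[
S_T(q,r) = \mathbb{E}_{\ell\sim r}\bigl[d_T(\ell,\delta^{\mathrm{MBR}}(q))\bigr] = R_{\mathrm{Bayes}}(\delta^{\mathrm{MBR}}(q), r).
\]
The second equality is just \Cref{eq:mbr_decoding}, where $R_{\mathrm{Bayes}}(\ell,r) = \mathbb{E}_{\ell'\sim r}[d_{\mathcal{L}}(\ell',\ell)]$, instantiated with $d_T$ and with the action $\ell=\delta^{\mathrm{MBR}}(q)$ held fixed.

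With this identity both claims follow by substitution. Taking $q=r$ gives $H_{S_T}(q) = S_T(q,q) = R_{\mathrm{Bayes}}(\delta^{\mathrm{MBR}}(q),q)$. Subtracting the two expected scores gives
\[
D_{S_T}(q\,\|\,r) = R_{\mathrm{Bayes}}(\delta^{\mathrm{MBR}}(q),r) - R_{\mathrm{Bayes}}(\delta^{\mathrm{MBR}}(r),r).
\]

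To justify calling these an entropy and a divergence, I would also verify that $S_T$ is proper. Since $\delta^{\mathrm{MBR}}(r)$ minimizes $R_{\mathrm{Bayes}}(\cdot,r)$ over $\mathcal{L}$, we have $S_T(q,r)\ge S_T(r,r)$ for every $q$, so $D_{S_T}(q\,\|\,r)\ge 0$ with equality at $q=r$. The same identity shows $H_{S_T}(r)=\min_{a\in\mathcal{L}}\mathbb{E}_{\ell\sim r}[d_T(\ell,a)]$. As a pointwise minimum of functions that are linear in $r$, $H_{S_T}$ is concave, which is the expected form of a generalized entropy.

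I do not expect a real obstacle, since the argument is definitional. The one point needing care is the tie-breaking in $\arg\min$. The statement should fix a deterministic selection for $\delta^{\mathrm{MBR}}$; it is then a well-defined function and the identity holds. The value $R_{\mathrm{Bayes}}(\delta^{\mathrm{MBR}}(r),r)$ equals the minimum regardless of which minimizer is chosen, so $H_{S_T}$ and the second term of $D_{S_T}$ do not depend on the tie-breaking rule. The first term, $R_{\mathrm{Bayes}}(\delta^{\mathrm{MBR}}(q),r)$, does depend on it when $q$ has several minimizers, which is exactly why a fixed selection rule is needed.
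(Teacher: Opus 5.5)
Your proof is correct and follows the paper's argument: both recall $H_{S}(q)=\mathbb{E}_{\ell\sim q}[S(q,\ell)]$ and $D_{S}(q\,\|\,r)=\mathbb{E}_{\ell\sim r}[S(q,\ell)]-\mathbb{E}_{\ell\sim r}[S(r,\ell)]$, then substitute $S_T(q,\ell)=d_T(\ell,\delta^{\mathrm{MBR}}(q))$ and read off the Bayes risks. Your extra remarks go beyond the paper and are correct: properness via the minimizing property of $\delta^{\mathrm{MBR}}(r)$, concavity of $H_{S_T}$, and the point that only the first term of $D_{S_T}$ depends on tie-breaking. Your version also avoids a typo in the paper's intermediate display, which writes $\delta^{\mathrm{MBR}}(q)$ in both terms where the second should be $\delta^{\mathrm{MBR}}(r)$.
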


\begin{proof}
We first recall the definitions of the generalized entropy and divergence induced by a scoring function \(S_T\):
\begin{align}
H_{S_T}(q)
&=
\mathbb E_{\ell\sim q}[S_T(q,\ell)],
\label{eq:decision_aware_1}
\\
D_{S_T}(q\|r)
&=
\mathbb E_{\ell\sim r}[S_T(q,\ell)]
-
\mathbb E_{\ell\sim r}[S_T(r,\ell)].
\label{eq:decision_aware_2}
\end{align}

Using \(S_T(q,\ell)=d_T(\ell,\delta^\mathrm{MBR}(q))\), we obtain
\begin{align}
H_{S_T}(q)
&=
\mathbb E_{\ell\sim q}[S_T(q,\ell)]
\label{eq:decision_aware_3}
\\
&=
\mathbb E_{\ell\sim q}\!\left[d_T(\ell,\delta^\mathrm{MBR}(q))\right]
\label{eq:decision_aware_4}
\\
&=
R_{\mathrm{Bayes}}(\delta^\mathrm{MBR}(q),q).
\label{eq:decision_aware_5}
\end{align}

Likewise,
\begin{align}
D_{S_T}(q\|r)
&=
\mathbb E_{\ell\sim r}[S_T(q,\ell)]
-
\mathbb E_{\ell\sim r}[S_T(r,\ell)]
\label{eq:decision_aware_6}
\\
&=
\mathbb E_{\ell\sim r}\!\left[d_T(\ell,\delta^\mathrm{MBR}(q))\right]
-
\mathbb E_{\ell\sim r}\!\left[d_T(\ell,\delta^\mathrm{MBR}(q))\right]
\label{eq:decision_aware_7}
\\
&=
R_{\mathrm{Bayes}}(\delta^\mathrm{MBR}(q),r)
-
R_{\mathrm{Bayes}}(\delta^\mathrm{MBR}(r),r).
\label{eq:decision_aware_8}
\end{align}
\end{proof}

\TCE*
\begin{proof}
Follows directly from the known decomposition \cite{kull2015novel}.
\end{proof}

%% file: appendix/prompts.tex
\section{Prompts}
\label{app:prompts}

\begin{lstlisting}[caption={\(g_T\) Prompt for Set-Case},label={lst:set_prompt_gt}]
You extract explicit, semantically distinct atomic answers from a model answer.

Rules:
- Extract only what the answer explicitly states; no inference.
- Split coordinated lists (commas, and/or, slashes) into separate items.
- Merge synonyms/rephrasings; keep one canonical surface form, preserving key head nouns.
- Remove leading articles/hedges; remove trailing punctuation/parentheticals unless part of the name.
- Ignore explanations/justifications; include quantities only if the quantity itself is the item.
- If no explicit item appears (I don't know, etc.), output: I don't know.
- Output strictly: one item per line, no bullets or extra text.

Follow the output format in the examples.

Example 1
Q: Which oceans border the USA?
A: The Atlantic and Pacific Oceans.
Output:
Atlantic Ocean
Pacific Ocean

Example 2
Q: What are the official languages of Switzerland?
A: German, French, Italian, and Romansh are the four official languages.
Output:
German
French
Italian
Romansh

Q: {question}
A: {answer}

Task: List all explicit, distinct answers from the model answer, one per line.
\end{lstlisting}

\begin{lstlisting}[caption={Prompt for When2Call},label={lst:when2call_prompt}]
You are an expert in composing functions. You are given a question and a set of possible functions.
Based on the question, decide what the assistant should do.
If a function can directly answer the question, you may call it.
If the question lacks required parameters, ask a follow-up question.
If none of the functions can be used to answer the question, say so.
If you decide to invoke a function, you MUST put it in the format of [func_name1(params_name1=params_value1, params_name2=params_value2...), func_name2(params)].

You may briefly explain your reasoning first, but your final line must be exactly one of:
FINAL_ACTION: tool_call
FINAL_ACTION: request_for_info
FINAL_ACTION: cannot_answer

Here is a list of functions in JSON format that you can invoke.
{tools}

Question: {question}
\end{lstlisting}

\begin{lstlisting}[caption={Prompt for MMLU},label={lst:mmlu_prompt}]
What is the correct answer to the following multiple choice question? You may very briefly explain your reasoning first but then give your final answer as one of the letters as "Final answer: letter".
Question: <question>
A. <choice 1>
B. <choice 2>
C. <choice 3>
D. <choice 4>
\end{lstlisting}

\begin{lstlisting}[caption={Prompt for HelpSteer},label={lst:helpsteer}]
Assess the following model response for {criterion_name} ({criterion_description})
on a scale from 0 to 4, where higher is better.

You may briefly explain your reasoning first, but your final line must be exactly one
number from 0 to 4 in the format: "Final answer: <number>".

Model response: {question}
\end{lstlisting}

\begin{lstlisting}[caption={Prompt for STSB},label={lst:helpsteer}]
Measure the sentence similarity between two sentences measured on a continous scale of 0 
to 5 with 0 being unrelated and 5 being related.
Now for the following two sentences, what is the similarity score?

You may briefly explain your reasoning first, but your final line must be exactly one 
number from 0 to 5 in the format: "Final answer: <number>".

Sentences: {question}
\end{lstlisting}

\begin{lstlisting}[caption={Prompt for TriviaQA/SimpleQA/MAQA},label={lst:singleqa_prompt}]
{question}. Only provide the answer without explanation.
\end{lstlisting}

\begin{lstlisting}[caption={Prompt for Helpsteer, Two Criteria},label={lst:helpsteer_vector}]
Assess the following model response for {criteria_names_and_descriptions} on a scale
from 0 to 4 each, where higher is better.

You may briefly explain your reasoning first, but your final line must be exactly one
number from 0 to 4 for each criterion in the format: "Final answer: {format_string}".

Model response: {question}

Score: 
\end{lstlisting}

\begin{lstlisting}[caption={Prompt for Extracting Integers from LLM Response}]
You extract the final numeric answer from a model response.

Rules:
- Read the full response and identify the final numeric value the answer commits to.
- Ignore intermediate calculations, option numbers, and numbers mentioned only in reasoning.
- Return exactly one bare number with no explanation or surrounding text.
- If the response does not clearly provide one final numeric answer, return UNKNOWN.

Question: {question}
Answer: {answer}

Task: Return only the final numeric answer stated in the answer.
\end{lstlisting}

%% file: appendix/supp_figures.tex
\section{Additional Figures}
\begin{figure}[h]
    \centering
    \includegraphics[width=\linewidth]{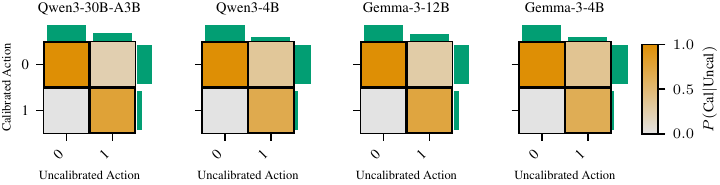}
    \caption{Action movement for MAQA ($\{0,1\}^{\leq C}$).}
    \label{fig:action_movement_maqa}
\end{figure}

\begin{figure}[h]
    \centering
    \begin{subfigure}[t]{\linewidth}
        \centering
        \includegraphics[width=\linewidth]{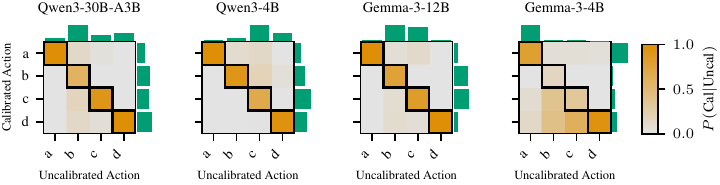}
        \caption{MMLU ($[C]$)}
        \label{fig:action_movement_mmlu}
    \end{subfigure}
    
    \vspace{0.5em}
    
    \begin{subfigure}[t]{\linewidth}
        \centering
        \includegraphics[width=\linewidth]{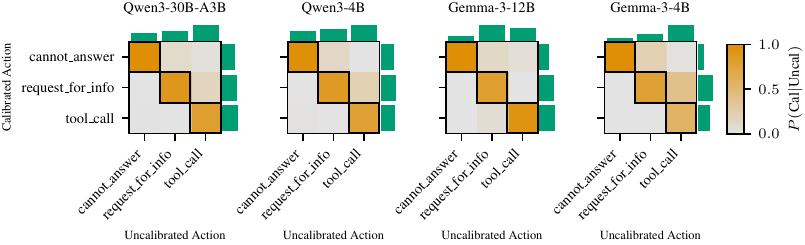}
        \caption{When2Call ($[C]$)}
        \label{fig:action_movement_when2call}
    \end{subfigure}
    
    \caption{Action movement for classification tasks.}
\end{figure}

\begin{figure}[h]
    \centering
    \begin{subfigure}[t]{\linewidth}
        \centering
        \includegraphics[width=\linewidth]{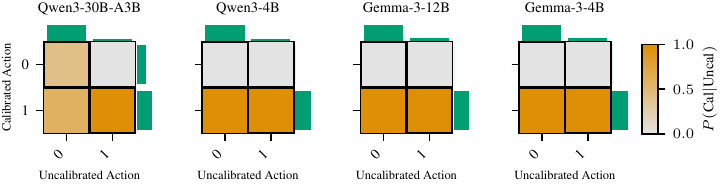}
        \caption{SimpleQA ($\{A,\perp\}$)}
        \label{fig:action_movement_simpleqa}
    \end{subfigure}
    
    \vspace{0.5em}
    
    \begin{subfigure}[t]{\linewidth}
        \centering
        \includegraphics[width=\linewidth]{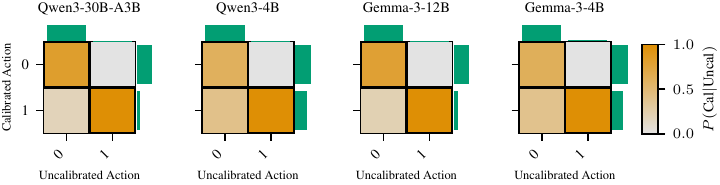}
        \caption{TriviaQA ($\{A,\perp\}$)}
        \label{fig:action_movement_triviaqa}
    \end{subfigure}
    
    \caption{Action movement for abstention tasks.}
\end{figure}

\begin{figure}[h]
    \centering
    \begin{subfigure}[t]{\linewidth}
        \centering
        \includegraphics[width=\linewidth]{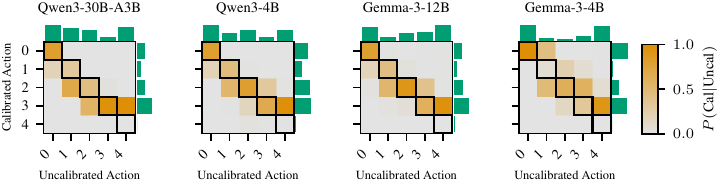}
        \caption{HelpSteer ($\{0,\dots,C\}$)}
        \label{fig:action_movement_helpsteer}
    \end{subfigure}
    
    \vspace{0.5em}
    
    \begin{subfigure}[h]{\linewidth}
        \centering
        \includegraphics[width=\linewidth]{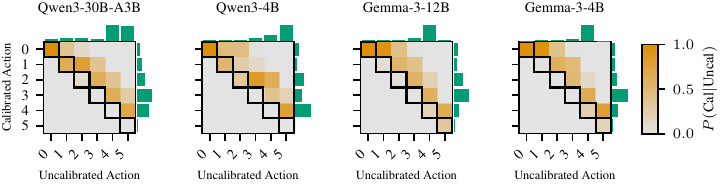}
        \caption{STSB ($\{0,\dots,C\}$)}
        \label{fig:action_movement_stsb}
    \end{subfigure}
    \caption{Action movement for ordinal/interval tasks.}
\end{figure}

\clearpage
\begin{figure}[p]
    \centering
    \vspace*{\fill}
    \includegraphics[width=0.5\linewidth]{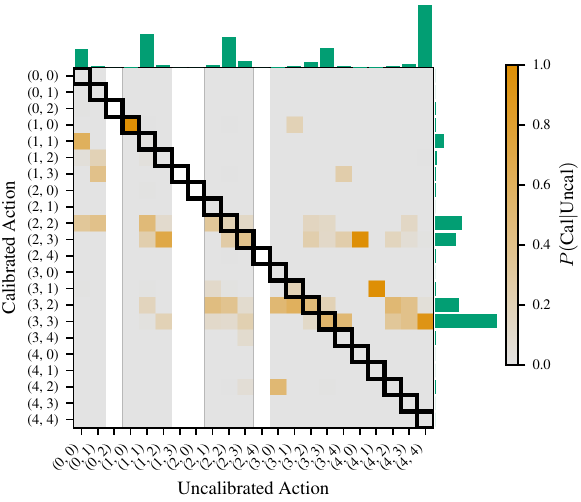}
    \caption{HelpSteer ($\{0,\dots,C\}^2$) Qwen3-4B}
    \label{fig:helpsteer_vector_qwen4}
    \vspace*{\fill}
\end{figure}

\begin{figure}[p]
    \centering
    \vspace*{\fill}
    \includegraphics[width=0.5\linewidth]{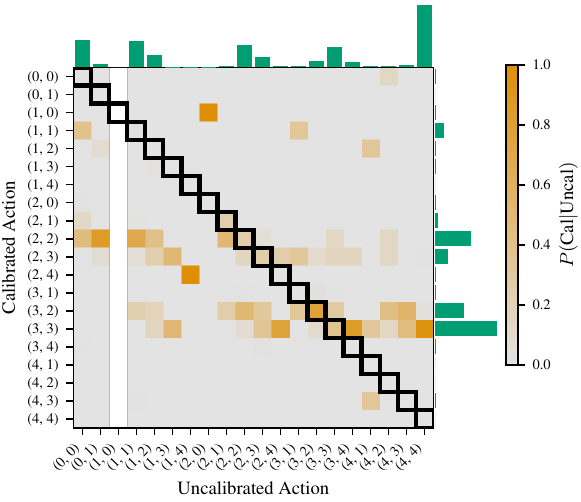}
    \caption{HelpSteer ($\{0,\dots,C\}^2$) Qwen3-30B-A3B}
    \label{fig:helpsteer_vector_qwen30}
    \vspace*{\fill}
\end{figure}

\begin{figure}[p]
    \centering
    \vspace*{\fill}
    \includegraphics[width=0.5\linewidth]{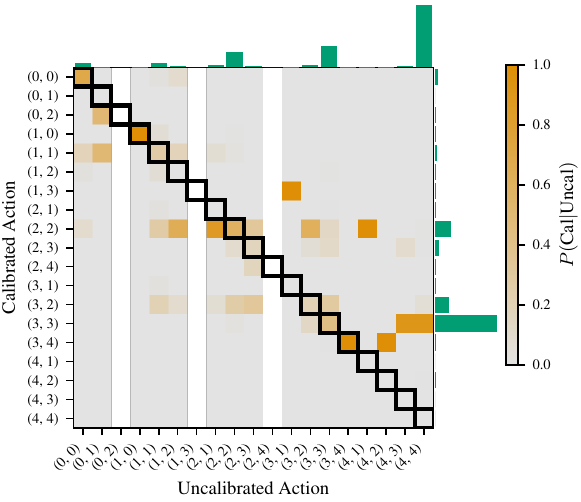}
    \caption{HelpSteer ($\{0,\dots,C\}^2$) Gemma3-4B}
    \label{fig:helpsteer_vector_gemma4}
    \vspace*{\fill}
\end{figure}

\begin{figure}[p]
    \centering
    \vspace*{\fill}
    \includegraphics[width=0.5\linewidth]{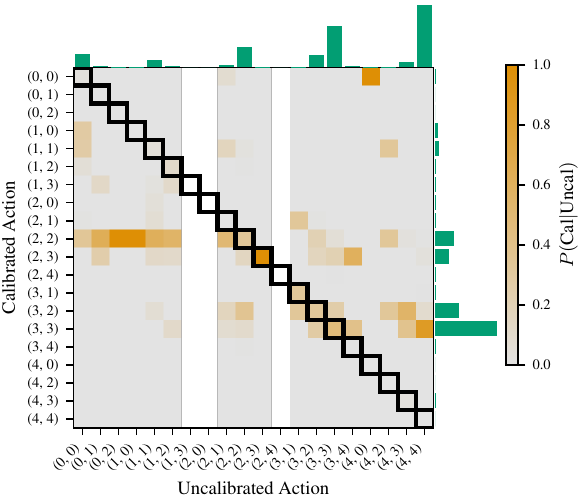}
    \caption{HelpSteer ($\{0,\dots,C\}^2$) Gemma-3-12B}
    \label{fig:helpsteer_vector_gemma12}
    \vspace*{\fill}
\end{figure}